\algnewcommand\algorithmicinput{\textbf{Input:}}
\algnewcommand\algorithmicoutput{\textbf{Output:}}
\algnewcommand\algorithmicinitialize{\textbf{initialize }}
\algnewcommand\Input{\item[\algorithmicinput]}%
\algnewcommand\Output{\item[\algorithmicoutput]}%
\algnewcommand\Initialize{\State\algorithmicinitialize}%
\newcounter{algorithmicH}
\let\oldalgorithmic\algorithmic
\renewcommand{\algorithmic}{%
  \stepcounter{algorithmicH}
  \oldalgorithmic}
\renewcommand{\theHALG@line}{ALG@line.\thealgorithmicH.\arabic{ALG@line}}
\newtheorem{theorem}{Theorem}
\newtheorem{lemma}{Lemma}
\newtheorem{remark}{Remark}
\definecolor{kjgray}{rgb}{.7,.7,.7}
\newtheoremstyle{kjstyle}
{1ex} 
{\topsep} 
{\itshape} 
{} 
{\bfseries} 
{.} 
{.5em} 
{} 
\newtheoremstyle{kjstylenoitalic}
{1ex} 
{\topsep} 
{} 
{} 
{\bfseries} 
{.} 
{.5em} 
{} 
\definecolor{kjgray}{rgb}{.7,.7,.7}
\renewcommand{\paragraph}{%
  \@startsection{paragraph}{4}%
  {\z@}{0.8em \@plus 1ex \@minus .2ex}{-1em}%
  {\normalfont\normalsize\bfseries}%
}
\newcounter{textcnt}
\newcommand\addtext[1]{%
  \stepcounter{textcnt}%
  \csgdef{text\thetextcnt}{#1}}
\newcounter{colnum}
\newcounter{Jidx}
\newcommand\dsymhelper[2]{
  \addtext{\hyperlink{#1}{#2}}%
  \blue{\hypertarget{#1}{#2}}%
}
\newcommand\dsym[1]{
  \stepcounter{Jidx}
  \xdef\tmpname{Jsym.\theJidx}
  \expandafter\dsymhelper\expandafter{\tmpname}{#1}
} 
\def\ddefloop#1{\ifx\ddefloop#1\else\ddef{#1}\expandafter\ddefloop\fi}
\def\ddef#1{\expandafter\def\csname c#1\endcsname{\ensuremath{\mathcal{#1}}}}
\def\ddef#1{\expandafter\def\csname b#1\endcsname{\ensuremath{{\boldsymbol{#1}}}}}
\def\ddef#1{\expandafter\def\csname h#1\endcsname{\ensuremath{\widehat{#1}}}}
\def\ddef#1{\expandafter\def\csname hc#1\endcsname{\ensuremath{\widehat{\mathcal{#1}}}}}
\def\ddef#1{\expandafter\def\csname t#1\endcsname{\ensuremath{\widetilde{#1}}}}
\def\ddef#1{\expandafter\def\csname r#1\endcsname{\ensuremath{\mathring{#1}}}}
\def\ddef#1{\expandafter\def\csname tc#1\endcsname{\ensuremath{\widetilde{\mathcal{#1}}}}}
\DeclareMathOperator*{\argmax}{arg~max}
\DeclareMathOperator*{\argmin}{arg~min}
\DeclareMathOperator{\EE}{\mathbb{E}}
\DeclareMathOperator{\PP}{\mathbb{P}}
\DeclareMathOperator{\one}{\mathds{1}}
\DeclareMathOperator{\Reg}{{\text{Reg}}}
\def\RR{{\mathbb{R}}}
\newcommand*\diff{\mathop{}\!\mathrm{d}}
\newcommand{\sr}[2]{ {\mathop{}\stackrel{#1}{#2}}\mathop{}\, }
\newcommand{\fr}[2]{ { \frac{#1}{#2} }}
\def\lt{\left}
\def\rt{\right}
\def\larrow{\ensuremath{\leftarrow}\xspace} 
\def\T{\ensuremath{\top}}  
\def\sig{\ensuremath{\sigma}\xspace}
\def\supp{\ensuremath{\mbox{supp}}\xspace}
\def\dt{{\ensuremath{\delta}\xspace} }
\def\sm{{\ensuremath{\setminus}\xspace} }
\def\hattheta{\ensuremath{\widehat{\theta}}\xspace}
\def\lfl{\lfloor} 
\def\rfl{\rfloor}
\newcommand{\vast}{\bBigg@{3}}
\newcommand{\Vast}{\bBigg@{4}}
\def\cX{\ensuremath{\mathcal{X}}\xspace} 
\def\cZ{\ensuremath{\mathcal{Z}}\xspace} 
\def\la{{\langle}}
\def\ra{{\rangle}}
\def\lam{\ensuremath{\lambda}}
\def\cC{\ensuremath{\mathcal{C}}\xspace}
\def\KL{\ensuremath{\normalfont{\mathsf{KL}}}}
\newcommand{\wbar}[1]{ {\ensuremath{\overline{#1}}} }
\def\hth{{\hat{ \theta}}}
\def\lam{{\ensuremath{\lambda}\xspace} }
\def\cS{{\ensuremath{\mathcal{S}}}}
\def\cR{\ensuremath{\mathcal{R}}}
\def\th{{\ensuremath{\theta}}}
\def\cd{\cdot}
\def\Bernoulli{{\ensuremath{\mathsf{Bernoulli}}}}
\def\lammin{{\lam_{\min}}}
\newcommand{\mytag}[1]{\tag*{$\del{#1}$}}
\newcommand\msf[1]{{\mathsf{#1}}}
\def\dmu{\dot\mu}
\def\teff{{\ensuremath{t_\mathsf{eff}}}}
\def\sp{{\ensuremath{\mathsf{p}}}}
\def\sq{{\ensuremath{\mathsf{q}}}}
\def\Bernoulli{{\ensuremath{\mathsf{Bernoulli}}}}
\def\lam{\ensuremath{\lambda}}
\def\lammin{{\lam_{\min}}}
\def\cd{\cdot}
\def\dt{{\ensuremath{\delta}\xspace} }
\renewcommand{\th}{\theta}
\renewcommand{\P}{\mathbb{P}}
\newcommand{\E}{\mathbb{E}}
\newcommand{\R}{\mathbb{R}}
\renewcommand{\ln}{\log}
\newcommand{\mc}[1]{\mathcal{#1}}
\newcommand{\N}{\mathbb{N}}
\newcommand{\blue}[1]{#1}
\def\dmu{{\dot\mu}}
\icmltitlerunning{Improved Confidence Bounds for the Linear Logistic Model and Applications to Bandits}
\begin{document}
\doparttoc 
\faketableofcontents 


\twocolumn[
\icmltitle{Improved Confidence Bounds for the Linear Logistic Model and Applications to Bandits}



\icmlsetsymbol{equal}{*}

\begin{icmlauthorlist}
\icmlauthor{Kwang-Sung Jun}{ua}
\icmlauthor{Lalit Jain}{uw}
\icmlauthor{Blake Mason}{uwm}
\icmlauthor{Houssam Nassif}{amazon}
\end{icmlauthorlist}

\icmlaffiliation{ua}{University of Arizona}
\icmlaffiliation{uw}{University of Washington}
\icmlaffiliation{uwm}{University of Wisconsin}
\icmlaffiliation{amazon}{Amazon Inc}


\icmlkeywords{Machine Learning, ICML}

\vskip 0.3in
]



\printAffiliationsAndNotice{}  

\begin{abstract}
We propose improved fixed-design confidence bounds for the linear logistic model. Our bounds significantly improve upon the state-of-the-art bound by Li et al. (2017) via recent developments of the self-concordant analysis of the logistic loss (Faury et al., 2020). Specifically, our confidence bound avoids a direct dependence on $1/\kappa$, where $\kappa$ is the minimal variance over all arms' reward distributions. In general, $1/\kappa$ scales exponentially with the norm of the unknown linear parameter $\theta^*$. Instead of relying on this worst case quantity, our confidence bound for the reward of any given arm depends directly on the variance of that arm's reward distribution. We present two applications of our novel bounds to pure exploration and regret minimization logistic bandits improving upon state-of-the-art performance guarantees. For pure exploration we also provide a lower bound highlighting a dependence on $1/\kappa$ for a family of instances.
\end{abstract}

\setlength{\abovedisplayskip}{3pt}
\setlength{\belowdisplayskip}{3pt}
\setlength{\abovedisplayshortskip}{3pt}
\setlength{\belowdisplayshortskip}{3pt}

\vspace{-2em}
\section{Introduction}
\vspace{-.5em}


Multi-armed bandits algorithms offer a principled approach to solve sequential decision problems under limited feedback~\citet{thompson33onthelikelihood}. 
In bandit problems, at each time step, an agent chooses an arm to pull from an available pool of arms and and receives an associated reward. 
Under this setting, two major objectives arise: \emph{pure exploration} (aka best-arm identification) where the goal is to identify the arm with the highest average reward; and \emph{regret minimization} where the goal is to maximize the total rewards gained.
Bandit algorithms are widely deployed in industry, with applications spanning news recommendation \citep{li10acontextual}, ads \citep{SawantHVAE18}, online retail \citep{Teo2016airstream}, and drug discovery \citep{kazerouni2019best}. 
In such applications, the agent often has access to a feature vector for each arm.
A common assumption is that the reward is a noisy linear measurement of the underlying feature vector of the arm being pulled.  In other words, the binary reward received from a pull of the arm $x\in \mathbb{R}^d$ is $y_t = x^\T\theta^\ast +\epsilon$, $\theta^\ast$ is a latent parameter vector, and $\epsilon$ is subGaussian noise. 
In this case, there are several algorithms that are near-optimal and/or practical for pure exploration~\cite{xu2018fully,fiez2019sequential} and for regret minimization~\cite{auer2002using,chu11contextual,dani08stochastic,russo14learning}.


Unfortunately, in abundant real-world use-cases the linear reward model is not realistic and instead rewards are binary. For example, the prevalent form of data arising from user interactions is binary click/no-click feedback in the web and e-commerce domains. 
Another example is the problem of learning the best candidate from binary pairwise comparisons, used in matching recommender systems~\cite{biswas2019seeker}. In this setting, the agent has access to a set of items (e.g., shoes), and repeatedly chooses a pair of items to present to the user to choose from. The goal of the agent is to infer the user's favorite shoe.
In this paper, we use the linear logistic model for binary feedback.
In other words, the binary reward received from a pull of the arm $x\in \mathbb{R}^d$ is $y_t \sim \mathsf{Bernoulli}(\mu(x^\T\theta^\ast))$, where $\mu(z) := (1 + e^{-z})^{-1}$ is the logistic link function. 


Existing effective bandit algorithms in the linear feedback setting attempt to estimate $\theta^\ast$ to drive sampling. To do so, they require tight confidence intervals on the estimated mean reward $x^\T\th^*$ of arm $x$. 
To adapt these algorithms to the logistic model, we require confidence intervals that account for the non-linearity introduced by the link function $\mu$. However, there is a lack of tight confidence intervals in this setting. 
Our work builds on previous attempts in this area to a) provide tight confidence intervals, b) adapt existing linear bandit algorithms to the logistic setting. We now detail our contributions:

\textbf{The first variance-dependent fixed design confidence interval for the linear logistic model.} We first consider the \textit{fixed design} setting. Assume we have access to data $(x_s, y_s)_{s=1}^t\subset \mathbb{R}^d\times \{0,1\}$ where the reward $y_s$ is generated according to the logistic model. In addition we assume $y_s$ is conditionally independent from $\{x_i\}_{i=1}^t \setminus \{x_s\}$ given $x_s$.
Let $\hat\th_t$ be the maximum likelihood estimator (MLE) of $\th^*$.
We propose the first fixed design concentration inequalities such that the width: $i)$ scales with the actual variance instead of the worst-case variance $\kappa^{-1}$ that scales exponentially with $\|\theta^*\|$, and $ii)$ is independent of $d$.
Our bound takes the form of
\begin{align*}
    \PP\del{ |x^\T (\hat\th_t - \th^*)| \le O(\| x \|_{H_t^{-1}(\th^{\ast})} \sqrt{\log(t/\dt)}) }\ge 1-\dt,
\end{align*}
where $H_t(\th^{\ast})$ is the Fisher information matrix at $\theta^\ast$ matching the asymptotic bound for the MLE.
\footnote{
    While $\th^*$ appears on the RHS as well, our full theorem shows that $\hth_t$ can be used in place of $\th^*$ with a slightly larger constant factor, although this is not useful in bandit analysis.
} 
By contrast, the bounds by \citet{li2017provably} take a significantly looser form of $ O(\kappa^{-1}\| x \|_{V_t^{-1}} \sqrt{\log(1/\dt)})$ where $V_t$ satisfies $\kappa V_t \preceq H_t(\th^*)$.
Our improvements in fixed design confidence bounds parallel that of \citet{faury2020improved} for adaptive sampling but reduce a $\sqrt{d}$ factor required by adaptive bounds.
Our confidence bound is a fundamental result in statistical learning. It tightly quantifies the amount of information learned from the training set $\{x_s,y_s\}_{s=1}^t$ that transfers to a test point $x$, in a data-dependent non-asymptotic manner and without distributional assumptions on $\{x_s\}_{s=1}^t$.
We present the full theorem and provide detailed comparisons in Section~\ref{sec:ci}.



\textbf{Improved pure exploration algorithms. } 
In Section~\ref{sec:rage} we propose a new algorithm called RAGE-GLM for pure exploration in transductive linear logistic bandits, which is a novel extension of RAGE by \citet{fiez2019sequential}. 
RAGE-GLM significantly improves both theoretical and empirical performance over the state-of-the-art algorithm by \citet{kazerouni2019best}, reducing the sample complexity by a multiplicative factor of $\kappa^{-1}$.
We perform empirical evaluations on a pairwise comparison problem.

\textbf{Novel fundamental limits for pure exploration.}
While the sample complexity of RAGE-GLM does not have $\kappa^{-1}$ in the leading term of $\log(1/\dt)$ where $\dt$ is the target failure rate, it has an \emph{additive} dependence on $\kappa^{-1}$.
In Section~\ref{sec:lower_kappa}, we show that such an additive dependence is necessary via a novel \emph{moderate confidence} lower bound that captures the non-asymptotic complexity of learning and is independent of transportation inequality techniques~\cite{kaufmann2016complexity}. 
Our results also imply that there are settings where $O(e^d)$ samples are necessary even when gaps are large, a phenomena that does not exist for linear rewards.

\textbf{Improved $K$-armed contextual bandits.} 
We employ our confidence bounds to develop improved algorithms for contextual logistic bandits.
The proposed algorithm SupLogistic makes nontrivial extensions over the state-of-the-art algorithm SupCB-GLM by \citet{li2017provably}. The main challenge is to $i)$ handle the confidence width that depends on the unknown $\th^*$, and $ii)$ design a novel sample bucket scheme to fix an issue of SupCB-GLM that invalidates its regret bound.
We show that SupCB-GLM enjoys a regret bound of $\tilde{O}( \sqrt{dT\log(K)})$ (ignoring $o(\sqrt{T})$ terms), which is a significant improvement over SupCB-GLM that has an extra $\kappa^{-1}$ factor, along with improvements in the lower-order terms.
Such an improvement parallels that of \citet{faury2020improved} over UCB-GLM of \citet{li2017provably} where they achieve a regret bound $\tilde O(d\sqrt{T})$ that shaves of the factor $\kappa^{-1}$ from UCB-GLM.
We discuss our improved bounds and provide more detailed comparisons in Section~\ref{sec:suplogistic}.

\vspace{-.5em}
\section{Improved Confidence Intervals for the Linear Logistic MLE}
\label{sec:ci}
\vspace{-.5em}
In this section we consider the fixed design setting. We assume that we have a fixed $\theta^{\ast}\in \mathbb{R}^d$, a set of measurements $\{(x_s,y_s)\}_{s=1}^{t}\subset \mathbb{R}^d\times \mathbb{R}$ where each $y_s\in \{0,1\}$, and \[P(y_s=1) = \mu(x_{s}^\top\theta^*) = \frac{1}{1+e^{-x^{\top}\theta^{\ast}}}.\]

Let $\eta_s = y_s - \mu(x_s^\T \theta^*)$.
Denote by $\dot\mu(z)$ the first order derivative of $\mu(z)$.
Define $\kappa = \min_{x:~ \norm x \le 1} \dot\mu(x^\T \th^*)$.

The maximum likelihood estimate is given by:
\begin{align}\label{eq:mle}
    \hat{\theta} \!=\!  \argmax_{\theta\in \mathbb{R}^d} \sum_{s=1}^t  y_{s}\log\mu(x_s^{\top}\theta)
     \!+\!(1\!-\!y_s)\log(1\!-\!\mu(x_s^{\top}\theta)) 
\end{align}
We also define the Fisher information matrix at $\theta$ as
\begin{align}\label{eq:def-H}
  H_t(\theta) = \sum_{s=1}^t \dot\mu(x_s^{\top}\theta) x_sx_s^{\top} ~.  
\end{align}
We now introduce our improved confidence interval for the linear logistic model under this fixed design setting.

\begin{theorem}\label{thm:concentration}
  Let $\dt \le e^{-1}$. 
  Let $\hat\th_t$ be the solution of Eq.~\eqref{eq:mle} where, for every $s\in[t]$, $y_s$ is conditionally independent from $\{x_i\}_{i=1}^t \setminus \{x_s\}$ given $x_s$ (i.e. the $x_s$'s are a fixed design).
  Fix $x\in\RR^d$ with $\|x\|\le1$. 
  Let $t_{\mathsf{eff}}$ be the number of distinct vectors in $\{x_s\}_{s=1}^t$.
  Define ${\gamma(d)} \!=\! d + \ln(6(2\!+\!\teff)/\dt)$.
  Define the event $\cE_{\mathsf{var}} \!=\! \{\forall x', \fr{1}{\sqrt{2.2}}\,\|x'\|_{H_t(\th^*)^{-1}}
  \!\le\! \|x'\|_{H_t(\hat\th_t)^{-1}} 
  \!\le\! \sqrt{2.2}\, \|x'\|_{H_t(\th^*)^{-1}}\}$.
  If ${\xi^2_t} := \max_{s\in[t]} \|x_s\|^2_{H_{t}(\th^*)^{-1}} \le \fr{1}{\gamma(d)}$,
  then
  \begin{align*}
    \PP\del{\!
        |x^\T\!(\hat\theta_t\! -\! \theta^*)|  \!\le\! 3.5\|x \|_{\! H_{t} (\th^* \! )^{-\! 1}} 
        \textstyle\!\sqrt{\ln\fr{2(2+\teff)}{\dt}}
        , \cE_{\mathsf{var}}
    \!} \!\ge\! 1\!-\!\dt.
  \end{align*}
\end{theorem}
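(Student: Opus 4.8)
The plan is to start from the first-order optimality of the MLE and linearise it through an averaged Hessian. Since $\hat\theta_t$ maximises the concave log-likelihood, $\nabla L(\hat\theta_t)=0$, and applying the fundamental theorem of calculus to $\tau\mapsto\nabla L(\theta^*+\tau(\hat\theta_t-\theta^*))$ yields the exact identity $\bar H\,(\hat\theta_t-\theta^*)=g_t$, where $g_t:=\sum_s\eta_s x_s=\nabla L(\theta^*)$ is the zero-mean, bounded score and $\bar H:=\int_0^1 H_t(\theta^*+\tau(\hat\theta_t-\theta^*))\diff\tau$ is the Hessian averaged along the segment. Because $\E[g_tg_t^\T]=\sum_s\dot\mu(x_s^\T\theta^*)x_sx_s^\T=H_t(\theta^*)$, the linearised estimator $H_t(\theta^*)^{-1}g_t$ already carries the target covariance, which is the object whose single-direction fluctuation we want to isolate.

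Next I would use the self-concordance of the logistic link, $\dot\mu(z_1)\ge\dot\mu(z_2)e^{-|z_1-z_2|}$, to compare $\bar H$, $H_t(\hat\theta_t)$ and $H_t(\theta^*)$ multiplicatively: if $\rho:=\max_s|x_s^\T(\hat\theta_t-\theta^*)|$ is $O(1)$ then $e^{-\rho}H_t(\theta^*)\preceq\bar H,\,H_t(\hat\theta_t)\preceq e^{\rho}H_t(\theta^*)$, which is precisely the content of $\cE_{\mathsf{var}}$. To certify $\rho=O(1)$ I would first establish a crude estimation bound $\|\hat\theta_t-\theta^*\|_{H_t(\theta^*)}\lesssim\|g_t\|_{H_t(\theta^*)^{-1}}$ (by inverting $\bar H\,\Delta=g_t$ via the multiplicative comparison and a self-bounding/fixed-point argument), and then concentrate the score in its natural norm. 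Since $\|g_t\|_{H_t(\theta^*)^{-1}}^2$ is a $\chi^2$-type quantity with mean $d$, a self-normalised/covering bound gives $\|g_t\|_{H_t(\theta^*)^{-1}}\lesssim\sqrt{\gamma(d)}$ with probability $1-\delta$; this is where $d$ enters $\gamma(d)$ and where $\teff$ enters through the union bound over distinct arms. Combined with $|x_s^\T\Delta|\le\xi_t\|\Delta\|_{H_t(\theta^*)}$ and the hypothesis $\xi_t^2\le 1/\gamma(d)$, this forces $\rho\lesssim\xi_t\sqrt{\gamma(d)}=O(1)$, establishing $\cE_{\mathsf{var}}$.

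The heart of the argument is the refined single-direction concentration, which I would obtain from the decomposition
\begin{align*}
x^\T(\hat\theta_t-\theta^*)=\underbrace{x^\T H_t(\theta^*)^{-1}g_t}_{\text{(I): fixed linear form}}+\underbrace{x^\T H_t(\theta^*)^{-1}\bigl(H_t(\theta^*)-\bar H\bigr)(\hat\theta_t-\theta^*)}_{\text{(II): nonlinear correction}}.
\end{align*}
Term (I) is a genuine fixed-design linear form $\sum_s\eta_s a_s$ with deterministic coefficients $a_s=x^\T H_t(\theta^*)^{-1}x_s$, conditional variance $\sum_s\dot\mu(x_s^\T\theta^*)a_s^2=\|x\|_{H_t(\theta^*)^{-1}}^2$, and range $|a_s|\le\xi_t\|x\|_{H_t(\theta^*)^{-1}}$, so Bernstein's inequality delivers exactly the advertised dimension-free width $O(\|x\|_{H_t(\theta^*)^{-1}}\sqrt{\ln(1/\delta)})$, the range contribution being lower order because $\xi_t$ is small.

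The main obstacle is term (II). Writing $H_t(\theta^*)-\bar H=\sum_s\beta_s x_sx_s^\T$ with $|\beta_s|\lesssim\dot\mu(x_s^\T\theta^*)|x_s^\T\Delta|$ by self-concordance, the naive triangle/Cauchy--Schwarz estimate gives $|\text{(II)}|\lesssim\xi_t\|\Delta\|_{H_t(\theta^*)}^2\|x\|_{H_t(\theta^*)^{-1}}\lesssim\sqrt{\gamma(d)}\,\|x\|_{H_t(\theta^*)^{-1}}$, which reintroduces the very $\sqrt d$ factor we set out to remove. The crux is therefore to control (II) in a genuinely dimension-free way: the naive bound is lossy precisely because it discards the sign cancellation in $\sum_s\ddot\mu(x_s^\T\theta^*)\,a_s\,(x_s^\T\Delta)^2$, where both the curvature $\ddot\mu$ and the coefficients $a_s$ change sign. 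I expect the resolution to treat (II) as a small, approximately centred quadratic functional of the noise $g_t$ and to bound it via its own concentration on $\cE_{\mathsf{var}}$, so that under $\xi_t^2\le 1/\gamma(d)$ it is absorbed into the constants, inflating the leading constant to $3.5$ and tightening the Hessian-comparison factors to $\sqrt{2.2}$. Finally I would collect the failure probabilities of the score concentration, of (I), and of the correction by a union bound to obtain the stated $1-\delta$ guarantee.
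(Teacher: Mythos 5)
Your skeleton tracks the paper's proof closely: the optimality identity through the averaged Hessian (the paper's $G=\sum_s\alpha_s(\hat\theta_t,\theta^*)x_sx_s^\top$ is exactly your $\bar H$), the split into a linear term (I) and a correction term (II), Bernstein for (I) with range $\xi_t\|x\|_{H^{-1}}$, the covering-argument bound $\|g_t\|_{H^{-1}}\lesssim\sqrt{d+\ln(1/\delta)}$, the self-concordance comparison yielding $\mathcal{E}_{\mathsf{var}}$ once $D:=\max_s|x_s^\top(\hat\theta_t-\theta^*)|=O(1)$, and the crude self-bounding argument giving $D\lesssim\xi_t\sqrt{\gamma(d)}$. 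But at the step you yourself flag as the crux — making (II) dimension-free — you do not have an argument: you conjecture that (II) is an ``approximately centred'' quadratic functional of the noise whose sign cancellations can be exploited by a separate concentration bound. That is a genuine gap, and the proposed route is doubtful on its face: the quadratic approximation of (II) has mean $\approx-\tfrac12\sum_s\ddot\mu(x_s^\top\theta^*)\,(x^\top H^{-1}x_s)\,\|x_s\|^2_{H^{-1}}$, which a triangle-inequality estimate only controls at level $\xi_t d\,\|x\|_{H^{-1}}\le\sqrt{d}\,\|x\|_{H^{-1}}$ — already at the problematic scale — so ``centredness'' is itself a nontrivial claim requiring proof, and a worst-case fixed design need not offer any sign structure to cancel.

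The paper's actual resolution is more elementary and you are one step away from it: no cancellation is used at all. Having bounded $|\mathrm{(II)}|\le\|x\|_{H^{-1}}\,(1+D)Q\,\sqrt{\beta_t}$ with $(1+D)Q\le D$ (valid once $D\le 3/5$, by self-concordance) and $\sqrt{\beta_t}=O(\sqrt{d+\ln(1/\delta)})$, apply the \emph{entire two-term decomposition} not only to the test direction $x$ but to every distinct design direction $x_s$, all of which satisfy $\|x_s\|_{H^{-1}}\le\xi_t$. Taking the maximum over $s$ yields the implicit inequality
\begin{align*}
D \;\le\; 2\xi_t\sqrt{\ln(2/\delta)}+\xi_t^2\ln(2/\delta)+\xi_t\sqrt{\beta_t}\,D,
\end{align*}
and the burn-in condition $\xi_t^2\le1/\gamma(d)$ makes the coefficient $\xi_t\sqrt{\beta_t}\le 3/8$ contractive, so $D\lesssim\xi_t\sqrt{\ln(1/\delta)}$ — with $\sqrt{\ln(1/\delta)}$, not the $\sqrt{\gamma(d)}$ of your crude bound; this refinement is precisely what your Cauchy--Schwarz per-direction estimate $|x_s^\top\Delta|\le\xi_t\|\Delta\|_{H}$ throws away. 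Plugging back, $|\mathrm{(II)}|\le\|x\|_{H^{-1}}D\sqrt{\beta_t}\lesssim\|x\|_{H^{-1}}\sqrt{\ln(1/\delta)}\cdot(\xi_t\sqrt{\beta_t})\lesssim\|x\|_{H^{-1}}\sqrt{\ln(1/\delta)}$: the $\sqrt d$ inside $\sqrt{\beta_t}$ is paid for by the $\xi_t$ inside $D$. This bootstrap, together with the union bound over the $t_{\mathsf{eff}}$ distinct directions plus the two auxiliary events (whence the $2+t_{\mathsf{eff}}$ in the failure probability), is the missing content of your proposal; the remainder of your outline is sound.
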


\begin{remark}
  One can see that Theorem~\ref{thm:concentration} implies empirical concentration inequality 
  $|x^\T\!(\hat\theta_t\! -\! \theta^*)|  \!\le\! 5.2\|x \|_{\! H_{t} (
    \hth_t \! )^{-\! 1}} \sqrt{\ln\fr{2(2+\teff)}{\dt}}$.
  This seemingly useful bound is in fact never used in our bandit analysis for technical reasons. Specifically, bandits select arms adaptively, which breaks the fixed design assumption of Theorem~\ref{thm:concentration}, so care is needed for algorithmic design.
\end{remark}
\vspace{-.5em}
Asymptotically, under some conditions we expect for any $x\in \mathbb{R}^d$, $x^{\top}(\hat{\theta} - \theta)\rightarrow N(0, \|x\|^2_{H(\theta^{\ast})^{-1}})$ \cite{lehmann2006theory}. 
Our bound matches this asymptotic rate up to constant factors. 

\textbf{Comparison to previous work.} 
Our theorem is a significant improvement upon \citet{li2017provably}. Their bound depends on $\fr{1}{\kappa}\|x\|_{V_t^{-1}}$, with $V_t := \sum_{s=1}^t x_s x_s^\T$. In general since $\kappa V_t \preceq H_t(\th^*)$, our bound is tighter and depends on the asymptotic variance. For the bound in \cite{li2017provably} to hold, they require that $\lambda_{\min}(V_t) \geq \Omega(d^2\kappa^{-4})$, which we call the \textit{burn-in} condition.
Recall that $\kappa^{-1} = \Theta(\exp(\|\theta^*\|))$ can be large even for moderate $\theta^*$.
In contrast, our bound's burn-in condition does not directly depend on $\kappa^{-1}$, and more importantly, it is possible to satisfy it without $\kappa^{-1}$ samples in certain cases.
For example, we show in our supplementary a case where a sample size of $\mathsf{polynomial}(\|\theta^*\|) \ll \exp(\|\theta^*\|)$ can satisfy the burn-in condition.
For the sake of comparison, we can use the bound $\xi_t^2\leq \kappa^{-1}\lambda_{\min}(V_t)$ to derive an inferior burn-in condition of $\lambda_{\min}(V_t)\geq \Omega(d\kappa^{-1})$.
This is still a strict improvement over \citet{li2017provably}, saving a factor of $d$ and a cubic factor in $\kappa^{-1}$ as well as shaving off their large constant factor of 512.

We now compare our bound with that of \citet{faury2020improved}. 
The proof of Lemma 3 of \citet{faury2020improved}, under the assumption that $\|\th^*\|\le S_*$ and with a proper choice of regularization constant, implies the following confidence bound: w.p. at least $1-\dt$, $\forall t\ge1, \forall x\in \RR^d: \|x\|\le1, $
\begin{align*}
  x^\T (\hat\th_t-\th^*) \le \Theta\lt(\|x\|_{H_t(\theta^*)^{-1}}S_*^{3/2}\sqrt{(d + \ln(t/\dt)})\rt)~.
\end{align*}
While their bound also does not directly depend on $\kappa$ it is an anytime confidence bound that holds for all $x\in \mathbb{R}^d$ simultaneously, and in addition allows for an adaptively chosen sequence of measurements. As a result, their bound suffers an additional factor of $\sqrt{d}$. Furthermore, their bound introduces a factor $S_*^{3/2}$ and requires the knowledge of both $S_*$ and $\kappa$.\footnote{We believe the factor $S_*^{3/2}$ can be removed by imposing  an assumption on $\xi_t$ like ours.}
In contrast, our bound does not directly depend on the confidence width nor require the knowledge of $S_*$ or $\kappa$ though these quantities may be needed to satisfy the burn-in condition. 

\textbf{Tightness of our bound.}
Empirically, we have observed that $\xi_t^2 \le O(1)$ is necessary to control the confidence width as a function of $\|x\|_{H_t(\th^*)^{-1}}$, but have not found a case where $\xi_t^2$ must be smaller than $O(1/d)$; studying the optimal burn-in condition is left as future work.
We believe one can improve $\log(\teff)$ to the metric entropy of the measurements $\{x_s\}_{s=1}^t$.
Note that it is possible to remove the burn-in condition if we derive a \textit{regularized} MLE version of Theorem~\ref{thm:concentration}, but this comes with an extra factor of $\sqrt{d}$ in the confidence width, which is not better than the confidence bound of \citet{faury2020improved}.
%
%
%
%
\vspace{-.5em}
\begin{proof}[Proof Sketch of Theorem~\ref{thm:concentration}]
  The novelty of our argument is to exploit the variance term without introducing $\kappa$ explicitly in the confidence width.
  We follow the main decomposition of \citet[Theorem 1]{li2017provably} but deviate from their proof by: $i)$ employing Bernstein's inequality rather than Hoeffding's to obtain $H_t(\th^*)$  in the bound (as opposed to $\kappa^{-1} V_t$); and $ii)$ deriving a novel implicit inequality on $\max_{s\in[t]} |x_s^\T(\hth_t - \th^*)|$.
  The latter leads to the significant improvements in both $d$ and $\kappa^{-1}$ in the condition on $\xi_t$. 
  

  Let $\alpha_s(\hat\th_t,\th^*) := \frac{\mu(x_s^\T\hat\th_t) - \mu(x_s^\T\th^*)}{x_s^\T(\hat\th_t - \th^*)}$, $\blue{z} := \sum_s^t \eta_s x_s$, and $\blue{G} := \sum_s^t \alpha_s(\hat\theta_t, \theta^*) x_s x_s^\T$. 
  By the optimality condition of $\hat\th_t$, we have:
  \begin{align}
      \begin{split}
    z &= G (\hat\theta_t - \theta^*).
  \end{split}
  \end{align}
  We use the shorthand $H := H_t(\theta^*)$ and define $\blue{E} := G-H$. 
  The main decomposition is
  \begin{align}\label{eq:main-decomp}
      |x^\T(\hat\theta_t - \theta^*) |
      &\!=\! |x^\T (H+E)^{-1} z|
      \\&\!\le\! |x^\T H^{-1}z| + |x^\T H^{-1}E (H+E)^{-1}z|~. \nonumber
  \end{align}
We bound $x^\T H^{-1} z$ by $O(\|x\|_{H^{-1}}\sqrt{\log(1/\dt)})$ which uses Bernstein's inequality and the assumption on $\xi^2_t$.
We control the second term by:
  \begin{align}\label{eq:main-decomp-2}
    &|x^\T H^{-1}E (H+E)^{-1}z| \nonumber
    \\&\le \|x\|_{H^{-1}} \|H^{-1/2} E H^{-1/2}\| \|G^{-1} z\|_{H}
    \\&\stackrel{(a)}{\le} \|x\|_{H^{-1}} \|H^{-1/2} E H^{-1/2}\| (1+D) \|z\|_{H^{-1}}\nonumber
    \\&\stackrel{(b)}{\le} \|x\|_{H^{-1}} \|H^{-1/2} E H^{-1/2}\| (1+D) \cd O(\sqrt{d+\log(1/\dt)})\nonumber
    \\&\stackrel{(c)}{\le} \|x\|_{H^{-1}} D \cd O(\sqrt{d+\log(1/\dt)}), \nonumber
  \end{align}
  where $(a)$ is by introducing $D := \max_{s\in[t]} |x_s^\T(\hth - \th^*)|$ and using the self-concordance property of the logistic loss~\cite{faury2020improved}, $(b)$ is Bernstein's inequality with a covering argument along with the assumption on $\xi_t$, and $(c)$ is by a novel result employing self-concordance and the assumption to show that $E$ can be bounded by $D\cd H$. 
Our key observation is to apply Eq.~\eqref{eq:main-decomp} for every distinct vector $x$  in $\{x_s\}_{s\in[t]}$ and employ $\|x\|_{H^{-1}} \le \xi_t$ to see:
  \begin{align*}
    D &= \max_{s\in[t]} |x_s^\T(\hth - \th^*)|
    \\&\le O(\xi_t \sqrt{\log(1/\dt)}) + \xi_t D \cd O(\sqrt{d+\log(1/\dt)})~.
  \end{align*}
  The assumption on $\xi_t$ implies $\xi_t O(\sqrt{d+\log(1/\dt)}) < 1$. We solve for $D$ to obtain the implicit equation
  \begin{align*}
    D = O(\xi_t \sqrt{1/\dt})~.
  \end{align*}
  Plugging this back into Eq.~\eqref{eq:main-decomp-2} gives $\|x^\T(\hth - \th^*)\| \le  O(\|x\|_{H^{-1}}\sqrt{\log(1/\dt)})$, which holds with probability at least $1-\Theta(\teff\delta)$, as we use the concentration inequality $\Theta(\teff)$ times.
  To turn this into a statement that holds w.p. at least $1-\dt$, we substitute $\dt$ with $\Theta(\dt/\teff)$, concluding the proof. See appendix for the statement on $\cE_{\mathsf{var}}$.
\end{proof}
\vspace{-1.2em}




\vspace{-.5em}
\section{Transductive Pure Exploration}
\label{sec:rage}
\vspace{-.5em}

We now consider the linear logistic pure exploration problem.
Specifically, the learner is given a confidence level $\dt\in(0,1)$ and has access to finite arm subsets $\mc{X}, \mc{Z}\subset \mathbb{R}^d$ but not the problem parameter $\th^*\in\RR^d$.
The goal is to discover $z^{\ast} = \arg\max_{z\in \mathcal{Z}} z^{\top}\theta^{\ast}$ with probability at least $1-\delta$ using as few measurements from $\mc{X}$ as possible. 
This is a generalization of the standard linear pure exploration~\cite{soare2014best} and is referred to as the transductive setting \cite{fiez2019sequential}. 
In each time step $t$, the learner chooses an arm $x_t$, which is measurable with respect to the history $\mathcal{F}_{t-1} = \{(x_s, y_s)_{s< t}\}$, and observes a reward $y_t\in\{0,1\}$. 
It stops at a random stopping time $\tau$ and recommends $\hat{z}\in \mathcal{Z}$, where $\tau$ and $\hat{z}$ are both measurable w.r.t.\ the history $\mc{F}_{\tau}$. We assume that $\P(y_t = 1|x_t, \mc{F}_{t-1}) = \mu(x_t^{\top}\theta^{\ast})$. Let $\P_{\theta}, \E_{\theta}$ denote the probability and expectation induced by the actions and rewards when the true parameter is $\theta$.
Formally, we define a $\delta$-PAC algorithm as follows:
\emph{An algorithm for the logistic-transductive-bandit problem is $\delta$-PAC for $(\mc{X}, \mc{Z})$ if $\P_{\theta}(z_{\tau}\neq z^{\ast})\leq \delta\ ,\forall \theta\in \mathbb{R}^d$.  }

\textbf{Example: Pairwise Judgements.} As a concrete and natural example, consider an e-commerce application where the goal is to recommend an item from a set of items based on relative judgements by the user. For example, the user may be repeatedly shown two items to compare, and must choose one. 
In each round, the system chooses two items $z, z'\in \mc{Z}$, and observes the binary user preference of item $z$ or item $z'$. 
A natural model is to give each item $z\in\mc{Z}$ a score $z^{\top}\theta^{\ast}$, with the goal of finding $z^{\ast} = \max_{z\in \mc{Z}} z^{\top}\theta^{\ast}$. The probability the user prefers item $z$ over $z'$ is modeled by $\P(z > z') = \mu((z - z')^{\top}\theta^{\ast})$. Hence the set of measurement vectors is naturally $\mc{X} = \{z - z':z, z'\in \mc{Z}\}$. 
Note that our setting is a natural generalization of the dueling bandit setting \citep{Yue2012Dueling}. As far as we are aware, this is the first work to propose the dueling bandit problem as a natural extension of the transductive linear bandit setting under a logistic noise model. 





\textbf{Related Work. } 
\citet{soare2014best} first proposed the problem of pure exploration in linear bandits with Gaussian noise when $\mc{X}=\mc{Z}$.
\citet{fiez2019sequential} introduced the general transductive setting, provided the RAGE elimination based method which is the main inspiration for our algorithm. RAGE achieves the lower bound up to logarithmic factors with excellent empirical performance. 
Other works include \citet{degenne2020gamification, karnin2013almost}, which achieve the lower bound asymptotically. 
Finally we mention \citet{katz2020empirical}, which follows a similar approach to \citet{fiez2019sequential} but uses empirical process theory to replace the union bound over the number of arms with a Gaussian width. 

Despite its importance in abundant real-life settings, pure-exploration for logisitic bandits has received little attention. The only work we are aware of is \citet{kazerouni2019best} which defines the problem and provides an algorithm motivated by \citet{xu2018fully}.
However, their theoretical and empirical performance are both far behind our proposed algorithm RAGE-GLM as we elaborate more below. 

\textbf{Notation.}  
Define $\kappa_0 = \min_{x\in \mc{X}} \dot\mu(x^{\top}\theta^{\ast})$, the smallest derivative of the link function among elements in $\mc{X}$ (differing slightly from the previous section). 
Let $\Delta_{\mc{X}} = \{\lambda\in \mathbb{R}^{|\mc{X}|}, \lambda\geq 0, \sum_{x\in \mc{X}} \lambda_x = 1\}$ be the probability simplex over $\mc{X}$. Given a design $\lambda\in \Delta_{\mathcal{X}}$, define: 
 \[H(\lambda, \theta) := \sum_{x\in \mathcal{X}} \lambda_x\dot{\mu}(x^{\top}\theta)xx^{\top},\; A(\lambda) := \sum_{x\in \mathcal{X}} \lambda_x xx^{\top}.\]

\vspace{-.5em}
\subsection{Algorithm}
\vspace{-.5em}
\newcommand{\cone}{3.5}
\newcommand{\rdp}{\text{\textbf{round}}}


\begin{algorithm}[ht]
\caption{RAGE-GLM }
        \label{alg:rage-glm}
	    \begin{algorithmic}[1]
            \Input {$\epsilon$, $\delta$, $\mc{X}$, $\mc{Z}$, $\kappa_0$, effective rounding procedure $\rdp(n, \epsilon, \lambda)$}
            \Initialize{$k=1, \mc{Z}_1 = \mc{Z}, r(\epsilon) = d^2/\epsilon$}
            \State {$\hat\theta_0\leftarrow$ \textbf{BurnIn}($\mc{X}, \kappa_0$)}  \Comment{Burn-in phase}
            \While{$|\mc{Z}_k| > 1$ } \Comment{Elimination phase}
            \State  $\delta_k = \delta/(2k^2\max\{|\mc{Z}|,|\mc{X}|\} (2+|\mc{X}|))$
            \State\label{line:expDesign}$f(\lambda) := \max\Big[ 
                    \gamma(d)\max_{x\in \mc{X}} \|x\|_{H(\lambda, \hat{\theta}_{k-1})^{-1}}^2$,
            \Statex \hspace*{\fill} $2^{2k}\cdot(\cone)^2 
                    \max_{z,z'\in\mc{Z}_k} \|z-z'\|_{H(\lambda, \hat{\theta}_{k-1})^{-1}}^2
                    \Big]$
            \State $\lambda_k = \argmin_{\lambda\in \Delta_{\mc{X}}} f(\lambda)$
            \State{$n_k =  \max\{3(1+\epsilon)f(\lambda_k) \log(1/\delta_k), r(\epsilon)\}$}
            \State{$x_1, \cdots, x_{n_k}\leftarrow \rdp(n,\epsilon,\lambda)$} 
            \State{Observe rewards $y_1, \cdots, y_{n_k}\in \{0,1\}$.}
            \State{Compute the MLE $\hat\theta_k$ with $\{(x_i,y_i)\}_{i=1}^{n_k}$. \Comment{Eq~\eqref{eq:mle}}}
            \State{$\hat{z}_k = \arg\max_{z\in\mc{Z}_k} \hat\theta_k^\top z$}
            \State{$\mc{Z}_{k+1}\leftarrow \mc{Z}_k \setminus \left\{z\in \mc{Z}_k:\hat\theta_k^\top (\hat{z}_k-z) \geq 2^{-k}\right\}$}
            \State{$k\gets k+1$}
			\EndWhile
			\State \Return $\hat{z}_k$
		\end{algorithmic}
	\end{algorithm}

Our RAGE-GLM algorithm (Alg.~\ref{alg:rage-glm}) proceeds in rounds. 
In each round $k$, it maintains a set of active arms $\mathcal{Z}_k$. 
It computes an experimental design that is dependent on $\hat{\theta}_{k-1}$, the estimate of $\theta^{\ast}$ from the previous round, and uses this experimental design to draw $n_k$ samples. 
Then, the algorithm eliminates any arms verified to be sub-optimal using $\hat\th_{k}$. We now go into the algorithmic details.

\textbf{Rounding.} In the $k$-th round, we have $H_k(\theta) := \sum_{s=1}^{n_k}\dot\mu(x_s^{\top}\theta)x_sx_s^{\top}$.\footnote{
  We abuse notation in this section and use the subscript $k$ to denote the round, not the number of samples as in section~\ref{sec:ci}.
}
The algorithm utilizes an efficient rounding procedure to ensure that $H_k(\hat{\theta}_{k-1})$ is within a constant factor of $n_k \cd H(\lambda_k, \hat{\theta}_{k-1})$. 
Given distribution $\lambda_{k}$, tolerance $\epsilon$, and a number of samples $n_k \geq r(\epsilon)$, the procedure $\rdp(n_k, \epsilon, \lambda)$ returns an allocation $\{x_s\}_{s=1}^{n_k}$ such that for any $\theta\in\mathbb{R}^d$, $H_k(\theta)\geq \frac{n_k}{1+\epsilon}H(\lambda, \theta)$. 
Efficient rounding procedures with $r(\epsilon) = d^{2}/\epsilon$ are described in \cite{fiez2019sequential}; see the supplementary for more details. 

\begin{algorithm}[t]
\caption{BurnIn}
    \label{alg:rage-burnin}
	\begin{algorithmic}[1]
        \Input $\mc{X}, \kappa_0$
		\Initialize $\lambda_0 = \arg\min_{\lambda\in \Delta_{\mc{X}}}\max_{x\in \mathcal{X}} \|x\|_{A(\lambda)^{-1}}^{2}$ 
		\State $n_0 = 3(1+\epsilon)\kappa_0^{-1}d\gamma(d)\log(2|\cX|(2+|\cX|)/\delta)$
		\State $x_1, \cdots, x_{n_0}\leftarrow \rdp(n_0, \lambda_0, \epsilon)$
		\State Observe associated rewards $y_1, \cdots, y_{n_0}$.
		\State \Return MLE $\hat{\theta}_0$ \Comment{Use Eq~\eqref{eq:mle}}
	\end{algorithmic}
\end{algorithm}
\textfloatsep=.8em

\textbf{Burn-In Phase.} The burn-in phase computes $\hat{\theta}_0$, an estimate of $\theta^{\ast}$ to be used in the first round. 
To do so, we need to guarantee that $\theta^{\ast}$ is well-estimated in all directions $\mathcal{X}$, i.e., $|x^{\top}(\hat{\theta} - \theta^{\ast})| < 1, \forall x\in\cX$. 
Ensuring this requires that we can employ the confidence interval in Theorem~\ref{thm:concentration}. 
Thus, burn-in Algorithm~\ref{alg:rage-burnin} must ensure that $\max_{x\in \mathcal{X}}\|x\|^2_{(\sum_{s=1}^{n_0}\dot\mu(x_s^\top\theta^{\ast})x_s x_s^{\top})^{-1}} \leq 1/\gamma(d)$. 
As we yet lack information on $\theta^{\ast}$, we take the naive approximation: 
%
 \begin{align*}
  \sum_{s=1}^{n_0}\dot\mu({\theta^{\ast}}^{\top}x_s)x_s x_s^{\top} 
  &\geq \frac{n_0}{1+\epsilon}H(\lambda, \theta) \quad \text{(from rounding)}\\
  &\geq \frac{n_0}{1+\epsilon}\kappa_0 A(\lambda),
\end{align*}
and instead consider the optimization problem $\min_{\lambda\in \mc{X}}\max_{x\in \mc{X}} \|x\|_{A(\lambda)^{-1}}^2$. 
This is a G-optimal experimental design, and has a value of $d$ by the Kiefer-Wolfowitz theorem \citep{soare2014best}. For the burn-in phase we assume we have access to an upper bound on $\kappa_0^{-1}$, which is equivalent to knowing an upper bound on $\|\theta^*\|$.   


\textbf{Experimental Design.}
In each round, line~\ref{line:expDesign} of Algorithm~\ref{alg:rage-glm} optimizes a convex experimental design that minimizes two objectives simultaneously. The main objective
is
\begin{equation}
2^{2k}\min_{\lambda\in \Delta_\mc{X}}\max_{z,z'\in \mathcal{Z}_k} \|z-z'\|^2_{H(\lambda, \hat\theta_{k-1})^{-1}}.
\end{equation}
which ensures the the gap $\theta^{\top}(z^{\ast} - z)$ is estimated to an error of $2^{-k}$ for each $z$. This allows us to eliminate arms whose gaps are significantly larger than $2^{-k}$ in each round, guaranteeing that $\mathcal{Z}_k\subset \mathcal{S}_k := \{z:(z^{\ast}-z)^{\top}\theta^{\ast}\leq 2\cdot 2^{-k}\}$. 

The other component of line~\ref{line:expDesign} minimizes $\max_{x\in \mc{X}}\|x\|_{H(\lambda, \hat{\theta}_{k-1})^{-1}}^2$
similarly to the burn-in phase. This guarantees that we satisfy the conditions of Theorem~\ref{thm:concentration}. 
It additionally guarantees that the estimate $\hat\theta_{k}$ is sufficiently close to $\theta^{\ast}$ for all directions in $\mc{X}$. 
Combining this with self-concordance, $|\ddot{\mu}|\leq \dot\mu$, we show that $H(\lambda_k, \theta^{\ast})$ is within a constant factor of $H(\lambda_k, \hat\theta_k)$ (see the supplementary). 
We stop when $|\mc{Z}_k|=1$ and return the remaining arm.

\begin{theorem}[Sample Complexity]\label{thm:rage-sc}
Take $\delta < 1/e$ and assume $\|z\|\leq 1/2$ for all $z\in \cZ$. 
Define 
\begin{align*}
    \beta_k =  \min_{\lambda\in\Delta_{\mc{X}}}\max\bigg[2^{2k}\max_{z,z'\in \mc{S}_k} \|z-z'\|_{H(\lambda, \theta^{\ast})^{-1}}^2 ,\\\gamma(d)\max_{x}\|x\|_{H(\lam,\theta^{\ast})^{-1}}^2\bigg]
\end{align*}

Algorithm~\ref{alg:rage-glm} returns $z^{\ast}$ with probability greater than $1-3\delta$ in a number of samples no more than 
\begin{align*}
 &O\bigg(\sum_{k=1}^{\lceil\log_2(2/\Delta_{\min})\rceil}\beta_k \log(\max\{|\mc{Z}|, |\mc{X}|\}k^2/\delta)\\ 
& + d(1+\epsilon)\gamma(d)\kappa_0^{-1}\log(|\mc{X}|/\delta)+ r(\epsilon)\log(\Delta_{\min}^{-1})\bigg)
\end{align*}
 where $\Delta_{\min} = \min_{z\neq z^{\ast}\in \mc{Z}} \langle\theta^{\ast}, z^{\ast}-z\rangle$.  
\end{theorem}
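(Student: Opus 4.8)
The plan is to condition on a single high-probability good event $\cE$ that bundles (i) the success of the burn-in phase, (ii) the validity of Theorem~\ref{thm:concentration}---which simultaneously supplies the confidence interval and the variance-comparison event $\cE_{\mathsf{var}}$---in every round $k$ for the $\cZ_k$-differences used in the elimination test, and (iii) the same guarantee in the $\cX$-directions used to keep the estimator accurate. Each family of intervals is charged with the round budget $\delta_k=\delta/(2k^2\max\{|\cZ|,|\cX|\}(2+|\cX|))$, and a union bound over the at most $\max\{|\cZ|,|\cX|\}$ directions and $2+|\cX|$ distinct design points, combined with $\sum_k k^{-2}<2$, lets each of the three families fail with probability at most $\delta$, for a total of $3\delta$. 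On $\cE$ the remaining argument is purely deterministic.

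For correctness I would argue by induction on $k$ that, on $\cE$: (a) $z^*\in\cZ_k$ for all $k$; (b) $\cZ_k\subset\cS_k=\{z:(z^*-z)^\top\th^*\le 2\cdot 2^{-k}\}$; and (c) $|x^\top(\hat\theta_{k-1}-\th^*)|\le 1$ for every $x\in\cX$. The base case is the burn-in phase: the G-optimal design has value $d$ by Kiefer--Wolfowitz, and substituting $\kappa_0 A(\lambda)\preceq H(\lambda,\th^*)$ shows that $n_0\asymp\kappa_0^{-1}d\gamma(d)\log(|\cX|(2+|\cX|)/\delta)$ samples force $\max_{x\in\cX}\|x\|^2_{H_0(\th^*)^{-1}}\le 1/\gamma(d)$, so Theorem~\ref{thm:concentration} applies and yields (c) for $k=1$. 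For the inductive step, (c) together with self-concordance $|\ddot\mu|\le\dot\mu$ gives $\dot\mu(x^\top\hat\theta_{k-1})\asymp\dot\mu(x^\top\th^*)$ on the relevant directions, hence $H(\lambda,\hat\theta_{k-1})\asymp H(\lambda,\th^*)$; this converts the information constraint $\gamma(d)\max_{x}\|x\|^2_{H(\lambda_k,\hat\theta_{k-1})^{-1}}\le f(\lambda_k)$ enforced by the design into the hypothesis $\xi^2_{n_k}\le 1/\gamma(d)$ of Theorem~\ref{thm:concentration} stated at $\th^*$. Applying the confidence interval to each $z-z'$ with the rounding guarantee $H_k(\theta)\succeq\frac{n_k}{1+\epsilon}H(\lambda_k,\theta)$ bounds the gap estimation error by $2^{-k}$, which both prevents $z^*$ from being eliminated and removes every $z$ with $(z^*-z)^\top\th^*>2\cdot 2^{-k}$, giving (a) and (b); re-applying the interval in the $\cX$-directions re-establishes (c) for the next round and closes the induction.

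For the sample complexity I would note that once $2^{-k}<\Delta_{\min}/2$, property (b) forces $|\cZ_k|=1$, so the loop halts after $K=\lceil\log_2(2/\Delta_{\min})\rceil$ rounds. In round $k$ the algorithm draws $n_k=\max\{3(1+\epsilon)f(\lambda_k)\log(1/\delta_k),r(\epsilon)\}$ samples. Using $\cZ_k\subset\cS_k$ and $H(\lambda,\hat\theta_{k-1})\asymp H(\lambda,\th^*)$, I can bound the empirical design value $f(\lambda_k)$ (a minimum over $\lambda$) by a constant multiple of the oracle quantity $\beta_k$, since feeding the oracle minimizer into the empirical objective perturbs each term only by a constant. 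Summing $3(1+\epsilon)f(\lambda_k)\log(1/\delta_k)$ over the $K$ rounds with $\log(1/\delta_k)=O(\log(\max\{|\cZ|,|\cX|\}k^2/\delta))$ yields the leading $\sum_k\beta_k\log(\cdot)$ term; the $r(\epsilon)$ floor contributes $r(\epsilon)$ per round, hence $r(\epsilon)\log(\Delta_{\min}^{-1})$; and $n_0$ supplies the remaining $d(1+\epsilon)\gamma(d)\kappa_0^{-1}\log(|\cX|/\delta)$ term.

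The hard part will be the interplay between the empirically-driven design, computed with $\hat\theta_{k-1}$, and the confidence interval, stated with the unknown $\th^*$. Everything hinges on maintaining invariant (c) uniformly over $\cX$ so that the self-concordance comparison $H(\lambda,\hat\theta_{k-1})\asymp H(\lambda,\th^*)$ holds in all relevant directions; this single fact is what simultaneously certifies the hypothesis $\xi^2_{n_k}\le 1/\gamma(d)$ of Theorem~\ref{thm:concentration} and bounds $f(\lambda_k)$ by $\beta_k$. Propagating (c) across rounds without letting the self-concordance constants compound, and verifying that the ``information'' half of the design objective is strong enough to regenerate (c) each round while costing only an additive $\gamma(d)$-type term rather than the leading $2^{2k}$ term, is the technical crux of the proof.
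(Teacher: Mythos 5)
Your proposal is correct and follows essentially the same route as the paper's proof: your invariant (c) combined with the self-concordance comparison $H(\lambda,\hat\theta_{k-1})\asymp H(\lambda,\theta^*)$ is exactly the paper's events $\mc{R}_k$ (Lemmas on burn-in and closeness), your steps (a)--(b) are its correctness/concentration lemmas with the same chained conditional union bound yielding $1-3\delta$, and your bounding of the empirical design value $f(\lambda_k)$ by $O(\beta_k)$ via $\mc{Z}_k\subset\mc{S}_k$ matches the final accounting. No substantive differences or gaps to report.
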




\textbf{Interpreting the Upper Bound.} 
Before comparing our bound with prior work, we show concrete examples that show the strength of our sample complexity bound.

\textbf{Example 1. } 
Consider a simple setting where $\mc{Z} = \mc{X}=\{e_1, e_2\}\subset\RR^2$, and $\theta^{\ast} = (r, r-\epsilon)$, for $r\geq 0$. In this case, $\kappa_0^{-1} = \max_{i\in \{1,2\}} \dot\mu(z_i^{\top}\theta^{\ast})^{-1}\leq e^{r}$. Thus in the burn-in phase, we take roughly $\tilde{O}(e^{r})$ samples. 
Now, for small $\epsilon$, the minimizer of $\min_{\lambda\in \Delta_{\mc{X}}} \|e_1 - e_2\|_{H(\theta^{\ast})^{-1}}^2$ places roughly equal mass on $e_1$ and $e_2$, giving an objective value that is roughly bounded by $e^r$. Thus the sample complexity of Algorithm~\ref{alg:rage-glm} is $O(\sum_{k=1}^{\log_2(1/\epsilon)} 2^{2k} e^{r}\log(1/\delta)) \approx \frac{e^{r}}{\epsilon^2}$.
 
Note this problem is equivalent to a standard best-arm identification algorithm with two Bernoulli arms \citep{kaufmann2015complexity}. 
Standard results in Pure Exploration show that a lower bound on this problem is given by the KL-divergence 
  $\mathsf{KL}(\Bernoulli(\mu(\theta^{\top}\!z_1)), \Bernoulli(\mu(\theta^{\top}\!z_2)))^{-1} \!\approx\! \frac{e^r}{2\epsilon^2}$ for sufficiently small $\epsilon$. 
  This shows that our bound is at least no worse than the well-studied unstructured case. 

\textbf{Example 2. } We extend the above setting and consider $\mc{X} = \{e_1, e_2, e_1-e_2\}$, $\mc{Z} = \{e_1, e_2\}$ and the same $\theta^{\ast}$. As above, the burn-in phase requires $\kappa^{-1}\approx e^{r}$ samples. Starting from the first round, our computed experimental design will place all of its samples on the third arm.  In this case, $\min_{\lambda} \|e_1 - e_2\|_{H(\theta^{\ast})^{-1}}^2 = 1/\dot{\mu}(\epsilon)\leq C$, for small $\epsilon$.\footnote{With $H(\theta^{\ast})^{-1}$ interpreted as a pseudo-inverse.} The main term of the sample complexity becomes
\begin{align*}
O\del{\sum_{k=1}^{\log_2(1/\Delta_{\min})} 2^{2k}\log\frac1\delta}
  \leq O\del{\frac{1}{\epsilon^2}\log\frac1\delta}~.
\end{align*}

Hence ignoring burn-in or the additional samples we take in each round to guarantee the confidence interval, the total sample complexity would be $\tilde{O}(\frac{1}{\epsilon^2})$. This is exponentially smaller than in Example 1 and demonstrates the power of an informative arm in reducing the sample complexity. 

On the other hand, the burn-in phase, common to all logistic bandit algorithms based on the MLE, may potentially take a number of samples exponential in $r$. This example demonstrates the need for further work on understanding the precise dependence of $\kappa$ in pure exploration. In Section~\ref{sec:lower_kappa}, we take a first step towards this by showing $\kappa^{-1}$ burn-in is unavoidable in some cases.

\textbf{Comparison to past work.} 
As $\beta_k$ grows exponentially each round, the first element in the maximum for $\beta_k$ dominates our sample complexity. Focusing on this term while ignoring logarithmic terms and the burn-in samples, the sample complexity in Theorem~\ref{thm:rage-sc} scales as
\begin{align*}\label{eq:pure-exp-ub}
    \rho* := \sum_{k=1}^{\log_2(2/\Delta_{\min})} 2^{2k}\min_{\lambda\in \Delta_{\mc{X}}} \max_{z,z'\in \mc{S}_k} \|z-z'\|_{H(\theta^{\ast})}^2~.
\end{align*}

Importantly, this depends on $H(\theta^{\ast})$ instead of a loose bound based on $\kappa^{-1}$. In the supplementary we show
\[\rho*\leq \log\left(\frac{2}{\Delta_{\min}}\right)\min_{\lambda\in \Delta_{\mc{X}}}\max_{z\in \mc{Z}\setminus z^{\ast}} \frac{\|z^{\ast} - z\|_{H(\lambda, \theta^{\ast})^{-1}}^2}{\langle \theta^{\ast}, z^{\ast} - z\rangle^2}.\]
This is reminiscent of a similar quantity  that is within a $\log(\cdot)$ factor of being optimal for pure exploration linear bandits \citep{fiez2019sequential}. 
We provide a close connection between our upper bound and information theoretic lower bounds in the supplementary, although they do not match exactly.
We also prove a novel lower bound in moderate confidence regimes, which we elaborate more in Section~\ref{sec:lower_kappa}.

We now compare to the result of \citet{kazerouni2019best}. Using a variant of the UGapE algorithm for linear bandits \citep{xu2018fully}, they demonstrate a sample complexity 
$\tilde{O}(\frac{d|\mc{X}|}{\kappa^2\Delta^2_{\min}})$ in the setting where $\mc{X} =\mc{Z}$. 
This sample complexity, unlike ours, scales linearly with the number of arms, and only captures a dependency on the smallest gap. 
We note that one can improve on their sample complexity by using a naive passive algorithm that uses a fixed G-optimal design, along with the trivial bound $H(\lambda, \theta^{\ast})\geq \kappa_0 A(\lambda)$, resulting in $\tilde{O}(d/(\kappa_0\Delta_{\min}^2))$ \citep{soare2014best}.\footnote{This is equivalent to computing the allocation from Algorithm~\ref{alg:rage-burnin}, and sampling until all arms are eliminated.}
In contrast, the bound of Theorem~\ref{thm:rage-sc} only depends on the number of arms logarithmically, captures a local dependence on $\theta^{\ast}$, and has a better gap dependence. 

\textbf{Extra samples.} 
Algorithm~\ref{alg:rage-glm} potentially samples in each round to ensure the confidence interval is valid (i.e., the first argument of the $\max$ in line~\ref{line:expDesign}).
In our supplementary, we propose RAGE-GLM-2 that removes these samples needed in each round (but not the burn-in samples) by employing the confidence interval of \citet{faury2020improved}. This algorithm has a better asymptotic behavior as $\delta\rightarrow 0$, but could perform worse with large $d$ or $S_\ast$ due to an additional factor of $\sqrt{d}$ and a factor of $S_{\ast}^3$.

\subsection{Experiments}\label{sec:rage_experiments}


This section evaluates the empirical performance of \textbf{RAGE-GLM}, alongside two additional algorithms:
\begin{itemize}[leftmargin=*, noitemsep, topsep=0pt]
    \item \textbf{RAGE-GLM-R}: This is a heuristic version of RAGE-GLM that makes two changes. First, it does not do a burn-in in each round and samples from $\lambda_k =\min_{z,z'} \|z-z'\|_{H(\lambda, \hat\theta_{k-1})^{-1}}^2$ directly. Second, to compute the estimate $\hat\theta_k$, it uses all samples up to round $k$.
    \item \textbf{Passive Baseline}: This baseline runs the burn-in procedure and then computes the static design $\lambda = \min_{z,z'\in \mc{Z}} \|z-z'\|_{H(\lambda, \hat\theta_{0})^{-1}}^2$. It then proceeds in rounds, drawing $2^k$ samples in round $k$, terminating when we are able to verify that each arm is sub-optimal using the fixed design confidence interval (see Appendix for details). As in the heuristic, we recycle samples over rounds.   
\end{itemize}

\textbf{Remark.} We also implemented the algorithm of \cite{kazerouni2019best}. However a) the algorithm was extremely slow to run since an MLE and a convex optimization had to be run each round, b) the confidence bounds do not exploit the true variance. As a result, the algorithm did not terminate on any of our examples.

\begin{figure}
    \centering
    \includegraphics[width=.8\linewidth]{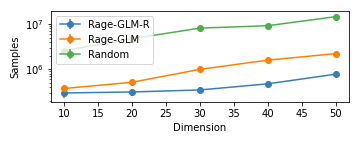}
    \vspace{-1.4em}
    \caption{Standard Baseline Example}
    \label{fig:soare}
    \vspace{-.8em}
\end{figure}

\begin{table}
        \centering
        \small
        \setlength\tabcolsep{4.1pt} 
        \begin{tabular}{|c|c|c|c|c|}
            \hline
            &open&pointy&sporty&comfort\\ 
            \hline
            $|\cZ|$&3327&2932&3219&3374\\
            \hline
            RAGE-GLM-R&\textbf{9.17e+07}&\textbf{2.38e+05}&\textbf{2.29e+05}&\textbf{3.34e+06}\\ 
            \hline
            RAGE-GLM&\textbf{9.17e+07}&\textbf{2.38e+05}&\textbf{2.29e+05}&4.55e+06\\ 
            \hline
            Passive&2.69e+08&\textbf{2.38e+05}&\textbf{2.29e+05}&8.54e+06\\
            \hline
        \end{tabular}
        \vspace{-.8em}
        \caption{Zappos pairwise comparison data, bold indicates a win.}
        \label{tab:zappos}
\end{table}

Our first experiment (Fig.~\ref{fig:soare}) is a common baseline in the linear bandits literature. We consider $d+1$ arms in $d$ dimensions with arm $i \in [1,n]$ being the $i$-th basis vector, and arm $i+1$ as $\cos(.1)e_1+\sin(.1)e_2$. We use $d\in \{10,20,30,40,50\}$ and 10 repetitions for each value of $d$. In all instances, all algorithms found the best arm correctly. RAGE-GLM was competitive against the heuristic RAGE-GLM-R and took roughly a factor of 10 less samples compared to Random.

Our next experiment is based on the Zappos pairwise comparison dataset~\cite{finegrained, semjitter}. This dataset consists of pairwise comparisons on 50k images of shoes and 960 dimensional vision-based feature vectors for each shoe. Given a pair of shoes, participants were asked to compare them on the the attributes of ``open'', ``pointy'',  ``sporty'' and ``comfort'' obtaining several thousand queries. For each one of these categories, we fit a logistic model to the set of pairwise comparisons after PCA-ing the features down to 25 dimensions (for computational tractability) and used the underlying weights as $\theta^{\ast}$. We then set $\cZ$ to be the set of shoes that were considered in that category and $\cX$ to be 5000 random pairs. Table~\ref{tab:zappos} shows the result. For the ``open'' and ``comfort'' category, RAGE-GLM took about a factor of 3 less samples compared to Random. The large sample complexity for ``open'' is due to an extremely small minimum gap of $O(10^{-4})$. For ``pointy'' and ``sporty'' the empirical gaps were large and all three algorithms terminated after the burn-in phase. Finally, $\kappa$ for all instances was on the order of $0.1$. 
See supplementary for a deeper discussion, alongside pictures of winning and informative shoes.

\section{\texorpdfstring{$1/\kappa_0$}{} Is Necessary}
\label{sec:lower_kappa}
In this section, we turn to lower bounds on the sample complexity of pure exploration problems. 
\vspace{-0.5em}
\begin{theorem}\label{thm:permuation_lower_bound}
Fix $\delta_1 < 1/16$, $d \geq 4$, and $\epsilon \in (0, 1/2]$ such that $d\varepsilon^2 \geq 12.2$. Let $\mc{Z}$ denote the action set and $\Theta$ denote a family of possible parameter vectors.
There exists instances satisfying the following properties simultaneously
\begin{enumerate}
    \item $|\mc{Z}| = |\Theta| = e^{\epsilon^2d/4}$ and $\|z\| =1 $ for all $z \in \mc{Z} $. 
    \item $S =\|\theta_{\ast}\|= O(\epsilon^2d)$
    \item Any algorithm that succeeds with probability at least $1-\delta_1$ satisfies
    $$\exists \th\in\Theta \text{ such that } \E_\th[T_{\delta_1}] > \Omega\left(e^{\epsilon^2d/4}\right) = c\left(\frac{1}{\kappa_0}\right)^{\frac{1-\epsilon}{1+3\epsilon}} $$
where $T_{\delta_1}$ is the random variable of the number of samples drawn by an algorithm and $c$ is an absolute constant. 
\end{enumerate}
\end{theorem}

The implications of this bound are two-fold. Firstly, it shows a family of instances where the dependence on $1/\kappa_0$ in the sample complexity of Algorithm~\ref{alg:rage-glm} is necessary.  
Secondly, this bound demonstrates a particular phenomenon of the logistic bandit problem: there are settings where $\kappa_0^{-1} \approx e^d$ samples are needed despite $\theta^\ast \in \R^d$. By contrast, for linear bandits, $O(d/\Delta_{\min}^2)$ samples are always sufficient \cite{soare2014best}. For the instances in the theorem, $\Delta_{\min} \geq \Omega(1-e^{-d})$. 
In the appendix, we state a second lower bound that captures the asymptotic sample complexity as $\delta \rightarrow0$, but show that this bound would suggest that only $O(\log(1/\delta))$ samples are necessary, which is vacuous for $\delta > e^{-e^d}$. 
Instead, the above \textit{moderate confidence} bound reflects the true sample complexity of the problem for values of $\delta$ seen in practice, e.g. $\delta\approx .05$. 
This dichotomy highlights that there are important challenges to logistic bandit problems that are not captured by their asymptotic sample complexity. 
In particular, this demonstrates that there exist instances of pure exploration logistic bandits that are  \textit{exponentially harder} than their linear counterparts. The proof is in the supplementary, inspired by a construction from \cite{dong2019performance}.

\vspace{-.5em}
\section{\texorpdfstring{$K$}{}-Armed Contextual Bandits}
\label{sec:suplogistic}
\vspace{-.5em}

We now switch gears and consider the contextual bandit setting where at each time step $t$ the environment presents the learner with an arm set $\cX_t = \{x_{t,1}, \ldots, x_{t,K} \}\subset \RR^d$
independently of the learner's history.
~\citep{auer2002using}.
The learner then chooses an arm index $a_t\in[K]$ and then receives a reward $y_t\sim\Bernoulli(\mu(x_{t,a_t}^\T \th^*))$, where parameter $\th^*$ is unknown to the learner.
Let $x_{t,a^*} = \argmax_{x\in \cX_t} \mu(x_{t,a}^\T \th^*)$ be the best arm at time step $t$.
The goal is to minimize the cumulative (pseudo-)regret over the time horizon $T$:
\begin{align}
    \mathsf{Reg}_T = \sum_{t=1}^T \mu(x_{t,a^*}^\T \th^*) - \mu(x_{t,a_t}^\T \th^*).
\end{align}
While the regret $\tilde O(d\sqrt{T} + d^2\kappa^{-1})$ is achievable by~\citet{faury2020improved}\footnote{$\tilde O$ hides poly-logarithmic factors in $T$}, one can aim to achieve an accelerated regret bound when $K=o(e^d)$.
Specifically, \citet{li2017provably} achieve the best-known bound of $\tilde O(\fr{1}{\kappa}\sqrt{dT\ln(K)})$.
However, the factor $1/\kappa$ is exponential w.r.t. $\|\th^\ast \|$, which makes the regret impractically large.
Leveraging our new confidence bound, we propose a new algorithm SupLogistic that removes $1/\kappa$ from the leading term: $\tilde O(\sqrt{dT\ln(K)})$. 

We assume that $\|x_{t,a}\| \le 1, \forall t\in[T], a\in[K]$, and that $\|\th^*\| \le S_*$ where $S_*$ is known to the learner. 
We follow \citet{li2017provably} and assume that there exists $\sig_0^2$ such that $\lammin(\EE[\fr1K \sum_{a\in[K]} x_{t,a} x_{t,a}^\T]) \ge \sig_0^2$, which is used to characterize the length of the burn-in period in our theorem.

We describe SupLogistic in Alg.~\ref{alg:SupLogistic}, which follows the standard mechanism for maintaining independent samples \citep{auer2002using}.
As the confidence bound is not available until enough samples are accrued, we perform $\tau$ time steps of burn-in sampling and then spread the samples across the buckets $\Psi_1,\ldots,\Psi_S, \Phi$ equally. 
Note that our burn-in sampling is different from \citet{li2017provably}. We show in the supplementary that their approach is problematic.
After the burn-in, in each time step $t$, we loop through the buckets $s\in[S]$.
In each loop, we compute $\th_{t-1}^{(s)}$, the MLE given in Eq~\eqref{eq:mle}, using the samples in the bucket $\Psi_s(t-1)$.
We compute $\th_\Phi$ in the same way using $\Phi$.
Let $X_t = x_{t,a_t}$.
For any $\th$, define 
\begin{align}
  H^{(s)}_{t}(\th) := \sum_{u\in\Psi_s(t)} \dmu(X_u^\T \th) X_u X_u^\T ~.
\end{align}
The algorithm computes the mean estimate and the confidence width of each arm $a\in[K]$ as follows:
\begin{align}\label{eq:m_w}
  \!\!\! m^{(s)}_{t,a} \!:=\! \la x_{t,a}, \theta_{t-1}^{(s)}\ra,  w^{(s)}_{t,a} \!:=\! \alpha \sqrt{2.2} \|x_{t,a}\|_{\!H_{t-1}^{(s)}\! (\theta_\Phi)^{- \! 1}}.
\end{align}
For each $s\in[S]$, we check if there is an underexplored arm (step \textbf{(a)}) and pull it.
Otherwise, we pull the arm with the highest empirical mean.
Finally, we filter arms whose empirical means are sufficiently far from the highest empirical mean and go to the next iteration.

The bucketing is important to maintain the validity of the concentration inequality in the analysis, which requires that the data satisfies the fixed design assumption in Theorem~\ref{thm:concentration}. 
Our comment on \citet{li2017provably} in our supplementary elaborates more on this issue. 

%


The main challenge of the design of SupLogistic over SupCB-GLM~\cite{li2017provably} is how we use our tight confidence bound, which requires the confidence width to depend on $\th^*$ (see Theorem~\ref{thm:concentration}).
Our solution is to use a separate bucket $\Phi$ dedicated for computing the width.
If we do not use $\Phi$ and use the empirical version of Theorem~\ref{thm:concentration}, we would break the fixed design assumption as we collect samples as a function of the rewards from the same bucket.
\textfloatsep=.5em

\begin{algorithm}[t]
\caption{SupLogistic}
\label{alg:SupLogistic}
\begin{algorithmic}[1]
    \Input{time horizon $T$, burn-in length $\tau$, and exploration rate $\alpha$}

    \Initialize{$S=\lfloor \log_2{T} \rfloor$}
    \Initialize{Buckets $\Psi_1=\cdots=\Psi_{S} = \Psi_{S+1} =\varnothing$}
    \For{$t\in[\tau]$}
        \State{Choose $a_t \in [K]$ uniformly at random.}
        \State{Add $a_t$ to the set $\Psi_{((t-1) \!\mod S+1) + 1}$.}
    \EndFor
    \Initialize{$\Psi_0 = \emptyset$, $\Phi = \Psi_{S+1}$}
    \For{$t=\tau+1,\tau+2,\cdots,T$}
        \Initialize{$A_1=[K]$, $s=1$, $a_t = \varnothing$.}
        \While{$a_t=\varnothing$}
            \State Compute $m_{t,a}$ and $w_{t,a}$ \Comment{use Eq~\eqref{eq:m_w}}
            \If {$w_{t,a}^{(s)}>2^{-s} \text{ for some } a \in A_s$} 
                \State $a_t=a$                                    \Comment{\textbf{(a)}}
                \State $\Psi_s \larrow \Psi_s \cup \{t\}$
            \ElsIf{$w_{t,a}^{(s)} \le 1/\sqrt{T}, \ \forall\ a \in A_s$}
                \State $a_t=\argmax_{a \in A_s} m_{t,a}^{(s)}$    \Comment{\textbf{(b)}}
                \State $\Psi_0 \larrow \Psi_0 \cup \{t\}$
            \ElsIf{$w_{t,a}^{(s)} \le 2^{-s}, \ \forall \ a \in A_s$} 
                \State {$A_{s+1} = \{a\in A_s: $ \Comment{\textbf{(c)}}
                \\ \hspace{19ex} $m_{t,a}^{(s)} \ge \max_{j \in A_s}\!\! m_{t,j}^{(s)} - 2 \cdot 2^{-s}\}$ }
                \State $s \leftarrow s+1$
            \EndIf
        \EndWhile 
    \EndFor
	\end{algorithmic}
\end{algorithm}

%

We present our regret bound in the following theorem whose proof can be found in our supplementary.
\begin{theorem}\label{thm:suplogistic}
  Let $T\!\ge\! d$, $\tau = \sqrt{dT}$, $\alpha=3.5 \sqrt{\ln(\fr{2(2+\tau)\cd 2STK}{\delta})}$, and $Z =\fr{d}{\kappa^2} + \fr{\ln^2(K/\dt)}{d\kappa^2}$.
  Then,  w.p. at least $1-\dt$,
  \begin{align*}
      \!\mathsf{Reg}_T 
      &\le 10 \alpha\sqrt{dT\ln(T/d)\log_2(T)}
    \\&\quad+ O\del{\fr{\alpha^2}{\kappa}d\ln\lt(\fr{\alpha^2 T}{\kappa}\rt) + Z\ln^4(Z)}.
  \end{align*}
\end{theorem}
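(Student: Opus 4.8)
The plan is to follow the \textsc{SupLinUCB} template of \citet{auer2002using,chu11contextual}, with three logistic-specific modifications: replacing the linear width by the fixed-design bound of Theorem~\ref{thm:concentration}, controlling the gap between $H^{(s)}_{t-1}(\theta_\Phi)$ and $H^{(s)}_{t-1}(\th^*)$, and — the crucial point — carrying the local curvature $\dmu$ through the potential argument so the leading term is free of $\kappa$. First I would set up the \textbf{good event}. The bucketing is exactly what makes Theorem~\ref{thm:concentration} applicable: the arms placed into $\Psi_s$ are chosen from the widths $w^{(s)}_{t,a}$, which depend only on $\theta_\Phi$ (computed from the disjoint bucket $\Phi$) and on past arms in $\Psi_s$, and from the surviving set $A_s$, which is determined by the rewards of $\Psi_1,\dots,\Psi_{s-1}$ only; hence conditionally on the realized design in $\Psi_s$ the rewards there are independent, which is the fixed-design hypothesis of Theorem~\ref{thm:concentration}. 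Applying that theorem to every $(s,t,a)$ and union-bounding over the $S$ buckets, $T$ rounds, $K$ arms, and the $(2+\tau)$ effective-vector factor produces the stated $\alpha$; on this event $|x_{t,a}^\T(\theta^{(s)}_{t-1}-\th^*)|\le w^{(s)}_{t,a}$ for all $s,t,a$, where the extra $\sqrt{2.2}$ absorbs the spectral equivalence $\tfrac1{2.2}H^{(s)}_{t-1}(\th^*)\preceq H^{(s)}_{t-1}(\theta_\Phi)\preceq 2.2\,H^{(s)}_{t-1}(\th^*)$ coming from $\cE_{\mathsf{var}}$ (plus a pointwise $\dmu(x^\T\theta_\Phi)\asymp\dmu(x^\T\th^*)$ via self-concordance). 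I would derive this equivalence from the burn-in and the width constraint in bucket $\Phi$, mirroring the burn-in analysis of Algorithm~\ref{alg:rage-glm}.

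Next comes the \textbf{per-round regret}. The elimination arithmetic is standard: on the good event $a^*$ survives in every $A_s$, and any arm reaching level $s\ge2$ has linear gap $(x_{t,a^*}-x_{t,a_t})^\T\th^*=O(2^{-s})$, so an arm pulled at level $s$ via step \textbf{(a)} has $\Delta_t=O(2^{-s})=O(w^{(s)}_{t,a_t})=O(\alpha\|x_{t,a_t}\|_{H^{(s)}_{t-1}(\theta_\Phi)^{-1}})$. The logistic twist is to convert this linear gap into $\mu$-regret using local curvature rather than the crude $\tfrac14$-Lipschitz bound: since $\Delta_t=O(1)$, the mean-value theorem together with self-concordance ($|\ddot\mu|\le\dmu$) gives $\mu(x_{t,a^*}^\T\th^*)-\mu(x_{t,a_t}^\T\th^*)=O(\dmu(x_{t,a_t}^\T\th^*)\,\Delta_t)$. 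Writing $g_t:=\dmu(x_{t,a_t}^\T\th^*)$, the instantaneous regret at level $s$ is $O(\alpha\,g_t\|x_{t,a_t}\|_{H^{(s)}_{t-1}(\theta_\Phi)^{-1}})$. (Level-$1$ pulls enjoy no survival guarantee, but $\mu$-regret is always $\le1$ and their count is controlled by the potential; they fall into the lower-order $\kappa$-dependent terms.)

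The \textbf{crux} is summing over each bucket without paying $1/\kappa$ in leading order. Factoring $g_t\|x\|_{H^{-1}}=\sqrt{g_t}\cdot(\sqrt{g_t}\|x\|_{H^{-1}})$ and applying Cauchy--Schwarz over $\Psi_s$,
\[
\sum_{t\in\Psi_s}\mathrm{regret}_t\;\lesssim\;\alpha\sqrt{\textstyle\sum_{t\in\Psi_s}g_t}\;\sqrt{\textstyle\sum_{t\in\Psi_s}g_t\|x_{t,a_t}\|^2_{H^{(s)}_{t-1}(\theta_\Phi)^{-1}}}.
\]
The first factor is $\le\sqrt{|\Psi_s|/4}$ since $g_t\le\tfrac14$. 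For the second I invoke the weighted log-determinant telescoping $\det H^{(s)}_t=\det H^{(s)}_{t-1}(1+g_t\|x_{t,a_t}\|^2_{H^{(s)}_{t-1}(\th^*)^{-1}})$, which — because $g_t\|x\|^2_{H^{-1}}\le\xi_t^2\le1$ after burn-in — is $O(d\log(T/(d\kappa\sigma_0)))$; here $\kappa$ enters only logarithmically through $\log\det$ of the burn-in-initialized matrix, and the curvature weight $g_t$ is exactly the one that the $\dmu$-weighted norm of Theorem~\ref{thm:concentration} supplies, so it cancels rather than producing $1/\kappa$. The $\theta_\Phi\to\th^*$ swap costs only the constant $2.2$. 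Thus each bucket contributes $O(\alpha\sqrt{|\Psi_s|\,d\log(\cdot)})$, and a final Cauchy--Schwarz across the $S=\lfloor\log_2T\rfloor$ buckets with $\sum_s|\Psi_s|\le T$ gives the leading term $O(\alpha\sqrt{dT\,\log(T/d)\,\log_2T})$.

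Finally I would \textbf{collect the remaining terms}: the burn-in costs at most $\tau=\sqrt{dT}$ (absorbed), the exploit bucket $\Psi_0$ costs $O(\sqrt T)$ since its widths are $\le1/\sqrt T$, and the additive $\kappa$-terms come from the horizon in which the burn-in has not yet made every $H^{(s)}$ well-conditioned: forcing $\lambda_{\min}(H^{(s)})\ge\gamma(d)$ through the $\sigma_0$-assumption and $\dmu\ge\kappa$ needs $\approx d\gamma(d)/(\kappa\sigma_0^2)$ samples per bucket, so for $T\lesssim Z=\tfrac{d}{\kappa^2}+\tfrac{\ln^2(K/\dt)}{d\kappa^2}$ the run is charged trivially, yielding $Z\ln^4Z$, while the per-bucket warm-up accounts for $O(\tfrac{\alpha^2}{\kappa}d\ln(\tfrac{\alpha^2T}{\kappa}))$. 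The step I expect to be the main obstacle is making the $\kappa$-free leading term rigorous: it hinges on the exact matching between the curvature weight $g_t$ in the regret-to-width conversion and the identical weight defining $H^{(s)}$ (valid only because Theorem~\ref{thm:concentration}'s width is the $\dmu$-weighted inverse norm), and keeping this matching intact while simultaneously substituting $\theta_\Phi$ for $\th^*$ inside both $\dmu$ and $H^{(s)}$ and guaranteeing $g_t\|x\|^2_{H^{-1}}\le1$ uniformly is exactly where the separate $\Phi$ bucket, the event $\cE_{\mathsf{var}}$, and the careful $\kappa$-additive accounting all become essential.
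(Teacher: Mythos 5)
Your proposal follows essentially the same route as the paper's proof: the same good events (fixed-design concentration from Theorem~1 applied per bucket, with the separate $\Phi$ bucket supplying the widths and a diversity/burn-in event via the $\sigma_0$ assumption yielding the $Z\ln^4 Z$ threshold for $T<T_0$), the same curvature-aware conversion of the elimination gap $O(2^{-s})$ into $\mu$-regret, and the same $\sqrt{\dot\mu}$-weighted Cauchy--Schwarz plus elliptical-potential (log-determinant telescoping) argument that keeps the leading $\sqrt{dT}$ term free of $\kappa$, assembled across buckets exactly as in the paper. The only substantive variation is bookkeeping of the lower-order $\frac{\alpha^2}{\kappa}d\ln(\cdot)$ term: the paper produces it from the quadratic remainder in its Taylor bound $\dot\mu(x_{t,a_t}^\top\theta^*)\,8\cdot 2^{-s} + 64M\cdot 2^{-2s}$ summed with the $\dot\mu\ge\kappa$ trick, whereas you produce it by bounding the \emph{count} of level-1 pulls through the same potential --- both mechanisms are valid and give the same order.
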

Our bound improves upon SupCB-GLM \citep{li2017provably} by removing the factor $1/\kappa = \Theta(\exp(S_*))$ in the leading term (i.e., $\sqrt{T}$ term).
Ours further improves the dependence on the non-leading term from $\tilde O(\frac{d^3}{\kappa^4})$ to $\tilde O(\frac{d}{\kappa^2})$.
Such an improvement parallels that of~\citet{faury2020improved} with $\tilde O(d\sqrt{T})$ upon UCB-GLM~\citep{li2017provably} with $\tilde O(\fr1\kappa d\sqrt{T})$.

Compared to Logistic-UCB-2 of \citet{faury2020improved}, our regret bound can be better or worse depending on the problem, which we summarize in three cases.
%
%
First, ours has a factor of $\sqrt{d\ln(K)}$ in the $\sqrt{T}$ term, which is a $\sqrt{d}$ factor better than theirs when $K = o(e^d)$.
Secondly, our bound's lower order term scales like $1/\kappa^2$, which is worse than $1/\kappa$ of Logistic-UCB-2.
Thirdly, while Logistic-UCB-2 manages to avoid an exponential dependence on $S_*$ in the leading term, the regret still has a factor $S_*^{1.5}$ and requires the knowledge of $S_*$.\footnote{Which may be removed if their algorithm uses a burn-in phase.}
In contrast, our bound does not depend on $S_*$ in the leading term nor require the knowledge of $S_*$.
\begin{remark}
  A parallel work by \citet{abeille20instancewise} achieves a leading term of $d\sqrt{\dot\mu(x_*^\T\theta^*)T}$ in the regret bound where $x_*$ is the best arm that is fixed throughout. 
  This is possible since their setting assumes a fixed arm set. 
  In contrast, our setting assumes that the arm set is changing, so the best arm can change in every time step.
  For this reason, we do not expect to achieve a factor like $\sqrt{\dot\mu(x_*^\T\theta^*)}$ in the leading term without further assumptions.
\end{remark}
\vspace{-.7em}

\vspace{-.5em}
\section{Future work}
\vspace{-.5em}
\label{sec:future}


Our confidence bound utilizes self-concordance and local analysis to significantly improve upon the existing state of the art results for the logistic MLE.
We remove a direct dependence on $\kappa^{-1}$ in the confidence width and relax significantly the requirement on the minimum sample size for the bound to be valid. To better leverage our knowledge burn-in condition, we hope to develop online procedures that adapt to $\theta^{\ast}$ instead of paying a worst case dependence in $\kappa$ to satisfy the burn-in condition.
Furthermore, understanding the optimal burn-in condition is an important open problem with practical implications.



Pure exploration for linear logistic models is largely under-explored, although its applications are abundant.
Exploiting the local nature of the logistic loss and closely working with non-uniform variances that naturally arise from the model is crucial in sample-efficient design of experiments. Our work is an important first step on understanding the true sample complexity of this problem and determining the precise dependence of the sample complexity on $\kappa_0^{-1}$ is an exciting direction. 

A major road block to developing practical contextual bandit algorithms is the fact that SupLogistic (and its ancestors like~\citet{auer2002using}) have to maintain independent buckets, and cannot share samples across buckets. It would be interesting to develop new algorithms that do not waste samples, without increasing the regret bound.
\citet{foster20beyond} have proposed such an algorithm but its dependence on the number of arms is sub-optimal.


\bibliography{references}

\begin{thebibliography}{36}
\providecommand{\natexlab}[1]{#1}
\providecommand{\url}[1]{\texttt{#1}}
\expandafter\ifx\csname urlstyle\endcsname\relax
  \providecommand{\doi}[1]{doi: #1}\else
  \providecommand{\doi}{doi: \begingroup \urlstyle{rm}\Url}\fi

\bibitem[Abeille et~al.(2020)Abeille, Faury, and
  Calauz{\`{e}}nes]{abeille20instancewise}
Abeille, M., Faury, L., and Calauz{\`{e}}nes, C.
\newblock {Instance-Wise Minimax-Optimal Algorithms for Logistic Bandits},
  2020.

\bibitem[Antos et~al.(2010)Antos, Grover, and Szepesv{\'a}ri]{antos2010active}
Antos, A., Grover, V., and Szepesv{\'a}ri, C.
\newblock Active learning in heteroscedastic noise.
\newblock \emph{Theoretical Computer Science}, 411\penalty0 (29-30):\penalty0
  2712--2728, 2010.

\bibitem[Auer(2002)]{auer2002using}
Auer, P.
\newblock Using confidence bounds for exploitation-exploration trade-offs.
\newblock \emph{The Journal of Machine Learning Research}, 3:\penalty0
  397--422, 2002.

\bibitem[Biswas et~al.(2019)Biswas, Pham, Vogelsong, Snyder, and
  Nassif]{biswas2019seeker}
Biswas, A., Pham, T.~T., Vogelsong, M., Snyder, B., and Nassif, H.
\newblock Seeker: Real-time interactive search.
\newblock In \emph{Proceedings of the 25th ACM SIGKDD International Conference
  on Knowledge Discovery and Data Mining}, pp.\  2867--2875, 2019.

\bibitem[Chu et~al.(2011)Chu, Li, Reyzin, and Schapire]{chu11contextual}
Chu, W., Li, L., Reyzin, L., and Schapire, R.~E.
\newblock {Contextual Bandits with Linear Payoff Functions.}
\newblock In \emph{Proceedings of the International Conference on Artificial
  Intelligence and Statistics (AISTATS)}, volume~15, pp.\  208--214, 2011.

\bibitem[Dani et~al.(2008)Dani, Hayes, and Kakade]{dani08stochastic}
Dani, V., Hayes, T.~P., and Kakade, S.~M.
\newblock {Stochastic Linear Optimization under Bandit Feedback.}
\newblock In \emph{Proceedings of the Conference on Learning Theory (COLT)},
  pp.\  355--366, 2008.

\bibitem[Degenne et~al.(2020)Degenne, M{\'e}nard, Shang, and
  Valko]{degenne2020gamification}
Degenne, R., M{\'e}nard, P., Shang, X., and Valko, M.
\newblock Gamification of pure exploration for linear bandits.
\newblock In \emph{ICML 2020: 37th International Conference on Machine
  Learning}, 2020.

\bibitem[Dong et~al.(2019)Dong, Ma, and Van~Roy]{dong2019performance}
Dong, S., Ma, T., and Van~Roy, B.
\newblock On the performance of thompson sampling on logistic bandits.
\newblock \emph{arXiv preprint arXiv:1905.04654}, 2019.

\bibitem[Faury et~al.(2020)Faury, Abeille, Calauz{\`e}nes, and
  Fercoq]{faury2020improved}
Faury, L., Abeille, M., Calauz{\`e}nes, C., and Fercoq, O.
\newblock Improved optimistic algorithms for logistic bandits.
\newblock In \emph{Proceedings of the International Conference on Machine
  Learning (ICML)}, pp.\  3052--3060, 2020.

\bibitem[Fiez et~al.(2019)Fiez, Jain, Jamieson, and
  Ratliff]{fiez2019sequential}
Fiez, T., Jain, L., Jamieson, K.~G., and Ratliff, L.
\newblock Sequential experimental design for transductive linear bandits.
\newblock In \emph{Advances in Neural Information Processing Systems}, pp.\
  10667--10677, 2019.

\bibitem[Filippi et~al.(2010)Filippi, Cappe, Garivier, and
  Szepesv{\'a}ri]{filippi2010parametric}
Filippi, S., Cappe, O., Garivier, A., and Szepesv{\'a}ri, C.
\newblock Parametric bandits: The generalized linear case.
\newblock In \emph{Advances in Neural Information Processing Systems}, pp.\
  586--594, 2010.

\bibitem[Foster \& Rakhlin(2020)Foster and Rakhlin]{foster20beyond}
Foster, D.~J. and Rakhlin, A.
\newblock {Beyond UCB: Optimal and efficient contextual bandits with regression
  oracles}.
\newblock \emph{Proceedings of the International Conference on Machine Learning
  (ICML)}, 2020.

\bibitem[Jun et~al.(2017)Jun, Bhargava, Nowak, and Willett]{jun2017scalable}
Jun, K.-S., Bhargava, A., Nowak, R., and Willett, R.
\newblock Scalable generalized linear bandits: Online computation and hashing.
\newblock In \emph{Advances in Neural Information Processing Systems}, pp.\
  99--109, 2017.

\bibitem[Karnin et~al.(2013)Karnin, Koren, and Somekh]{karnin2013almost}
Karnin, Z., Koren, T., and Somekh, O.
\newblock Almost optimal exploration in multi-armed bandits.
\newblock In \emph{International Conference on Machine Learning}, pp.\
  1238--1246, 2013.

\bibitem[Katz-Samuels et~al.(2020)Katz-Samuels, Jain, Karnin, and
  Jamieson]{katz2020empirical}
Katz-Samuels, J., Jain, L., Karnin, Z., and Jamieson, K.
\newblock An empirical process approach to the union bound: Practical
  algorithms for combinatorial and linear bandits.
\newblock \emph{arXiv preprint arXiv:2006.11685}, 2020.

\bibitem[Kaufmann et~al.(2015)Kaufmann, Cappé, and
  Garivier]{kaufmann2015complexity}
Kaufmann, E., Cappé, O., and Garivier, A.
\newblock On the complexity of a/b testing, 2015.

\bibitem[Kaufmann et~al.(2016)Kaufmann, Capp{\'e}, and
  Garivier]{kaufmann2016complexity}
Kaufmann, E., Capp{\'e}, O., and Garivier, A.
\newblock On the complexity of best-arm identification in multi-armed bandit
  models.
\newblock \emph{The Journal of Machine Learning Research}, 17\penalty0
  (1):\penalty0 1--42, 2016.

\bibitem[Kazerouni \& Wein(2019)Kazerouni and Wein]{kazerouni2019best}
Kazerouni, A. and Wein, L.~M.
\newblock Best arm identification in generalized linear bandits.
\newblock \emph{arXiv preprint arXiv:1905.08224}, 2019.

\bibitem[Lattimore \& Szepesv{\'a}ri(2020)Lattimore and
  Szepesv{\'a}ri]{lattimore2020bandit}
Lattimore, T. and Szepesv{\'a}ri, C.
\newblock \emph{Bandit algorithms}.
\newblock Cambridge University Press, 2020.

\bibitem[Lehmann \& Casella(2006)Lehmann and Casella]{lehmann2006theory}
Lehmann, E.~L. and Casella, G.
\newblock \emph{Theory of point estimation}.
\newblock Springer Science \& Business Media, 2006.

\bibitem[Li et~al.(2010)Li, Chu, Langford, and Schapire]{li10acontextual}
Li, L., Chu, W., Langford, J., and Schapire, R.~E.
\newblock {A Contextual-Bandit Approach to Personalized News Article
  Recommendation}.
\newblock \emph{Proceedings of the International Conference on World Wide Web
  (WWW)}, pp.\  661--670, 2010.

\bibitem[Li et~al.(2017)Li, Lu, and Zhou]{li2017provably}
Li, L., Lu, Y., and Zhou, D.
\newblock Provably optimal algorithms for generalized linear contextual
  bandits.
\newblock In \emph{Proceedings of the 34th International Conference on Machine
  Learning}, pp.\  2071--2080, 2017.

\bibitem[Li et~al.(2019)Li, Wang, and Zhou]{li19nearly}
Li, Y., Wang, Y., and Zhou, Y.
\newblock {Nearly Minimax-Optimal Regret for Linearly Parameterized Bandits}.
\newblock In \emph{Proceedings of the Conference on Learning Theory (COLT)},
  volume~99, pp.\  2173--2174, 2019.

\bibitem[Plan \& Vershynin(2012)Plan and Vershynin]{plan12robust}
Plan, Y. and Vershynin, R.
\newblock {Robust 1-bit compressed sensing and sparse logistic regression: A
  convex programming approach}.
\newblock \emph{IEEE Transactions on Information Theory}, 59\penalty0
  (1):\penalty0 482--494, 2012.

\bibitem[Pollard(1990)]{pollard1990empirical}
Pollard, D.
\newblock {Empirical processes: theory and applications}.
\newblock In \emph{NSF-CBMS regional conference series in probability and
  statistics}. JSTOR, 1990.

\bibitem[Pukelsheim(2006)]{pukelsheim}
Pukelsheim, F.
\newblock \emph{Optimal design of experiments}.
\newblock SIAM, 2006.

\bibitem[Russo \& Van~Roy(2014)Russo and Van~Roy]{russo14learning}
Russo, D. and Van~Roy, B.
\newblock Learning to optimize via posterior sampling.
\newblock \emph{Mathematics of Operations Research}, 39\penalty0 (4):\penalty0
  1221--1243, 2014.

\bibitem[Sawant et~al.(2018)Sawant, Namballa, Sadagopan, and
  Nassif]{SawantHVAE18}
Sawant, N., Namballa, C.~B., Sadagopan, N., and Nassif, H.
\newblock Contextual multi-armed bandits for causal marketing.
\newblock In \emph{Proceedings of the International Conference on Machine
  Learning (ICML'18) Workshops}, Stockholm, Sweden, 2018.

\bibitem[Soare et~al.(2014)Soare, Lazaric, and Munos]{soare2014best}
Soare, M., Lazaric, A., and Munos, R.
\newblock Best-arm identification in linear bandits.
\newblock In \emph{Advances in Neural Information Processing Systems}, pp.\
  828--836, 2014.

\bibitem[Teo et~al.(2016)Teo, Nassif, Hill, Srinivasan, Goodman, Mohan, and
  Vishwanathan]{Teo2016airstream}
Teo, C.~H., Nassif, H., Hill, D., Srinivasan, S., Goodman, M., Mohan, V., and
  Vishwanathan, S.
\newblock Adaptive, personalized diversity for visual discovery.
\newblock In \emph{Proceedings of the 10th ACM Conference on Recommender
  Systems (RecSys)}, pp.\  35--38, Boston, MA, 2016.

\bibitem[Thompson(1933)]{thompson33onthelikelihood}
Thompson, W.~R.
\newblock {On the Likelihood that One Unknown Probability Exceeds Another in
  View of the Evidence of Two Samples}.
\newblock \emph{Biometrika}, 25\penalty0 (3/4):\penalty0 285, 1933.

\bibitem[Vershynin(2018)]{vershynin2018high}
Vershynin, R.
\newblock \emph{High-dimensional probability: An introduction with applications
  in data science}, volume~47.
\newblock Cambridge university press, 2018.

\bibitem[Xu et~al.(2018)Xu, Honda, and Sugiyama]{xu2018fully}
Xu, L., Honda, J., and Sugiyama, M.
\newblock A fully adaptive algorithm for pure exploration in linear bandits.
\newblock In \emph{International Conference on Artificial Intelligence and
  Statistics}, pp.\  843--851. PMLR, 2018.

\bibitem[Yu \& Grauman(2014)Yu and Grauman]{finegrained}
Yu, A. and Grauman, K.
\newblock Fine-grained visual comparisons with local learning.
\newblock In \emph{Computer Vision and Pattern Recognition (CVPR)}, Jun 2014.

\bibitem[Yu \& Grauman(2017)Yu and Grauman]{semjitter}
Yu, A. and Grauman, K.
\newblock Semantic jitter: Dense supervision for visual comparisons via
  synthetic images.
\newblock In \emph{International Conference on Computer Vision (ICCV)}, Oct
  2017.

\bibitem[Yue et~al.(2012)Yue, Broder, Kleinberg, and Joachims]{Yue2012Dueling}
Yue, Y., Broder, J., Kleinberg, R., and Joachims, T.
\newblock The $k$-armed dueling bandits problem.
\newblock \emph{Journal of Computer and System Sciences}, 78:\penalty0
  1538--1556, 2012.

\end{thebibliography}
\bibliographystyle{icml2021}

\twocolumn
\newcommand{\sqrtf}[2]{\sqrt{\frac{#1}{#2}}}



\def \1{\mathbbm{1}}





\setlength{\abovedisplayskip}{4pt}
\setlength{\belowdisplayskip}{4pt}
\setlength{\abovedisplayshortskip}{4pt}
\setlength{\belowdisplayshortskip}{4pt}
\setlist[itemize]{topsep=.5pt,itemsep=0pt,parsep=2pt}
\setlist[enumerate]{topsep=.5pt,itemsep=0pt,parsep=2pt}

%

%

\onecolumn

\appendix
\addcontentsline{toc}{section}{Appendix} 
\part{} 
\parttoc 

\renewcommand\thesection{\Alph{section}}




\section{Proof of Theorem 1}


We state the full version of our concentration result.
\begin{theorem}\label{thm:concentration-supp}
  Let $\dt \le e^{-1}$. 
  Let $\hat\th_t$ be the solution of Eq.~{\normalfont(1)} in the main text where, for every $s\in[t]$, $y_s$ is conditionally independent from $x_1,\ldots,x_{s-1},x_{s+1},\ldots,x_t$ given $x_s$ (i.e., fixed design).
  Let $t_{\mathsf{eff}}$ be the number of distinct vectors in $\{x_s\}_{s=1}^t$.
  Fix $x\in\RR^d$ such that $\|x\|\le1$.
  Define ${\gamma(d)} = d + \ln(6(2+\teff)/\dt)$.
  If ${\xi^2_t} := \max_{s\in[t]} \|x_s\|^2_{H_{t}(\th^*)^{-1}} \le \fr{1}{\gamma(d)}$,
  \begin{align*}
      \PP \Big( |x^\T(\hat\th_t - \th^*)|  \le 3.5 \cd\|x \|_{H_{t}(
              \th^*)^{-1}} \sqrt{\ln(2(2+\teff)/\dt)},
            \qquad\qquad\qquad\qquad\qquad\\
              \forall x'\in\RR^d, \fr{1}{\sqrt{2.2}}\|x'\|_{(H_t(\th^*))^{-1}}
              \le \|x'\|_{(H_t(\hat\th_t))^{-1}} 
              \le \sqrt{2.2} \|x'\|_{(H_t(\th^*))^{-1}}
           \Big) \ge 1-\dt~,
  \end{align*}
  which implies the following empirical variance bound:
  \begin{align*}
      \PP\del{ |x^\T(\hat\th_t - \th^*) | \le 5.2 \cd\|x \|_{H_{t}(
              \hat\th_t)^{-1}} \sqrt{\ln(2(2+\teff)/\dt)} } \ge 1-\dt~.
  \end{align*} 
\end{theorem}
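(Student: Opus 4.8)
The plan is to read the empirical variance bound (the final display) off the preceding joint bound, so that the real content sits in establishing the two-sided spectral equivalence $\cE_{\mathsf{var}}$ between $H_t(\hth_t)$ and $H_t(\th^*)$ rather than in the final implication itself. Concretely, once I have an event of probability at least $1-\dt$ on which both the concentration inequality $|x^\T(\hth_t-\th^*)|\le 3.5\|x\|_{H_t(\th^*)^{-1}}\sqrt{\ln(2(2+\teff)/\dt)}$ and $\cE_{\mathsf{var}}$ hold, the $\|x\|_{H_t(\hth_t)^{-1}}$-bound is one substitution away.

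First I would establish the $H_t(\th^*)^{-1}$ concentration bound exactly along the lines of the proof sketch: write $z=G(\hth_t-\th^*)$ with $E:=G-H$, control $x^\T H^{-1}z$ by Bernstein's inequality under $\xi_t^2\le 1/\gamma(d)$, bound the cross term $x^\T H^{-1}E(H+E)^{-1}z$ using the self-concordance inequality of Faury et al.\ together with a covering argument, and close the resulting recursion for $D:=\max_{s\in[t]}|x_s^\T(\hth_t-\th^*)|$ via the implicit-equation argument to get $D=O(\xi_t\sqrt{\ln(1/\dt)})$; the only added care relative to the sketch is tracking constants so the leading factor is exactly $3.5$. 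Next, on the same event I would establish $\cE_{\mathsf{var}}$. Here I invoke the self-concordance of the logistic link, $|\ddot\mu|\le\dot\mu$, which yields $\dot\mu(x_s^\T\hth_t)\in[e^{-D}\dot\mu(x_s^\T\th^*),\,e^{D}\dot\mu(x_s^\T\th^*)]$ for every $s$, hence the Loewner sandwich $e^{-D}H_t(\th^*)\preceq H_t(\hth_t)\preceq e^{D}H_t(\th^*)$. Inverting (which reverses the ordering) and taking quadratic forms and square roots gives, for all $x'$, $e^{-D/2}\|x'\|_{H_t(\th^*)^{-1}}\le\|x'\|_{H_t(\hth_t)^{-1}}\le e^{D/2}\|x'\|_{H_t(\th^*)^{-1}}$. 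The burn-in threshold $\xi_t^2\le 1/\gamma(d)$, via the bound on $D$ and the fact that $\gamma(d)\ge\ln(1/\dt)$, is calibrated precisely so that $D\le\ln(2.2)$, i.e.\ $e^{D/2}\le\sqrt{2.2}$; this is exactly $\cE_{\mathsf{var}}$, and since $D$ is controlled on the same high-probability event as the Bernstein steps, no further union bound is needed beyond the $1-\dt$ already present.

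Finally, the empirical variance bound follows in one line. On the joint event I apply the left-hand inequality of $\cE_{\mathsf{var}}$ with $x'=x$, namely $\tfrac{1}{\sqrt{2.2}}\|x\|_{H_t(\th^*)^{-1}}\le\|x\|_{H_t(\hth_t)^{-1}}$, to replace $\|x\|_{H_t(\th^*)^{-1}}$ by $\sqrt{2.2}\,\|x\|_{H_t(\hth_t)^{-1}}$ in the concentration bound, and absorb the constant through $3.5\sqrt{2.2}=5.19\ldots\le 5.2$. I expect the main obstacle to be the second step rather than this trivial implication: pinning $D$ down to a constant small enough that $e^{D}\le 2.2$ so the clean factor $\sqrt{2.2}$ survives, which is where the implicit-equation control of $D$ and the tightness of the threshold $1/\gamma(d)$ are essential; the first-display concentration is a matter of careful bookkeeping once that control is in hand.
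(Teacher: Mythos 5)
Your proposal is correct and takes essentially the same route as the paper's proof: the same decomposition $z_t = G(\hat\th_t-\th^*)$ handled via Bernstein's inequality, a covering argument, self-concordance, and the implicit-equation control of $D=\max_{s\in[t]}|x_s^\T(\hat\th_t-\th^*)|$, followed by a deterministic variance sandwich and the substitution $3.5\sqrt{2.2}\le 5.2$. The only cosmetic difference is your $e^{\pm D}$ sandwich obtained directly from $|\ddot\mu|\le\dot\mu$, where the paper states the factor $(2D+1)^{\pm 1}$; these coincide in substance, since the paper's own argument passes through the factor $e^{-D}$ before loosening it to $1/(2D+1)$, and both are closed by the same control $D\le 3/5<\ln 2.2$ furnished by the burn-in condition.
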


To improve the concentration inequality from~\citet{li2017provably}, we follow their analysis closely but exploit the variance term whenever possible.

We define the following:
\begin{itemize}
    \item Let ${H} := H_t(\th^*) = \sum_{s=1}^t \dot\mu(x_s^\T\th^*) x_s x_s^\T$.
    \item Let ${y_s} \in \cbr{0,1}$ be the binary reward when arm $x_s$ is pulled at time $s$. 
        Let ${\eta_s} := y_s - \mu(x_s^\T\th^*)$ and ${\sig_s^2} := \dot\mu(x_s^\T\th^*)$.
    \item Define ${z_t} := \sum_{s=1}^t \eta_s x_s$. 
    \item Let $\alpha(x,\th_1,\th_2) = \frac{\mu(x^\T\th_1) - \mu(x^\T\th_2)}{x^\T(\th_1 - \th_2)}$.
    We use the shorthand ${\alpha_s(\hat\th_t, \th^*)} := \alpha(x_s, \hat\th_t, \th^*)$.
    \item Let ${G}:= \sum_{s=1}^t \alpha_s(\hat\th_t,\th^*) x_s x_s^\T$.
    Note that by the optimality condition, 
    \begin{align}  \label{eq:optcond}
         z_t = \sum_s (\mu(x_s^\T\hat\th_t) - \mu(x_s^\T\th^*)) x_s = \sum_s \alpha_s(\hat\th_t, \th^*) x_s x_s^\T (\hat\th_t - \th^*) = G (\hat\th_t - \th^*)~.
    \end{align}
    \item Define ${g_t} := \sum_{s=1}^t \mu(x_s^\T \th)x_s$. The following identity is well-known (e.g.,  \citep[Proposition 1]{filippi2010parametric}):
    \begin{align}\label{eq:theta_G_equality}
      \|\hat\th_t - \th^*\|_G = \|g_t(\hat\th_t) - g_t(\th^*)\|_{G^{-1}}
    \end{align}
    
    \item Let ${E} := G-H$.
\end{itemize}

First, we assume the following event:
\begin{align}\label{eq:Q}
    {\cE_0} := \cbr{
    \forall s\in[t], \lt| \fr{\alpha_s(\hat\th_t, \th^*) - \dot\mu(x_s^\T\th^*)}{\dot\mu(x_s^\T\th^*)} \rt| \le \blue{Q} \text{ for some } Q>0
    }~,
\end{align}
which we will show is true later under suitable stochastic events.

\textbf{The main decomposition:}
We use the following decomposition based on~\eqref{eq:optcond} and tackle those two terms separately.
\begin{align*}
  |x^\T(\hat\th_t - \th^*) |
  = |x^\T G^{-1} z_t|
  = |x^\T (H+E)^{-1} z_t|
  &= |x^\T H^{-1}z_t - x^\T H^{-1}E (H+E)^{-1}z_t|
  \\&\le |x^\T H^{-1}z_t| + |x^\T H^{-1}E (H+E)^{-1}z_t|~.
\end{align*}
We bound the two terms separately.

\textbf{Term 1:} 
$   |x^\T H^{-1}z_t| = |\sum_s \la x, H^{-1} x_s\ra \eta_s|$
\vspace{.5em}
\\
Note that $H^{-1}$ is deterministic (unlike $G^{-1}$) conditioning on $\cbr[0]{x_1,\ldots,x_t}$, so we can apply the standard argument for the concentration inequality.
With the following Bernstein's inequality in mind, we assume the event $\cE_1(x)$ defined below.
\begin{lemma}\label{lem:bernstein}
    Let $\dt\le e^{-1}$ and define
    \begin{align*}
        \cE_1(x) : = \cbr{|x^\T H^{-1}z_t|   \le 2 \|x \|_{H^{-1}} \sqrt{\ln(2/\dt)} + \|x\|_{H^{-1}} \xi_t \ln(2/\dt)}~.
    \end{align*}
    Then,  $    \PP\del{\cE_1(x)} \ge 1- \dt$. 
\end{lemma}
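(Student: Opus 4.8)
The plan is to prove Lemma~\ref{lem:bernstein} by a direct application of Bernstein's inequality to the scalar $x^\T H^{-1} z_t$, exploiting that under the fixed design assumption the matrix $H = H_t(\th^*)$ is deterministic. First I would rewrite
\[
  x^\T H^{-1} z_t = \sum_{s=1}^t c_s\,\eta_s,
  \qquad c_s := \langle x, H^{-1} x_s\rangle,
\]
and observe that, conditionally on the design $\{x_1,\dots,x_t\}$, the coefficients $c_s$ are fixed constants while the $\eta_s = y_s - \mu(x_s^\T\th^*)$ are independent, zero-mean, and bounded by $|\eta_s|\le 1$, with conditional second moment $\EE[\eta_s^2] = \mu(x_s^\T\th^*)(1-\mu(x_s^\T\th^*)) = \dot\mu(x_s^\T\th^*) = \sigma_s^2$. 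This recasts the term as a sum of independent bounded summands, to which Bernstein applies directly.

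Bernstein requires an almost-sure bound $M$ on each summand and a total variance proxy $v = \sum_s c_s^2\,\sigma_s^2$. For $M$, Cauchy--Schwarz in the $H^{-1}$ inner product gives $|c_s| = |\langle H^{-1/2}x, H^{-1/2}x_s\rangle| \le \|x\|_{H^{-1}}\|x_s\|_{H^{-1}} \le \|x\|_{H^{-1}}\xi_t$, so I may take $M = \|x\|_{H^{-1}}\xi_t$. The key step, and precisely where the variance-adaptivity of the bound originates, is that the weights $\sigma_s^2 = \dot\mu(x_s^\T\th^*)$ are exactly the weights defining $H$, so the quadratic form collapses:
\[
  v = \sum_{s=1}^t \dot\mu(x_s^\T\th^*)\,(x^\T H^{-1} x_s)(x_s^\T H^{-1} x)
    = x^\T H^{-1}\Big(\sum_{s=1}^t \dot\mu(x_s^\T\th^*)\,x_s x_s^\T\Big) H^{-1} x
    = x^\T H^{-1} x = \|x\|_{H^{-1}}^2.
\]
This collapse is what lets $H$ appear in the width instead of the loose $\kappa^{-1} V_t$ of \citet{li2017provably}.

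Finally I would invoke the two-sided Bernstein inequality $\PP(|\sum_s c_s\eta_s|\ge u) \le 2\exp\!\big(-u^2/(2v + \tfrac23 M u)\big)$ and invert it: requiring the right-hand side to be at most $\dt$ amounts to $u^2 - \tfrac23 M\ln(2/\dt)\,u - 2v\ln(2/\dt) \ge 0$, whose positive root, bounded via $\sqrt{a+b}\le\sqrt a+\sqrt b$, is at most $\sqrt{2\ln(2/\dt)}\,\|x\|_{H^{-1}} + \tfrac23\ln(2/\dt)\,\|x\|_{H^{-1}}\xi_t$. The stated form then follows from the crude relaxations $\sqrt 2 \le 2$ and $\tfrac23 \le 1$. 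There is essentially no obstacle: the argument is routine once the variance collapse is noticed, and the hypothesis $\dt\le e^{-1}$ is not actually needed for this lemma (it is inherited from the ambient theorem). The only point requiring care is to state Bernstein in its conditional form, so that treating the $c_s$ as deterministic is justified by the fixed-design conditional independence of the $\eta_s$.
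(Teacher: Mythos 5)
Your proof is correct and takes essentially the same route as the paper's: both condition on the fixed design, use Cauchy--Schwarz to bound the coefficients via $|\langle x, H^{-1}x_s\rangle| \le \|x\|_{H^{-1}}\|x_s\|_{H^{-1}} \le \|x\|_{H^{-1}}\xi_t$, and rely on the same key variance collapse $\sum_s \langle x, H^{-1}x_s\rangle^2\,\dot\mu(x_s^\top\theta^*) = \|x\|^2_{H^{-1}}$ inside a Bernstein-type tail bound; the only difference is packaging, since the paper derives the tail from an extension of Lemma 7 of Faury et al. (2020) (an MGF bound valid for $|\phi| \le 1/R$), tunes $\phi$ by hand, and union-bounds over $\pm x$, whereas you invoke the classical two-sided Bernstein inequality and invert the quadratic, arriving at the same (in fact marginally sharper) constants. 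Your observation that $\delta \le e^{-1}$ is not actually used is likewise consistent with the paper's own proof of this lemma.
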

\begin{proof}
    The proof can be found in Section~\ref{sec:sm-conf-aux}.
\end{proof}

\textbf{Term 2:} 
$ |x^\T H^{-1}E (H+E)^{-1}z_t|$
\vspace{.5em}\\

We have
\begin{align*}
  |x^\T H^{-1}E (H+E)^{-1}z_t|
  = |x^\T H^{-1} E G^{-1}z_t|
  &\le \|x \|_{H^{-1}} \|H^{-1/2} E H^{-1/2}\| \|G^{-1} z_t \|_H
\\&= \|x \|_{H^{-1}} \|H^{-1/2} E H^{-1/2}\| \|\hth_t - \th^* \|_H
\end{align*}

Let us study the term $\|H^{-1/2}E H^{-1/2} \|$.
For a symmetric matrix $A$, the singular values are the absolute values of the eigenvalues.
Thus, we have 
\begin{align*}
  \|A\| = \max\cbr{ \max_{x:\|x\|\le1} x^\T A x, \max_{x:\|x\|\le1} x^\T(-A) x }~.
\end{align*}
With this, we need to study both $x^\T H^{-1/2}E H^{-1/2} x$ and $x^\T H^{-1/2}(-E) H^{-1/2} x$.
Under the event $\cE_0$,
\begin{align*}
  &\max\{x^\T H^{-1/2}E H^{-1/2} x,~~ x^\T H^{-1/2} (-E) H^{-1/2} x\}
  \\&\le x^\T H^{-1/2} \del{  \sum_s  |\alpha_s(\hat\th_t, \th^*) - \dot\mu(x_s^\T\th^*) |  x_s x_s^\T  } H^{-1/2} x
  \\&\le x^\T H^{-1/2} \del{  \sum_s  Q\dot\mu(x_s^\T\th^*)  x_s x_s^\T  } H^{-1/2} x
  \\&= Q \|x\|^2 \le Q
      \tag{$\because \|x\|\le1$}
  \\\implies \|H^{-1/2}E H^{-1/2} \| &\le Q~.& 
\end{align*}

For $\|\hth_t - \th^* \|_H$, we first use the lemma below to bound it by $(1+D)\|z_t \|_{H^{-1}}$.
The key is to use the self-concordance control lemma~\citep[Lemma 9]{faury2020improved}, we can relate $G$ and $H$ as a function of $D$.
If $A$ and $B$ are matrices, then we use $A \succeq B$ to mean that $A - B$ is positive semi-definite.
\begin{lemma}\label{lem:selfconcordance-H}
  Let $D = \max_{s\in[t]} |x_s^\T(\hat\th_t - \th^*)|$.
  \begin{align*}
    G \succeq \fr{1}{1+D} \cd H
  \end{align*}
  where $A\succeq B$ means that $A - B$ is positive semi-definite.
\end{lemma}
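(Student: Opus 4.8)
The plan is to reduce the matrix inequality to a scalar one, coefficient by coefficient. Since
\[
  G - \tfrac{1}{1+D}\,H \;=\; \sum_{s=1}^t \left(\alpha_s(\hat\th_t,\th^*) - \frac{\dot\mu(x_s^\T\th^*)}{1+D}\right) x_s x_s^\T
\]
is a sum of the PSD rank-one matrices $x_s x_s^\T$ weighted by scalar coefficients, it suffices to show that every coefficient is nonnegative, i.e.\ that $\alpha_s(\hat\th_t,\th^*) \ge \dot\mu(x_s^\T\th^*)/(1+D)$ for all $s\in[t]$. The whole lemma therefore follows from a pointwise comparison of $\alpha_s$ with $\dot\mu(x_s^\T\th^*)$.

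First I would write $\alpha_s$ in integral form: by the fundamental theorem of calculus along the segment joining $x_s^\T\th^*$ to $x_s^\T\hat\th_t$,
\[
  \alpha_s(\hat\th_t,\th^*) = \frac{\mu(x_s^\T\hat\th_t)-\mu(x_s^\T\th^*)}{x_s^\T(\hat\th_t-\th^*)} = \int_0^1 \dot\mu\!\left(x_s^\T\th^* + v\,x_s^\T(\hat\th_t-\th^*)\right)\,dv .
\]
Next I would invoke the self-concordance of the logistic link, $|\ddot\mu|\le\dot\mu$, which makes $\log\dot\mu$ $1$-Lipschitz and hence yields the pointwise bound $\dot\mu(x_s^\T\th^* + v\,x_s^\T(\hat\th_t-\th^*)) \ge \dot\mu(x_s^\T\th^*)\,e^{-v\,|x_s^\T(\hat\th_t-\th^*)|}$. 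Writing $u := |x_s^\T(\hat\th_t-\th^*)|$ and integrating over $v\in[0,1]$ gives
\[
  \alpha_s(\hat\th_t,\th^*) \ge \dot\mu(x_s^\T\th^*)\int_0^1 e^{-vu}\,dv = \dot\mu(x_s^\T\th^*)\,\frac{1-e^{-u}}{u}.
\]

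The final step is the elementary inequality $\frac{1-e^{-u}}{u}\ge\frac{1}{1+u}$, which rearranges to $e^{u}\ge 1+u$ and thus holds for all $u\ge0$; combining this with $u=|x_s^\T(\hat\th_t-\th^*)|\le D$ and the monotonicity of $s\mapsto(1+s)^{-1}$ delivers $\alpha_s(\hat\th_t,\th^*)\ge\dot\mu(x_s^\T\th^*)/(1+D)$, which is what we needed. The main subtlety to watch — and the reason the integral representation is indispensable rather than a naive single-point estimate — is that the raw exponential bound $e^{-u}\le\frac{1}{1+u}$ runs in the \emph{wrong} direction, so one cannot simply lower bound an individual evaluation of $\dot\mu$ by $\dot\mu(x_s^\T\th^*)/(1+D)$. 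It is precisely the averaging $\int_0^1 e^{-vu}\,dv = \frac{1-e^{-u}}{u}$ together with $e^{u}\ge 1+u$ that recovers the clean $(1+D)^{-1}$ factor; this is exactly the content of the self-concordance control lemma (Lemma~9 of Faury et al.) referenced above.
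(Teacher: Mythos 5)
Your proof is correct and follows essentially the same route as the paper: reduce the matrix inequality to the per-term scalar bound $\alpha_s(\hat\th_t,\th^*) \ge \dot\mu(x_s^\T\th^*)/(1+D)$ and then sum the weighted rank-one terms. The only difference is that the paper invokes this scalar bound directly as the self-concordance control lemma (Lemma~9 of \citet{faury2020improved}), whereas you re-derive it from scratch via the integral representation of $\alpha_s$, the $1$-Lipschitzness of $\log\dot\mu$, and the elementary inequality $e^u \ge 1+u$ --- a self-contained derivation of exactly the cited result, as you yourself note.
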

\begin{proof}
  We first note that, by the self-concordance control lemma~\citep[Lemma 9]{faury2020improved},
  \begin{align*}
    \alpha_s(\th_1, \th_2) 
    \ge \fr{\dot\mu(x_s^\T \th_2)}{1+|x_s^\T(\th_1 - \th_2)|}~.
  \end{align*}
  Then, 
  \begin{align*}
    G
    = \sum_s^{t} \alpha(x_s,\th_1, \th_2) x_s x_s^\T
    \succeq  \fr{1}{1+ \max_{s\in[t]} |x_s^\T(\th_1 - \th_2)|}\sum_s^{t} \dot\mu(x_s^\T\th_2) x_s x_s^\T~.
  \end{align*}
  Notice that the sum on the RHS is $H_t(\th_2)$.
\end{proof}

We then bound $\|z_t \|_{H^{-1}}$ by the following concentration result via the covering argument.
\begin{lemma} \label{lem:bernstein-d}
  Recall $\xi_t = \max_{s\in[t]} \|x_s\|_{H^{-1}}$.
  Let $\dt\le e^{-1}$.
  Define  
  \begin{align*}
    {\cE_2} := \cbr{\enVert{\sum_s^t \eta_s x_s}_{H^{-1}} \le 2\sqrt{d + \ln(6/\dt)} + \xi_t(d + \ln(6/\dt)) =: {\sqrt{\beta_t}}}~.    
  \end{align*}
  Then, $\PP(\cE_2) \ge 1-\dt$.
\end{lemma}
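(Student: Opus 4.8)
The plan is to recognize $\|z_t\|_{H^{-1}} = \|H^{-1/2}z_t\|_2$ as the Euclidean norm of a sum of independent, bounded, mean-zero random vectors whose total covariance is \emph{exactly} the identity, and then to apply a Bernstein-type bound uniformly over all directions. First I would record the stochastic structure: under the fixed-design assumption, conditioning on $\{x_s\}_{s=1}^t$ makes $\{\eta_s\}$ independent with $\EE[\eta_s]=0$, $|\eta_s|\le1$ (since $y_s\in\{0,1\}$), and $\Var(\eta_s)=\dot\mu(x_s^\T\th^*)=\sigma_s^2$, because for the logistic link the Bernoulli variance equals the derivative. Writing $v_s := H^{-1/2}x_s$ and $S := H^{-1/2}z_t = \sum_{s=1}^t \eta_s v_s$, the central identity is
\begin{align*}
  \sum_{s=1}^t \sigma_s^2\, v_s v_s^\T = H^{-1/2}\del{\sum_{s=1}^t \dot\mu(x_s^\T\th^*)\, x_s x_s^\T} H^{-1/2} = H^{-1/2} H H^{-1/2} = I_d,
\end{align*}
so the summands have total covariance $I_d$, and each is bounded by $\|v_s\| = \|x_s\|_{H^{-1}} \le \xi_t$.

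Next I would pass from a single direction to the supremum $\|S\| = \sup_{\|u\|=1}\langle u, S\rangle$. For fixed unit $u$, the scalar $\langle u, S\rangle = \sum_s \eta_s \langle u, v_s\rangle$ is a sum of independent bounded terms whose variance proxy is $\sum_s \sigma_s^2 \langle u, v_s\rangle^2 = u^\T I_d\, u = 1$ --- crucially exactly one, with no $\kappa^{-1}$ --- and whose magnitudes are at most $|\langle u, v_s\rangle| \le \xi_t$. Bernstein's inequality then yields a sub-gamma tail of the form $\sqrt{2\ln(1/\delta')} + \tfrac{2\xi_t}{3}\ln(1/\delta')$. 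To uniformize over the sphere I would discretize the direction $u$ (a covering argument), which injects the dimension $d$ as the effective log-cardinality of the set of directions; choosing the failure budget proportional to $\delta/6$ and balancing constants gives
\begin{align*}
  \|S\| \le 2\sqrt{d + \ln(6/\delta)} + \xi_t\,(d + \ln(6/\delta)) = \sqrt{\beta_t},
\end{align*}
which is precisely the event $\cE_2$.

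I expect the main obstacle to be this uniformization step. A naive $\epsilon$-net of the unit sphere has log-cardinality $\Theta(d\log(1/\epsilon))$ and the conversion back to the norm loses a factor $(1-\epsilon)^{-1}$, so a crude net produces a constant strictly worse than the stated $2$ on the $\sqrt{d}$ term. Obtaining the clean coefficient --- with $d$ entering additively and with multiplier one, rather than scaled by a net constant or by a $\ln d$ matrix-Bernstein prefactor --- requires a sharper smoothing of the sphere, e.g.\ a PAC-Bayesian/Gaussian-prior argument whose effective cardinality is $e^{\Theta(d)}$. The remaining bookkeeping is routine: the hypothesis $\xi_t^2 \le 1/\gamma(d)$ ensures the slow (linear-in-$\ln(1/\delta)$, scale-$\xi_t$) Bernstein term does not dominate the Gaussian $\sqrt{d}$ term, which is exactly what makes $\sqrt{\beta_t}$ a useful bound when it is fed, via Lemma~\ref{lem:selfconcordance-H} and the $(1+D)$ factor, into Term~2 of the main decomposition.
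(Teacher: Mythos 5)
Your proposal is correct and takes essentially the same route as the paper's proof: condition on the fixed design, apply a Bernstein/MGF (supermartingale) bound to $\langle u, H^{-1/2}z_t\rangle$ for each fixed direction $u$ --- using precisely your identity that $\sum_s \sigma_s^2\, H^{-1/2}x_s x_s^\top H^{-1/2} = I_d$ makes the variance proxy at most $\|u\|^2\le 1$ and the increments bounded by $\xi_t$ --- then uniformize with a $1/2$-cover of the unit ball of cardinality $6^d$ (whence the $d$ and the $6$ inside $d+\ln(6/\delta)$) and a union bound. Your worry about the constant on the $\sqrt{d}$ term is fair but does not require any PAC-Bayes or Gaussian-smoothing refinement: the paper itself uses exactly the crude net, paying the factor $2$ from the cover and $\ln(6^d/\delta)$ from the union bound, and simply absorbs the resulting slack into the stated constants.
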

\begin{proof}
  See Section~\ref{sec:sm-conf-aux}.
\end{proof}

Let ${D} := \max_{s\le t}|x_s^\T(\hat\th_t - \th^*)|$.
Therefore,
\begin{align*}
    |x^\T H^{-1}E (H+E)^{-1}z_t|
  &\le \|x \|_{H^{-1}} Q (1+D)\sqrt{\beta_t} \tag{Lemma~\ref{lem:selfconcordance-H}}
\end{align*}

To summarize, under $\cE_0 \cap \cE_1(x) \cap \cE_2$, we have
\begin{align}\label{eq:concentration-0}
  \begin{aligned}
    &|x^\T(\hat\th_t - \th^*) |
    \\&\le  2 \|x \|_{H^{-1}} \sqrt{\ln(2/\dt)} + \|x\|_{H^{-1}} \xi_t \ln(2/\dt) + \|x \|_{H^{-1}} (1+D) Q \del{2\sqrt{d+ \ln(6/\dt)} + \xi_t (d+\ln(6/\dt))}~.
  \end{aligned}
\end{align}
We now aim to control $Q(1+D)$ to be small so that the entire RHS is $O(\|x\|_{H^{-1}} \sqrt{\ln(1/\dt)})$.

\subsection{Controlling \texorpdfstring{(1+D)Q}{}}

Assume $\cE_2$.
We first assume that for every $s\in[t]$, $\cE_1(x_s)$ is true and then take the maximum over $s$ on the inequality implied by $\cE_1(x_s)$ to obtain 
\begin{align}\label{eq:concentration-1}
  \begin{aligned}
    D 
    &= \max_s |x_s^\T(\hat\th_t - \th^*) |
  \\&\le  2 \xi_t \sqrt{\ln(2/\dt)} + \xi_t^2 \ln(2/\dt) + \xi_t (1+D) Q \sqrt{\beta_t}~.
\end{aligned}  
\end{align}

To control $(1+D)Q$, one can show that the self concordance control lemma~\citep[Lemma 9]{faury2020improved} implies the following, which we use to motivate our choice of $Q$ and satisfy $\cE_0$:
\begin{align*}
  \lt|\fr{\alpha_s(\hat\th_t, \th^*) - \dot\mu(x_s^\T\th^*)}{\dot\mu(x_s^\T\th^*)}\rt| 
    &\le \max\cbr{\fr{D}{1+D}, \fr{e^D - 1 - D}{D}} =: {Q}~.
\end{align*}
Then, one can show that
\begin{align} \label{eq:cond-D}
  D \le \fr35 \implies (1+D)Q = \max\cbr{D, \fr{e^D - 1 - D}{D}(1+D)} \le D~.
\end{align}
So, if we can ensure $D \le \fr35$, then we have $(1+D)Q \le D$, which can be applied to~\eqref{eq:concentration-1} to arrive at
\begin{align}\label{eq:concentration-2}
  \begin{aligned}
    D &\le  \fr{2 \xi_t \sqrt{\ln(2/\dt)} + \xi_t^2 \ln(2/\dt)}{1-\xi_t \sqrt{\beta}}~,
  \end{aligned}  
\end{align}
assuming that $1 - \xi_t \sqrt{\beta_t} > 0$.
Thus, it remains to
\begin{itemize}
  \item [(A)] find a sufficient condition for $D \le \fr35$ and $1-\xi_t \sqrt{\beta_t} > 0$,
  \item [(B)] bound the RHS of~\eqref{eq:concentration-2} to obtain the bound on $D$, and
  \item [(C)] use the bound on~\eqref{eq:concentration-0} to get the final bound. 
\end{itemize}

For $(A)$, we prove the following lemma.
\begin{lemma} \label{lem:cond-xi_t}
  Under $\cE_2$, we have
\begin{align}\label{eq:xi_t}
  \xi_t \le \fr{0.17}{\sqrt{d + \ln(6/\delta)}}\implies \xi_t\sqrt{\beta_t} \le \fr38 \implies D \le \fr 3 5 ~.
\end{align}
\end{lemma}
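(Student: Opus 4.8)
The plan is to prove the two implications in turn. Write $L := d + \ln(6/\dt)$, so that under $\cE_2$ (Lemma~\ref{lem:bernstein-d}) we have $\sqrt{\beta_t} = 2\sqrt{L} + \xi_t L$. The first implication is pure algebra: setting $w := \xi_t\sqrt{L}$,
\[
  \xi_t\sqrt{\beta_t} = 2\xi_t\sqrt{L} + \xi_t^2 L = 2w + w^2,
\]
and the hypothesis $\xi_t \le 0.17/\sqrt{L}$ reads $w \le 0.17$, whence $\xi_t\sqrt{\beta_t} \le 2(0.17) + (0.17)^2 = 0.3689 \le \tfrac38$.

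For the second implication I would deliberately avoid the self-referential route through \eqref{eq:concentration-2}, which was derived under the very conclusion $D \le 3/5$ we are trying to establish and would therefore be circular. Instead I would give a direct bound that uses only $\cE_2$, the optimality identity $z_t = G(\hat\th_t - \th^*)$, and the self-concordance comparison $G \succeq H/(1+D)$ of Lemma~\ref{lem:selfconcordance-H}. Since $H \succ 0$, the latter gives $G \succ 0$ and $G^{-1} \preceq (1+D)H^{-1}$, hence $\|x_s\|_{G^{-1}}^2 \le (1+D)\|x_s\|_{H^{-1}}^2 \le (1+D)\xi_t^2$ and $\|z_t\|_{G^{-1}}^2 \le (1+D)\|z_t\|_{H^{-1}}^2 \le (1+D)\beta_t$. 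Writing $\hat\th_t - \th^* = G^{-1}z_t$ and applying Cauchy--Schwarz in the $G^{-1}$-inner product,
\[
  |x_s^\T(\hat\th_t - \th^*)| = |x_s^\T G^{-1}z_t| \le \|x_s\|_{G^{-1}}\,\|z_t\|_{G^{-1}} \le (1+D)\,\xi_t\sqrt{\beta_t}.
\]
Taking the maximum over $s\in[t]$ yields the implicit inequality $D \le (1+D)\,\xi_t\sqrt{\beta_t}$; note this argument needs neither the events $\cE_1(x_s)$ nor the $Q$-analysis of \eqref{eq:concentration-1}, matching the bare hypothesis ``under $\cE_2$''.

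The final step is to solve this implicit inequality, and here lies the one point requiring care. The bounding map $D \mapsto (1+D)b$, with $b := \xi_t\sqrt{\beta_t}$, is \emph{linear} in $D$ with slope $b < 1$; consequently $\{D \ge 0 : D \le (1+D)b\} = [0,\, b/(1-b)]$ exactly, so no bootstrapping or continuity argument is needed to exclude spurious large solutions. This is precisely why I prefer the clean Cauchy--Schwarz bound over the bound in \eqref{eq:concentration-1}, whose right-hand side is super-linear in $D$ and would admit a second, large fixed point. Since the true $D$ satisfies the inequality, $D \le b/(1-b)$; plugging in $b \le \tfrac38$ from the first implication gives $D \le (3/8)/(1-3/8) = 3/5$, as claimed. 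The main obstacle is exactly the circularity just flagged, and the linearity of the bounding map is what resolves it.
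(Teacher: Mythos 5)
Your proof is correct and takes essentially the same approach as the paper's: the paper also derives the implicit inequality $D \le (1+D)\,\xi_t\sqrt{\beta_t}$ (via $D \le \xi_t\|\hat\th_t-\th^*\|_H$, two applications of Lemma~\ref{lem:selfconcordance-H}, the optimality identity, and $\cE_2$), solves this linear-in-$D$ inequality to get $D \le \xi_t\sqrt{\beta_t}/(1-\xi_t\sqrt{\beta_t}) \le (3/8)/(1-3/8) = 3/5$, and obtains the threshold $0.17/\sqrt{C}$ by solving the same quadratic $\xi_t(2\sqrt{C}+\xi_t C)\le 3/8$ that you verify in the forward direction. Your only deviations are cosmetic: you apply Cauchy--Schwarz directly in the $G^{-1}$ inner product instead of passing through $\|\hat\th_t-\th^*\|_H$ and the identity~\eqref{eq:theta_G_equality}, and you make explicit the (correct) point that the bounding map is linear in $D$, which the paper uses implicitly when dividing by $1-\xi_t\sqrt{\beta_t}$.
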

\begin{proof}
  Using Lemma~\ref{lem:selfconcordance-H},
  \begin{align*}
    D^2 &\le \xi_t^2 \|\hat\th_t - \th^* \|^2_H 
    \le \xi_t^2 (1+D)  \| \hat\th_t - \th^* \|^2_G
    \\&=  \xi_t^2 (1+D) \cd \| g_t(\hat\th_t) - g_t(\th^*) \|^2_{G^{-1}}
    \\&\le  \xi_t^2 \cd (1+D)^2\| g_t(\hat\th_t) - g_t(\th^*) \|^2_{H^{-1}}
    \\&\le  (1+D)^2\xi_t^2 \beta_t   \mytag{\text{by~\eqref{eq:optcond} and $\cE_2$}}
    \\     \implies D &\le  (1+D)\xi_t\sqrt{\beta_t}
    \\     \implies D &\le \fr{\sqrt{\beta_t}\xi_t}{1-\sqrt{\beta_t}\xi_t}~.
  \end{align*}
  where the last line requires an assumption that $1-\sqrt{\beta_t} \xi_t >0$.
  For this, we require that $\sqrt{\beta_t}\xi_t \le \fr38$.
  Then,
  \begin{align*}
    D \le \fr{8}{5} \sqrt{\beta_t} \xi_t~.
  \end{align*}
  In order to control the RHS above by $3/5$, we need to satisfy $\xi_t \le \fr{5}{8\sqrt{\beta_t}} \cd \fr 3 5$.
  Note, however, $\beta_t$ depends on $\xi_t$, so we need to solve for $\xi_t$.
  Let $C = d + \ln(6/\dt)$.
  Then, we need to solve
  \begin{align*}
    \xi_t \le \fr{3}{8(2\sqrt{C} + \xi_t C)}~,
  \end{align*}
  which is quadratic in $\xi_t$.
  Solving it for $\xi_t$, we have $\xi \le \fr{\sqrt{11/2} - 2}{2} \cd \frac{1}{\sqrt{C}} = \fr{0.172...}{\sqrt{C}}$.
  Thus, it suffices to require $\xi_t \le \fr{0.17}{\sqrt{C}}$.  
\end{proof}
Hereafter, we assume that $\xi_t \le \fr{0.17}{\sqrt{d + \ln(6/\delta)}}$.

For $(B)$, by Lemma~\ref{lem:cond-xi_t}, we can deduce from~\eqref{eq:concentration-2} that
\begin{align*}
  D \le \frac{8}{5}\cd\del{2 \xi_t \sqrt{\ln(2/\dt)} + \xi_t^2 \ln(2/\dt)}.
\end{align*}
For $(C)$, we now turn back to the initial concentration inequality~\eqref{eq:concentration-0}.
We first bound the last term of~\eqref{eq:concentration-0}:
\begin{align*}
 \|x \|_{H^{-1}} (1+D) Q \sqrt{\beta_t}
    &\le\|x \|_{H^{-1}} D \sqrt{\beta_t}  \tag{$\because$ Lemma~\ref{lem:cond-xi_t} \& \eqref{eq:cond-D}}
  \\&\le  \|x \|_{H^{-1}} \frac{8}{5}\cd\del{2 \xi_t \sqrt{\ln(2/\dt)} + \xi_t^2 \ln(2/\dt)} \cd \sqrt{\beta_t} 
  \\&\le  \|x \|_{H^{-1}} \frac{8}{5}\cd\del{2 \cd\fr38 \sqrt{\ln(2/\dt)} + \xi_t^2 \sqrt{\beta_t} \ln(2/\dt)} \tag{$\because$ Lemma~\ref{lem:cond-xi_t}}
  \\&\le  \|x \|_{H^{-1}} \frac{8}{5}\cd\del{2 \cd\fr38 \sqrt{\ln(2/\dt)} + \xi_t \fr38 \ln(2/\dt)} 
  \\&\le  \|x \|_{H^{-1}} \frac{8}{5}\cd\del{2 \cd\fr38 \sqrt{\ln(2/\dt)} + \frac{0.17}{\sqrt{d + \ln(6/\dt)}} \fr38 \sqrt{\ln(2/\dt)}\sqrt{\ln(2/\dt)}} 
  \\&\le  \|x \|_{H^{-1}} \frac{8}{5}\cd\del{2 \cd\fr38 \sqrt{\ln(2/\dt)} + 0.17\cd \fr38 \sqrt{\ln(2/\dt)}} 
  \\&\le 1.3 \cd \|x \|_{H^{-1}}\sqrt{\ln(2/\dt)}~.
\end{align*}
Similarly, one can show that $\|x\|_{H^{-1}} \xi_t \ln(2/\dt) \le 0.17 \cd \|x\|_{H^{-1}} \sqrt{\ln(2/\dt)}$.
Altogether, under our condition on $\xi_t$, $\cE_2$, $\cE_1(x)$, and $\cap_{s=1}^t  \cE_1(x_s)$, Eq.~\eqref{eq:concentration-0} implies that
\begin{align*}
  |x^\T(\hth_t - \th^*)| \le 3.5 \cd \|x\|_{H^{-1}} \ln(2/\dt)~.
\end{align*}
We replace $\dt$ with $\dt/(\teff+2)$ to obtain the concentration inequality in our theorem statement.
Here, we have $\teff$ instead of $t$ because $\cE(x)$ and $\cE(x')$ are identical events when $x = x'$.

Furthermore, the following lemma shows that the empirical variance is within a constant factor of the true variance.
One can easily check that the condition in the lemma is satisfied under the events we have assumed and the condition on $\xi_t$, which implies the theorem statements.
\begin{lemma}\label{lem:eqvar}
  Suppose $D=\max_{s\in[t] } |x_s^\T (\hat\th_t - \th^*)|\le 1$.
  Then, for all $x$, 
  \begin{align*}
    \fr{1}{\sqrt{2D+1}}\|x\|_{(H_t(\th^*))^{-1}}
    \le \|x\|_{(H_t(\hat\th_t))^{-1}} 
    \le \sqrt{2D+1} \|x\|_{(H_t(\th^*))^{-1}}~.
  \end{align*}
\end{lemma}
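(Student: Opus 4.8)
The plan is to reduce the claimed two-sided bound on the $(H_t(\cdot))^{-1}$-norms to a single matrix sandwich inequality and to prove the latter term by term. Concretely, I would first establish
\begin{align*}
  \frac{1}{2D+1}\,H_t(\th^*) \preceq H_t(\hat\th_t) \preceq (2D+1)\,H_t(\th^*).
\end{align*}
Both outer factors being positive, I would invert this chain (which reverses the positive-semidefinite order and the scalings) to obtain $\frac{1}{2D+1}\,H_t(\th^*)^{-1} \preceq H_t(\hat\th_t)^{-1} \preceq (2D+1)\,H_t(\th^*)^{-1}$, evaluate the quadratic form at an arbitrary $x$, and take square roots. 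This yields exactly the two displayed inequalities of the lemma, so the entire statement rests on the matrix sandwich.

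To establish the sandwich I would work one summand at a time, exploiting that $H_t(\theta) = \sum_{s=1}^t \dot\mu(x_s^\T\theta)\,x_s x_s^\T$ is a nonnegative combination of the \emph{fixed} rank-one terms $x_s x_s^\T \succeq 0$. Hence it suffices to bound the scalar ratio $\dot\mu(x_s^\T\hat\th_t)/\dot\mu(x_s^\T\th^*)$ for each $s$. Here I would invoke the self-concordance of the logistic link, $|\ddot\mu|\le\dot\mu$, which gives $\big|\frac{d}{dz}\log\dot\mu(z)\big| = |\ddot\mu(z)|/\dot\mu(z) \le 1$; integrating this log-derivative between $x_s^\T\th^*$ and $x_s^\T\hat\th_t$ produces
\begin{align*}
  e^{-|x_s^\T(\hat\th_t-\th^*)|} \le \frac{\dot\mu(x_s^\T\hat\th_t)}{\dot\mu(x_s^\T\th^*)} \le e^{|x_s^\T(\hat\th_t-\th^*)|}.
\end{align*}
Since $|x_s^\T(\hat\th_t-\th^*)|\le D$ by the definition of $D$, both sides are controlled by $e^{\pm D}$.

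It then remains to replace the exponentials by the linear factor $2D+1$. For this I would use the elementary inequality $e^D \le 1+2D$ valid for $D\in[0,1]$ (the function $1+2D-e^D$ vanishes at $D=0$, is concave, and therefore lies above its chord to $(1,\,3-e)$, which is nonnegative), together with its reciprocal form $e^{-D}\ge \frac{1}{1+2D}$. Plugging these into the per-coordinate ratio bound gives $\frac{1}{2D+1}\dot\mu(x_s^\T\th^*) \le \dot\mu(x_s^\T\hat\th_t) \le (2D+1)\dot\mu(x_s^\T\th^*)$ for every $s$; multiplying through by $x_s x_s^\T \succeq 0$ and summing over $s\in[t]$ delivers the matrix sandwich, which by the first paragraph completes the proof.

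There is no deep obstacle here, as the lemma is essentially a self-concordance computation; the two places that warrant care are \emph{(i)} extracting the clean ratio bound $e^{-|z_1-z_2|}\le \dot\mu(z_1)/\dot\mu(z_2)\le e^{|z_1-z_2|}$ from $|\ddot\mu|\le\dot\mu$, ensuring the log-derivative argument is justified and sign-free, and \emph{(ii)} verifying $e^D\le 1+2D$ precisely on $[0,1]$, which is exactly where the hypothesis $D\le 1$ enters in an essential way.
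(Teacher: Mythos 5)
Your proof is correct, and its overall skeleton matches the paper's: reduce the two norm inequalities to the matrix sandwich $\frac{1}{2D+1}H_t(\th^*) \preceq H_t(\hat\th_t) \preceq (2D+1)\,H_t(\th^*)$, prove the sandwich summand by summand via a bound on the scalar ratio $\dot\mu(x_s^\top\hat\th_t)/\dot\mu(x_s^\top\th^*)$, and then linearize the exponential. The one genuine difference is how the two-sided ratio bound $e^{-D} \le \dot\mu(x_s^\top\hat\th_t)/\dot\mu(x_s^\top\th^*) \le e^{D}$ is obtained. The paper imports Lemma 9 of \citet{faury2020improved}, which controls the divided difference $\alpha(z_1,z_2) = (\mu(z_1)-\mu(z_2))/(z_1-z_2)$ by $\dot\mu$ at either endpoint, and chains two applications of it (one anchored at $\th^*$, one at $\hat\th_t$) through the common quantity $\alpha$ to deduce $\dot\mu(x_s^\top\hat\th_t) \ge e^{-D}\dot\mu(x_s^\top\th^*)$, proving one side and noting the other is symmetric. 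You instead integrate the log-derivative inequality $\abs{(\log\dot\mu)'} = \abs{\ddot\mu}/\dot\mu \le 1$ between $x_s^\top\th^*$ and $x_s^\top\hat\th_t$, which gives both directions of the ratio bound at once and needs nothing beyond the elementary self-concordance property $\abs{\ddot\mu}\le\dot\mu$ — no external lemma and no chaining through $\alpha$. The final linearizations also differ harmlessly: the paper uses $e^z \le z^2+z+1 \le 2z+1$ for $z\le 1$, while you use $e^D \le 1+2D$ on $[0,1]$ via concavity of $1+2D-e^D$; both are valid and yield the identical constant. What each buys: your route is more self-contained and symmetric, whereas the paper's phrasing reuses the same Faury-style lemma it already needs elsewhere (its lemma lower-bounding $G$ by $H/(1+D)$), so within the paper it is the more economical citation.
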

\begin{proof}
  See Section~\ref{sec:sm-conf-aux}.
\end{proof}

\subsection{Proof of Auxiliary Results}
\label{sec:sm-conf-aux}

\paragraph{Proof of Lemma~\ref{lem:bernstein}.}
  It suffices to bound $\PP(\cE_1(x) \mid x_1,\ldots,x_t) \ge 1-\dt$ because this implies that $\PP(\cE_1(x)) = \int_{x_1,\ldots,x_t} \PP(\cE_1(x) \mid x_1,\ldots,x_t) \diff F(x_1,\ldots,x_t) \ge 1-\dt$.
  For brevity, we omit the conditioning on $x_1,\ldots,x_t$ from probability statements for the rest of the proof.
  
  One can easily extend \citet[Lemma 7]{faury2020improved} to show the following: if a random variable $Z$ is centered and bounded ($|Z|\le R$ for some $R>0$) with variance $\sig^2$, then we have $\EE[\exp(\phi Z - \phi^2 \sig
  ^2)] \le 1$ for any deterministic $|\phi|\le 1/R$. 
  Note that \citet[Lemma 7]{faury2020improved} is a special case of $R=1$.
  
  Note that scaling $Z$ by a constant $c$ would increase $R$ to $cR$ and $\sig^2$ to $c^2\sig^2$.
  This observation leads to: $\forall \phi \text{ with } |\phi| \le \fr{1}{\max_{s\in[t]} |\la x, H^{-1} x_s\ra |}$,
  \begin{align*}
      \EE\sbr{\exp\cbr{ \phi \sum_{s=1}^t \la x, H^{-1} x_s\ra \eta_s - \phi^2 \underbrace{\sum_{s=1}^t \la x, H^{-1}x_s\ra^2 \sig_s^2 }_{ = \|x\|^2_{H^{-1}}}  }   } \le 1 ~.
  \end{align*}
  where the equality is by the definition $H = \sum_s \sig_s^2 x_s x_s^\T$ and $\sig_s^2 := \dot\mu(x_s^\T\th^*)$.
  Thus, by Markov's ineq., we have, w.p. at least $1-\dt$,
  \begin{align*}
      x^\T H^{-1}z_t \le \phi \|x \|^2_{H^{-1}} + \fr1\phi \ln(1/\dt)~.
  \end{align*}
  Then, one can tune $\phi=\sqrt{\fr{\ln(1/\dt)}{\|x\|^2_{H^{-1}}}} \wedge \fr1{\max_{s\in[t]} |\la x, H^{-1} x_s\ra| }$ to show that
  \begin{align*}
      x^\T H^{-1}z_t 
      &\le 2 \|x \|_{H^{-1}} \sqrt{\ln(1/\dt)} + \max_{s\in[t]} |\la x, H^{-1} x_s\ra|  \ln(1/\dt)
      \\&\le 2 \|x \|_{H^{-1}} \sqrt{\ln(1/\dt)} + \|x\|_{H^{-1}} \xi_t \ln(1/\dt)~.
  \end{align*}
  Using $|\la x, H^{-1}z_t\ra| \le \max\{\la x, H^{-1}z_t\ra, \la -x, H^{-1}z_t\ra\}$, one can make the same argument for $-\la x, H^{-1}z_t\ra$, substitute $\dt$ with $\dt/2$, and then apply the union bound.

\paragraph{Proof of Lemma~\ref{lem:bernstein-d}.}

    The proof closely follows~\citet{li2017provably} but we employ the Bernstein inequality.
    Let ${\cB(1)}$ be the Euclidean ball of radius $1$ and ${\hat\cB(1)}$ be a $1/2$-cover of $\cB(1)$.
    It is well-known that one can find a cover $\hat\cB(1)$ of cardinality $6^d$; see \citet[Lemma 4.1]{pollard1990empirical}. 
    In this proof, we use the shortcut $H := H_t(\th^*)$.

    Note that $\|z_t\|_{H^{-1}} = \|H^{-1/2}z_t\|_2 = \sup_{a\in\cB(1)} \la a, H^{-1/2}z_t\ra $.
    Fix $x\in\RR^d$.
    Let $\hat x$ be the closes point to $x$ in the cover $\hat\cB(1)$.
    Then,
    \begin{align*}
        \la x, H^{-1/2}z_t \ra 
        &= \la \hat x, H^{-1/2}z_t\ra + \la x-\hat x, H^{-1/2}z_t\ra
        \\&= \la \hat x, H^{-1/2}z_t\ra + \|x-\hat x\| \la \fr{x-\hat x}{\| x-\hat x\|}, H^{-1/2}z_t\ra
        \\&\le \la \hat x, H^{-1/2}z_t\ra + \fr12 \cd \sup_{a\in \cB(1)}\la a, H^{-1/2}z_t\ra
        \\&= \la \hat x, H^{-1/2}z_t\ra + \fr12 \cd \|z_t\|_{H^{-1}}~.
    \end{align*}
    Taking sup over $x\in\cB(1)$ on both sides, we have 
    \begin{align*}
        \|z_t\|_{H^{-1}}   \le  \la \hat x, H^{-1/2}z_t\ra + \fr12 \cd \|z_t\|_{H^{-1}}
        \implies 
        \|z_t\|_{H^{-1}} \le 2 \la \hat x, H^{-1/2}z_t\ra ~.
    \end{align*}
    This implies that, for $q>0$,
    \begin{align}\label{eq:unionbound-0}
        \PP(\|z_t\|_{H^{-1}} > q ) 
        \le\PP(2\sup_{\hat x\in\hat\cB(1)} \la \hat x, H^{-1/2}z_t\ra > q)
        ~\le \sum_{\hat x \in \hat\cB(1)} \PP(\la \hat x, H^{-1/2}z_t \ra>q/2)~.
    \end{align}
    It remains to bound $\PP(\la \hat x, H^{-1/2}z_t \ra>q/2)$ for any $\hat x$ and then apply the union bound.
    Let $\phi>0$ and
    \begin{align*}
        M_t 
        &= \exp\del{\phi \sum_s \la \hat x,H^{-1/2}x_s\ra \eta_s - \phi^2\del{\sum_s\la\hat x, H^{-1/2} x_s \ra^2\sig_s^2}}~.
        \\&= \exp\del{\phi \sum_s \la \hat x,H^{-1/2}x_s\ra \eta_s - \phi^2 \|\hat x\|^2 }~.
    \end{align*}
    Let $M_0=1$ as a convention. 
    One can show that $M_t$ is supermartingale for all $|\phi|\le \fr{1}{\max_{\hat x \in\hat\cB(1), s\in[t]}\la \hat x, H^{-1/2}x_s\ra}$.
    In fact, for simplicity, we require a tighter condition on $\phi$ which is $|\phi| \le  \fr{1}{\max_{ s} \|x_s\|_{H^{-1}}}  = \fr{1}{\xi_t}$.
    Thus, w.p. at least $1-\dt$, we have
    \begin{align*}
        \sum_s \la \hat x, H^{-1/2} x_s \ra \eta_s 
        &\le \phi \|\hat x\|^2 + \fr1\phi \ln(1/\dt) ~.
    \end{align*}
    Applying $\|\hat x\|\le 1$ and the usual tuning of $\phi = \sqrt{\ln(1/\dt)} \wedge \fr{1}{\xi_t} $ leads to the RHS being $ 2 \sqrt{\ln(1/\dt)} + \xi_t \ln(1/\dt)$. 
    Replacing $\dt$ above with $\fr{\dt}{6^d}$ and taking the union bound at~\eqref{eq:unionbound-0} conclude the proof.

\paragraph{Proof of Lemma~\ref{lem:eqvar}.}
  Let $\alpha(z_1,z_2) = \fr{\mu(z_1) - \mu(z_2)}{z_1 - z_2}$.
  Let $s\in[t]$.
  Because $\alpha(z_1,z_2) = \alpha(z_2,z_1)$, \citet[Lemma 9]{faury2020improved} implies
  \begin{align*}
    \dot\mu(x_s^\T \th^*) \fr{1-\exp(-D)}{D} \le \alpha(x_s^\T \th^*, x_s^\T \hat\th_t) \le \dot \mu(x_s^\T \th^*) \cd \fr{\exp(D) - 1}{D}
  \end{align*}
  and 
  \begin{align*}
    \dot\mu(x_s^\T \hat\th_t) \fr{1-\exp(-D)}{D} \le \alpha(x_s^\T \th^*, x_s^\T \hat\th_t) \le \dot \mu(x_s^\T \hat\th_t) \cd \fr{\exp(D) - 1}{D}~.    
  \end{align*}
  Then, 
  \begin{align*}
    \dot\mu(x_s^\T \hat\th_t)
    \ge \fr{D}{\exp(D)-1} \cd \alpha(x_s^\T \th^*, x_s^\T \hat\th_t) 
    \ge \fr{D}{\exp(D)-1} \fr{1-\exp(-D)}{D} \cd \dot\mu(x_s^\top \th^*) 
    \sr{(a)}{\ge} \fr{1}{2D + 1} \dot\mu(x_s^\top \th^*) 
  \end{align*}
  where $(a)$ is due to the following fact: using $z\le1\implies e^{z} \le z^2+z+1$, we have $\fr{D}{\exp(D)-1} \fr{1-\exp(-D)}{D} = \fr{e^D - 1}{e^D(e^D - 1)} = \fr{1}{e^D} \ge \fr{1}{D^2+D+1} \ge \fr{1}{2D+1}$.
  This implies that $H_t(\hat\th_t)\succeq \fr{1}{2D+1} H_t(\th^*)$.
  This concludes the proof of the second inequality.
  One can prove the other inequality similarly.

\section{\texorpdfstring{$\kappa^{-1}$}{}-free conditioning for Theorem~\ref{thm:concentration-supp}}

In this section, we consider a case where the burn-in condition (i.e., the requirement on $\xi_t$) in our Theorem~\ref{thm:concentration-supp} can be satisfied without spending $\kappa^{-1} = \min_{x:~ \norm x \le 1} \dot\mu(x^\T \th^*)=\Theta(\exp(S))$ samples where $S = \|\th^*\|$.
More specifically, we show that it is possible to use a sample size that is
\textit{polynomial} rather than \textit{exponential}in $S^*$ to satisfy our burn-in condition. 
The implication of this is that our improved burn-in condition $\xi^2_t = \max_{s\in[t]} \|x_s\|^2_{H_{t}(\th^*)^{-1}} \le O(\frac{1}{d+\ln(t/\dt)})$ is fundamentally different from that of \citet[Theorem 1]{li2017provably} which requires $\frac{1}{\lammin(V)} \le O(\frac{1}{\kappa^{-4} (d^2 + \ln(1/\dt))})$. Indeed their condition can only be satisfied after $\Omega(\exp(S))$ burn-in samples at all times.
The construction is based on the Gaussian measurements that are common in practice and often considered in the compressed sensing literature~\cite{plan12robust}.

\textbf{Gaussian Assumption:} We consider $t$ arms sampled from the following Gaussian distribution: $ x_s \sim \cN(0,\fr1d I), 1\leq s\leq t$. We define $r := S^2/d$ and further assume $d = \Omega(S^2)$ 
so that $r \le 1$.

\textbf{Note:} though this will violate the assumption that $\|x_s\|\le1$, needed for the theorem, one can show that for large enough $d$, the norm of $x_s$ concentrates around $1$.
Using this, one can find a constant $c\leq 1$ so that with high probability a sample $x\sim \cN(0,\fr c d I)$ satisfies $\|x\|\leq 1$, and then apply our argument below.

Continuing, under our Gaussian assumption, we have $x_s^\T\th^* \sim \cN(0,S^2/d)$.
This implies that, w.p. at least $1-\dt$, we have $\forall s\in[t], |x_s^\T\th^*| \le \sqrt{\frac{2S^2}{d}\ln(2t/\dt)} =: W$.
Let $V = \sum_{s=1}^t x_s x_s^\T$.
Then with high probability,
\begin{align*}
    H(\th^*)  \succeq \dot\mu(-W) V = \dot\mu(W) V
\end{align*}
Furthermore utilizing \citet[Proposition 1]{li2017provably} and our Gaussian assumption on the samples $\{x_s\}_{s=1}^t$ implies that, given $B>0$, there exists an absolute constant $C_1$ such that, w.p. at least $1-\dt$, 
\begin{align}\label{eq:matrix_conc}
    t \ge C_1 \cd d^2(d + \ln(1/\dt)) + 2d B \implies \lammin(V) \ge B
\end{align}
Thus, under the condition on $t$ above, on a high probability event we have that 
\begin{align*}
    \xi_t^2 \le \max_{x:\|x\|\le1} \|x\|^2_{H(\th^*)^{-1}} 
    \le \fr{1}{\dot\mu(W)} \max_{x:\|x\|\le1} \|x\|^2_{V^{-1}} 
    \le \fr{1}{\dot\mu(W)} \frac{1}{\lammin(V)}
    \le \fr{1}{\dot\mu(W)B}
\end{align*}
It remains to control the RHS above to be no larger than $\fr{1}{d+\ln(6(2+t)/\dt)}$, which means that we will satisfy the burnin condition of Theorem~\ref{thm:concentration-supp}.
%
Since $6(2+t) \le 18t$, it suffices to show that
\begin{align*}
    \dot\mu\del{\sqrt{\fr{S^2}{d}\ln(t/\dt)}}B &\ge  d + \ln(18 t/\dt)
\end{align*}
Using the fact that $\dot\mu(z) \ge \fr14 e^{-z} $, it thus suffices to show that
\begin{align*}
    \fr14 \exp\del{-\sqrt{2r \log(2t/\dt)}} \cd B
    \ge d + \ln(18 t/\dt) 
\end{align*}
We will make the simple choice of
\begin{align*}
    B &:=   \fr14 \exp\del{\sqrt{2r \log(2t/\dt)}} \del{d + \ln(18 t/\dt) }
    \\&\le  \fr14 \exp\del{1 + \ln((2t/\dt)^{1/2})} \del{d + \ln(18 t/\dt) } \tag{$r\le1$ and AM-GM ineq.}
    \\&=    \fr e 4 (\fr{2t}{\dt})^{1/2} \del{d + \ln(18 t/\dt) } 
\end{align*}
With this choice of $B$ it suffices to compute the lower bound on the right hand side of~\eqref{eq:matrix_conc}.
With algebra, one can show that there exists
\begin{align*}
    t_0 = \tilde O( d^2\ln(1/\dt) + \fr{d^4}{\dt} )
\end{align*}
where $\tilde O$ hides polylogarithmic factors.
such that $t \ge t_0$ implies the condition of~\eqref{eq:matrix_conc}.

To summarize, we just showed that, there exists an absolute constant $C$ such that, w.p. at least $1-2\dt$, 
\begin{align*}
    t \ge C \cd(d^2 \ln(1/\dt) + \frac{d^4}{\dt}) \implies \xi_t \le \frac{1}{\gamma(d)}
\end{align*}
when $r=S^2/d \leq 1$.
Simply setting $r=1$, we have $d = S^2$, so our the statement above implies that the sample size needs to be only \textit{polynomial} in $S$ for our choice of measurements.
This is in stark contrast to the result of \citet{li2017provably} that requires the sample size to be \textit{exponential} in $S$ for \textit{any} set of measurements.


\section{Proofs for GLM-Rage}

\begin{algorithm}[h]
\caption{BurnIn}
    \label{alg:rage-burnin-supp}
	\begin{algorithmic}[1]
        \Input $\mc{X}, \kappa_0, \delta, \gamma(d) = d+\log(6(2+|\mc{X}|)/\delta)$
		\Initialize $\lambda_0 = \arg\min_{\lambda\in \Delta_{\mc{X}}}\max_{x\in \mathcal{X}} \|x\|_{A(\lambda)^{-1}}^{2}$ 
		\Initialize $n_0 = 3(1+\epsilon)\kappa_0^{-1}d\gamma(d)\log(2|\cX|(2+|\cX|)/\delta)$
		\State $x_1, \cdots, x_{n_0}\leftarrow \text{round}(n_0, \lambda_0, \epsilon)$
		\State Observe associated rewards $y_1, \cdots, y_{n_0}$
		\State \Return MLE ${\theta}_0$ \Comment{Use Eq~\eqref{eq:mle}}
	\end{algorithmic}
\end{algorithm}

\begin{algorithm}[h]
\caption{RAGE-GLM }
        \label{alg:rage-glm-supp}
	    \begin{algorithmic}[1]
            \Input {$\epsilon$, $\delta$, $\mc{X}$, $\mc{Z}$, $\kappa_0$, effective rounding procedure $\text{round}(n, \epsilon, \lambda)$}
            
            \Initialize{$k=1, \mc{Z}_1 = \mc{Z}, r(\epsilon) = d^2/\epsilon$}
            \State {$\theta_0\leftarrow$ \textbf{BurnIn}($\mc{X}, \kappa_0$)}  \Comment{Burn-in phase}
            \While{$|\mc{Z}_k| > 1$ } \Comment{Elimination phase}
            \State\label{line:expDesign-2}Define\[f(\lambda) := \max\Big[ 
                    \gamma(d)\max_{x\in \mc{X}} \|x\|_{H(\lambda, \hat{\theta}_{k-1})^{-1}}^2 ,
                    2^{2k}\cdot(\cone)^2 
                    \max_{z,z'\in\mc{Z}_k} \|z-z'\|_{H(\lambda, \hat{\theta}_{k-1})^{-1}}^2 
                    \Big]\log(2\max\{|\cX|, |\mc{Z}|\}k^2(2+|\cX|)/\delta)\] 
            \State $\lambda_k = \argmin_{\lambda\in \Delta_{\mc{X}}} f(\lambda)$
            
            \State{$n_k = \max\{3(1+\epsilon)f(\lambda_k), r(\epsilon)\}$}
            \State{$x_1, \cdots, x_{n_k}\leftarrow \text{round}(n,\epsilon,\lambda)$} 
            \State{Observe rewards $y_1, \cdots, y_{n_k}\in \{0,1\}$}
            \State{Compute the unregularized MLE $\hat\theta_k$}
            \State{$\hat{z}_k = \arg\max_{z\in\mc{Z}_k} \hat\theta_k^\top z$}
            \State{$\mc{Z}_{k+1}\leftarrow \mc{Z}_k \setminus \left\{z\in \mc{Z}_k:\exists z'\in \mc{Z}_k,\  \langle z'-z,\hat\theta_k\rangle \geq \cone\|z'-z\|_{H_k( \hat\theta_{k-1})^{-1}}\sqrt{3\log(2|\mc{Z}|(2+|\cX|)k^2/\delta)}\right\}$} 
            \State{$k\gets k+1$}
			\EndWhile
			\State \Return $\hat{z}_k$
		\end{algorithmic}
	\end{algorithm}

\textbf{Burn-In Results}
\begin{lemma}\label{lem:burnin}
    For $\delta \leq 1/16$ and $d\geq 4$, with probability greater than $1-\delta$, for all $\lambda\in \Delta_{\mc{X}}$, 
    $\frac{1}{3}H(\lambda,\theta^{\ast}) \leq H(\lambda,\hat\theta_{0})\leq 3H(\lambda,\theta^{\ast})$
\end{lemma}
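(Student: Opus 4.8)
The plan is to reduce the claimed operator-norm sandwich to a pointwise comparison of the link derivatives $\dot\mu(x^\T\hat\theta_0)$ and $\dot\mu(x^\T\theta^*)$, and then to control the latter by bounding $\max_{x\in\mc{X}}|x^\T(\hat\theta_0-\theta^*)|$ through Theorem~\ref{thm:concentration-supp}.

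\textbf{Reduction via self-concordance.} First I would observe that, because every coefficient $\lambda_x$ is nonnegative, it suffices to establish the pointwise bound $\frac13\dot\mu(x^\T\theta^*)\le\dot\mu(x^\T\hat\theta_0)\le 3\dot\mu(x^\T\theta^*)$ for every $x\in\mc{X}$; summing the rank-one terms $\lambda_x\,xx^\T$ then yields $\frac13 H(\lambda,\theta^*)\preceq H(\lambda,\hat\theta_0)\preceq 3H(\lambda,\theta^*)$ simultaneously for all $\lambda\in\Delta_{\mc{X}}$. The self-concordance property of the logistic loss, $|\ddot\mu|\le\dot\mu$, gives $\big|\tfrac{d}{dz}\log\dot\mu(z)\big|=|\ddot\mu/\dot\mu|\le1$, so by the mean value theorem $\dot\mu(z_1)/\dot\mu(z_2)\in[e^{-|z_1-z_2|},e^{|z_1-z_2|}]$. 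Taking $z_1=x^\T\hat\theta_0$ and $z_2=x^\T\theta^*$, the pointwise bound follows as soon as $\max_{x\in\mc{X}}|x^\T(\hat\theta_0-\theta^*)|\le\log 3$.

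\textbf{Verifying the burn-in condition.} Next I would check that the samples drawn in Algorithm~\ref{alg:rage-burnin-supp} satisfy the hypothesis $\xi_{n_0}^2\le1/\gamma(d)$ of Theorem~\ref{thm:concentration-supp}. Writing $H_0(\theta^*)=\sum_{s=1}^{n_0}\dot\mu(x_s^\T\theta^*)x_sx_s^\T$, the rounding guarantee $H_0(\theta^*)\succeq\frac{n_0}{1+\epsilon}H(\lambda_0,\theta^*)$, combined with $H(\lambda_0,\theta^*)\succeq\kappa_0 A(\lambda_0)$ (by the definition $\kappa_0=\min_{x\in\mc{X}}\dot\mu(x^\T\theta^*)$) and the Kiefer--Wolfowitz value $\max_{x\in\mc{X}}\|x\|^2_{A(\lambda_0)^{-1}}=d$ for the $G$-optimal design $\lambda_0$, yields
\begin{align*}
\xi_{n_0}^2=\max_{x\in\mc{X}}\|x\|^2_{H_0(\theta^*)^{-1}}\le\frac{(1+\epsilon)\,d}{n_0\,\kappa_0}.
\end{align*}
Plugging in $n_0=3(1+\epsilon)\kappa_0^{-1}d\,\gamma(d)\,L$ with $L=\log(2|\mc{X}|(2+|\mc{X}|)/\delta)\ge1$ gives $\xi_{n_0}^2\le\frac{1}{3\gamma(d)L}\le\frac1{\gamma(d)}$, so the hypothesis holds. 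Since the drawn vectors lie in $\mc{X}$ we have $t_{\mathsf{eff}}\le|\mc{X}|$, and the condition still holds after inflating the failure probability to $\delta/|\mc{X}|$ for the union bound, because then the theorem's requirement reads $\xi_{n_0}^2\le1/(\gamma(d)+\log|\mc{X}|)$ and $\gamma(d)+\log|\mc{X}|\le 3\gamma(d)L$.

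\textbf{Bounding the deviation and concluding.} I would then apply Theorem~\ref{thm:concentration-supp} to each fixed $x\in\mc{X}$ with confidence $\delta/|\mc{X}|$ and union bound, using $\|x\|_{H_0(\theta^*)^{-1}}\le\xi_{n_0}$. This gives, with probability at least $1-\delta$, $\max_{x\in\mc{X}}|x^\T(\hat\theta_0-\theta^*)|\le 3.5\,\xi_{n_0}\sqrt{L}\le\frac{3.5}{\sqrt{3\gamma(d)}}$. The main (and essentially only) obstacle is the arithmetic: to land below $\log3\approx1.10$ one lower bounds $\gamma(d)=d+\log(6(2+|\mc{X}|)/\delta)$ using $d\ge4$, $|\mc{X}|\ge1$, and $\delta\le1/16$, which forces $\gamma(d)\ge 9.6$ and hence $3.5/\sqrt{3\gamma(d)}<\log 3$. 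Here I also invoke the standing assumption $\|x\|\le1$ for $x\in\mc{X}$ (inherited from $\|z\|\le1/2$) so that the theorem applies. Combining this deviation bound with the self-concordance reduction of the first step completes the proof.
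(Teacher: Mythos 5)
Your proof is correct and follows essentially the same route as the paper's own argument: verify the burn-in condition of Theorem~\ref{thm:concentration-supp} via the rounding guarantee, the bound $H(\lambda_0,\theta^*)\succeq\kappa_0 A(\lambda_0)$, and the Kiefer--Wolfowitz value $d$; apply the theorem with a union bound over $\mc{X}$ to make $D=\max_{x\in\mc{X}}|x^\T(\hat\theta_0-\theta^*)|$ a small constant; and convert this into the matrix sandwich by self-concordance. The only differences are cosmetic: you bound the ratio $\dot\mu(x^\T\hat\theta_0)/\dot\mu(x^\T\theta^*)$ by $e^{\pm D}$ directly from $|\ddot\mu|\le\dot\mu$ (needing $D\le\log 3$) where the paper invokes its Lemma~\ref{lem:self-concordance} with factor $2D+1$ (needing $D\le 1$), and your tighter constant tracking, which keeps the factor $3$ from $n_0$ and yields $D\le 3.5/\sqrt{3\gamma(d)}\approx 0.65$, actually leaves more comfortable room than the paper's final inequality $3.5/\sqrt{\gamma(d)}\le 1$, which is tight to the point of being marginal for small $|\mc{X}|$.
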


\begin{proof}
Firstly note that, 
\[H(\lambda_0, \theta^{\ast})\geq \sum_{x\in \mathcal{X}}\lambda_{0,x} \kappa_0 xx^{\top} \geq \kappa_0 A(\lambda_0)\]
So for any $x\in \mathcal{X}$, 
\[\|x\|^2_{H(\lambda_0, \theta^{\ast})^{-1}}\leq \kappa_0^{-1}\|x\|^2_{A(\lambda_0)^{-1}}\]


Define $H_0(\theta^{\ast}) = \sum_{s=1}^{n_0} \dot\mu(x_s^{\top}\theta^{\ast})x_sx_s^{\top}$. Thus at the end of the burn-in phase, 
\begin{align*}
        \max_{x\in\mathcal{X}} \|x\|_{H_0^{-1}(\theta^{\ast})}^{2} \tag{Lemma~\ref{lem:rounding} rounding}
        &\leq \frac{(1+\epsilon)}{n_0}\max_{x\in\mathcal{X}} \|x\|_{H^{-1}(\lambda_0, \theta^{\ast})}^{2}\\
        &\leq \frac{3(1+\epsilon)}{n_0}\kappa_0^{-1}\max_{x\in\mathcal{X}} \|x\|_{A^{-1}(\lambda_0)}^{2}\\
        &\leq \frac{3(1+\epsilon)\kappa_0^{-1}d}{n_0} \tag{Kiefer-Wolfowitz}\\
        &\leq \frac{1}{\gamma(d)\log(2|\cX|(2+|\mc{X}|)/\delta)}
\end{align*}
where we have employed the Kiefer-Wolfowitz theorem \citep[Theorem 21.1]{lattimore2020bandit}, which states that $\min_{\lambda\in \Delta_{\mc{X}}}\max_{x\in\mc{X}} \|x\|^2_{A(\lambda)^{-1}} = d$. 
In particular this implies using Theorem~\ref{thm:concentration-supp},
\begin{align*}
        |x^{\top}(\theta^{\ast} - \hat\theta_0)| 
        &\leq \cone\sqrt{\|x\|^2_{(H_0(\theta^{\ast}))^{-1}}\log(2|\cX|(2+|\mc{X}|)/\delta)}\\
        &\leq \cone\sqrtf{\log(2|\cX|(2+|\mc{X}|)/\delta)}{\gamma(d)\log(2|\cX|(2+|\mc{X}|)/\delta)}\\
        &\leq 1
\end{align*}

With this, we apply Lemma~\ref{lem:self-concordance} to conclude the proof. 
%

\end{proof}

Define the events
\[\mc{R}_{k} = \{\frac{1}{3}H(\lambda,\theta^{\ast}) \leq H(\lambda,\hat\theta_{k})\leq 3H(\lambda,\theta^{\ast}), \forall \lambda\in \Delta_{\mc{X}}\}, k\geq 0\]
and 
\[\mc{E}_{2, k} = \{\forall z \in \mc{Z}_k, |\langle z^{\ast} - z, \hat{\theta}_k - \theta^{\ast}\rangle| \leq 2^{-k}\}, k\geq 1.\]

In addition, define $\mc{E}_1 = \cap_{k=0}^{\infty} \mc{R}_k$ and $\mc{E}_{2} = \cap_{k=1}^{\infty} \mc{E}_{2,k}$.

\begin{lemma}[Closeness of $\theta_t$]\label{lem:closeness}
We have that $\P(\mc{R}_{k}|\mc{R}_{k-1}, \cdots, \mc{R}_0)\geq 1-2\delta$, i.e. for all $k \geq 1$, $\frac{1}{3}H(\lambda_k,\theta^{\ast}) \leq H(\lambda_k,\hattheta_{k-1})\leq 3H(\lambda_k,\theta^{\ast})$
\end{lemma}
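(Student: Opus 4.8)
The plan is to reduce the matrix statement $\mc{R}_k$ to a single uniform scalar bound on $|x^{\top}(\hattheta_k-\theta^{\ast})|$ over $x\in\mc{X}$, and then extract that scalar bound from Theorem~\ref{thm:concentration-supp} after verifying its burn-in condition. First I would observe that $\mc{R}_k$ concerns only the ratios of link derivatives: since
\[
v^{\top}H(\lambda,\hattheta_k)v=\sum_{x\in\mc{X}}\lambda_x\,\dot\mu(x^{\top}\hattheta_k)\,(x^{\top}v)^2,
\]
the sandwich $\tfrac13 H(\lambda,\theta^{\ast})\preceq H(\lambda,\hattheta_k)\preceq 3H(\lambda,\theta^{\ast})$ for \emph{all} $\lambda\in\Delta_{\mc{X}}$ and all $v$ follows as soon as $\tfrac13\dot\mu(x^{\top}\theta^{\ast})\le\dot\mu(x^{\top}\hattheta_k)\le 3\dot\mu(x^{\top}\theta^{\ast})$ holds for every individual $x\in\mc{X}$. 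By self-concordance $|\ddot\mu|\le\dot\mu$ (Lemma~\ref{lem:self-concordance}), which yields $\dot\mu(z_2)\le e^{|z_1-z_2|}\dot\mu(z_1)$, this per-arm ratio is implied by $|x^{\top}(\hattheta_k-\theta^{\ast})|\le\log 3$ for all $x\in\mc{X}$. Hence it suffices to establish this uniform scalar bound.

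Next I would verify the hypothesis $\xi_k^2\le 1/\gamma(d)$ needed to invoke Theorem~\ref{thm:concentration-supp} at round $k$. Conditioning on $\mc{R}_{k-1}$, the design $\lambda_k$ (which is $\mc{F}_{k-1}$-measurable, so the round-$k$ samples form a genuine fixed design) satisfies, through the first branch of $f$ in line~\ref{line:expDesign-2} together with $n_k\ge 3(1+\epsilon)f(\lambda_k)$,
\[
n_k\ge 3(1+\epsilon)\,\gamma(d)\,\Big(\max_{x\in\mc{X}}\|x\|^2_{H(\lambda_k,\hattheta_{k-1})^{-1}}\Big)\log(\cdots).
\]
Combining the rounding guarantee $H_k(\theta^{\ast})\succeq\tfrac{n_k}{1+\epsilon}H(\lambda_k,\theta^{\ast})$ (Lemma~\ref{lem:rounding}) with $\mc{R}_{k-1}$ (to pass from $H(\lambda_k,\hattheta_{k-1})$ to $H(\lambda_k,\theta^{\ast})$ up to a factor of $3$), I obtain
\[
\xi_k^2=\max_s\|x_s\|^2_{H_k(\theta^{\ast})^{-1}}\le\frac{3(1+\epsilon)}{n_k}\max_{x\in\mc{X}}\|x\|^2_{H(\lambda_k,\hattheta_{k-1})^{-1}}\le\frac{1}{\gamma(d)\log(\cdots)}\le\frac{1}{\gamma(d)},
\]
so the burn-in condition of Theorem~\ref{thm:concentration-supp} is met.

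I would then apply Theorem~\ref{thm:concentration-supp} to each direction $x\in\mc{X}$ at confidence level $\delta_k$ and union bound over $\mc{X}$; the definition of $\delta_k$ is chosen precisely so that this union bound, together with the built-in variance event $\cE_{\mathsf{var}}$, fails with probability at most $2\delta$. On the surviving event, for every $x\in\mc{X}$,
\[
|x^{\top}(\hattheta_k-\theta^{\ast})|\le 3.5\,\|x\|_{H_k(\theta^{\ast})^{-1}}\sqrt{\log(\cdots)}\le 3.5\sqrt{\xi_k^2\log(\cdots)}\le\frac{3.5}{\sqrt{\gamma(d)}},
\]
using $\|x\|^2_{H_k(\theta^{\ast})^{-1}}\le\xi_k^2\le 1/(\gamma(d)\log(\cdots))$ so that the logarithmic factors cancel. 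For $d\ge 4$ and $\delta\le 1/16$, the quantity $\gamma(d)=d+\log(6(2+|\mc{X}|)/\delta)$ is large enough that $3.5/\sqrt{\gamma(d)}\le\log 3$, delivering the required uniform bound and therefore $\mc{R}_k$.

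The main obstacle I anticipate is the constant bookkeeping in this last step: one must check that the logarithmic factors appearing in $f(\lambda)$, in $n_k$, and in the width of Theorem~\ref{thm:concentration-supp} genuinely match (the $\log$ inside $\delta_k$ is slightly larger than the one inside $f$), and that $\gamma(d)$ is large enough to force $3.5/\sqrt{\gamma(d)}\le\log 3$ for every admissible $(d,\delta)$ — which is tight and may require a slightly more careful self-concordance constant. A secondary subtlety is confirming that conditioning on $\mc{R}_0,\ldots,\mc{R}_{k-1}$ does not corrupt the fixed-design hypothesis of Theorem~\ref{thm:concentration-supp}: since $\lambda_k$ and hence the allocation $\{x_s\}_{s=1}^{n_k}$ are $\mc{F}_{k-1}$-measurable, the round-$k$ rewards remain conditionally independent given the design, so the theorem applies verbatim under the conditional law.
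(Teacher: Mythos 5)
Your proposal is correct and follows essentially the same route as the paper's own proof: condition on $\mc{R}_{k-1}$, verify the burn-in condition $\xi_k^2 \le 1/\gamma(d)$ via the rounding guarantee (Lemma~\ref{lem:rounding}) and the first branch of $f$, apply Theorem~\ref{thm:concentration-supp} to each $x \in \mc{X}$ with a union bound to get a uniform scalar bound on $|x^{\top}(\hat\theta_k - \theta^{\ast})|$, and convert that to the matrix sandwich by self-concordance. The only cosmetic difference is that you obtain the factor $3$ from $\dot\mu(z_2) \le e^{|z_1 - z_2|}\dot\mu(z_1)$ with threshold $\log 3$, whereas the paper invokes its Lemma~\ref{lem:self-concordance} (factor $2D+1$ with $D \le 1$); both versions rest on the same self-concordance property, and your constant bookkeeping concern applies equally to the paper's own check that $3.5/\sqrt{\gamma(d)} \le 1$.
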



\begin{proof}

We proceed by induction. The base case of $t=0$, is handled by Lemma~\ref{lem:burnin} above. Assume that the event $\mc{R}_{k-1}$ holds. On this event, for $k>1$, we first verify that $\max_{x\in \mc{X}} \|x\|_{H_t(\theta^{\ast})^{-1}}^2\leq 1/\gamma(d)$ 
\begin{align*}
\max_{x\in\mathcal{X}} \|x\|_{H_k^{-1}(\theta^{\ast})}^{2} \tag{Lemma~\ref{lem:rounding} rounding}
        &\leq \frac{(1+\epsilon)}{n_k}\max_{x\in\mathcal{X}} \|x\|_{H^{-1}(\lambda_k, \theta^{\ast})}^{2}\\
        &\leq \frac{3(1+\epsilon)}{n_k} \max_{x\in\mathcal{X}} \|x\|_{H^{-1}(\lambda_k, \hat{\theta}_{k-1})}^{2}\tag{On event $\mc{R}_{k-1}$}\\
        &= \frac{1}{\gamma(d)\log(2|\cX|k^2(2+|\mc{X}|)/\delta)} 
\end{align*}

Thus with probability greater than $1-\delta/(k^2|\mc{X}|)$ conditioned on $\mc{R}_{k-1}$
\begin{align*}
  |x^{\top}(\theta^{\ast} - \hat\theta_k)| 
  &\leq \cone\sqrt{\|x\|^2_{H_k(\theta^{\ast})^{-1}}\log(2|\cX|k^2(2+|\mc{X}|)/\delta)}\tag{By Theorem~\ref{thm:concentration-supp}}\\
  &\leq \cone\sqrt{\frac{(1+\epsilon)\|x\|^2_{H(\lambda_k, \theta^{\ast})^{-1}}\log(2|\cX|k^2(2+|\mc{X}|)/\delta)}{n_k}}\tag{By Lemma~\ref{lem:rounding}}\\
  &\leq \cone\sqrt{\frac{3(1+\epsilon)\|x\|^2_{H(\lambda_k, \hat\theta_{k-1})^{-1}}\log(2|\cX|k^2(2+|\mc{X}|)/\delta)}{n_k}}\tag{By the induction hypothesis $\mc{R}_{k-1}$}\\
  &\leq \cone\sqrt{\frac{3(1+\epsilon)\log(2|\cX|k^2(2+|\mc{X}|)/\delta)}{3(1+\epsilon)\gamma(d)\log(2|\cX|k^2(2+|\mc{X}|)/\delta)}}\\
  &\leq 1
\end{align*}
Then, union bounding over $\mc{X}$ gives that conditioned on $\mc{R}_{k-1}$,  we have the event $\cup_{x\in \mc{X}} \{|x^{\top}(\hat\theta_{k-1} - \theta^{\ast})| \leq 1\}$ is true with probability greater than $1-\delta/k^2$. In particular, now applying Lemma~\ref{lem:self-concordance} proves the claim.\\



\end{proof}

\begin{lemma}[Concentration]\label{lem:concentration}
In round $k$, if we take $n_k$ samples as specified in the algorithm, then 
        \[|(z_\ast - z)^{\top}( \hat{\theta}_k-\theta^\ast)| \leq 2^{-k}\]
        for all $z\in \mc{Z}_k$ with probability greater than $1-\frac{\delta}{k^2}$ given $\{\mc{R}_{s}, \mc{E}_{2,s}\}_{s=1}^{k-1}\cap \cR_0$, or in other words $\P(\mc{E}_{2,k}|\{\mc{R}_{s}, \mc{E}_{2,s}\}_{s=1}^{k-1}\cap \cR_0)\geq 1-\frac{\delta}{k^2}$
\end{lemma}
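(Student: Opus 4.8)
The plan is to apply the fixed-design confidence bound of Theorem~\ref{thm:concentration-supp} to each vector $x = z_\ast - z$ and to convert its width into the threshold $2^{-k}$ using the experimental design together with the rounding guarantee. I work conditionally on the history through round $k-1$, i.e. on the event $\{\mc{R}_s,\mc{E}_{2,s}\}_{s=1}^{k-1}\cap\cR_0$. On this event $\hat\theta_{k-1}$ is deterministic, so $\lambda_k$ and the round-$k$ samples $x_1,\ldots,x_{n_k}$ returned by the rounding procedure are fixed, and the rewards $y_1,\ldots,y_{n_k}$ are conditionally independent given these arms. This is exactly the fixed-design setting required by Theorem~\ref{thm:concentration-supp}. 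Moreover, the events $\mc{E}_{2,s}$ guarantee that $z_\ast$ is never eliminated, so $z_\ast\in\mc{Z}_k$ and $z_\ast-z$ is one of the gaps controlled by the second term of $f(\lambda_k)$.

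First I would verify the burn-in hypothesis $\xi_{n_k}^2 = \max_s\|x_s\|^2_{H_k(\theta^\ast)^{-1}} \le 1/\gamma(d)$. Since each sampled $x_s$ lies in $\mc{X}$, this follows exactly as in the proof of Lemma~\ref{lem:closeness}: rounding gives $\|x\|^2_{H_k(\theta^\ast)^{-1}}\le \tfrac{1+\epsilon}{n_k}\|x\|^2_{H(\lambda_k,\theta^\ast)^{-1}}$, the event $\mc{R}_{k-1}$ transfers this to $\hat\theta_{k-1}$ up to a factor $3$, and the first component of $f(\lambda_k)$ with $n_k\ge 3(1+\epsilon)f(\lambda_k)$ forces the bound below $1/\gamma(d)$.

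Applying Theorem~\ref{thm:concentration-supp} to the fixed vector $x = z_\ast-z$ (note $\|z_\ast-z\|\le 1$ since $\|z\|\le 1/2$) with failure probability $\delta/(k^2|\mc{Z}|)$ and union bounding over the at most $|\mc{Z}|$ arms of $\mc{Z}_k$ gives, with probability at least $1-\delta/k^2$, for every $z\in\mc{Z}_k$,
\begin{align*}
  |(z_\ast-z)^\top(\hat\theta_k-\theta^\ast)|
  \le \cone\,\|z_\ast-z\|_{H_k(\theta^\ast)^{-1}}\sqrt{\ln\!\big(2(2+\teff)|\mc{Z}|k^2/\delta\big)}.
\end{align*}
I then transfer the variance in three steps: rounding yields $\|z_\ast-z\|^2_{H_k(\theta^\ast)^{-1}}\le \tfrac{1+\epsilon}{n_k}\|z_\ast-z\|^2_{H(\lambda_k,\theta^\ast)^{-1}}$; the event $\mc{R}_{k-1}$ gives $H(\lambda_k,\theta^\ast)^{-1}\preceq 3\,H(\lambda_k,\hat\theta_{k-1})^{-1}$; and the second component of $f(\lambda_k)$ with $n_k\ge 3(1+\epsilon)f(\lambda_k)$ yields $\|z_\ast-z\|^2_{H(\lambda_k,\hat\theta_{k-1})^{-1}}\le n_k/\big(3(1+\epsilon)2^{2k}\cone^2\log L_k\big)$, where $L_k = 2\max\{|\mc{X}|,|\mc{Z}|\}k^2(2+|\mc{X}|)/\delta$. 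Multiplying these, every constant cancels and
\begin{align*}
  |(z_\ast-z)^\top(\hat\theta_k-\theta^\ast)|^2
  \le \frac{\ln\!\big(2(2+\teff)|\mc{Z}|k^2/\delta\big)}{2^{2k}\log L_k}\le 2^{-2k},
\end{align*}
where the last step uses $\teff\le|\mc{X}|$ so that the confidence-bound logarithm is dominated by $\log L_k$.

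I expect the main obstacle to be the careful matching of the two logarithmic factors — the one introduced by the union bound over $\mc{Z}_k$ in the confidence inequality (numerator) and the one embedded in the experimental design objective (denominator) — together with correctly threading the conditioning event $\mc{R}_{k-1}$ so that the asymptotic variance at $\theta^\ast$ appearing in the confidence width is replaced by the empirically computable $H(\lambda_k,\hat\theta_{k-1})$ used to set $n_k$. Once the fixed-design and burn-in hypotheses are in place, the remainder is deterministic algebra.
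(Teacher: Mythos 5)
Your proposal is correct and follows essentially the same route as the paper's proof: verify the burn-in condition $\max_s\|x_s\|^2_{H_k(\theta^\ast)^{-1}}\le 1/\gamma(d)$ via rounding and $\mc{R}_{k-1}$ (as in Lemma~\ref{lem:closeness}), apply Theorem~\ref{thm:concentration-supp} to each $z^\ast-z$ with per-arm failure probability $\delta/(k^2|\mc{Z}|)$, transfer the variance through rounding, $\mc{R}_{k-1}$, and the value of the design objective, and observe that the union-bound logarithm is dominated by the logarithm built into $f(\lambda_k)$. The only difference is bookkeeping for the randomness of $\mc{Z}_k$ and $\hat\theta_{k-1}$: the paper decomposes over the possible realizations $\mc{Z}_k=\cV\subset\mc{Z}$, while you condition on the history of the first $k-1$ rounds — which is fine, though note that conditioning on the \emph{event} $\{\mc{R}_s,\mc{E}_{2,s}\}_{s=1}^{k-1}\cap\cR_0$ alone does not make $\hat\theta_{k-1}$ deterministic; your argument implicitly (and correctly) conditions on the full realization of the past and then integrates over realizations on which the event holds.
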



\begin{proof}


In the previous lemma we showed that conditioned on $\{\mc{R}_{s}, \mc{E}_{2,s}\}_{s=1}^{k-1}$, $\max_{x\in \mc{X}} \|x\|_{H_k(\theta^{\ast})^{-1}}^2\leq 1/\gamma(d)$ .
Given $\mc{Z}_k$ (a random set), we can apply Theorem~\ref{thm:concentration-supp}, to calculate for any $z\in \mc{Z}_k$ 
 \begin{align*}
         (z^{\ast}-z)^{\top}( \hat{\theta}_k-\theta^\ast) 
         &\leq \cone\sqrt{\|z^{\ast}-z\|^2_{H_k(\theta^{\ast})^{-1}}\log(2|\mc{Z}|k^2(2+|\mc{X}|)/\delta)} \tag{Theorem~\ref{thm:concentration-supp}}\\
         &\leq \cone\sqrt{\frac{(1+\epsilon)\|z^{\ast}-z\|^2_{H(\lambda_k, \theta^{\ast})^{-1}}\log(2|\mc{Z}|k^2(2+|\mc{X}|)/\delta)}{n_k}}\tag{Lemma~\ref{lem:rounding}}\\
         &\leq \cone\sqrt{\frac{3(1+\epsilon)\|z^{\ast}-z\|^2_{H(\lambda_k, \hat\theta_{k-1})^{-1}}\log(2|\mc{Z}|k^2(2+|\mc{X}|)/\delta)}{n_k}}\tag{Lemma~\ref{lem:closeness}}\\
         &\leq \cone\sqrt{\frac{3(1+\epsilon)\log(12 k^2|\mc{Z}|^2/\delta)}{2^{2k}\cdot\cone^2\cdot 3(1+\epsilon)\log(2|\mc{Z}|k^2(2+|\mc{X}|)/\delta)}}\\
         &\leq 2^{-k}
 \end{align*} 
 
 Now 
 \begin{align*}
    \P(\cE_{2, k}|\{\mc{R}_{s}, \mc{E}_{2,s}\}_{s=1}^{k-1}\cap \cR_0)
    &\leq \sum_{\cV\subset \mc{Z}} \P(\cE_{2, k}, \mc{Z}_{k}=\cV| \{\mc{R}_{s}, \mc{E}_{2,s}\}_{s=1}^{k-1}\cap \cR_0)\\
    &\leq \sum_{\cV\subset \mc{Z}}\P(\cE_{2, t}|\mc{Z}_{k}=\cV,\{\mc{R}_{s}, \mc{E}_{2,s}\}_{s=1}^{k-1}\cap \cR_0)\P(\mc{Z}_{k}=\cV|\{\mc{R}_{s}, \mc{E}_{s}\}_{s=1}^{k-1}\cap \cR_0)\\
    &\leq \frac{\delta}{k^2}\sum_{\cV\subset\mc{Z}} \P(\mc{Z}_{k}=\cV|\{\mc{R}_{s}, \mc{E}_{2,s}\}_{s=1}^{k-1}\cap \cR_0)\\
    &\leq \frac{\delta}{k^2}
 \end{align*}

Finally we record a consequence of the previous computation for later use, namely,
\begin{align}\label{eq:useful-fact}
         (z^{\ast}-z)^{\top}( \hat{\theta}_k-\theta^\ast) 
        \leq \cone\|z^{\ast}-z\|_{H_t( \hat\theta_{k-1})^{-1}}\sqrt{3\log(2|\mc{Z}|k^2(2+|\mc{X}|)/\delta)}
        \leq 2^{-k}
\end{align}



 \end{proof}




\begin{lemma}[Correctness.]\label{lem:correctness}
    On $\cE_1(x)$ and $\cE_2$, we have that $z^\ast\in \mc{Z}_k$, and $\max_{z\in \mc{Z}_{k+1}} \langle z^\ast - z, \theta^\ast\rangle \leq 2\cdot2^{-k}$ for all $t$. 
    Furthermore, we have $\P(\mc{E}_1,\mc{E}_2)\geq 1-\delta$.
\end{lemma}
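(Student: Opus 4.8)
The statement bundles three claims, and the plan is to separate the two deterministic ones (survival of $z^\ast$ and gap shrinkage), which hold pointwise on $\mc{E}_1\cap\mc{E}_2$, from the probabilistic one $\P(\mc{E}_1,\mc{E}_2)\ge1-\delta$. Throughout I write $T(z,z'):=\cone\|z'-z\|_{H_k(\hat\theta_{k-1})^{-1}}\sqrt{3\log(2|\mc{Z}|(2+|\mc{X}|)k^2/\delta)}$ for the elimination threshold of Algorithm~\ref{alg:rage-glm-supp}, and recall from the $n_k$ schedule (as in the derivation of~\eqref{eq:useful-fact}) that, on $\mc{R}_{k-1}$, one has $T(z^\ast,z)\le2^{-k}$ for every $z\in\mc{Z}_k$. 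The two deterministic claims are then handled by a single induction on the round $k$.

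For survival, the base case $z^\ast\in\mc{Z}_1=\mc{Z}$ is immediate; for the step I assume $z^\ast\in\mc{Z}_k$ and check that no competitor triggers its removal. For any $z'\in\mc{Z}_k$ I decompose $\langle z'-z^\ast,\hat\theta_k\rangle=\langle z'-z^\ast,\theta^\ast\rangle+\langle z'-z^\ast,\hat\theta_k-\theta^\ast\rangle$: the first term is $\le0$ by optimality of $z^\ast$, and the two-sided concentration of Theorem~\ref{thm:concentration-supp} applied with $x=z'-z^\ast$ (converted to $H_k(\hat\theta_{k-1})^{-1}$ via $\mc{R}_{k-1}$, exactly as in~\eqref{eq:useful-fact}) bounds the second by $T(z^\ast,z')$. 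Hence $\langle z'-z^\ast,\hat\theta_k\rangle\le T(z^\ast,z')$ for every $z'$, so $z^\ast$ is not eliminated. It is essential here to use the width-matched bound $T(z^\ast,z')$ rather than the coarser $2^{-k}$ of $\mc{E}_{2,k}$, since the test compares against $T$ itself. For the gap bound, fix $z\in\mc{Z}_{k+1}$; survival of $z$ together with $z^\ast\in\mc{Z}_k$ gives, via the non-elimination condition taken against the competitor $z^\ast$, that $\langle z^\ast-z,\hat\theta_k\rangle<T(z^\ast,z)\le2^{-k}$. Adding the estimation error controlled by $\mc{E}_{2,k}$, namely $\langle z^\ast-z,\theta^\ast-\hat\theta_k\rangle\le|\langle z^\ast-z,\hat\theta_k-\theta^\ast\rangle|\le2^{-k}$, yields $\langle z^\ast-z,\theta^\ast\rangle\le2\cdot2^{-k}$, as claimed.

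For $\P(\mc{E}_1,\mc{E}_2)\ge1-\delta$ I bound the complement of $\cR_0\cap\bigcap_{k\ge1}(\mc{R}_k\cap\mc{E}_{2,k})$ by a chained union bound, $\P((\mc{E}_1\cap\mc{E}_2)^c)\le\P(\cR_0^c)+\sum_{k\ge1}\big[\P(\mc{R}_k^c\mid\mc{G}_{k-1})+\P(\mc{E}_{2,k}^c\mid\mc{G}_{k-1})\big]$, where $\mc{G}_{k-1}:=\cR_0\cap\bigcap_{s<k}(\mc{R}_s\cap\mc{E}_{2,s})$ is precisely the prefix event under which Lemmas~\ref{lem:closeness} and~\ref{lem:concentration} are stated. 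Lemma~\ref{lem:burnin} controls $\P(\cR_0^c)$, and each conditional term is at most a multiple of $\delta/k^2$ once the union bound over arms is taken (whose $|\mc{X}|,|\mc{Z}|$ and $(2+|\mc{X}|)$ factors cancel against the matching factors baked into $\delta_k$); since $\sum_{k\ge1}k^{-2}=\pi^2/6$, the definition of $\delta_k$ in Algorithm~\ref{alg:rage-glm-supp} keeps the total at most $\delta$.

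I expect the probabilistic part to be the main obstacle. The per-round guarantees are \emph{conditional} on all previous good events, and the active set $\mc{Z}_k$ is itself random, which is why Lemma~\ref{lem:concentration} marginalizes over $\mc{Z}_k=\mc{V}$; the delicate point is to align the conditioning of each component lemma with the prefix $\mc{G}_{k-1}$ and to verify that the constants inside $\delta_k$ leave enough slack for the $\sum_k k^{-2}$ sum to stay below $\delta$. The deterministic claims, by contrast, are short once~\eqref{eq:useful-fact} and the $n_k$ schedule are in hand.
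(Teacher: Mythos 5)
Your proposal is correct and follows essentially the same route as the paper's own proof: the deterministic claims are handled exactly as the paper handles them (decompose $\langle z'-z^{\ast},\hat\theta_k\rangle$ into signal plus estimation error, bound the error by the width-matched threshold recorded in Eq.~\eqref{eq:useful-fact} so that $z^{\ast}$ never triggers the elimination test, then get the gap bound for any surviving $z$ by adding the $2^{-k}$ estimation error to the non-elimination inequality), and the probabilistic claim uses the same chained union bound $\P\bigl(\cup_k A_k\bigr)\le\sum_k \P\bigl(A_k \mid \cap_{j<k}\overline{A_j}\bigr)$ over the prefix of good events, with $A_k=\overline{\cR_k}\cup\overline{\mc{E}_{2,k}}$. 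The one caveat concerns exactly the constant you flagged as delicate: with the component lemmas as stated (failure probability $\delta$ for $\cR_0$ from Lemma~\ref{lem:burnin}, and $\delta/k^2$ each for $\cR_k$ and $\mc{E}_{2,k}$ from Lemmas~\ref{lem:closeness} and~\ref{lem:concentration}), the chained sum is $\delta+2\delta\sum_{k\ge1}k^{-2}$, which exceeds $\delta$; the paper's own proof settles for the bound $3\delta$ (consistent with the $1-3\delta$ success probability in Theorem~\ref{thm:rage-sc}), so the ``$\ge 1-\delta$'' in the lemma statement is an internal inconsistency of the paper, not something your slack argument can actually recover.
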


\begin{proof} 

Firstly note for any set of events $\{A_k\}_{k=1}^{\infty}$,
\begin{align*}
  \PP(\cup_{k=1}^\infty A_k) = \PP( \cup_{k=1}^\infty \del{A_k \setminus (\cup_{j<k} A_j)})  \le  \sum_{k=1}^\infty \PP(A_k \setminus (\cup_{j<k} A_j)) \le \sum_{k=1}^\infty \PP(A_k \mid (\cap_{j<k} \overline{A_j}))~.
\end{align*}
Then, with $A_k = \wbar{\cR_k} \cup \wbar{\cE_{2,k}}$, we have, using Lemma~\ref{lem:burnin}, Lemma~\ref{lem:concentration}, and Lemma~\ref{lem:correctness},
\begin{align*}
\P(\wbar{\mc{E}_2}\cup \wbar{\mc{E}_1})
  &\leq \PP(\cup_{k=1}^\infty (\wbar{\cR_k} \cup \wbar{\cE_{2,k}})\cup \overline{\mc{R}_0}) \\
  &\le \sum_{k=1}^\infty \PP(\wbar{\cR_k} \cup \wbar{\cE_{2,k}} \mid \cR_{k-1}, \cE_{2,k-1}, \ldots, \cR_1, \cE_{2,1})+\P(\wbar{\mc{R}_0})
  \\&\le \sum_{k=1}^\infty \PP(\wbar{\cE_{2,k}} \mid \cR_{k-1}, \cE_{2,k-1}, \ldots, \cR_1, \cE_{2,1}) + \sum_{k=1}^{\infty} \PP(\wbar{\cR_{k}} \mid \cR_{k-1}, \cE_{2,k-1}, \ldots, \cR_1, \cE_{2,1}) +\delta \\
  &\leq \sum_{k=1}^{\infty} \frac{2\delta}{k^2} +\delta\\
  &\leq 3\delta~.
\end{align*}

For the following we will assume that event $\mc{E}_1\cap \mc{E}_2$ holds. Now we argue that $z^{\ast}$ will never be eliminated. Indeed for any $z\in \mc{Z}_k$, note that

\begin{align*}
        \langle z - z^{\ast}, \hat{\theta}_k\rangle 
        &=\langle z - z^{\ast}, \hat{\theta}_k-\theta^\ast\rangle + \langle z - z^{\ast}, \theta^\ast\rangle \\
        &\leq \cone\|z^{\ast}-z\|_{H_k( \hat\theta_{k-1})^{-1}}\sqrt{3\log(2|\mc{Z}|k^2(2+|\mc{X}|)/\delta)} + \langle z - z^{\ast}, \theta^\ast\rangle\\
        &\leq \cone\|z^{\ast}-z\|_{H_k( \hat\theta_{k-1})^{-1}}\sqrt{3\log(2|\mc{Z}|k^2(2+|\mc{X}|)/\delta)} ~,
\end{align*}
 implying that $z^{\ast}$ is not kicked out. 
Finally, if $\langle z^{\ast} - z, \theta^\ast\rangle\geq 2\times 2^{-k}$, then
 
  \begin{align*}
         \langle z^{\ast} - z, \hat\theta_k \rangle
         &= \langle z^{\ast} - z, \hat\theta_k-\theta^\ast + \theta^\ast\rangle\\
         &= \langle z^{\ast} - z, \theta^\ast\rangle - \|z^{\ast} - z\|_{H(\lambda_k, \hat\theta_{k-1})^{-1}}\sqrt{3\log(2|\mc{Z}|k^2(2+|\mc{X}|)/\delta)}\tag{From\eqref{eq:useful-fact}}\\
         &\geq 2\times 2^{-k}-2^{-k}\\
         &\geq 2^{-k}\\
         &\geq \cone\|z^{\ast}-z\|_{H_k( \hat\theta_{k-1})^{-1}}\sqrt{3\log(2|\mc{Z}|k^2(2+|\mc{X}|)/\delta)}
 \end{align*}
 which is precisely the condition for $z$ to be removed.
 Finally, we have $\la z^* - z,\th^*\ra = \la z^* - z, \th^* - \hat\th_k\ra + \la z^*-z,\hat\th_k\ra \le 2^{-k} + 2^{-k}$, which concludes the proof.
 \end{proof}

\begin{theorem}[Sample Complexity]
Define  $\mc{S}_k = \{z\in \mc{Z}: (z^{\ast}-z)^{\top}\theta_{\ast} \leq 2\cdot 2^{-(k-1)}\}$,  and take $\epsilon \leq 1/2$.  
up constant factors, Algorithm~\ref{alg:rage-glm} returns $z^{\ast}$ with probability greater than $1-3\delta$ in a number of samples no more than 
\begin{align*}
    O\bigg( &(1+\epsilon) \sum_{k=1}^{\lceil \log_2(2/\Delta_{\min})\rceil} \min_{\lambda\in \Delta_{\mc{X}}} \max\left[2^{2k}\max_{z,z'\in \mc{S}_k} \|z-z'\|_{H(\lambda,\theta^{\ast})^{-1}}^2, \gamma(d)\max_{x}\|x\|_{H(\lam,\theta^{\ast})^{-1}}^2\right]\log(\max(|\mc{X}|, |\mc{Z}|^2)k^2/\delta)
\\ &\qquad +  d(1+\epsilon)\kappa_0^{-1}\log(|\mc{X}|/\delta)+ r(\epsilon)\log_2(\frac{1}{\Delta_{\min}})\bigg)~.
\end{align*}
 
\end{theorem}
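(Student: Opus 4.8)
The plan is to condition on the single high-probability event on which the algorithm is correct, and then bound all sample counts deterministically on that event. Let $\mc{E}_1 = \cap_{k\ge 0}\mc{R}_k$ and $\mc{E}_2 = \cap_{k\ge 1}\mc{E}_{2,k}$ be the events of Lemmas~\ref{lem:closeness}--\ref{lem:correctness}. The probability computation inside the proof of Lemma~\ref{lem:correctness} already gives $\PP(\mc{E}_1\cap\mc{E}_2)\ge 1-3\delta$, so if I carry out the entire sample count on $\mc{E}_1\cap\mc{E}_2$, the resulting bound will hold with probability at least $1-3\delta$ simultaneously with the guarantee that $z^*$ is returned. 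Thus the whole argument reduces to a deterministic counting argument valid on $\mc{E}_1\cap\mc{E}_2$.

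First I would bound the number of elimination rounds. By Lemma~\ref{lem:correctness}, on $\mc{E}_1\cap\mc{E}_2$ we have $\max_{z\in\mc{Z}_{k+1}}\langle z^*-z,\theta^*\rangle\le 2\cdot 2^{-k}$ for every $k$. Once $2\cdot 2^{-(k-1)}<\Delta_{\min}$ every suboptimal arm has been eliminated and the while loop exits with $|\mc{Z}_k|=1$; this occurs as soon as $k>\log_2(2/\Delta_{\min})$, so the algorithm runs at most $K:=\lceil\log_2(2/\Delta_{\min})\rceil$ rounds, which is exactly the upper limit of the sum. The same inequality gives the inclusion $\mc{Z}_k\subseteq\mc{S}_k$ for all $k\le K$, since $\mc{S}_k=\{z:(z^*-z)^\top\theta^*\le 2\cdot 2^{-(k-1)}\}$; this inclusion is what will let me replace the random active set by the deterministic slab $\mc{S}_k$.

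Next I would control $n_k=\max\{3(1+\epsilon)f(\lambda_k),r(\epsilon)\}$ per round by converting the empirical design objective into its true-parameter analogue. Since $\lambda_k$ minimizes $f$, for any fixed design I have $f(\lambda_k)\le f(\lambda)$. On the closeness event $\mc{R}_{k-1}$ Lemma~\ref{lem:closeness} yields $H(\lambda,\hat\theta_{k-1})^{-1}\preceq 3\,H(\lambda,\theta^*)^{-1}$, so every Mahalanobis norm obeys $\|\cdot\|_{H(\lambda,\hat\theta_{k-1})^{-1}}^2\le 3\|\cdot\|_{H(\lambda,\theta^*)^{-1}}^2$; combining this with $\mc{Z}_k\subseteq\mc{S}_k$ and minimizing over $\lambda$ gives
\[
f(\lambda_k)\le 3\,L_k\,\min_{\lambda\in\Delta_{\mc{X}}}\max\Big[\gamma(d)\max_{x\in\mc{X}}\|x\|_{H(\lambda,\theta^*)^{-1}}^2,\ 2^{2k}\cone^2\max_{z,z'\in\mc{S}_k}\|z-z'\|_{H(\lambda,\theta^*)^{-1}}^2\Big],
\]
where $L_k=\log(2\max\{|\mc{X}|,|\mc{Z}|\}k^2(2+|\mc{X}|)/\delta)$. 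Multiplying by $3(1+\epsilon)$ and folding the numerical constants $3\cdot 3\cdot\cone^2$ into the $O(\cdot)$ produces precisely the $k$-th summand of the main term, with $L_k$ matching the stated $\log(\max(|\mc{X}|,|\mc{Z}|^2)k^2/\delta)$ up to constants inside the logarithm.

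Finally I would assemble the three contributions. Using $\max\{a,b\}\le a+b$, summing $n_k$ over $k=1,\dots,K$ splits into $3(1+\epsilon)\sum_k f(\lambda_k)$, which is the main sum, plus $K\,r(\epsilon)=O(r(\epsilon)\log_2(1/\Delta_{\min}))$; to this I add the burn-in cost $n_0=3(1+\epsilon)\kappa_0^{-1}d\,\gamma(d)\log(2|\mc{X}|(2+|\mc{X}|)/\delta)$ drawn once by Algorithm~\ref{alg:rage-burnin-supp}, which gives the third term. The hard part will be Step three: correctly swapping the \emph{random}, empirical objects $(\hat\theta_{k-1},\mc{Z}_k)$ for the \emph{deterministic} ones $(\theta^*,\mc{S}_k)$ while preserving the optimality of $\lambda_k$, and ensuring that all these swaps hold on the single event $\mc{E}_1\cap\mc{E}_2$ rather than on a different stochastic event in each round. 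The rounding guarantee and the validity of Theorem~\ref{thm:concentration-supp} at each round—both already secured inside Lemmas~\ref{lem:closeness} and~\ref{lem:concentration}—are exactly what make these substitutions legitimate.
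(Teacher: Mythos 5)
Your proposal is correct and follows essentially the same route as the paper's own proof: condition on the single event $\mc{E}_1\cap\mc{E}_2$ (whose probability bound $1-3\delta$ comes from the computation in Lemma~\ref{lem:correctness}), use $\mc{Z}_k\subseteq\mc{S}_k$ to cap the number of rounds at $\lceil\log_2(2/\Delta_{\min})\rceil$, use Lemma~\ref{lem:closeness} to swap $H(\lambda,\hat\theta_{k-1})^{-1}$ for $H(\lambda,\theta^*)^{-1}$ up to constants inside the minimized design objective, and then sum $n_0+\sum_k n_k$ with $\max\{a,b\}\le a+b$ to obtain the three terms. The only differences are cosmetic (you track the constant $3$ explicitly where the paper absorbs it into $O(\cdot)$, and you spell out the optimality argument $f(\lambda_k)\le f(\lambda)$ that the paper leaves implicit).
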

\begin{proof}
For the remainder of the proof we will assume that $\mc{E}_1\cap \mc{E}_2$ holds. 

By Lemma~\ref{lem:correctness} on $\mc{E}_2$, we have that $\mc{Z}_{k}\subseteq\mc{S}_{k}$, in particular this implies that when $2\times 2^{-k} \leq \Delta_{\min}$, we have $|\mc{Z}_k| = 1$, so this implies that the algorithm will terminate in a number of rounds not exceeding $\lceil \log_2(2/\Delta_{\min})\rceil$

By Lemma~\ref{lem:closeness} on $\mc{E}_1$, we have that $H(\lambda_k, \hat\theta_k)\geq \frac{1}{4} H(\lambda_k, \theta^\ast)$. Thus, in each round,

\begin{align*}
\min_{\lambda\in \Delta_{\mc{X}}}\max\bigg[ 
&2^{2k} \cone^2 
\max_{z,z'\in \mc{Z}_t} \|z-z'\|_{H(\lambda, \hat{\theta}_{k-1})^{-1}}^2,
\gamma(d)\max_{x\in \cX} \|x\|^2_{H(\lambda, \hat{\theta}_{k-1})^{-1}}
\bigg]\\
&\leq O\left(\min_{\lambda\in \Delta_{\mc{X}}}\max\bigg[ 
2^{2k} 
\max_{z,z'\in\cS_k} \|z-z'\|_{H(\lambda, \theta^{\ast})^{-1}}^2,
\gamma(d)\max_{x\in \cX} \|x\|^2_{H(\lambda, \theta^{\ast})^{-1}}
\bigg]\right)
\end{align*}



Let $c$ be an absolute constant.
Thus up to doubly logarithmic factors our final sample complexity is given by 

\begin{align*}
  &n_0 + \sum_{k=1}^{\lceil \log_2(2/\Delta_{\min})\rceil} n_k \\
  &\leq 
  \frac{d(1+\epsilon)\gamma(d)\log(|\mc{X}|/\delta)}{\kappa_0} 
  \\&\hspace{1em}+ c_1(1+\epsilon) \sum_{k=1}^{\lceil \log_2(2/\Delta_{\min})\rceil} \min_{\lambda\in \Delta_{\mc{X}}} \max\left[2^{2k}  \max_{z,z'\in \mc{S}_k} \|z-z'\|_{H(\lam,\theta^{\ast})^{-1}}^2,
  \gamma(d)\max_{x}\|x\|_{H(\lam,\theta^{\ast})^{-1}}^2\right] \log(\max(|\mc{X}|, |\mc{Z}|)\fr{k^2}{\delta}) 
  \\&\hspace{1em}+ c_2\log_2(\Delta_{\min}^{-1})r(\epsilon)~.
\end{align*}


\end{proof}
\begin{lemma}
Define, $\mc{S}_k = \{z\in \mc{Z}:(z^{\ast}-z)^{\top}\theta_{\ast} \leq 2\cdot 2^{-(k-1)}\}$. 

\begin{align*}
    \sum_{k=1}^{\log_2(2/\delta_{\min)}} 2^{2k}\min_{\lambda\in \Delta_{\mc{X}}} \max_{z,z'\in \mc{S}_k} \|z-z'\|_{H(\lambda, \theta^{\ast})}^2
    &\leq \log\left(\frac{1}{\Delta_{\min}}\right)\min_{\lambda\in \Delta_{\mc{X}}}\max_{z\in \mc{Z}\setminus z^{\ast}} \frac{\|z^{\ast} - z\|_{H(\lambda,\theta^{\ast})^{-1}}^2}{\langle \theta^{\ast}, z^{\ast} - z\rangle^2}\\
    &= \frac{1}{4}\log\left(\frac{2}{\Delta_{\min}}\right) \left(\max_{\lambda\in \Delta_{\mc{X}}}\min_{\theta\in \mc{C}}\|\theta^{\ast} - \theta\|^2_{H(\lambda, \theta^{\ast})}\right)^{-1}
\end{align*}
where $\mc{C} = \{\theta\in \mathbb{R}^d:\exists z\in \mc{Z}\setminus z^{\ast}, \theta^{\top}(z^{\ast} - z)\leq 0\}$
\end{lemma}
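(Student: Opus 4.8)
The plan is to establish the two relations separately: the upper bound on the first line by a triangle-inequality argument that transfers the round-wise difference design onto a single ratio-optimal allocation, and the equality on the second line by a Mahalanobis projection duality onto the alternative cone $\mc{C}$. Throughout I read the inner norms as the inverse (information-matrix) norms $\|\cdot\|_{H(\lambda,\theta^{\ast})^{-1}}$, consistent with the right-hand side, and I abbreviate $\rho := \min_{\lambda\in\Delta_{\mc{X}}}\max_{z\neq z^{\ast}} \frac{\|z^{\ast}-z\|_{H(\lambda,\theta^{\ast})^{-1}}^2}{\langle\theta^{\ast},z^{\ast}-z\rangle^2}$.

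For the inequality, let $\lambda^{\ast}$ attain $\rho$. Since $\min_\lambda$ of any quantity is bounded above by its value at a fixed allocation, I would upper bound each summand by evaluating at $\lambda^{\ast}$. For $z,z'\in\mc{S}_k$ I write $z-z'=(z^{\ast}-z')-(z^{\ast}-z)$ and apply the triangle inequality in the $H(\lambda^{\ast},\theta^{\ast})^{-1}$ norm, which gives $\|z-z'\|_{H(\lambda^{\ast},\theta^{\ast})^{-1}}^2\le 4\max_{w\in\mc{S}_k}\|z^{\ast}-w\|_{H(\lambda^{\ast},\theta^{\ast})^{-1}}^2$ (note $z^{\ast}\in\mc{S}_k$, so the sup over pairs is controlled by the sup against $z^{\ast}$). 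For $w\in\mc{S}_k$ with $w\neq z^{\ast}$ the membership condition $(z^{\ast}-w)^{\top}\theta^{\ast}\le 2\cdot 2^{-(k-1)}=4\cdot 2^{-k}$ combined with the defining property of $\lambda^{\ast}$ yields $\|z^{\ast}-w\|_{H(\lambda^{\ast},\theta^{\ast})^{-1}}^2\le \rho\,\langle\theta^{\ast},z^{\ast}-w\rangle^2\le 16\,\rho\,2^{-2k}$. Multiplying by $2^{2k}$ cancels the geometric growth and leaves a bound of order $\rho$ in each round; summing over the $\lceil\log_2(2/\Delta_{\min})\rceil$ rounds produces $O(\log(1/\Delta_{\min}))\cdot\rho$, which is the stated bound once the numerical constant and the precise base of the logarithm are folded into the claimed constants.

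For the equality, I fix $\lambda$ and set $M:=H(\lambda,\theta^{\ast})$. The cone $\mc{C}$ is the union over $z\neq z^{\ast}$ of the halfspaces $\{\theta:(z^{\ast}-z)^{\top}\theta\le 0\}$, each of which excludes $\theta^{\ast}$ because $(z^{\ast}-z)^{\top}\theta^{\ast}>0$. For one such halfspace the $M$-distance from $\theta^{\ast}$ follows from a one-line Lagrange computation: the projection is $\theta^{\ast}-t\,M^{-1}(z^{\ast}-z)$ with $t$ fixed by the boundary constraint, giving $\min\|\theta-\theta^{\ast}\|_M^2=\frac{\langle\theta^{\ast},z^{\ast}-z\rangle^2}{\|z^{\ast}-z\|_{M^{-1}}^2}$. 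Minimizing over $z$ and taking reciprocals gives $\min_{\theta\in\mc{C}}\|\theta-\theta^{\ast}\|_M^2=\bigl(\max_{z\neq z^{\ast}}\frac{\|z^{\ast}-z\|_{M^{-1}}^2}{\langle\theta^{\ast},z^{\ast}-z\rangle^2}\bigr)^{-1}$, and taking $\max_\lambda$ converts the outer maximum of a reciprocal into the reciprocal of $\min_\lambda$, yielding $\rho=\bigl(\max_\lambda\min_{\theta\in\mc{C}}\|\theta^{\ast}-\theta\|_{H(\lambda,\theta^{\ast})}^2\bigr)^{-1}$, which is the claimed identity up to the displayed constant.

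The step I expect to require the most care is the duality bookkeeping rather than any hard analysis: the projection formula is routine, but I must verify that the minimax swap is exact, which it is because the reciprocal is monotone decreasing and the inner optimization ranges over a fixed finite set of directions $\{z^{\ast}-z\}$ that does not depend on $\lambda$. The only genuinely loose points are the exact numerical constant $\tfrac14$ and reconciling the $\log(1/\Delta_{\min})$ versus $\log(2/\Delta_{\min})$ factors across the two lines; these are absorbed into constants and do not affect the argument's structure.
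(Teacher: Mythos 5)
Your proposal is correct and follows essentially the same route as the paper's proof: both parts rest on covering $\mc{Z}$ by the sets $\mc{S}_k$, the gap bound $\langle\theta^{\ast},z^{\ast}-z\rangle\le 4\cdot 2^{-k}$ on $\mc{S}_k$, the factor-$4$ reduction from pairs $z,z'$ to differences with $z^{\ast}$, and (for the equality) the same Lagrange/projection computation onto the halfspaces $\{\theta:(z^{\ast}-z)^{\top}\theta\le 0\}$ followed by the reciprocal-of-a-minimum identity in $\lambda$. The only difference is cosmetic---you evaluate each summand at the $\rho$-optimal design $\lambda^{\ast}$ and sum, whereas the paper lower-bounds the right-hand side by replacing a maximum over rounds $k$ with an average---and your acknowledged constant slack (a factor like $64$ and $\log_2$ versus $\log$) is no worse than the paper's own proof, whose displayed constants contain the same looseness.
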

\begin{proof}
Note that, 
\begin{align*}
    \log\left(\frac{2}{\Delta_{\min}}\right)\min_{\lambda\in \Delta_{\mc{X}}}\max_{z\in \mc{Z}\setminus z^{\ast}} \frac{\|z^{\ast} - z\|_{H(\lambda,\theta^{\ast})^{-1}}^2}{\langle \theta^{\ast}, z^{\ast} - z\rangle^2}
    &= \log\left(\frac{2}{\Delta_{\min}}\right)\min_{\lambda\in \Delta_{\mc{X}}}\max_{k\leq \log_2(2/\Delta_{\min})}\max_{z\in \mc{S}_k\setminus z^{\ast}} \frac{\|z^{\ast} - z\|_{H(\lambda,\theta^{\ast})^{-1}}^2}{\langle \theta^{\ast}, z^{\ast} - z\rangle^2}\\
    &= \log\left(\frac{2}{\Delta_{\min}}\right)\min_{\lambda\in \Delta_{\mc{X}}}\max_{k\leq \log_2(2/\Delta_{\min})} 2^{-2k+4}\max_{z\in \mc{S}_k\setminus z^{\ast}} \|z^{\ast} - z\|_{H(\lambda,\theta^{\ast})^{-1}}^2\\
    &\overset{a}{\geq} 16\sum_{k=1}^{\log(2/\Delta_{\min})} 2^{2k}\min_{\lambda\in \Delta_{\mc{X}}}\max_{z\in S_k\setminus z^{\ast}} \|z^{\ast} - z\|_{H(\lambda,\theta^{\ast})^{-1}}^2\\
    &\overset{b}{\geq} 4\sum_{k=1}^{\log(2/\Delta_{\min})} 2^{2k}\min_{\lambda\in \Delta_{\mc{X}}}\max_{z,z'\in S_k} \|z' - z\|_{H(\lambda,\theta^{\ast})^{-1}}^2
\end{align*}
where $a$ is replacing a max with an average and $b$ is using $\max_{z,z'\in S_k} \|z-z'\|^2_{H(\lambda, \theta^{\ast})^{-1}}= \max_{z,z'\in S_k} \|z-z^{\ast}\|^2_{H(\lambda,\theta^{\ast})^{-1}} + \|z'-z^{\ast}\|^2_{H(\lambda,\theta^{\ast})^{-1}} - 2\|z'-z^{\ast}\|_{H(\lambda,\theta^{\ast})^{-1}}\|z-z^{\ast}\|_{H(\lambda,\theta^{\ast})^{-1}}\leq 4\max_{z\in S_k} \|z-z^{\ast}\|^2_{H(\lambda,\theta^{\ast})^{-1}}$.

We now tackle the second equality in the theorem statement. Define $\mc{C}_z = \{\theta\in \mathbb{R}^d:\theta^{\top}(z^{\ast} - z)\leq 0\}$. Note that,
\begin{align*}
    \max_{\lambda\in \Delta_{\mc{X}}}\min_{\theta\in \mc{C}}\|\theta^{\ast} - \theta\|^2_{H(\lambda, \theta^{\ast})}
    &=
    \max_{\lambda\in \Delta_{\mc{X}}}\min_{z\in \mc{Z}\setminus z^{\ast}}\min_{\theta\in \mc{C}_z}\|\theta^{\ast} - \theta\|^2_{H(\lambda, \theta^{\ast})}
\end{align*}
For a fixed $\lambda$,  standard computation with Lagrange multipliers (as in Theorem~\ref{thm:lowerbound}) shows that the projection,
\[\theta_z := \arg\min_{\theta\in \mc{C}_z} \|\theta^{\ast} - \theta\|^2_{H(\lambda, \theta^{\ast})} = \theta^{\ast} -\frac{(z^{\ast}-z)^{\top}\theta^{\ast} H(\lambda,\theta^{\ast})^{-1}(z^{\ast} - z)}{\|z^{\ast}-z\|^2_{H(\lambda, \theta^{\ast})^{-1}}}
\]

Thus, 
\begin{align*}\|\theta^{\ast} - \theta^{\ast}\|_{H(\lambda, \theta^{\ast})}^2 = \frac{(z^{\ast}-z)^{\top}\theta^{\ast}}{\|z^{\ast}-z\|^2_{H(\lambda, \theta^{\ast})^{-1}}}
\end{align*}
and the result follows.
\end{proof}

\section{RAGE-GLM-2}
\label{sec:faury}

\def\MLE{\mathsf{MLE}}

\subsection{Review of confidence bounds of \cite{faury2020improved}}
Assume that we have observed a sequence of samples $(x_s, y_s)_{s=1}^T$, where, $\{x_s\}_{s=1}^T\in \mathcal{X}$ and the $x_s$'s are potentially chosen \textit{adaptively}, that is $x_s, 1\leq s\leq T$ is allowed to depend on the filtration $\mc{F}_{s-1} = \{(x_r, y_r)\}_{r=1}^{s-1}$.

For a regularization parameter $\eta > 0$, define
\[H_T(\eta, \theta) := \sum_{s=1}^{T} \dot\mu(x_s^{\top}\theta)x_sx_s^{\top} + \eta I\]

We begin by defining our estimator. Let 
\begin{align}\label{eq:mle}
\hat{\theta}_{\eta, T}^{\MLE} \!=\!  \argmax_{\theta\in \mathbb{R}^d} \sum_{s=1}^{T}  y_{s}\log\mu(x_s^{\top}\theta)
     \!+\!(1\!-\!y_s)\log(1\!-\!\mu(x_s^{\top}\theta)) -\frac{\lambda}{2}\|\theta\|_2^2.
\end{align}

Define, 
\begin{align}\label{eq:mlefaury}
    \hat{\theta}_T = \argmin_{\|\theta\|_2\leq S_*} \|g_t(\theta) - g_t(\hat{\theta}^{\MLE}_{\eta, T})\|_{H_T(\eta, \theta)^{-1}}
\end{align}
where $g_T(\theta) = \sum_{s=1}^{T} \mu(x_s^{\top}\theta)x_s + \eta \theta$. Finally, define 
\[\gamma_T(\delta) = \sqrt{\eta}(S_*+1/2) + \frac{2}{\sqrt{\eta}}\log(1/\delta) + \frac{2d}{\sqrt{\eta}}\log(2(1 + \frac{T}{d\eta})^{1/2})\]
We recall the following lemma from \cite{faury2020improved}.

\begin{lemma}[Lemma 11 of \cite{faury2020improved}]\label{lem:Faury}
        On an event $\mc{E}$ which is true with probability greater than $1-\delta$, for all $t\geq 1$
        \[\theta^{\ast} \in \{\theta\in \mathbb{R}^d:\|\theta\|\leq S_*, \|\theta-\hat\theta_T\|_{H_T(\eta, \theta)}\leq (2+4S_*)\gamma_T(\delta)\} \]
\end{lemma}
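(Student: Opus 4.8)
The plan is to recognize that this statement is a verbatim restatement of Lemma~11 of \citet{faury2020improved}, specialized to our choice of regularizer $\eta$ and our notation, so the bulk of the work is to verify that our definitions of $\hat\theta^{\MLE}_{\eta,T}$, $\hat\theta_T$, $g_T$, $H_T(\eta,\theta)$, and $\gamma_T(\delta)$ coincide with theirs; once this correspondence is checked, the conclusion follows directly. I would therefore first align notation and then record the three ingredients driving their argument, emphasizing that the self-normalized inequality used is anytime (via a method-of-mixtures / stopping-time construction), which is what lets the containment hold simultaneously for all $T\ge1$.

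The first ingredient is the optimality condition of the regularized MLE. Differentiating the objective in Eq.~\eqref{eq:mle} and setting the gradient to zero gives $g_T(\hat\theta^{\MLE}_{\eta,T}) = \sum_{s=1}^T y_s x_s$, so that
\[
g_T(\hat\theta^{\MLE}_{\eta,T}) - g_T(\theta^{\ast}) = \sum_{s=1}^T \bigl(y_s - \mu(x_s^{\top}\theta^{\ast})\bigr)x_s - \eta\,\theta^{\ast} =: z_T - \eta\,\theta^{\ast},
\]
where $z_T = \sum_{s=1}^T \eta_s x_s$ is a martingale in the adaptive filtration $\mc{F}_{s-1}$. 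The second ingredient is a self-normalized (Bernstein-type) concentration inequality controlling $\|z_T\|_{H_T(\eta,\theta^{\ast})^{-1}}$ by $\tfrac{2}{\sqrt\eta}\log(1/\delta) + \tfrac{2d}{\sqrt\eta}\log(2(1+T/(d\eta))^{1/2})$ on the event $\mc{E}$; combined with the deterministic estimate $\|\eta\,\theta^{\ast}\|_{H_T(\eta,\theta^{\ast})^{-1}} \le \sqrt{\eta}\,\|\theta^{\ast}\| \le \sqrt{\eta}\,S_*$ (using $H_T(\eta,\theta^{\ast}) \succeq \eta I$) and the residual $\sqrt\eta/2$ shift, this yields $\|g_T(\hat\theta^{\MLE}_{\eta,T}) - g_T(\theta^{\ast})\|_{H_T(\eta,\theta^{\ast})^{-1}} \le \gamma_T(\delta)$. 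Since $\hat\theta_T$ is the projection of $\hat\theta^{\MLE}_{\eta,T}$ onto $\{\|\theta\|\le S_*\}$ in the $g_T$-geometry of Eq.~\eqref{eq:mlefaury}, and $\theta^{\ast}$ is itself feasible, a triangle inequality (together with the self-concordant comparability of $H_T(\eta,\cdot)$ at nearby parameters) transfers this to $\|g_T(\hat\theta_T) - g_T(\theta^{\ast})\|_{H_T(\eta,\theta^{\ast})^{-1}} \le 2\gamma_T(\delta)$.

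The third and most delicate ingredient is the self-concordance of the logistic loss ($|\ddot\mu|\le\dot\mu$), which converts the prediction-error geometry back to the parameter geometry. One establishes an inequality of the form $\|g_T(\theta_1)-g_T(\theta_2)\|_{H_T(\eta,\theta_2)^{-1}} \ge c(\theta_1,\theta_2)\,\|\theta_1-\theta_2\|_{H_T(\eta,\theta_2)}$, where $c(\theta_1,\theta_2)$ degrades with $\max_s |x_s^{\top}(\theta_1-\theta_2)|$; bounding this factor uniformly over the feasible ball $\|\theta\|\le S_*$ is exactly what produces the multiplicative $(2+4S_*)$ and hence the stated radius $(2+4S_*)\gamma_T(\delta)$. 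I expect this self-concordance step to be the main obstacle, since it is where the constant $(2+4S_*)$ and the precise $S_*$-dependence enter, whereas everything else is notation-matching plus a direct appeal to the anytime self-normalized bound of \citet{faury2020improved}. I would close by noting that no further adaptation of their proof is required, as we invoke the regularized confidence set only through this lemma in the remainder of the section.
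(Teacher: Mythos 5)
Your proposal is correct and matches the paper's treatment: the paper does not reprove this statement but imports it verbatim as Lemma~11 of \citet{faury2020improved} (after fixing the regularizer $\eta$ and matching notation), which is exactly the reading you give. Your sketch of the underlying ingredients — the regularized-MLE optimality condition $g_T(\hat\theta^{\MLE}_{\eta,T})=\sum_s y_s x_s$, the anytime self-normalized bound plus the $\sqrt{\eta}\,S_*$ bias term yielding $\gamma_T(\delta)$, the projection/triangle step giving $2\gamma_T(\delta)$, and self-concordance contributing the factor $(1+2S_*)$ so that $(1+2S_*)\cdot 2\gamma_T(\delta)=(2+4S_*)\gamma_T(\delta)$ — is a faithful account of the cited proof, so no gap remains.
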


In the following we will take $\eta = (d+\log(1/\delta))/(S_*+1/2)$. Plugging this in to $\gamma_T(\delta)$
\begin{align*}
    \sqrt{\eta}(S_*+1/2) &+ \frac{2}{\sqrt{\eta}}\log(1/\delta) + \frac{2d}{\sqrt{\eta}}\log\left(2(1 + \frac{T}{d\eta}\right)^{1/2})\\
    &= \sqrt{\frac{d+\log(1/\delta)}{S_*+1/2}}(S_*+1/2) + \frac{2}{\sqrt{\frac{d+\log(1/\delta)}{S_*+1/2}}}\log(1/\delta) + \frac{2d}{\sqrt{\frac{d+\log(1/\delta)}{S_*+1/2}}}\log\left(2\left(1 + \frac{T}{d\frac{d+\log(1/\delta)}{S_*+1/2}}\right)^{1/2}\right)\\
    &\leq \sqrt{d+\log(1/\delta)}\sqrt{S_*+1/2} + 2\sqrt{S_*+1/2}\sqrt{\log(1/\delta)} + \frac{2d\sqrt{S_*+1/2}}{\sqrt{d}}\log(2(1 + \frac{T(2S_*+1)}{2d})^{1/2})\\
    &=\sqrt{S_*+1/2}\left(\sqrt{d+\log(1/\delta)} + 2\sqrt{\log(1/\delta)}+2\sqrt{d}\log\left(2\left(1 + \frac{T(2S_*+1)}{2d}\right)^{1/2}\right)\right)\\
    &\leq\sqrt{S_*+1/2}\left(\sqrt{d}\left(1+2\log(2) + \frac{1}{2}\log\left(1 + \frac{T(2S_*+1)}{2d}\right)\right) + 3\sqrt{\log{1/\delta}}\right)\\
    &\leq 3\sqrt{S_*+1/2}\left(\sqrt{d}\log\left(\frac{T(2S_*+1)}{2d}\right) + \sqrt{\log{1/\delta}}\right)
\end{align*}
where the last line uses, $(1+2\log(2) + 1/2\log(1+x))\leq 3\log(x), x\geq 2$. So as long as $T\geq 4d$, we have the following bound. 

\[\gamma_T(\delta) \leq 3\sqrt{2S_*+1}\bigg[\sqrt{d}\log\bigg(\frac{T(2S_*+1)}{2d}\bigg)+\sqrt{\log(1/\delta)}\bigg]=: \Gamma_T(\delta)~.\]
The guarantee that $T\geq 4d$ will be satisfied by the rounding procedures in the algorithm - indeed, taking $\epsilon\leq 1/2$ guarantees that the minimum number of samples we take in each round $r(\epsilon) = (d(d+1)+2)/\epsilon \geq 4d$.

\subsection{Proof of Sample Complexity}

\begin{algorithm}[t]
\caption{RAGE-GLM-2 }
        \label{alg:rage-glm-2}
	    \begin{algorithmic}[1]
            \Input {$\epsilon$, $\delta$, $\mc{X}$, $\mc{Z}$, $\kappa_0$, $S_*$, effective rounding procedure $\rdp(n, \epsilon, \lambda)$, $\eta = (d+\log(1/\delta))/(S_*+1/2)$}
            \Initialize{$t=1, \mc{Z}_1 = \mc{Z}, r(\epsilon) = d^2/\epsilon$, $c = c(S_*,\epsilon) = 48\sqrt{(1+\epsilon)(2S_*+1)^3}$}
            \State{$\theta_0\leftarrow$ \textbf{BurnIn}($\mc{X}, \kappa_0$)}  \Comment{Burn-in phase}
            \While{$|\mc{Z}_t| > 1$ } \Comment{Elimination phase}
            \State{$f(\lambda) := \min_{z,z'\in \mc{Z}_t} \|z-z'\|_{H(\lambda, \theta_0)^{-1}}^2$}
            \State $\lambda_t = \argmin_{\lambda\in \Delta_{\mc{X}}} f(\lambda)$
            \State $r_t = \lt\lceil  2^{2t}c^2 f(\lambda_t)\del{\sqrt{d}\log(c^2 2^{2t}(2S_*+1)f(\lambda_t)/d)+ \sqrt{\log(t^2|\mc{Z}|^2/\delta)}}^2\rt\rceil $
            \State{$n_t = \max\{r_t, r(\epsilon)\}$}
            \State{$x_1, \cdots, x_{n_t}\leftarrow \rdp(n,\epsilon,\lambda)$} 
            \State{Observe rewards $y_1, \cdots, y_{n_t}\in \{0,1\}$}
            \State{Compute $\hat\theta_t$ on the samples $\{(x_s, y_s)_{s=1}^{n_t}\}$}  \Comment{(Use Eq~\eqref{eq:mlefaury})}
            \State{$\hat{z}_t = \argmax_{z\in\mc{Z}_t} \hat\theta_t^\top z$}
            \State{$\mc{Z}_{t+1}\leftarrow \mc{Z}_t \setminus \left\{z\in \mc{Z}_t:\hat\theta_t^\top (\hat z_t-z) \geq 2^{-t}\right\}$}
            \State{$t\gets t+1$}
			\EndWhile
			\State \Return $\hat{z}_t$
		\end{algorithmic}
	\end{algorithm}

We now provide a sample complexity for Algorithm~\ref{alg:rage-glm-2}. In this section, we take $\theta_t$ as defined in \ref{eq:mlefaury} using the samples $\{(x_s,y_s)\}_{s=1}^{n_t}$ in each round $t$. 

In the regularized setting, rounding implies that, 
\begin{align}\label{eqn:reg-rounding}
        H_t(\eta, \theta^{\ast}) 
        &:= H_t(\theta^{\ast}) + \eta I\\
        &\geq \frac{n}{1+\epsilon} \sum_{x\in \mc{X}}\lambda_x\dot\mu(x^{\top}\theta^{\ast}) xx^{\top} + \eta I \\
        &\geq \frac{n}{1+\epsilon} H(\lambda, \theta^{\ast}) + \eta I\\
        &\geq \frac{n}{1+\epsilon} H(\lambda, \theta^{\ast}) 
\end{align}



Define
\[\mc{E}_1 := \cbr{\frac{1}{3}H(\lambda_0, \theta^{\ast})\leq H(\lambda_0, \hat\theta_0)\leq {3}H(\lambda_0, \theta^{\ast})}\]
By Lemma~\ref{lem:burnin}, $\P(\mc{E}_1)\geq 1-\delta$.

Define 
\begin{align*}
\mc{E}_2 = \cap_{t=1}^{\infty } \{\forall z \in \mc{Z}_t, |\langle z^{\ast} - z, \hat{\theta}_t - \theta^{\ast}\rangle| \leq 2^{-t}\}
\end{align*}

\begin{lemma}\label{correctness2}
$\P(\mc{E}_2\cap \mc{E}_1) \geq 1-3\delta$ and on $\mc{E}_1\cap\mc{E}_2$, $z^{\ast}\in \mc{Z}_t$ for all $t$.
\end{lemma}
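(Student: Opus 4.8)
The plan is to follow the RAGE-GLM analysis of Lemmas~\ref{lem:closeness}--\ref{lem:correctness} almost verbatim, substituting the adaptive confidence bound of \citet{faury2020improved} (Lemma~\ref{lem:Faury}, with the explicit bound $\gamma_{n_t}(\delta)\le\Gamma_{n_t}(\delta)$) for the fixed-design Theorem~\ref{thm:concentration-supp}. The crucial simplification is that RAGE-GLM-2 builds its design $f$ from the single burn-in estimate $\theta_0$ in every round, so the only variance-closeness event I need is $\mc{E}_1$; there is no per-round analogue of the events $\mc{R}_k$. Throughout I write $\mc{E}_{2,t}=\{\forall z\in\mc{Z}_t,\ |\langle z^{\ast}-z,\hat\theta_t-\theta^{\ast}\rangle|\le 2^{-t}\}$, so that $\mc{E}_2=\cap_{t\ge1}\mc{E}_{2,t}$.

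For the per-round estimation error I would fix round $t$ and condition on the history through round $t-1$ together with $\mc{E}_1$; this renders $\mc{Z}_t$, $\lambda_t$ and $n_t$ measurable, the round-$t$ measurements a (conditionally) fixed design obtained by rounding $\lambda_t$, and it guarantees $z^{\ast}\in\mc{Z}_t$ by the deterministic argument below. Thus Lemma~\ref{lem:Faury} applies with failure probability $\delta_t:=\delta/(t^2|\mc{Z}|^2)$, and Cauchy--Schwarz in the $H_t(\eta,\theta^{\ast})$ geometry gives, for every $z\in\mc{Z}_t$,
\begin{align*}
  |\langle z^{\ast}-z,\hat\theta_t-\theta^{\ast}\rangle|
  \le \|z^{\ast}-z\|_{H_t(\eta,\theta^{\ast})^{-1}}\,(2+4S_*)\,\Gamma_{n_t}(\delta_t).
\end{align*}
Applying the regularized rounding bound~\eqref{eqn:reg-rounding}, then the burn-in guarantee $H(\lambda_t,\theta_0)\preceq 3H(\lambda_t,\theta^{\ast})$ of Lemma~\ref{lem:burnin} (valid on $\mc{E}_1$, with $\theta_0=\hat\theta_0$), and finally $z^{\ast},z\in\mc{Z}_t$, yields
\begin{align*}
  \|z^{\ast}-z\|_{H_t(\eta,\theta^{\ast})^{-1}}^2
  \le \frac{1+\epsilon}{n_t}\|z^{\ast}-z\|_{H(\lambda_t,\theta^{\ast})^{-1}}^2
  \le \frac{3(1+\epsilon)}{n_t}\max_{z,z'\in\mc{Z}_t}\|z-z'\|_{H(\lambda_t,\theta_0)^{-1}}^2
  = \frac{3(1+\epsilon)f(\lambda_t)}{n_t}.
\end{align*}
Since $(2+4S_*)\cdot 3\sqrt{2S_*+1}=6(2S_*+1)^{3/2}$ and $c=48\sqrt{(1+\epsilon)(2S_*+1)^3}$, the constant of the product collapses to $(\sqrt3/8)\,c$, while $n_t\ge r_t$ supplies exactly the factor $2^t c\sqrt{f(\lambda_t)}$ times the bracket $\sqrt d\log(\cdot)+\sqrt{\log(1/\delta_t)}$ in the denominator, so the width is at most $2^{-t}$. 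This establishes $\mc{E}_{2,t}$ conditionally with probability at least $1-\delta_t$.

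I would then assemble the two parts. For the probability claim, the telescoping union bound used in Lemma~\ref{lem:correctness} gives
\begin{align*}
  \P(\overline{\mc{E}_1}\cup\overline{\mc{E}_2})
  \le \P(\overline{\mc{E}_1})+\sum_{t\ge1}\P\big(\overline{\mc{E}_{2,t}}\mid \mc{E}_1,\mc{E}_{2,1},\dots,\mc{E}_{2,t-1}\big)
  \le \delta+\sum_{t\ge1}\frac{\delta}{t^2|\mc{Z}|^2}
  \le 3\delta,
\end{align*}
using $\P(\overline{\mc{E}_1})\le\delta$ from Lemma~\ref{lem:burnin} and $\sum_t t^{-2}=\pi^2/6\le 2$. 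For the correctness claim I would induct on $t$: $z^{\ast}\in\mc{Z}_1=\mc{Z}$, and if $z^{\ast}\in\mc{Z}_t$ then taking $z=\hat z_t$ in $\mc{E}_{2,t}$ and using optimality $\langle\hat z_t-z^{\ast},\theta^{\ast}\rangle\le 0$ gives
\begin{align*}
  \hat\theta_t^\top(\hat z_t-z^{\ast})
  =\langle\hat z_t-z^{\ast},\hat\theta_t-\theta^{\ast}\rangle+\langle\hat z_t-z^{\ast},\theta^{\ast}\rangle
  \le 2^{-t}+0,
\end{align*}
with strict inequality whenever $\hat z_t\ne z^{\ast}$ (since then $\langle\hat z_t-z^{\ast},\theta^{\ast}\rangle\le-\Delta_{\min}<0$); hence $z^{\ast}$ never meets the removal threshold $\hat\theta_t^\top(\hat z_t-z)\ge 2^{-t}$ and survives into $\mc{Z}_{t+1}$.

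The main obstacle is the last step of the per-round computation: $\Gamma_{n_t}(\delta_t)$ contains $\log\!\big(n_t(2S_*+1)/(2d)\big)$, whereas the definition of $r_t$ only cancels the surrogate logarithm $\log\!\big(c^2 2^{2t}(2S_*+1)f(\lambda_t)/d\big)$. Because $n_t\asymp 2^{2t}c^2 f(\lambda_t)(\text{bracket})^2$, these two logarithms differ by an additive $2\log(\text{bracket})$ term, so showing that their ratio stays below $8/\sqrt3$ requires the constant slack between $6\sqrt3$ and $48$ to absorb the extra logarithmic factor. This is a routine but delicate inequality that I would verify separately; everything else follows the RAGE-GLM template.
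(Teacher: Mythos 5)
Your proposal follows the paper's proof essentially step for step: condition on $\mc{E}_1$, apply Lemma~\ref{lem:Faury} in each round with the explicit bound $\Gamma_{n_t}(\cdot)$, chain Cauchy--Schwarz in the $H_t(\eta,\theta^{\ast})$ geometry, the regularized rounding bound~\eqref{eqn:reg-rounding}, and the burn-in closeness of Lemma~\ref{lem:burnin} to reduce the width to $\sqrt{f(\lambda_t)/n_t}$ times a bracket, then union bound over rounds and run the elimination argument of Lemma~\ref{lem:correctness}. Your observation that no per-round analogues of $\mc{R}_k$ are needed (because the design is always built from $\theta_0$) is exactly the paper's simplification, your probability assembly (telescoping conditional union bound versus the paper's $\P(\mc{E}_2^c)\le\P(\mc{E}_2^c\mid\mc{E}_1)+\P(\mc{E}_1^c)$) is a cosmetic variant, and your constant accounting is, if anything, sharper than the paper's.

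The genuine gap is the step you defer at the end, and your sketched resolution would not work. The requirement is an \emph{implicit} inequality in $n_t$: one needs $\sqrt{n_t}$ to dominate a bracket containing $\log\bigl(n_t(2S_*+1)/(2d)\bigr)$, while $n_t$ is defined through the surrogate logarithm $\log\bigl(c^2 2^{2t}(2S_*+1)f_t/d\bigr)$. Writing $B_s$ for the surrogate bracket and substituting $n_t\approx c^2 2^{2t}f_t B_s^2$, the true bracket is $B_s+\sqrt{d}\,(2\log B_s-\log 2)$, so the ratio you need to control is $1+\sqrt{d}\,(2\log B_s-\log 2)/B_s$. When the surrogate log argument is $O(1)$ (e.g., $f_t=\Theta(d/(c^2 2^{2t}))$) and $\log(1/\delta_t)=O(1)$, we have $B_s=\Theta(\sqrt{d})$ and this ratio grows like $\log d$: no absolute-constant slack such as $8/\sqrt{3}$ can absorb it, so the inequality you call ``routine but delicate'' is false as stated in general. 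The paper discharges this step differently, with the dedicated inversion lemma, Lemma~\ref{lem:invert} (via Proposition 6 of Antos et al.), which converts ``$\sqrt{n}\ge a[\log(b+n/c)+D]$'' into an explicit sufficient sample size $(2a)^2(\ln((2a)^2/c)+\ln b+D)^2$; this self-bounding device is the missing ingredient in your write-up. (Even then the step is delicate: matching the lemma's condition forces the logarithm's argument to scale like $c^2 2^{2t}f_t(2S_*+1)$ rather than the $/d$ version appearing in the algorithm, so a fully rigorous version needs care with exactly this term---further evidence that it cannot be waved through by constant slack.)
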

\begin{proof}
\noindent\textbf{Claim 1: $\P(\mc{E}_2|\mc{E}_1) \geq 1-\delta$.} Assuming $\mc{E}_1$, For $z\in \mathcal{Z}_t$, with probability greater than $1-\frac{\delta}{t^2|\mc{Z}|}$
\begin{align*}
        |(z^{\ast}-z)^{\top}(\hat{\theta}_t - \theta^\ast)|
        &\leq \|z^{\ast}-z\|_{H_t(\eta, \theta^{\ast})^{-1}}\|\theta^{\ast} - \hat{\theta}_t\|_{H_t(\eta, \theta^{\ast})}\\
        &\leq (2+4S_*)\|z^{\ast}-z\|_{H_t(\eta, \theta^{\ast})^{-1}}\Gamma_{n_t}(\delta)\tag{Lemma~\ref{lem:Faury} }\\
        &\leq 2(1+2S_*)\sqrtf{1+\epsilon}{n}\|z^{\ast}-z\|_{H(\lambda_t, \theta^{\ast})^{-1}}\Gamma_{n_t}(\delta)\tag{Rounding Lemma~\ref{lem:rounding}}\\
        &\leq 8(1+2S_*)\sqrtf{1+\epsilon}{n}\|z^{\ast}-z\|_{H(\lambda_t, \theta_0)^{-1}}\Gamma_{n_t}(\delta)\tag{$\mc{E}_1$}\\
        &\leq 8(1+2S_*)\sqrtf{(1+\epsilon)f(\lambda_t)}{n}\Gamma_{n_t}(\delta)
\end{align*}
We wish for this quantity to be bounded above by $2^{-t}$. Plugging in $\Gamma_{n_t}(\delta)$, it suffices to take
\[\sqrt{n_t} \geq 24\cdot 2^{t}(2S_*+1)^{3/2}\sqrt{(1+\epsilon)f_t}\bigg[\sqrt{d}\log\bigg(\frac{n_t(2S_*+1)}{2d}\bigg)+ \sqrt{\log(t^2|\mc{Z}|/\delta)}\bigg]\]
where for ease of notation we have denoted $f_t = f(\lambda_t)$. 
Using Lemma~\ref{lem:invert} below, shows that it suffices to take, 
\[n_t = \lt\lceil  c^2 2^{2t}f_t\del{\sqrt{d}\log(c^2 2^{2t}(2S_*+1) \hat\rho_t/d)+ \sqrt{\log(t^2|\mc{Z}|/\delta)}}^2\rt\rceil \]

where $c = 2\cdot24\sqrt{1+\epsilon}(2S_*+1)^{3/2}$, which is precisely the number of samples we take in the algorithm. Union bounding over $z\in\mc{Z}_t\subset \mc{Z}$ and $t\geq 1$ now gives the result.

\textbf{Claim 2: $\P(\mc{E}_1\cap \mc{E}_2)\geq 1-3\delta$.} Note that, 
\begin{align*}
\P(\mc{E}_1^c\cup \mc{E}_2^c) 
&\leq \P(\mc{E}_2^c) + \P(\mc{E}_1^c)\\
&= \P(\mc{E}_2^c|\mc{E}_1^c)\P(\mc{E}_1^c) + \P(\mc{E}_2^c|\mc{E}_1)\P(\mc{E}_1)+\P(\mc{E}_1^c)\\
&\leq  \P(\mc{E}_2^c|\mc{E}_1)+2\P(\mc{E}_1^c)\\
&\leq \delta + 2\delta\\
&\leq 3\delta
\end{align*}

\textbf{Claim 3: On $\mc{E}_1\cap\mc{E}_2$, $z^{\ast}\in \mc{Z}_t$ for all $t\geq 1$.} Identical argument to Lemma~\ref{lem:correctness}

\end{proof}
\textbf{Remark.} We point out that this analysis is not particularly tight, and many constants and the dependence upon $S_*$ can be improved upon in practice. In particular, we can trade off a smaller constant for a larger burn-in phase.








\begin{theorem}[Sample Complexity]\label{thm:rage-glm-2}
Algorithm~\ref{alg:rage-glm-2}, returns $z^{\ast}$ with probability greater than $1-2\delta$ in a number of samples no more than 
\begin{align*}
  O\Bigg( 
    & (1+\epsilon)(2S_*+1)^{3}\sum_{r=1}^{\log_2(1/\Delta_{\min})} 2^{2t}\rho_t\del{d\log^2\left(\fr{(2S_*+1)\rho_t}{\Delta_{\min}}\right) + \log(t^2|\mc{Z}|^2/\delta)} 
  \\&\hspace{14em}+r(\epsilon)\log_2(1/\Delta_{\min})+ \kappa_0^{-1}(1+\epsilon)d\gamma(d)\log(|\mc{X}|/\delta)\Bigg)
\end{align*}
 where $\mc{S}_t = \{z\in \mc{Z}: (z^{\ast}-z)^{\top}\theta_{\ast} \leq 2\cdot 2^{-t}\}$ and $\rho_t = \min_{\lambda\in \Delta_{\mc{X}}}\max_{z,z'\in \mc{S}_t} \|z-z'\|_{H(\lambda, \theta^{\ast})^{-1}}^2$ and we assume $\epsilon \leq 1/2$.
\end{theorem}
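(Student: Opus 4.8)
The plan is to follow the same skeleton as the preceding sample-complexity proof for RAGE-GLM, but with the per-round sample budget $n_t$ now dictated by the anytime confidence bound of \citet{faury2020improved} (Lemma~\ref{lem:Faury}) rather than by Theorem~\ref{thm:concentration-supp}. First I would condition on the good event $\mc{E}_1 \cap \mc{E}_2$, which by Lemma~\ref{correctness2} holds with high probability and guarantees both that $z^\ast \in \mc{Z}_t$ for every round and, via the elimination rule $\hat\theta_t^\top(\hat z_t - z) \geq 2^{-t}$ combined with the event bound $|\langle z^\ast - z, \hat\theta_t - \theta^\ast\rangle| \le 2^{-t}$, that any arm with gap at least $2\cdot 2^{-t}$ is removed (exactly as in Lemma~\ref{lem:correctness}). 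This yields the containment $\mc{Z}_t \subseteq \mc{S}_t$, so once $2\cdot 2^{-t} < \Delta_{\min}$ only $z^\ast$ survives and the algorithm terminates after at most $\lceil \log_2(2/\Delta_{\min})\rceil$ rounds.

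Next I would control the design value $f(\lambda_t) = \min_{\lambda} \max_{z,z'\in\mc{Z}_t}\|z-z'\|^2_{H(\lambda,\theta_0)^{-1}}$ that enters $n_t$. Using the burn-in closeness from $\mc{E}_1$ (so $H(\lambda,\theta_0)$ lies within a constant factor of $H(\lambda,\theta^\ast)$ uniformly in $\lambda$) together with $\mc{Z}_t\subseteq\mc{S}_t$, I obtain $f(\lambda_t) \le O(\rho_t)$, converting the empirical, $\theta_0$-based, $\mc{Z}_t$-based quantity into the true, $\theta^\ast$-based, $\mc{S}_t$-based quantity $\rho_t = \min_\lambda \max_{z,z'\in\mc{S}_t}\|z-z'\|^2_{H(\lambda,\theta^\ast)^{-1}}$. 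Substituting this into the definition of $r_t$ and expanding the square via $(\sqrt{d}\log(\cdot) + \sqrt{\log(t^2|\mc{Z}|^2/\delta)})^2 \le 2d\log^2(\cdot) + 2\log(t^2|\mc{Z}|^2/\delta)$ gives the per-round estimate $n_t = O\bigl((1+\epsilon)(2S_*+1)^3\, 2^{2t}\rho_t\,(d\log^2(\cdot) + \log(t^2|\mc{Z}|^2/\delta))\bigr)$, since $c^2 = O((1+\epsilon)(2S_*+1)^3)$.

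Then I would sum $n_t$ over the $O(\log_2(1/\Delta_{\min}))$ surviving rounds, add the burn-in cost $n_0 = O(\kappa_0^{-1}(1+\epsilon)d\gamma(d)\log(|\mc{X}|/\delta))$ read off from Algorithm~\ref{alg:rage-burnin-supp}, and account for the floor $n_t = \max\{r_t, r(\epsilon)\}$, which contributes at most an extra $r(\epsilon)\log_2(1/\Delta_{\min})$ across all rounds. Collecting these three contributions reproduces the stated bound.

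The hard part will be the simplification of the logarithmic argument: $r_t$ carries a factor $\log(c^2 2^{2t}(2S_*+1)f(\lambda_t)/d)$, and I must show this is $O(\log((2S_*+1)\rho_t/\Delta_{\min}))$ so that it matches the $\log^2((2S_*+1)\rho_t/\Delta_{\min})$ appearing in the claim. This relies on $2^{t} = O(1/\Delta_{\min})$ on the surviving rounds, on $f(\lambda_t) = O(\rho_t)$, and on $c^2$ being merely polynomial in $(1+\epsilon)$ and $(2S_*+1)$, so that each multiplicative factor inside the logarithm contributes only an $O(1)$ multiple of $\log((2S_*+1)\rho_t/\Delta_{\min})$. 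A secondary subtlety is that the design and elimination steps are defined on the random active sets $\mc{Z}_t$, so the passage to the deterministic sets $\mc{S}_t$ must be carried out on the event $\mc{E}_1\cap\mc{E}_2$ rather than unconditionally, mirroring the correctness argument of Lemma~\ref{lem:correctness}.
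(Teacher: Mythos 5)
Your proposal is correct and follows essentially the same route as the paper's own proof: condition on $\mc{E}_1\cap\mc{E}_2$ via Lemma~\ref{correctness2}, use $\mc{Z}_t\subseteq\mc{S}_t$ and the burn-in closeness $H(\lambda,\theta_0)\succeq \tfrac14 H(\lambda,\theta^\ast)$ to bound $f(\lambda_t)\le O(\rho_t)$, expand the square in $r_t$, absorb $2^{2t}\le 4/\Delta_{\min}^2$ into the logarithm, and sum over the $O(\log_2(1/\Delta_{\min}))$ rounds together with the $r(\epsilon)$ floor and the burn-in cost $n_0$. The two subtleties you flag (the logarithmic-argument simplification and working with the random sets $\mc{Z}_t$ only on the good event) are exactly the points the paper handles, and in the same way.
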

\begin{proof}

Firstly note that $\P(\mc{E}_1^c\cup \mc{E}_2^c) \leq 2\delta$. For the remainder of the proof we will assume that $\mc{E}_1\cap \mc{E}_2$ holds. 

By Lemma~\ref{correctness2}, we have that $\mc{Z}_t\subset\mc{S}_t$, likewise on $\mc{E}_1$ we have that $H(\lambda_t, {\theta}_0)\geq \frac{1}{4} H(\lambda_t, \theta^\ast)$. Thus, in each round, 
\[\max_{z,z'\in\mc{Z}_t} \|z-z'\|_{H(\lambda_t, {\theta}_0)^{-1}}^2 \leq 
4\max_{z,z'\in \mc{S}_t} \|z-z'\|_{H(\lambda, \theta^{\ast})^{-1}}^2 \]

Denoting $\rho_t = \min_{\lambda\in \Delta_{\mc{X}}}\max_{z,z'\in \mc{S}_t} \|z-z'\|_{H(\lambda, \theta^{\ast})^{-1}}^2$, we see that $f_t \leq \rho_t$. 
This implies that $n_t\leq 4c^2 2^{2t}\rho_t[\sqrt{d}\log(2c^2 2^{2t}(2S_*+1) \rho_t/d)+ \sqrt{\log(t^2|\mc{Z}|^2/\delta)}]^2$.

Thus an upper bound on our final sample complexity is given by 
\begin{align*}
        &\sum_{t=1}^{\log_2(1/\Delta_{\min})} n_t +r(\epsilon)\log(1/\Delta_{\min}) + n_0\\
         &\leq \sum_{t=1}^{\log_2(1/\Delta_{\min})} 4c^2 2^{2t}\rho_t[\sqrt{d}\log(c^2 2^{2t}(2S_*+1) \rho_t/d)+ \sqrt{\log(t^2|\mc{Z}|^2/\delta)}]^2 + r(\epsilon)\log(1/\Delta_{\min})+n_0\\
         &\leq 8c^2\sum_{t=1}^{\log_2(1/\Delta_{\min})} 2^{2t}\rho_t[d\log^2(c^2 2^{2t}(2S_*+1) \rho_t/d)+ \log(t^2|\mc{Z}|^2/\delta)] + r(\epsilon)\log(1/\Delta_{\min})+n_0\\
        &\leq 8c^2\sum_{t=1}^{\log_2(1/\Delta_{\min})} 2^{2t}\rho_t[d\log^2(c^2 \frac{4}{\Delta^2_{\min}}(2S_*+1) \rho_t/d)+ \log(t^2|\mc{Z}|^2/\delta)] + r(\epsilon)\log(1/\Delta_{\min})+n_0\\
        & = O\bigg((1+\epsilon)(2S_*+1)^{3}\sum_{r=1}^{\log_2(1/\Delta_{\min})} 2^{2t}\rho_t[d\log^2((2S_*+1)\rho_t/\Delta_{\min})
          \\& \hspace{4em}+\log(t^2|\mc{Z}|^2/\Delta_{\min})] +r(\epsilon)\log(1/\delta) +  \kappa_0^{-1}(1+\epsilon)d\gamma(d)\log(|\mc{X}|/\delta)\bigg) 
\end{align*}



\end{proof}

\subsection{Miscellaneous results}

We let $\rdp(\lambda, n)$ denote an efficient rounding procedure as explained in Chapter 12 of \cite{pukelsheim}, or summarized in Section B of the Appendix of \cite{fiez2019sequential}.

\begin{lemma}[Rounding ]\label{lem:rounding}
    Assume that $\lambda\in \Delta_{\mathcal{X}}$, and that we have sampled $x_1, \cdots, x_n\sim {\normalfont \rdp}(\lambda, n, \epsilon)$ with $n\geq r(\epsilon) = (d(d+1)+2)/\epsilon$, and $\epsilon\leq 1$. Then, for any $\theta$, $\sum_{s=1}^n \dot\mu(x_s^{\top} \theta) x_sx_s^{\top} \succeq \frac{n}{1+\epsilon} \sum_{x\in \mc{X}} \lambda_x \dot\mu(x^{\top}\theta)xx^{\top}$. This in particular implies 
    \begin{itemize}
    \item For any $z$, 
    \[\|z\|^2_{(\sum_{s=1}^n \dot\mu(x_s^{\top} \theta) x_sx_s^{\top})^{-1}} \leq \frac{(1+\epsilon)}{n}\|z\|^2_{(\sum_{x\in \mc{X}} \lam_x \dot\mu(x^{\top} \theta) xx^{\top})^{-1}}\]
    \item $\lambda_{\min}(\sum_{s=1}^n \dot\mu(x_s^{\top} \theta) x_s x_s^{\top})\geq \frac{n}{1+\epsilon} \lambda_{\min}(\sum_{x\in \mc{X}} \lambda_x \dot\mu(x^{\top}\theta)xx^{\top})$
    \end{itemize}
\end{lemma}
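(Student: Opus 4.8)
The plan is to reduce everything to one structural property of the rounding procedure $\rdp$ and then propagate a single Loewner-order inequality through the two corollaries. The property I would import as a black box from the cited references (\cite{pukelsheim}, Ch.~12; \citet{fiez2019sequential}, App.~B) is the \emph{per-arm multiplicative} guarantee: for $n \ge r(\epsilon) = (d(d+1)+2)/\epsilon$ and $\epsilon \le 1$, the procedure returns an integer allocation $\{n_x\}_{x\in\mathcal{X}}$ with $\sum_{x} n_x = n$ and $n_x \ge \frac{n}{1+\epsilon}\lambda_x$ for every $x\in\mathcal{X}$. The budget $r(\epsilon)$ scaling with $d^2$ (rather than $|\mathcal{X}|$) is exactly what the rounding needs given that the design fed to $\rdp$ has support of size at most $d(d+1)/2+1$. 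The decisive feature is that this allocation is fixed \emph{before} any $\theta$ enters, so a \emph{single} allocation serves all $\theta$ simultaneously; this is what makes the ``for any $\theta$'' quantifier in the statement free rather than requiring a uniform argument over $\theta$.

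First I would rewrite the empirical information matrix as a sum over distinct arms, $\sum_{s=1}^n \dot\mu(x_s^{\top}\theta)x_s x_s^{\top} = \sum_{x\in\mathcal{X}} n_x\,\dot\mu(x^{\top}\theta)\,x x^{\top}$. Since $\dot\mu(z) = \mu(z)(1-\mu(z)) > 0$ for every $z$, each summand $\dot\mu(x^{\top}\theta)x x^{\top}$ is positive semidefinite. Replacing the coefficient $n_x$ by the smaller nonnegative number $\frac{n}{1+\epsilon}\lambda_x$ can therefore only decrease each term in the Loewner order; summing over $x$ yields
\[
\sum_{s=1}^n \dot\mu(x_s^{\top}\theta)x_s x_s^{\top} \;\succeq\; \frac{n}{1+\epsilon}\sum_{x\in\mathcal{X}}\lambda_x\,\dot\mu(x^{\top}\theta)\,x x^{\top}.
\]
Note this step genuinely uses the per-arm bound: the coarser matrix-level inequality $\sum_x n_x x x^{\top}\succeq \frac{n}{1+\epsilon}\sum_x\lambda_x x x^{\top}$ would \emph{not} suffice, since a $\theta$-dependent reweighting could concentrate mass on an under-allocated arm.

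The two itemized consequences are then routine monotonicity facts applied to this inequality. For the first, I would use that matrix inversion is antitone on the positive definite cone: if $0 \prec B \preceq A$ then $A^{-1}\preceq B^{-1}$, hence $z^{\top}A^{-1}z \le z^{\top}B^{-1}z$ for all $z$. Taking $A = \sum_s \dot\mu(x_s^{\top}\theta)x_s x_s^{\top}$ and $B = \frac{n}{1+\epsilon}\sum_x \lambda_x\dot\mu(x^{\top}\theta)x x^{\top}$, and using $B^{-1} = \frac{1+\epsilon}{n}\big(\sum_x \lambda_x\dot\mu(x^{\top}\theta)x x^{\top}\big)^{-1}$, gives the stated quadratic-form bound (reading the inverses as pseudoinverses on the common range when the matrices are rank-deficient, as elsewhere in the paper). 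For the second, I would invoke monotonicity of $\lambda_{\min}$ under the Loewner order together with positive homogeneity $\lambda_{\min}(cM) = c\,\lambda_{\min}(M)$ for $c>0$, applied directly to the main inequality.

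The only substantive content is the per-arm rounding guarantee $n_x \ge \frac{n}{1+\epsilon}\lambda_x$, which is precisely why $n$ must exceed $r(\epsilon)$; this I would cite rather than reprove. Everything downstream is elementary Loewner-order algebra, and the positivity $\dot\mu>0$ — which is what lets the per-arm count dominate the weighted rank-one terms uniformly in $\theta$ — is the single ingredient linking the (logistic) weights to the combinatorial rounding bound.
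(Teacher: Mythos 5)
Your proposal is correct and takes essentially the same approach as the paper: the paper's proof likewise reduces to the per-arm bound $\gamma_x \geq \frac{1}{1+\epsilon}\lambda_x$ --- obtained there by combining Pukelsheim's Theorem~12.7 (efficiency $\min_{x}\gamma_x/\lambda_x \geq 1-p/n$) with Carath\'eodory's bound $p \leq d(d+1)/2+1$ on the support size and the condition $n \geq r(\epsilon)$, $\epsilon \leq 1$ --- and then sums the positive-semidefinite rank-one terms exactly as you do. The only difference is citation granularity: you import the per-arm guarantee as a black box, while the paper derives it in two lines from the efficiency theorem; the downstream Loewner-order consequences are routine in both.
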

\begin{proof}
Let $s = (n_x)_{x\in \mc{X}}\in \mathbb{N}^{\mc{X}}$ denote the allocation returned by the rounding procedure and let $\gamma = s/n\in \Delta_{\mc{X}}$ denote the associated fractional allocation. Now consider, 
\[\epsilon_{\gamma/\lambda} = \min_{x\in \supp(\lambda)} \frac{\gamma_x}{\lambda_x} = \max \{\kappa\geq 0:\gamma_x\geq \kappa\lambda_x \text{  for all  } x\in \mc{X}\}\]
By definition of $\epsilon_{\gamma/ \lambda}$, 
\[\sum_{x\in\mc{X}} \gamma_x\dot\mu(x^\top\theta)xx^{\top}\geq \epsilon_{\gamma/\lambda}\sum_{x\in\mc{X}} \lambda_x\dot\mu(x^\top\theta_\ast) xx^{\top}\]
By Theorem 12.7 of \cite{pukelsheim}, $\epsilon_{\gamma/\lambda} \geq 1-p/n$ where $p = |\supp \lambda|$. When $\dim\operatorname{span}\mc{X} = d$, Caratheodory's Theorem \cite{vershynin2018high}, implies $p\leq d(d+1)/2 + 1$.  Hence,
\begin{align*}
    \sum_{s=1}^n \dot\mu(x_s^{\top}\theta) x_sx_s^{\top}
    &= n\sum_{x\in\mc{X}} \gamma_x\dot\mu(x^\top\theta)xx^{\top}\\
    &\geq n(1-\frac{p}{n})\sum_{x\in\mc{X}} \lam_x\dot\mu(x^\top\theta)xx^{\top}\\
    &\geq \frac{n}{1+\epsilon} \sum_{x\in\mc{X}} \lam_x \dot\mu(x^\top\theta)xx^{\top}
\end{align*}
as long as $n\geq (d(d+1) + 2)/\epsilon$. The result now follows. 
\end{proof}

As long as $n_t\geq r(\epsilon)$, we have a guarantee that $H_t(\theta)\geq \frac{n_t}{1+\epsilon}H(\lambda_t)$ for any $\theta$. This implies, $H_t(\theta)^{-1}\leq \frac{1+\epsilon}{n_t}H(\lambda_t)^{-1}$. This is a modification of the argument in \citet{fiez2019sequential}.

\begin{lemma}\label{lem:self-concordance}
  Let $\th\in\RR^d$. 
  Suppose $D = \max_{x \in \cX } |x^\T (\th - \th^*)|\le 1$.
  Then, for all $x$, 
  \begin{align*}
    \fr{1}{2D+1}H(\lam,\th^*) \le H(\lam,\th) \le (2D+1) H(\lam,\th^*)
  \end{align*}
\end{lemma}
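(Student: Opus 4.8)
The plan is to reduce the claimed matrix (Loewner) inequality to a pointwise multiplicative comparison of the link derivative $\dot\mu(x^\top\th)$ against $\dot\mu(x^\top\th^*)$, uniformly over $x\in\cX$, and then to sum the resulting positive-semidefinite terms against the nonnegative weights $\lambda_x$. Concretely, I would first show that for every $x\in\cX$,
\[
\fr{1}{2D+1}\dot\mu(x^\top\th^*) \le \dot\mu(x^\top\th) \le (2D+1)\,\dot\mu(x^\top\th^*).
\]
Granting this, multiplying through by $\lambda_x\, xx^\top \succeq 0$ and summing over $x\in\cX$ immediately yields $\fr{1}{2D+1}H(\lam,\th^*)\preceq H(\lam,\th)\preceq (2D+1)\,H(\lam,\th^*)$, which is exactly the conclusion.

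For the pointwise bound, the key structural input is the self-concordance of the logistic link, $|\ddot\mu|\le\dot\mu$, which gives $\bigl|\tfrac{d}{dz}\log\dot\mu(z)\bigr| = |\ddot\mu(z)/\dot\mu(z)|\le 1$. Integrating between $z_2 = x^\top\th^*$ and $z_1 = x^\top\th$ produces the clean multiplicative estimate $e^{-|z_1-z_2|}\le \dot\mu(z_1)/\dot\mu(z_2)\le e^{|z_1-z_2|}$. Since $x\in\cX$ and $D=\max_{x\in\cX}|x^\top(\th-\th^*)|$, we have $|z_1-z_2|\le D$, so $e^{-D}\le \dot\mu(x^\top\th)/\dot\mu(x^\top\th^*)\le e^{D}$. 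Equivalently, one can invoke the control lemma of \citet[Lemma 9]{faury2020improved} exactly as in the proof of Lemma~\ref{lem:eqvar}, since that argument already establishes $\dot\mu(x_s^\top\hat\th_t)\ge \tfrac{1}{e^D}\dot\mu(x_s^\top\th^*)$ with $\hat\th_t$ playing the role of the generic $\th$ here.

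It then remains only to replace the exponential factors by $2D+1$. Here I would use that for $D\in[0,1]$ one has $e^D\le 1+D+D^2\le 1+2D = 2D+1$ (using $D^2\le D$), and correspondingly $e^{-D}\ge \tfrac{1}{2D+1}$; this converts the $e^{\pm D}$ bounds into the stated $(2D+1)^{\pm1}$ bounds and finishes the argument. The only genuine ``obstacle'' is the routine verification that this scalar comparison holds over the full range $D\le 1$, and this is handled by the elementary inequality above; everything else is a direct summation of PSD matrices. In fact the whole proof mirrors Lemma~\ref{lem:eqvar} almost verbatim, the single change being that $\th$ is now an arbitrary parameter with $\max_{x\in\cX}|x^\top(\th-\th^*)|\le D$ rather than the specific MLE $\hat\th_t$.
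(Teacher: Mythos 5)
Your proposal is correct and takes essentially the same route as the paper: the paper proves this lemma by noting it is identical to Lemma~\ref{lem:eqvar}, which likewise reduces everything to the pointwise ratio bound $e^{-D} \le \dot\mu(x^\top\th)/\dot\mu(x^\top\th^*) \le e^{D}$ (obtained there by chaining \citet[Lemma 9]{faury2020improved} through the difference quotient, exactly the alternative you mention) followed by the same scalar estimate $e^{D} \le D^2 + D + 1 \le 2D+1$ for $D\le 1$ and summation of PSD terms. Your direct integration of $\bigl|\tfrac{d}{dz}\log\dot\mu(z)\bigr|\le 1$ is a marginally more self-contained way to obtain the ratio bound, but the decomposition and all key inequalities coincide with the paper's argument.
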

\begin{proof}
  The proof is identical to Lemma~\ref{lem:eqvar}.
\end{proof}

\begin{lemma}\label{lem:invert}
Assume $a>0, b > 2$, 
then for any $t\geq \max[(2a)^2(\ln((2a)^2/c) + \ln{b}+d)^2,2c]$ we have that $\sqrt{t}\geq a[\log(b+t/c) +d]$
\end{lemma}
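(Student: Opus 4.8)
The plan is to reduce the claim to an elementary one-variable inequality of the form $\sqrt{t} \ge a(\log t + m)$ and then establish that inequality using concavity of the logarithm. First I would isolate and simplify the logarithmic term $\log(b + t/c)$. Since $b > 2$ and $t \ge 2c$ (the second argument of the $\max$), we have $t/c \ge 2 > b/(b-1)$, which rearranges to $b + t/c \le bt/c$; hence
\[
\log(b + t/c) \le \log b + \log t - \log c .
\]
Writing $m := \log b - \log c + d$, it therefore suffices to prove $\sqrt t \ge a(\log t + m)$ whenever $\sqrt t \ge 2aL$, where $L := \log((2a)^2/c) + \log b + d = \log(4a^2) + m$ is exactly the quantity inside the first argument of the $\max$.

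For the core inequality I would set $s = \sqrt t$, so that the goal becomes $s \ge a(2\log s + m) = 2a\log s + am$ for $s \ge 2aL$. The key step is the tangent-line bound for the concave function $\log$: for any base point $s_* > 0$,
\[
\log s \le \log s_* + \frac{s}{s_*} - 1 .
\]
Choosing $s_* = 4a$ makes the coefficient of $s$ equal to $2a/s_* = 1/2$, which gives $2a\log s \le a\log(16a^2) + \frac{s}{2} - 2a$ and hence
\[
s - 2a\log s \ge \frac{s}{2} - a\log(16a^2) + 2a .
\]
The right-hand side is increasing in $s$, so $s - 2a\log s \ge am$ holds as soon as $s \ge 2a\big(m + \log(16a^2) - 2\big)$. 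Finally I would check that the available threshold dominates this requirement: since $16a^2/e^2 \le 4a^2$, we have $\log(16a^2) - 2 = \log(16a^2/e^2) \le \log(4a^2)$, so $2a(m + \log(16a^2) - 2) \le 2a(m + \log(4a^2)) = 2aL \le s$, completing the argument.

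The main delicate point is constant-matching rather than any deep idea: the choice $s_* = 4a$ is forced by wanting the linear term to absorb exactly half of $s$, and one must then verify that the slightly loose constant $4a^2$ appearing in the lemma's threshold (through $\log((2a)^2/c)$) is large enough to cover the $16a^2/e^2$ produced by the tangent bound, which it is. A secondary bookkeeping obstacle is justifying that both arguments of the $\max$ are genuinely needed: the $2c$ term is precisely what licenses the simplification $b + t/c \le bt/c$, while the $(2a)^2(\cdots)^2$ term is what yields $\sqrt t \ge 2aL$, so dropping either breaks the chain. Throughout I would keep to the regime $L \ge 0$ relevant to the application (large effective dimension), where $t \ge (2a)^2 L^2$ cleanly gives $\sqrt t \ge 2aL$.
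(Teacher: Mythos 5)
Your proposal is correct. It shares its opening move with the paper's proof verbatim: both use $t \ge 2c$ together with $b>2$ to write $b + t/c \le bt/c$ and so replace $\log(b+t/c)$ by $\log b + \log(t/c)$. Where you diverge is in how the resulting one-variable inequality is handled. The paper simply substitutes $u = t/c$ and cites Proposition 6 of Antos et al. (2010) to conclude, with no further computation; you instead prove the core inequality $s \ge 2a\log s + am$ (with $s=\sqrt t$) from scratch via the tangent-line bound $\log s \le \log s_* + s/s_* - 1$ at $s_* = 4a$, and then match constants, checking that $\log(16a^2) - 2 \le \log(4a^2)$ so that the lemma's threshold $2aL$ dominates the requirement $2a\bigl(m + \log(16a^2) - 2\bigr)$. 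What your route buys is self-containedness and an explicit verification that the particular constants $(2a)^2\bigl(\ln((2a)^2/c)+\ln b + d\bigr)^2$ in the statement actually suffice — the paper's citation leaves that constant-matching entirely to the reader, and its proof is essentially a two-line reduction plus a pointer. One small remark: your closing restriction to the regime $L \ge 0$ is unnecessary (though harmless). If $L < 0$, then $2a\bigl(m+\log(16a^2)-2\bigr) \le 2aL < 0 \le s$ automatically, so the chain of implications goes through for every sign of $L$; the hypothesis $t \ge (2a)^2L^2$ always yields $\sqrt t \ge 2a|L| \ge 2aL$.
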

\begin{proof}

Note that if $t>2c$, $a\log(b+t/c)\leq a\log(b) + a\log(t/c)$, so it suffices to show $\sqrt{t} \geq a(\log(b) + \log(t/c)) + ad$ or equivalently, $\frac{1}{a}\sqrt{t} -\log(b)-d\geq \log(t/c)$, or doing the substitution $u = t/c$,  $\frac{\sqrt{c}}{a}\sqrt{u} -\log(b)-d\geq \log(u)$
for $t\geq (2a)^2(\ln{(2a)^2/c} + \ln{b}+d)^2$. However, this follows directly from Proposition 6 of \cite{antos2010active}. 
\end{proof}

\section{Lower Bounds}

\subsection{Instance Dependent Lower Bound}
\vspace{-.5em}

The lemma shows that the main term of the sample complexity of Theorem~2 (from the main paper) given as $\rho^*$ in Section 3.1 is bounded by a natural experimental design arising from the true problem parameter. 

Finally, we provide the following information theoretic lower bound for any PAC-$\delta$ algorithm. Define, 
\[\beta(a,b) = \int_0^1 (1-t)\dot{\mu}(a+t(b-a)) dt\]
and analogous to $H(\lambda, \theta)$ we define two additional matrix valued functions,
\begin{gather}
        G(\lambda, \theta_1, \theta_2) = \sum_{x\in \mathcal{X}} \lambda_x \alpha(x,\theta_1, \theta_2)xx^{\top}\\
        K(\lambda, \theta_1, \theta_2) = \sum_{x\in \mathcal{X}} \lambda_x \beta(x,\theta_1, \theta_2)xx^{\top}
\end{gather}
\begin{theorem}\label{thm:lowerbound}
Any PAC-$\delta$ algorithm for the pure exploration logistic bandits problem has a stopping time $\tau$ satisfying, 
\begin{align*}
\E[\tau] 
&\!\geq\! \min_{\lambda\in \Delta_Z}\!\!\!\!\max_{\substack{z\in \mc{Z}\setminus z^{\ast}\\ \theta\in \mathbb{R}^d\\ \theta^{\top}(z^{\ast}-z)\leq 0}} \!\!\!\!\!\frac{1}{\sum_{x\in \mc{X}} \lambda_x \mathsf{KL}(\nu_{x,\theta^{\ast}}|\nu_{x,\theta})}\log\left(\frac{1}{2.4\delta}\right)\\
&\!=\! c(\lambda)^{-1}\log{\frac{1}{2.4\delta}}, c(\lambda)
= \max_{\lambda\in \Delta{\mc{X}}} \min_{\mc{Z}\setminus z^\ast} \|\theta - \theta_z\|^2_{K(\lambda, 
\theta^{\ast}, \theta_z)}
\end{align*}
where firstly, $\nu_{x,\theta} = \Bernoulli(x^{\top}\theta)$, and secondly $\theta_z:= \min_{\theta\in \mathbb{R}^d:\theta^{\top}(z^{\ast}-z)\leq 0} \|\theta - \theta_z\|_{K(\lambda, \theta^{\ast}, \theta_z)}^2$
and is given explicitly as the solution to the fixed-point equation
\begin{align*}
  \theta_z = \theta^{\ast} - \frac{(z^{\ast}-z)^{\top}\theta^{\ast}G(\lambda, \theta_z, 
    \theta^{\ast})^{-1}(z^{\ast}-z)}{\|z^{\ast}-z\|_{G(\lambda, \theta_z, \theta^{\ast})^{-1}}^2}
\end{align*}
\end{theorem}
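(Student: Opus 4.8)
The plan is to combine the standard change-of-measure lower bound for $\delta$-PAC bandit algorithms with an exponential-family computation that converts the per-arm KL divergences into the quadratic forms $\|\cdot\|_{K(\lambda,\theta^\ast,\theta)}^2$. First I would invoke the transportation inequality of \citet{kaufmann2016complexity} (the data-processing / Garivier--Kaufmann lemma). For any alternative $\theta$ whose optimal action differs from $z^\ast$ --- which holds whenever $\theta^\top(z^\ast-z)\le 0$ for some $z\neq z^\ast$, since then $z$ is at least as good as $z^\ast$ under $\theta$ --- any $\delta$-PAC algorithm must satisfy
\begin{align*}
  \sum_{x\in\mc{X}} \E_{\theta^\ast}[T_x(\tau)]\, \KL(\nu_{x,\theta^\ast}\,\|\,\nu_{x,\theta}) \;\ge\; \mathsf{kl}(\delta,1-\delta) \;\ge\; \log\tfrac{1}{2.4\delta},
\end{align*}
where $T_x(\tau)$ counts pulls of $x$ and the last step is the standard binary-KL bound. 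Writing $\lambda_x := \E_{\theta^\ast}[T_x(\tau)]/\E_{\theta^\ast}[\tau] \in \Delta_{\mc{X}}$ for the algorithm-induced proportions, this rearranges to $\E_{\theta^\ast}[\tau] \ge \log(1/2.4\delta)\,/\,\big(\sum_x \lambda_x \KL(\nu_{x,\theta^\ast}\|\nu_{x,\theta})\big)$ for every valid $\theta$. Taking the supremum over valid pairs $(z,\theta)$ and then the infimum over $\lambda\in\Delta_{\mc{X}}$ (the most favorable allocation for the algorithm) yields the first displayed bound.

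Next I would evaluate the KL sum. Since each $\nu_{x,\theta}=\Bernoulli(\mu(x^\top\theta))$ is a one-dimensional logistic exponential family with log-partition $b(z)=\log(1+e^z)$, $b'=\mu$, $b''=\dot\mu$, the divergence is the Bregman divergence $\KL(\nu_{x,\theta^\ast}\|\nu_{x,\theta}) = b(x^\top\theta)-b(x^\top\theta^\ast)-\mu(x^\top\theta^\ast)\,x^\top(\theta-\theta^\ast)$. Taylor's theorem with integral remainder gives exactly
\begin{align*}
  \KL(\nu_{x,\theta^\ast}\|\nu_{x,\theta}) = \big(x^\top(\theta-\theta^\ast)\big)^2 \int_0^1 (1-t)\,\dot\mu\big(x^\top\theta^\ast + t\,x^\top(\theta-\theta^\ast)\big)\,dt = \beta(x,\theta^\ast,\theta)\,\big(x^\top(\theta-\theta^\ast)\big)^2,
\end{align*}
so that $\sum_x \lambda_x \KL(\nu_{x,\theta^\ast}\|\nu_{x,\theta}) = \|\theta-\theta^\ast\|^2_{K(\lambda,\theta^\ast,\theta)}$. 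Maximizing the reciprocal over the halfspace $\{\theta:\theta^\top(z^\ast-z)\le 0\}$ is equivalent to minimizing this $K$-weighted squared distance, and the elementary identity $\min_\lambda 1/f(\lambda)=1/\max_\lambda f(\lambda)$ (valid as the objective is positive) converts the bound into $c^{-1}\log(1/2.4\delta)$ with $c=\max_\lambda\min_{z}\|\theta^\ast-\theta_z\|^2_{K(\lambda,\theta^\ast,\theta_z)}$.

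Finally, to characterize the projection $\theta_z$, I would set $\Phi(\theta):=\sum_x\lambda_x\KL(\nu_{x,\theta^\ast}\|\nu_{x,\theta})$ and compute its gradient using $\mu(x^\top\theta)-\mu(x^\top\theta^\ast)=\alpha(x,\theta,\theta^\ast)\,x^\top(\theta-\theta^\ast)$:
\begin{align*}
  \nabla\Phi(\theta)=\sum_x\lambda_x\big(\mu(x^\top\theta)-\mu(x^\top\theta^\ast)\big)x = G(\lambda,\theta,\theta^\ast)(\theta-\theta^\ast).
\end{align*}
Because $b$ is convex, $\Phi$ is convex and the constraint linear, so the minimizer lies on the boundary $(z^\ast-z)^\top\theta=0$ (as $\theta^\ast$ is strictly interior to the feasible complement). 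The KKT stationarity condition $\nabla\Phi(\theta_z)=\nu(z^\ast-z)$ then gives $\theta_z-\theta^\ast=\nu\,G(\lambda,\theta_z,\theta^\ast)^{-1}(z^\ast-z)$, and solving for $\nu$ from the active constraint produces the stated fixed-point equation.

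The main obstacle is the self-referential nature of the objective: the weight matrices $K(\lambda,\theta^\ast,\theta)$ and $G(\lambda,\theta,\theta^\ast)$ depend on the very variable $\theta$ being optimized, so the projection is implicit and the KKT condition is a fixed-point equation rather than a closed form. Care is needed to justify that the first-order conditions still characterize the global optimum --- this rests on the convexity of $\theta\mapsto\Phi(\theta)$ inherited from the log-partition $b$ --- and to confirm that the constraint is active so the Lagrange-multiplier form applies. The remaining ingredients, namely the change-of-measure inequality and the Bregman identity, are standard.
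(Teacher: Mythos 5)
Your proposal is correct and follows essentially the same route as the paper: the Kaufmann--Cappé--Garivier transportation inequality restricted to the halfspace alternatives, an exact expansion of the Bernoulli KL into the quadratic form $\|\theta-\theta^\ast\|^2_{K(\lambda,\theta^\ast,\theta)}$ via the integral-remainder Taylor formula (the paper does this computation directly rather than through the Bregman-divergence framing, but it is the same identity), and a Lagrange-multiplier/KKT analysis of the halfspace projection yielding the fixed-point equation for $\theta_z$. Your explicit appeal to convexity of $\theta\mapsto\sum_x\lambda_x\KL(\nu_{x,\theta^\ast}\|\nu_{x,\theta})$ to justify that the first-order conditions characterize the global constrained minimizer, and that the constraint is active, is a small point of rigor the paper leaves implicit.
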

In general, it is not clear how to compare our upper bound from Theorem~\ref{alg:rage-glm} to 
this lower bound due to the non-explicit nature of $G(\lambda, \theta_z, \theta^{\ast})$. The quantity, $\max_{\lambda\in \Delta_{\mc{X}}}\min_{z\in \mc{Z}\setminus \{z^\ast\}} \|\theta^{\ast} - \theta_z\|^2_{H(\lambda, \theta^{\ast})}$ can be interpreted as a lower bound arising from a quadratic approximation of the KL-divergence in the first line of the lower bound by the Fisher information matrix. 


In this section, we provide an information theoretic lower bound for any PAC-$\delta$ algorithm. Define, 
\[\beta(a,b) = \int_0^1 (1-t)\dot{\mu}(a+t(b-a)) dt\]
and analogous to $H(\lambda, \theta)$ we define two additional matrix valued functions,
\begin{gather}
        G(\lambda, \theta_1, \theta_2) = \sum_{x\in \mathcal{X}} \lambda_x \alpha(x,\theta_1, \theta_2)xx^{\top}\\
        K(\lambda, \theta_1, \theta_2) = \sum_{x\in \mathcal{X}} \lambda_x \beta(x,\theta_1, \theta_2)xx^{\top}
\end{gather}

\begin{theorem}\label{thm:lowerbound}
Any PAC-$\delta$ algorithm for the pure exploration logistic bandits problem has a stopping time $\tau$ satisfying, 
\begin{align*}
\E[\tau] & \geq c(\lambda)^{-1}\log{\frac{1}{2.4\delta}}, c(\lambda)
= \max_{\lambda\in \Delta{\mc{X}}} \min_{z\neq z^{\ast}\in \mathcal{Z}} \|\theta - \theta_z\|^2_{K(\lambda, 
\theta^{\ast}, \theta_z)}
\end{align*}
where $\theta_z:= \min_{\theta\in \mathbb{R}^d:\theta^{\top}(z^{\ast}-z)\leq 0} \|\theta - \theta_z\|_{K(\lambda, \theta^{\ast}, \theta_z)}^2$
and is given explicitly as the solution to the fixed-point equation
\[\theta_z = \theta^{\ast} - \frac{(z^{\ast}-z)^{\top}\theta^{\ast}G(\lambda, \theta_z, 
\theta^{\ast})^{-1}(z^{\ast}-z)}{\|z^{\ast}-z\|_{G(\lambda, \theta_z, \theta^{\ast})^{-1}}^2}\]
\end{theorem}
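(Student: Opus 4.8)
The plan is to follow the change-of-measure (transportation) lower-bound methodology of Garivier and Kaufmann, specialized to the logistic observation model, and then to solve the resulting constrained minimization to identify $\theta_z$ and the matrix $K$. First I would establish the transportation inequality. Since the algorithm is PAC-$\delta$, any alternative parameter $\theta$ under which $z^\ast$ ceases to be optimal must make the trajectory laws $\P_{\theta^{\ast}}$ and $\P_\theta$ (up to the stopping time $\tau$) distinguishable. Applying the data-processing inequality to the event $\{z_\tau = z^\ast\}$ — which has probability at least $1-\delta$ under $\theta^{\ast}$ and at most $\delta$ under $\theta$ — together with the Wald-type identity $\E_{\theta^{\ast}}[\log \tfrac{d\P_{\theta^{\ast}}}{d\P_\theta}] = \sum_{x\in\mc{X}} \E_{\theta^{\ast}}[N_x(\tau)]\,\mathsf{KL}(\nu_{x,\theta^{\ast}}\|\nu_{x,\theta})$ for sequentially collected Bernoulli samples, yields
\[
\sum_{x\in\mc{X}} \E_{\theta^{\ast}}[N_x(\tau)]\,\mathsf{KL}(\nu_{x,\theta^{\ast}}\,\|\,\nu_{x,\theta}) \geq \mathsf{kl}(\delta, 1-\delta),
\]
where $N_x(\tau)$ counts pulls of $x$ and $\mathsf{kl}$ is the binary relative entropy. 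This holds for every $\theta$ in the alternative set $\bigcup_{z\neq z^\ast}\{\theta:\theta^\top(z^\ast-z)\leq 0\}$.

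Second I would compute the per-arm divergence. Viewing the logistic likelihood as a one-parameter exponential family with log-partition $A(u)=\log(1+e^u)$, so that $A'=\mu$ and $A''=\dot\mu$, the Bernoulli KL is the Bregman divergence of $A$, and Taylor's theorem with integral remainder gives
\[
\mathsf{KL}(\nu_{x,\theta^{\ast}}\,\|\,\nu_{x,\theta}) = \beta(x^\top\theta^{\ast},x^\top\theta)\,(x^\top(\theta-\theta^{\ast}))^2,
\]
with $\beta$ exactly the function defined in the theorem. Setting $\lambda_x := \E_{\theta^{\ast}}[N_x(\tau)]/\E_{\theta^{\ast}}[\tau]$ turns the left-hand side into $\|\theta-\theta^{\ast}\|^2_{K(\lambda,\theta^{\ast},\theta)}$. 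Dividing by $\E_{\theta^{\ast}}[\tau]$, taking the infimum over the alternative set (which only strengthens the bound), and then passing to the best allocation $\lambda$ achievable by any algorithm gives
\[
\E_{\theta^{\ast}}[\tau]\geq \frac{\mathsf{kl}(\delta,1-\delta)}{\max_{\lambda}\min_{z\neq z^\ast}\inf_{\theta:\theta^\top(z^\ast-z)\leq 0}\|\theta-\theta^{\ast}\|^2_{K(\lambda,\theta^{\ast},\theta)}},
\]
and I would close this part with the standard numerical inequality $\mathsf{kl}(\delta,1-\delta)\geq\log\frac{1}{2.4\delta}$, matching the claimed $c(\lambda)^{-1}\log\frac{1}{2.4\delta}$.

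Third, and this is where the real work lies, I would characterize the inner projection $\theta_z$. Although the objective $F(\theta)=\|\theta-\theta^{\ast}\|^2_{K(\lambda,\theta^{\ast},\theta)}$ is a quadratic form built from $K$ (that is, from $\beta$), its gradient is governed by $G$ (that is, by $\alpha$): using $\mu(x^\top\theta)-\mu(x^\top\theta^{\ast})=\alpha(x^\top\theta,x^\top\theta^{\ast})\,x^\top(\theta-\theta^{\ast})$,
\[
\nabla F(\theta)=\sum_{x\in\mc{X}}\lambda_x\big(\mu(x^\top\theta)-\mu(x^\top\theta^{\ast})\big)x = G(\lambda,\theta,\theta^{\ast})(\theta-\theta^{\ast}).
\]
Writing the KKT conditions for the half-space–constrained minimization — the constraint active at $\theta_z$ with $\nabla F(\theta_z)$ parallel to $z^\ast-z$ — and eliminating the multiplier via $(z^\ast-z)^\top\theta_z=0$ produces precisely the stated fixed-point equation. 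The main obstacle is exactly this implicit structure: because $K$ and $G$ both depend on $\theta$, the projection $\theta_z$ is defined by a fixed-point rather than by a closed-form linear projection, so I must argue the problem is well posed (existence of a minimizer on the closed half-space, with the constraint necessarily active since $\theta^{\ast}$ strictly violates it) and that stationarity is here both necessary and sufficient. The cleanest route is to exploit convexity of $F=\sum_{x}\lambda_x\mathsf{KL}(\nu_{x,\theta^{\ast}}\|\nu_{x,\theta})$ in $\theta$, which identifies $\theta_z$ as the unique minimizer and legitimizes reading off $c(\lambda)=\max_\lambda\min_{z\neq z^\ast}\|\theta^{\ast}-\theta_z\|^2_{K(\lambda,\theta^{\ast},\theta_z)}$ as the optimal value.
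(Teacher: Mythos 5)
Your proposal is correct and follows essentially the same route as the paper's proof: the transportation/change-of-measure lower bound of Kaufmann et al., the reduction of the Bernoulli KL to the quadratic form $\|\theta-\theta^\ast\|^2_{K(\lambda,\theta^\ast,\theta)}$, and the Lagrange-multiplier (KKT) analysis using $\nabla_\theta \sum_x \lambda_x \KL(\nu_{x,\theta^\ast}\|\nu_{x,\theta}) = G(\lambda,\theta,\theta^\ast)(\theta-\theta^\ast)$ together with the active constraint $(z^\ast-z)^\top\theta_z=0$ to obtain the fixed-point equation. Your Bregman/Taylor-with-integral-remainder derivation of $\beta$ and the convexity argument for well-posedness of the projection are cleaner than the paper's direct computation, but they are refinements of the same argument rather than a different proof.
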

\begin{proof}
Let $\cC = \{\theta\in \Theta:\exists z\in Z,  \theta^{\top}(z^{\ast} -z)\leq 0\}$. The transportation theorem of \cite{kaufmann2016complexity} implies that any algorithm that is $\delta$-PAC, takes at least $T$ samples with 
\begin{align*}
    \E[T] &\geq \log\left(\frac{1}{2.4\delta}\right)\min_{\lambda\in \Delta_Z}\max_{\theta \in \cC}\frac{1}{\sum_{x\in \mc{X}} \lambda_x \KL(\nu_{x,\theta^{\ast}}|\nu_{x,\theta})}\\
    &\geq \log\left(\frac{1}{2.4\delta}\right)\min_{\lambda\in \Delta_Z}\max_{z\in \mc{Z}\setminus z^{\ast}}\max_{\theta\in \mathbb{R}^d, \theta^{\top}(z^{\ast}-z)\leq 0}\frac{1}{\sum_{x\in \mc{X}} \lambda_x \KL(\nu_{x,\theta^{\ast}}|\nu_{x,\theta})}
\end{align*}
where $\nu_{x,\theta}$ is the distribution of arm $x$ under the parameter vector $\theta$, i.e. $\nu_{x,\theta} = \Bernoulli(x^{\top}\theta)$

For a fixed $z'\in \mc{Z}$, s.t. $z'\neq z^{\ast}$ consider 
\[\min_{\theta\in\mathbb{R}^d, \theta^\top(z^\ast - z')\leq 0} \sum_{x\in \mc{X}} \lambda_x \KL(\nu_{x,\theta^{\ast}}|\nu_{x,\theta}).\]
We have that 
 
\begin{align*}
 \KL(\nu_{x,\theta^{\ast}}|\nu_{x,\theta})
 &=  \mu(x^\top\theta^{\ast})
 \log\left(\frac{\frac{e^{x^{\top}\theta^{\ast}}}{1+e^{x^{\top}\theta^{\ast}}}}{\frac{e^{x^{\top}\theta}}{1+e^{x^{\top}\theta}}}\right) + (1-\mu(x^\top\theta^{\ast}))\log\left(\frac{\frac{1}{1+e^{x^{\top}\theta^{\ast}}}}{\frac{1}{1+e^{x^{\top}\theta}}}\right)\\
 &= \mu(x^\top\theta^{\ast})x^\top(\theta^{\ast} - \theta) + \log\left(\frac{\frac{1}{1+e^{x^{\top}\theta^{\ast}}}}{\frac{1}{1+e^{x^{\top}\theta}}}\right)\\
 &= \mu(x^\top\theta^{\ast})x^\top(\theta^{\ast} - \theta) + \log\left(\frac{1-\mu(z^{\top}\theta^{\ast})}{1-\mu(x^{\top}\theta)}\right)\\
 &= \mu(x^\top\theta^{\ast})x^\top(\theta^{\ast} - \theta) + \log(1-\mu(x^{\top}\theta^{\ast}))-\log(1-\mu(x^{\top}\theta))
\end{align*}

Differentiating with respect to $\theta$ gives, 
\[\nabla_{\theta} \KL(\nu_{x,\theta^{\ast}}|\nu_{x,\theta}) = -\mu(x^{\top}\theta^\ast)x +\frac{\dot\mu(x^\top\theta)x}{1-\mu(x^{\top}\theta)}= (\mu(x^{\top}\theta)-\mu(x^{\top}\theta^\ast))x\]
 using the fact that $\dot\mu(a) = \mu(a)(1-\mu(a))$ so this implies
 \[\nabla_{\theta}\sum_{x\in \mc{X}} \lambda_x \KL(\nu_{x,\theta^{\ast}}|\nu_{x,\theta}) = \sum_{x\in \mc{X}}\lambda_x(\mu(x^{\top}\theta)-\mu(x^{\top}\theta^\ast))x\]
 
 Assuming $\Theta=\mathbb{R}^d$ and letting $\psi$ denote the Lagrange Multiplier corresponding to the constraint $\theta^\top(z^\ast-z)\leq 0$ gives that the minimal $\theta$ satisfies, 
 \[\sum_{x\in \mc{X}}\lambda_x(\mu(x^{\top}\theta)-\mu(x^{\top}\theta^\ast))x = \psi \cd (z^\ast - z) \]
 
 Now by definition, $\mu(x^\top\theta) - \mu(x^\top\theta^\ast) = \alpha(x,\theta, \theta^\ast)x^{\top}(\theta-\theta^\ast)$ so, this reduces to,
 \[\left(\sum_{x\in \mc{X}} \lambda_x \alpha(x,\theta, \theta^\ast)xx^{\top}\right)(\theta-\theta^\ast) = \psi(z^\ast -z) \Rightarrow \theta = \theta^{\ast} +\psi G(\lambda,\theta,\theta^\ast)^{-1}(z^{\ast} - z)\]

Let $\theta_z$ be the solution to this fixed point equation. Since we are saturating the constraint, it should be true that $\theta_z(z^\ast -z) = 0$. With this, we take an inner product with $z^* - z$ on both sides to obtain
\[\psi = -\frac{(z^\ast - z)^{\top}\theta^{\ast}}{\|z^{\ast}-z\|^2_{G(\lambda,\theta, \theta^{\ast})^{-1}}} ~.\]

So finally, we see that 
\[\theta_z = \theta^{\ast} - \frac{{\theta^\ast}^\T(z^{\ast} - z)G(\lambda,\theta_z,\theta^\ast)^{-1}(z^{\ast} - z)}{\|z^{\ast}-z\|^2_{G(\lambda,\theta_z, \theta^{\ast})^{-1}}}\]
 and 
\[\theta_z = \arg\min_{\theta\in\mathbb{R}^d, \theta^\top(z^\ast - z')\leq 0} \sum_{x\in \mc{X}} \lambda_x \KL(\nu_{x,\theta^{\ast}}|\nu_{x,\theta})\]

Now to finish the proof, note
\begin{align*}
    \KL(\nu_{x,\theta^{\ast}}|\nu_{x,\theta})
    &= \mu(x^\top\theta^{\ast})x^\top(\theta^{\ast} - \theta) + \log(1-\mu(x^{\top}\theta^{\ast}))-\log(1-\mu(x^{\top}\theta))\\
    &= (\theta^\ast - \theta) \left[\frac{\mu(x^\top\theta^{\ast})}{(x^\top(\theta^{\ast} - \theta))^2} + \frac{\log(1-\mu(z^{\top}\theta^{\ast}))}{(x^\top(\theta^{\ast} - \theta))^2}-\frac{\log(1-\mu(x^{\top}\theta))}{(x^\top(\theta^{\ast} - \theta))^2} \right]xx^\T (\theta^\ast - \theta)
\\  &\stackrel{(a)}{=} \|\theta^* - \theta \|^2_{\beta(z^\T \theta, z^\T\theta^*) z z^\T}
\end{align*}
where the last expression follows from the computation, 
\begin{align*}
     \beta(a,b) &= \int_0^1 (1-t)\dot{\mu}(a+t(b-a)) dt\\ 
     &=  \frac{\ln\left({\mathrm{e}}^{-a}+1\right)}{{\left(b-a\right)}^2}-\frac{\ln\left({\mathrm{e}}^{-b}+1\right)}{{\left(b-a\right)}^2}-\frac{1}{\left({\mathrm{e}}^{b}+1\right)\,\left(b-a\right)}
  \end{align*}

\end{proof}

In general, it is not clear how to compare our upper bound from Theorem 2 (in the main paper) to this lower bound due to the non-explicit nature of $G(\lambda, \theta_z, \theta^{\ast})$. In the case of Gaussian linear bandits, previous work has shown an elimination scheme similar to Algorithm~\ref{alg:rage-glm} is indeed near optimal. 

\subsection{$1/\kappa_0$ Lower Bounds}

In this section, we prove that there exist bounds where a dependence on $1/\kappa_0$ is necessary. Throughout, we take $\EE_\th$ and $\PP_\th$ to denote expectation and probability under an instance where parameter vector 
$\theta^\ast$ is equal to $\theta$.

\begin{theorem}\label{thm:permuation_lower_bound}
Fix $\delta_1 < 1/16$, $d \geq 4$, and $\epsilon \in (0, 1/2]$ such that $d\varepsilon^2 \geq 12.2$. Let $\mc{Z}$ denote the action set and $\Theta$ denote a family of possible parameter vectors.
There exists instances satisfying the following properties simultaneously
\begin{enumerate}
    \item $|\mc{Z}| = |\Theta| = e^{\epsilon^2d/4}$ and $\|z\| =1 $ for all $z \in \mc{Z} $. 
    \item $S =\|\theta_{\ast}\|= O(\epsilon^2d)$
    \item Any algorithm that succeeds with probability at least $1-\delta_1$ satisfies
    $$\exists \th\in\Theta \text{ such that } \E_\th[T_{\delta_1}] > \Omega\left(e^{\epsilon^2d/4}\right) = c\left(\frac{1}{\kappa_0}\right)^{\frac{1-\epsilon}{1+3\epsilon}} $$
where $T_{\delta_1}$ is the random variable of the number of samples drawn by an algorithm and $c$ is an absolute constant. 
\end{enumerate}
\end{theorem}

To prove this, we first state a more general theorem about lower bounds for logistic bandits. 
\begin{theorem}\label{thm:lower_general}
Fix $\delta_1 < 1/16$, $n \in \N: n> 20$, and a set of arms $\mc{Z} = \{z_1, \cdots, z_n\} \in \R^d$. Consider a family of parameter vectors $\Theta = \{\theta_1, \cdots, \theta_n\} \in \R^d$ such that for every $i\in[n]$,
\begin{enumerate}
    \item $\mu(\theta_i^Tz_i) \in [1-\delta_2, 1]$ 
    \item $\mu(\theta_i^Tz_j) \in [0, \delta_2]$ for all $j \neq i$
\end{enumerate}
If $\delta_2 \leq \fr{1}{2n}$,
then any $\delta_1$-PAC algorithm satisfies
  $$\exists \th\in\Theta \text{ such that } \E_\th [T_{\delta_1}] > \frac{n}{16} $$
where $T_{\delta_1}$ is the random variable of the number of samples drawn by an algorithm.

\end{theorem}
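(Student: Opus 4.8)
The plan is to prove the (stronger) claim that the \emph{average} complexity over the family exceeds $n/16$, through a change-of-measure argument against an uninformative baseline. Introduce a reference environment $Q$ in which \emph{every} arm $z_k$ returns its reward from the common low distribution shared by the suboptimal arms, so that $Q$ agrees with each instance $\theta_i$ on every arm except $z_i$, whose mean is raised from low ($\le\delta_2$) to high ($\ge 1-\delta_2$). Since $\delta_2\le 1/(2n)<1/2$, under $\theta_i$ the arm $z_i$ is the unique optimum, so any $\delta_1$-PAC algorithm outputs $\hat z=z_i$ with probability at least $1-\delta_1$ under $\theta_i$. Write $A_i:=\{\hat z=z_i\}$, $E_i:=\{z_i\text{ is pulled at least once}\}$, and let $m$ be the number of distinct arms pulled before stopping.

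The first step is to show the algorithm must \emph{discover} most needles under $Q$. Because $\PP_{\theta_i}$ and $Q$ differ only in the reward law of $z_i$, they can be coupled so that their trajectories coincide up to the first pull of $z_i$; hence $\PP_{\theta_i}(E_i)=Q(E_i)$, and on $E_i^c$ the two runs are identical and produce the same output, giving $\PP_{\theta_i}(A_i\cap E_i^c)=Q(A_i\cap E_i^c)\le Q(A_i)$. Combining with the PAC guarantee,
\begin{align*}
1-\delta_1 \le \PP_{\theta_i}(A_i) \le \PP_{\theta_i}(E_i)+Q(A_i) = Q(E_i)+Q(A_i).
\end{align*}
Summing over $i$ and using that the events $A_i$ are disjoint (so $\sum_i Q(A_i)\le 1$) yields
\begin{align*}
\E_Q[m] = \sum_{i=1}^n Q(E_i) \ge n(1-\delta_1)-1 \ge \tfrac{7}{8}n,
\end{align*}
where the last inequality uses $\delta_1<1/16$ and $n>20$.

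Next I would transfer this into a bound on a genuine instance of $\Theta$. Let $W_i$ be the number of pulls executed before the first pull of $z_i$ in a run of $Q$ (with $W_i=T_{\delta_1}$ if $z_i$ is never pulled). The same coupling shows that under $\theta_i$ the run agrees with $Q$ until $z_i$ is first pulled, so the number of pulls under $\theta_i$ is at least $W_i$ pointwise, i.e.\ $\E_{\theta_i}[T_{\delta_1}]\ge \E_Q[W_i]$. The key combinatorial fact is that in any realization of $Q$ the pulled arms have \emph{distinct} first-pull times $t_{(1)}<\cdots<t_{(m)}$ with $t_{(r)}\ge r$, so the $r$-th discovered arm contributes $W\ge r-1$ and therefore $\sum_{i=1}^n W_i \ge \binom{m}{2}$. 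Taking expectations, using $\E_Q[m^2]\ge \E_Q[m]^2$, and inserting $\E_Q[m]\ge \tfrac78 n$ gives
\begin{align*}
\frac1n\sum_{i=1}^n \E_{\theta_i}[T_{\delta_1}]
\ge \frac1n\,\E_Q\!\left[\binom{m}{2}\right]
\ge \frac{\E_Q[m]\big(\E_Q[m]-1\big)}{2n}
\ge \frac{49n}{128}-\frac{7}{16}
> \frac{n}{16},
\end{align*}
the final inequality holding for all $n>20$. Hence some $\theta_i\in\Theta$ satisfies $\E_{\theta_i}[T_{\delta_1}]>n/16$, which is the assertion of the theorem.

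The main obstacle, and the place where the hypotheses must be used with care, is the very first step: the coupling requires that replacing $\theta_i$ by the baseline changes the reward law of \emph{only} arm $z_i$. This is precisely the demand that all suboptimal arms share one common reward distribution, which is the symmetric structure the construction in Theorem~\ref{thm:permuation_lower_bound} is engineered to provide; the hypothesis $\delta_2\le 1/(2n)$ (in particular $\delta_2<1/2$) enters here to ensure $z_i$ is the unique optimum so that PAC forces $\hat z=z_i$. If one instead attempted a direct pairwise comparison between $\theta_i$ and $\theta_j$, the two instances would differ on two arms, the clean single-arm coupling would break, and the $\Omega(n)$ scaling would be lost; routing everything through the common baseline $Q$ is exactly what makes the $n/16$ bound emerge.
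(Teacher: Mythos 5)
Your high-level plan is genuinely different from the paper's proof, and most of its internal steps are sound: the discovery-counting inequality $\E_Q[m]\ge n(1-\delta_1)-1$, the pathwise bound $\sum_i W_i\ge\binom{m}{2}$, and the Jensen step would all go through \emph{if} the reference measure $Q$ you postulate existed. That is where the genuine gap lies. Your couplings (giving $\P_{\theta_i}(E_i)=Q(E_i)$, $\P_{\theta_i}(A_i\cap E_i^c)=Q(A_i\cap E_i^c)$, and the pointwise domination $T^{(\theta_i)}\ge W_i$) require $Q$ to agree with $\P_{\theta_i}$ \emph{exactly} on every arm other than $z_i$, simultaneously for all $i$; equivalently, for each $j$ the mean $\mu(\theta_i^\top z_j)$ must take one common value for all $i\neq j$. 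The theorem's hypotheses do not grant this: they only place each low mean in the interval $[0,\delta_2]$, and these means may differ across arms and across instances. Moreover, your fallback claim that the construction of Theorem~\ref{thm:permuation_lower_bound} ``is engineered to provide'' this symmetry is false: there $z_j^\top\theta_i/S=-1+\sin^2(u)\,(1+a_i^\top a_j)$, which varies with the inner products $a_i^\top a_j$, so even the intended application does not satisfy your premise. As written, your argument proves a statement with a strictly stronger hypothesis than the theorem's, and one its application does not meet.

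The gap is repairable, but the repair essentially transforms your proof into the paper's. Since all low means lie in $[0,\delta_2]$, on any trajectory prefix of length at most $n/4$ in which only low arms are pulled and only rewards $0$ are observed, the likelihood ratio between any two of the relevant measures is at least $(1-\delta_2)^{n/4}\ge(1-\tfrac{1}{2n})^{n/4}\ge\tfrac{7}{8}$; alternatively, an approximate coupling bounds the total-variation discrepancy of such prefixes by $(n/4)\delta_2\le\tfrac18$. Either patch forces you to first truncate the horizon at $n/4$ (via Markov's inequality, which in turn requires arguing by contradiction from $\E_\theta[T_{\delta_1}]\le n/16$) and to restrict to the event that no reward $1$ is observed and the needle arm is unpulled --- which is exactly the event structure, the $(1-\delta_2)^{n/4}$ factor, and the surrogate-algorithm device in the paper's proof (the paper additionally routes the change of measure through adjacent instances in a circular ordering of $\Theta$ rather than a common baseline). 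So your needle-discovery counting is a nice alternative skeleton, but it only becomes a proof of the stated theorem after the exact couplings are replaced by quantitative change-of-measure bounds.
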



\begin{proof}[\textbf{Proof of Theorem}~\ref{thm:permuation_lower_bound}]
We begin with a construction based on the technique from \cite{dong2019performance} but optimized for our setting. Then we choose $S$ to satisfy the conditions of Theorem~\ref{thm:lower_general} while controlling $\kappa_0$. 

\textbf{Step 1: Constructing of $A$ and $\Theta$}

There exists at least $n=\lfloor \exp(\epsilon^2d/4)\rfloor > 20$ vectors on the sphere in $\R^{d-1}$, $a_1, \cdots, a_n$ such that $|a_i^T a_j| < 1/2$ and $\|a_i\| = 1$ for all $i$. 
Define arms $z_1, \cdots z_n$ such that 
$$z_i = (\cos(u), \sin(u)a_i) \in \R^d $$
where $u  = \tan^{-1}\left(\sqrt{\frac{2}{1+\epsilon}}\right)$.
Similarly, define a family of $\theta$'s such that 
$$\frac{\theta_i}{S} = (-\cos(u), \sin(u)a_i) \in \R^d $$
where $S$ is the norm of all of the $\theta_i$'s to be specified later. Since $\|a_i\|=1$ for all $i$, we have that $\|z_i\| = 1$ and $\|\theta_i\| = S$ for all $i$. 
Then we have that 
$$z_i^T\frac{\theta_i}{S} = -\cos(u)^2 + \sin(u)^2.$$
Plugging in our choice of $u$ and recalling that $\sin(\tan^{-1}(x)) = \frac{x}{\sqrt{1+x^2}}$ and $\cos(\tan^{-1}(x)) = \frac{1}{\sqrt{1+x^2}}$. Therefore,
$$z_i^T\frac{\theta_i}{S} = \frac{-1}{1 + \frac{2}{1+\epsilon}}+ \frac{\frac{2}{1+\epsilon}}{1 + \frac{2}{1+\epsilon}} = \frac{1-\epsilon}{3+\epsilon} > 0$$

Furthermore, we have that 
$$z_j^T\frac{\theta_i}{S} = -1 + \sin(u)^2(1 + a_i^Ta_j).$$
We have that $|a_i^Ta_j| \leq \epsilon$. Therefore, 
$$z_j^T\frac{\theta_i}{S} \leq -1 + \frac{\frac{2}{1+\epsilon}}{1 + \frac{2}{1+\epsilon}}(1+\epsilon) = \frac{\epsilon - 1}{3 + \epsilon} < 0$$
and 
$$z_j^T\frac{\theta_i}{S} \geq -1 + \frac{\frac{2}{1+\epsilon}}{1 + \frac{2}{1+\epsilon}}(1-\epsilon) = \frac{-1 - 3\epsilon}{3+ \epsilon}.$$
Taken together, we have that 
$$\max_{i,j}|z_i^T\theta_j| \in \left[S\frac{1-\epsilon}{3+\epsilon}, S\frac{1 + 3\epsilon}{3+\epsilon} \right]. $$

\textbf{Step 2: Choosing $S$ to satisfy Theorem \ref{thm:lower_general}}

To invoke the result of Theorem \ref{thm:lower_general}, we require that $\mu(\theta_i^Tz_i) \geq 1-\delta_2$ and $\mu(\theta_i^Tz_j) \leq \delta_2$ for $j \neq i$ for $\delta_2$ defined therein. For the above construction, we require an $S$ that satisfies: 1) $\mu\left(S\frac{1-\epsilon}{3 + \epsilon}\right) \geq 1-\delta_2$ and 2) $\mu(S\frac{\epsilon - 1}{3 + \epsilon}) \leq \delta_2$. Clearly this can be achieved by taking $S\rightarrow \infty$. 
Using the fact that $\mu( -x) = 1-\mu(x)$ for the logistic function as well as its monotonicity, 
we see that both are satisfied for $S$ such that $\mu(S\frac{\epsilon - 1}{3 + \epsilon}) \leq \delta_2$. 

Note that $\mu(x)$ is invertible with inverse $\mu^{-1}(x) = \log\left(\frac{x}{1-x}\right)$. 
Hence 
$$S\geq \frac{3 + \epsilon}{\epsilon - 1}\mu^{-1}(\delta_2) = \frac{3 + \epsilon}{1-\epsilon}\log\left(\frac{1-\delta_2}{\delta_2}\right).$$
implies that for any $\delta_1$-PAC algorithm there exists a $\theta \in \Theta$ such that
$$\E_{\theta}[T_{\delta_1}] > \frac{n}{16} = \frac{1}{16}\lfloor e^{d\epsilon^2/4}\rfloor. $$

\textbf{Step 3: Choosing $S$ to control $1/\kappa_0$}

Any $S$ that satisfies the constraint in step 2 satisfies the conditions of Theorem \ref{thm:lower_general} and implies a sample complexity lower bounds. As $\kappa_0^{-1} = O\left(e^S\right)$, to have the tightest correspondence between $\kappa$ and $n$, we want $S$ as small as possible. Therefore, we take 
$$S = \frac{3 + \epsilon}{1-\epsilon}\log\left(\frac{1-\delta_2}{\delta_2}\right). $$
By construction, we have
$$\max_{i,j}|z_i^T\theta_j| \leq S\frac{1 + 3\epsilon}{3+\epsilon}~. $$
This implies that 
\begin{align*}
    \min_{i,j}\mu\left(z_i^T\theta_j\right) \geq \mu\left(-S\frac{1 + 3\epsilon}{3+\epsilon} \right) & = \mu\left(-\frac{1 + 3\epsilon}{3+\epsilon} \times \frac{3 + \epsilon}{1-\epsilon}\log\left(\frac{1-\delta_2}{\delta_2}\right) \right) \\
    &= \mu\left(-\frac{1 + 3\epsilon}{1-\epsilon} \log\left(\frac{1-\delta_2}{\delta_2}\right) \right) \\
    & = \frac{1}{1 + \left(\frac{1-\delta_2}{\delta_2}\right)^{\alpha_\epsilon}} =: \delta_3
\end{align*}
where in the final line we have defined $\alpha_\epsilon := \frac{1 + 3\epsilon}{1-\epsilon}$. 
Then, using $1/(1-\delta_3) \leq 2$ for all $n \geq 1$,
$$\frac{1}{\kappa_0} = \frac{1}{\delta_3(1-\delta_3)} \leq \frac{2}{\delta_3} = 2\left(1 + \left(\frac{1-\delta_2}{\delta_2}\right)^{\alpha_\epsilon}\right)~.$$


\textbf{Step 4: Putting it all together}

By Theorem~\ref{thm:lower_general}, the above holds for $\delta_2 \leq \frac{1}{2n}$. Choose $\delta_2 = \frac{1}{2n}$.
Hence, 
\begin{align*}
    \frac{1}{\kappa_0} \leq 2\left(1 + \left(\frac{1-\delta_2}{\delta_2}\right)^{\alpha_\epsilon}\right) \iff n \geq \frac{1}{2}\left(\frac{1}{2} \cdot \frac{1}{\kappa_0} - 1\right)^{-\alpha_\epsilon} + 1 = c_1 \left(\frac{1}{\kappa_0}\right)^{-\frac{1-\epsilon}{1+3\epsilon}}
\end{align*}
for an absolute constant $c_1$. 
Furthermore, there exists a constant $c_2$ such that 
$S = c_2\log(n)$.
Noting that $n = \Omega(e^{\epsilon^2d})$ completes the proof. 



\end{proof}

\begin{proof}[\textbf{Proof of Theorem}~\ref{thm:lower_general}]

Suppose not.
Then, for every $\th\in\Theta$, we have $\EE_\th [T_{\dt_1}] \le n/16$.


\textbf{Step 1: Defining event $\cE_i$ that leads to errors}

Let $R_t$ be the reward received at time $t$.
Let $\tau$ to be the first time $t$ the algorithm receives $R_t=1$: 
\begin{align}\label{eq:def-tau}
  \tau = \begin{cases}
        \infty & \text{if $\{t\in [1,T_{\dt_1}]: R_t = 1\}$ is empty}\\
      \min_{t\in [1,T_{\dt_1}]: R_t = 1}~ t & \text{otherwise}
  \end{cases}    
  ~. 
\end{align}
If $\tau=\infty$, then the algorithm only sees reward 0 until termination.
Note that $\tau$ is a stopping time.
Let $i(\th)$ be the best arm under the parameter $\th$.
Let $T_j$ be the number of pulls of arm $j$ up to (and including) time $T_{\dt_1}$.
We define the following event:
\begin{align*}
    \cE_{\nu,\th} = \left\{\mc{A} \text{ returns } i(\nu)\right\} 
        \cap \left\{T_{\delta_1} \leq \frac{n}{4} \right\} 
        \cap \left\{T_{i(\th)} = 0 \right\} 
        \cap \left\{\tau = \infty\right\}~,
\end{align*} 
which is bad and should not happen frequently when $\th$ is true but is likely to happen under $\nu$, roughly speaking.
Define 
$$
  \widehat{\KL}(\nu, \th) :=  \sum_{t=1}^{T_{\delta_1}}\sum_{z\in \mc{Z}} \1[z = a_{t}]\left[R_{t}\log\left(\frac{\mu(\nu^Tz)}{\mu(\th^Tz)} \right) + (1 - R_{t})\log\left(\frac{1 - \mu(\nu^Tz)}{1-\mu(\th^Tz)} \right)\right].
$$
On the event $\cE_{\nu,\th}$ using the assumptions of the theorem, we have that 
$$  
  \widehat{\KL}(\nu, \th) \leq \frac{n}{4}\log\left(\frac{1}{1 - \delta_2} \right)
$$
Then,
\begin{align}\label{eq:kappalb-core}
\begin{aligned}
  \P_\th(\mc{A} \text{ returns } i(\nu)) 
  &\geq \P_{\th}(\cE_{\nu,\th})
\\&= \E_{\nu}\left[\one\{\cE_{\nu,\th}\}\exp(-\widehat{\KL}(\nu, \th)) \right] 
\\& \geq \left(1-\delta_2\right)^{n/4}\E_{\nu}\left[\one\{\cE_{\nu,\th}\}\right]
\\&= \left(1-\delta_2\right)^{n/4}\P_{\nu}\left(\cE_{\nu,\th}\right)~.
\end{aligned}
\end{align}
Let us fix an arbitrary circular ordering of the members of $\Theta$.\footnote{ For example, a circular orderings for $\{a,b,c\}$ is $a \prec b \prec c \prec a$. Another example is $c \prec b \prec a \prec c$.}
Let $\sq(\th)$ be member of $\Theta$ that comes immediately after $\th$ in the order, and $\sp(\th)$ be member of $\Theta$ that comes immediately before $\th$.
Then,
\begin{align*}
    \fr1n \sum_{\th\in\Theta} \P_\th(\mc{A} \text{ returns } i(\sq(\th))) 
    &\ge \left(1-\delta_2\right)^{n/4} \fr1n \sum_{\th\in\Theta} \P_{\sq(\th)}\left(\cE_{\sq(\th),\th}\right)
        \tag{By \eqref{eq:kappalb-core}}
    \\&\stackrel{(a)}{=} \left(1-\delta_2\right)^{n/4} \fr1n \sum_{\nu\in\Theta} \P_{\nu}\left(\cE_{\nu,\sp(\nu)}\right)
    \\&\stackrel{(b)}{=} \left(1-\delta_2\right)^{n/4} \PP\del{ \cE_{\th,\sp(\th)} }~.
\end{align*}
where $(a)$ is simply a reindexing and $(b)$ is by treating the parameter $\th$ to be drawn from $\mathsf{Uniform}(\Theta)$.
Hereafter, $\PP$ and $\EE$ without subscripts are w.r.t. the measure on all the random variables including the prior on $\th$.
Note that the terminology `prior' is just for brevity in this proof and does not imply that our problem setup is Bayesian.


\textbf{Step 2: Bounding $\PP(\cE_{\th,\sp(\th)})$}

Hereafter, we shorten the notation $\cE_{\th,\sp(\th)}$ as $\cE_\th$.
We aim to find a lower bound on $\PP(\cE_\th)$.
For this, we upper bound $\PP(\wbar{\cE_\th})$:
\begin{align}\label{eq:kappalb-step2}
\begin{aligned}
  &\PP(\wbar{\cE_\th}) 
  \\&\le \PP\del{ \wbar{\mc{A} \text{ returns } i(\th)} }
    +\PP\del{ T_{\delta_1} > \frac{n}{4} }
    + \PP\del{ T_{\delta_1} \le \frac{n}{4}, \tau \neq \infty}
  + \PP\del{T_{\delta_1} \le \frac{n}{4}, \tau = \infty, T_{i(\sp(\th))} \ge 1) }
\end{aligned}
\end{align}
Note that $\PP\del{ \wbar{\mc{A} \text{ returns } i(\th)}} = \fr1n \sum_\th \PP_\th \del{ \wbar{\mc{A} \text{ returns } i(\th)}} \le \dt_1$ by the assumption of the theorem.
Also, by Markov's inequality, 
$$
  \P_\th\left(T_{\delta_1} > \frac{n}{4}\right) \leq \frac{\E_\th[T_{\delta_1}]}{\frac{n}{4}} \leq \frac{\frac{n}{16}}{\frac{n}{4}} = \frac{1}{4}
  \implies \PP\del{T_{\delta_1} > \frac{n}{4}} \le \fr14
$$
where the second inequality is by our assumption made for the sake of contradiction.
For the last term in~\eqref{eq:kappalb-step2},
\begin{align*}
    \PP\del{T_{\delta_1} \le \frac{n}{4}, \tau = \infty, T_{i(\sp(\th))} \ge 1 }
    &= \fr1n \sum_{\th} \PP_\th \del{T_{\delta_1} \le \frac{n}{4}, \tau = \infty, T_{i(\sp(\th))} \ge 1 }
    \\&= \fr1n \sum_{\th} \PP_\nu \del{T_{\delta_1} \le \frac{n}{4}, \tau = \infty, T_{i(\sp(\th))} \ge 1 }     \text{ for any $\nu\in\Theta$~,}
\end{align*}
where the last equality uses the fact that under $\tau=\infty$ the algorithm's behavior is independent of the unknown $\th$ because the algorithm's behavior is determined by its observed reward (and its internal randomization if any).
Therefore,
\begin{align}\label{eq:thelastterm}
\begin{aligned}
    \PP\del{T_{\delta_1} \le \frac{n}{4}, \tau = \infty, T_{i(\sp(\th))} \ge 1 }
    &= \fr1n \sum_{\th} \fr1n \sum_{\nu}\PP_\nu \del{T_{\delta_1} \le \frac{n}{4}, \tau = \infty, T_{i(\sp(\th))} \ge 1 } 
    \\&\stackrel{(a)}{=} \fr1n \sum_{j=1}^n \fr1n \sum_{\nu}\PP_\nu \del{T_{\delta_1} \le \frac{n}{4}, \tau = \infty, T_{j} \ge 1 } 
    \\&= \fr1n \sum_{j=1}^n \PP \del{T_{\delta_1} \le \frac{n}{4}, \tau = \infty, T_{j} \ge 1 }  
    \\&= \fr1n \sum_{j=1}^n \EE\sbr{ \one\cbr{T_{\delta_1} \le \frac{n}{4}, \tau = \infty, T_{j} \ge 1}} 
    \\&\le \fr1n \sum_{j=1}^n \EE\sbr{ \one\cbr{T_{\delta_1} \le \frac{n}{4}, \tau = \infty, T_{j} \ge 1} \cd T_j}    
    \\&= \EE\sbr{ \one\cbr{T_{\delta_1} \le \frac{n}{4}, \tau = \infty} \cd \fr1n \sum_{j=1}^n T_j}    
    \\&\le \fr1{4}
\end{aligned}
\end{align}
where $(a)$ is by reindexing.

It remains to bound the third term in~\eqref{eq:kappalb-step2}.
This case becomes a bit tricky for the following reasons.
We would like to use the event $T_{\dt_1}\le n/4$ in the same way as the display~\eqref{eq:thelastterm}, but, now that $\tau\neq \infty$, the random variable $T_{\dt_1}$ depends on the instance $\th$.
Thus, we cannot employ the same independence argument used above.
To get around, we construct a surrogate algorithm $\cA'$ that follows a $\delta_1$-PAC algorithm $\cA$ but still selects arms after $\cA$ terminates.
Specifically, for $t\le T_{\dt_1}$ the algorithm $\cA'$ follows the selection $a_t$ of $\cA$ exactly, and for $t > T_{\dt_1}$ the algorithm $\cA'$ employs an arbitrary policy.
For example, one can choose to select arms uniformly at random for $t > T_{\dt_1}$.
However, any policy works for our proof as long as the policy does not change as a function of $\th$.

Let $\chi_j$ be the time step $t\le \fr n4$ where $i(\th)$ is pulled for the first time:
\begin{align*}
    \chi_j :=
    \begin{cases}
        \infty  & \text{if $\cbr{t \le n/4: a_t = i(\th)}$ is empty }
      \\\min \cbr{t \le n/4: a_t = i(\th)} & \text{otherwise}
    \end{cases}
    ~.
\end{align*}
We also define $\tau'$ as the same as~\eqref{eq:def-tau} except that we replace $T_{\dt_1}$ therein with $n/4$.
Then, by introducing the notation $\PP^{\cA}$ to indicate the dependency on the algorithm $\cA$,
\begin{align}\label{eq:thethirdterm}
\begin{aligned}
      &\PP^{\cA} \del{T_{\dt_1}\le \fr n4, \tau \neq \infty}
    \\&\le \PP^{\cA'}\del{\tau' \neq \infty}
    \\&= \PP^{\cA'}\del{\tau' \neq \infty,~ \chi_{i(\th)} \le \tau'}
        + \PP^{\cA'}\del{\tau' \neq \infty,~ \chi_{i(\th)} > \tau'}
    \\&\stackrel{(a)}{\le} \PP^{\cA'}\del{a_{\chi_{i(\th)}} = i(\th),~ \chi_{i(\th)} \neq \infty,~ R_{1:(\chi_{i(\th)}-1)} = 0}
        + \PP^{\cA'}\del{\tau'\neq\infty,~ a_{\tau'} \neq i(\th),~ R_{\tau'} = 1}
\end{aligned}
\end{align}
where $(a)$ introduces the notation $R_{1:t} = 0$ for $R_i=0, \forall i\in[t]$.

Hereafter, we omit the dependence on $\cA'$ for brevity.
Denote by $T_j^{(n/4)}$ the number of pulls of arm $j$ up to (and including) time $n/4$.
The first term above is equal to
\begin{align*}
      &\fr1n \sum_\th \PP_\th\del{a_{\chi_{i(\th)}} = i(\th),~ \chi_{i(\th)} \neq \infty,~ R_{1:(\chi_{i(\th)}-1)} = 0}
    \\&= \fr1n \sum_\th \PP\del{a_{\chi_{i(\th)}} = i(\th),~ \chi_{i(\th)} \neq \infty,~ R_{1:(\chi_{i(\th)}-1)} = 0}
        \tag{independence}
    \\&= \fr1n \sum_{j=1}^n \PP\del{a_{\chi_{j}} = j,~ \chi_{j} \neq \infty,~ R_{1:(\chi_{j}-1)} = 0}
        \tag{reindexing}
    \\&\le \fr1n \sum_{j=1}^n \PP\del{T_j^{(n/4)} \ge 1}
    \\&\le \fr1n \sum_{j=1}^n \EE\sbr{T_j^{(n/4)}} \tag{Markov's inequality}
    \\&= \fr1n \cd \fr{n}{4} = \fr14
\end{align*}
where the reasoning is mostly the same as the display~\eqref{eq:thelastterm}.

The second term of the display~\eqref{eq:thethirdterm} is equal to 
\begin{align*}
    &\fr1n \sum_\th \PP_\th \del{\tau'\neq\infty,~ a_{\tau'} \neq i(\th),~ R_{\tau'} = 1}
    \\&=\fr1n \sum_\th \PP_\th \del{\exists t \le n/4: a_t \neq i(\th), R_t = 1, a_{1:t-1} \neq i(\th), R_{1:t-1}=0} 
    \\&\le \fr1n \sum_\th \sum_{t=1}^{n/4} \PP_\th \del{R_t = 1\mid a_t \neq i(\th), a_{1:t-1} \neq i(\th), R_{1:t-1}=0} \cd \PP_\th\del{ a_t \neq i(\th), a_{1:t-1} \neq i(\th), R_{1:t-1}=0 }
    \\&\le \fr1n \sum_\th \sum_{t=1}^{n/4}  \dt_2 \cd 1
    \\&\le \dt_2 \cd \fr n 4~.
\end{align*}



\textbf{Step 3: Putting it all together}

Using the results of the previous two steps, we have that
\begin{align*}
    \fr1n \sum_\th \P_\th(\mc{A} \text{ returns } i(\mathsf{q}(\th))) 
    &\ge  \left(1-\delta_2\right)^{n/4} \PP\del{ \cE_{\th} } 
    \\&\ge \left(1-\delta_2\right)^{n/4}\left(1 - \del{\dt_1 + \fr14 + \fr14 + \delta_2\cd \fr n 4}\right)
    \\&\ge \left(1-\fr1{2n}\right)^{n/4} \fr18 \tag{$\dt_1<\fr 1 {16} < \fr14,~  \dt_2 \le \fr{1}{2n}$}
    \\&\ge \frac{1}{16} \tag{$n \ge 20$}
    \\&> \dt_1 \tag{assumption}
\end{align*}
However, we have by design that $i(\mathsf{q}(\theta)) \neq i(\theta)$. 
As we have assumed that $\cA$ is $\delta_1$-PAC, we have the following contradiction, which concludes the proof:
\begin{align*}
    \fr1n \sum_\th \P_\th(\mc{A} \text{ returns } i(\mathsf{q}(\th))) \leq \fr1n \sum_\th \delta_1 = \delta_1~.
\end{align*}

\end{proof}

\subsubsection{Comparison to Theorem~\ref{thm:lowerbound} on the same instances}

The family of $\theta$'s is important to the statement of Theorems~\ref{thm:permuation_lower_bound} and \ref{thm:lower_general}. It captures the complexity of \emph{exploration} for logistic bandits and rules out pathological algorithms that have knowledge of $\theta^\ast$. Otherwise, if we restrict to finding the best arm and fix a set $\mc{Z}$ and a single $\theta^\ast \in \Theta$ as defined in in Theorems ~\ref{thm:permuation_lower_bound} and \ref{thm:lower_general}, 
then an oracle that knows $i(\theta^\ast)$ can put all of its samples on it. Indeed, the constraint on $\mc{Z}$ and $\theta \in \Theta$ imposed by Theorem~\ref{thm:permuation_lower_bound}
implies that $\mu(i(\theta)^T\theta) \geq 1-\delta_2$ and $\mu(z_j^T\theta) \leq \delta_2$ for all $z_j \neq i(\theta)$ and all $\theta\in \Theta$. Hence, for $\theta^\ast \in \Theta$, the set of alternates to $\theta^\ast$ for the bound given in Theorem~\ref{thm:lowerbound}
is $\Theta \backslash \{\theta^\ast\}$. For every $\theta'\in \Theta \backslash\{\theta^\ast\}$, $KL(\mu(i(\theta^\ast)^T\theta^\ast) || \mu(i(\theta^\ast)^T\theta') \geq KL(1-\delta_2 || \delta_2)) = \Omega(1)$. Therefore, the allocation 
\begin{align*}
    \lambda_a = \begin{cases}
        1 & \text{ if } a = i(\theta^\ast)\\
        0 & \text{ otherwise}
    \end{cases}
\end{align*}
which puts all of the samples on arm $i(\theta^\ast)$ is feasible for the optimization in Theorem~\ref{thm:lowerbound}. However, this would imply a lower bound of $KL(1-\delta_2 || \delta_2)^{-1} \log(1/2.4\delta) = O(\log(1/\delta))$ independent of both the dimension and the number of arms. Naturally, this is pathological since the allocation depends on knowledge of $\i(\theta)$. In order to rule out such pathological instances, we consider a family of $\theta$'s.


\section{Comments on \texorpdfstring{\citet{li2017provably}}{}}
\label{sec:li17}

\begin{figure}
    {\centering
\includegraphics[width=.4\linewidth]{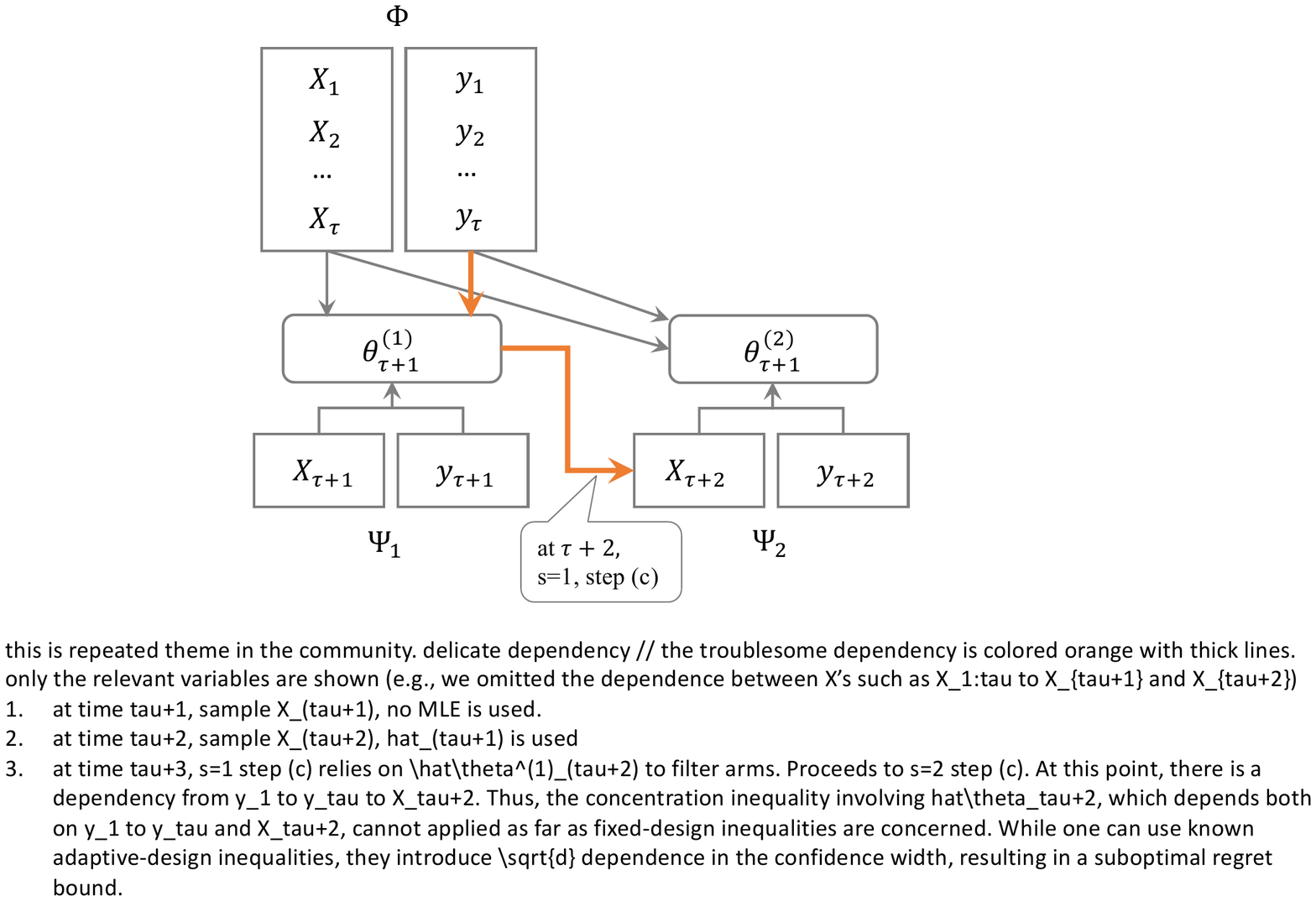}
\caption{A diagram showing the dependency of the variables in SupCB-GLM of \citet{li2017provably}. The troublesome dependency is colored orange with thick lines. Note that we did not show all the dependencies here to avoid clutter. For example, $X_{\tau+1}$ depends on $X_1,\ldots,X_{\tau}$.  }
\label{fig:li17}
    }
\end{figure}

\citet{li2017provably} collects the burn-in samples in a different way from our SupLogistic.
They collect burn-in samples (denoted by $\Phi$ here) for the first $\tau=\sqrt{dT}$ rounds, and the buckets $\Psi_1, \ldots, \Psi_S$ are empty at the beginning of time step $\tau+1$.
Then, at time $t>\tau$, when they compute the estimate $\th^{(s)}_t$, they use both the samples from $\Psi_s$ and $\Phi$.
However, we claim that this scheme invalidates the concentration inequality.
We explain below how this happens with the help of Figure~\ref{fig:li17}.
\begin{itemize}
    \item At time $\tau+1$, we choose $X_{\tau+1}$ in $s=1$ with step \textbf{(a)}.
    \item At time $\tau+2$, we pass $s=1$ and then choose $X_{\tau+2}$ in $s=2$ with step \textbf{(a)}. Note that the set of arms that has survived $s=1$ and passed onto $s=2$ are dependent on $\th^{(1)}_{\tau+1}$ that is a function of $y_1,\ldots, y_\tau$; see the orange thick line in Figure~\ref{fig:li17}. 
    \item At time $\tau+3$, we pass $s=1$, arrive at $s=2$ step \textbf{(c)}, and we perform the arm rejections using $\th^{(2)}_{\tau+2}$. 
    At this point, the fixed design assumption of our concentration inequality for $\th^{(2)}_{\tau+2}$ is violated as we describe below.
\end{itemize}
Let $\tau=1$ for convenience.
The estimator $\theta^{(2)}_{3}$ is computed based on $\{X_1, X_3, y_1, y_3\}$, but $X_3$ depends on $y_1$.
That is, $y_1\mid X_1$ is not conditionally independent from $X_3$.
Specifically,
\begin{align*}
    p(y_1,y_3\mid X_1,X_3) = \frac{p(y_1,y_3,X_1,X_3)}{p(X_1,X_3)} = \fr{p(y_3\mid X_3) p(y_1\mid X_1,X_3)p(X_1,X_3)}{p(X_1,X_3)} = p(y_3\mid X_3) p(y_1\mid X_1,X_3)
\end{align*}
where we use $p$ to denote both the PDF and PMF.
Note that $p(y_1\mid X_1,X_3) \neq p(y_1\mid X_1)$ in general; the distribution of $y_1\mid X_1, X_3$ is algorithm-dependent and thus hard to control.
To be clear, note that $p(y_1\mid X_1,X_3)$ must follow a Bernoulli distribution. It is just that we cannot guarantee that it has the mean $\mu(X_1^\T \th^*)$. 

Our algorithm SupLogistic circumvents this issue by collecting burn-in samples for each bucket $\Psi_1,\ldots,\Psi_S$, but there is another challenge in dealing with the confidence width that depends on $\th^{(s)}_{t}$ due to our novel variance-dependent concentration inequality.

\section{Proofs for SupLogistic}

While SupLogistic is inspired by the standard SupLinRel-type algorithms~\cite{auer2002using,li2017provably}, its design and analysis are challenging due to the fact that the confidence width scales with the unkonwn $\th^*$ unlike the standard linear bandit setting.
To get around this issue, we design the algorithm so that the mean estimate and the variance estimate are computed from different buckets -- $\Psi_s(t-1)$ and $\Phi$ respectively.
Such a design was necessary (as far as we stick to the SupLinRel type) because using the confidence widths based on $\th^{(s)}_t$ would introduce a dependency issue similar to what we described in Section~\ref{sec:li17} and invalidate our confidence bound.

While our regret bound improves upon the dependence on $\|\th^*\|$ in the leading term, we believe it should also be possible to incorporate recent developments of SupLinRel-type algorithms by \citet{li19nearly} to shave off the logarithmic factors from $\log^{3/2}(T)$ to $\log^{1/2}(T)$.
The focus our paper is, however, to show the impact of our novel confidence bounds.

We first define our notations for the proof.
\begin{itemize}
    \item We define a shorthand ${X_t} := x_{t,a_t}$ for the arm chosen at time step $t$.
    \item Denote by $\blue{\Psi_s(t)}$ the set of time steps at which the pulled arm $a_t$ was 
    included to the bucket $s$ up to (and including) time $t$. 
    In other words, $\Psi_s(t)$ is the variable $\Psi_s$ in the pseudocode of SupLogistic at the end of time step $t$.
    \item Define ${H^{(s)}_{t}(\th)} = \sum_{u\in \Psi_s(t)} \dmu(X_u^\T \th) X_u X_u^\T, 
    \forall s\in[S],$ and ${H^\Phi_{t}(\th)} := \sum_{u\in \Phi(\tau)} \dmu(X_u^\T \th) X_u 
    X_u^\T$. 
    We remark that this definition depends on three variables: bucket index, 
    time step, and the parameter $\th$ for computing the variance $\dmu(X_u^\T \th)$.
    Note that the bucket $\Phi$ is never updated after the time step $\tau$ and, so 
    we often use the notation $\Phi$ to mean $\Phi(\tau)$.
\end{itemize}

We present the proof by a bottom-up approach:
\begin{itemize}
  \item Lemma~\ref{lem:event} set up the basic lemma concerned with $T_0$, the smallest budget for which we can enjoy a regret guarantee, and the key stochastic events that hold with high probability.
  \item Lemma~\ref{lem:regret-inst} analyzes the instantaneous regret.
  \item Lemma~\ref{lem:regret-perbucket} analyzes the cumulative regret per bucket $\Psi_s(T)$.
  \item Theorem~\ref{thm:regret} analyzes the final cumulative regret.
\end{itemize}
Note that the proof of the final regret bound becomes a matter of invoking the Cauchy-Schwarz inequality and the elliptical potential lemma (Lemma~\ref{lem:epl} below), which is standard in linear bandit analysis.

We first begin with the condition under which the algorithm collects enough burn-in samples and guarantees concentration of measure after time step $\tau$.
\begin{lemma} \label{lem:event}
    Fix $\delta>0$. 
    Consider {\normalfont SupLogistic} with $T\ge d$, $\tau=\sqrt{dT}$, and $\alpha=3.5 \sqrt{\ln(2(2+\tau) \cd \fr{2STK}{\dt})}$. 
    Let $H^{(S+1)}_{t}(\th^*) := H^{\Phi}_{t}(\th^*)$.
    Define the following event: 
    \begin{align}    \label{eq:event}
        \begin{aligned}
            {\cE_{\mathsf{mean}}} &:= 
            \Bigg\{ 
              \forall t\in\cbr{\tau+1, \ldots, T}, a\in[K], s\in[S],~~ | x_{t,a}^\T\th^{(s)}_{t-1}  - x_{t,a}^\T\th^*| \le \alpha \|x_{t,a}\|_{(H^{(s)}_{t-1}(\th^*))^{-1}},
              \\ &\hspace{15em}
              \fr{1}{\sqrt{2.2}}\|x\|_{(H_{t-1}^{(s)}(\th^*))^{-1}}
              \le \|x\|_{(H_{t-1}^{(s)}(\th_\Phi))^{-1}} 
              \le \sqrt{2.2} \|x\|_{(H_{t-1}^{(s)}(\th^*))^{-1}}          
           \Bigg\}
            \\ 
            {\cE_{\mathsf{diversity}}} &:= \cbr{\forall s\in[S+1], \lammin(H^{(s)}_{\tau}(\th^*)) \ge d + \ln\del{6(2+\tau)\cd \fr{2STK}{\dt}}} ~.
        \end{aligned}
    \end{align}
    Then, there exists
    \begin{align}\label{eq:T0}
        T_0 = \Theta(Z\ln^4(Z)) \text{~~ where ~~} Z =\fr{d^3}{\kappa^2} + \fr{\ln^2(K/\dt)}{d\kappa^2}\cd   
    \end{align}
    such that $\forall T \ge T_0$, $\PP(\cE_{\mathsf{mean}}, \cE_{\msf{diversity}}) \ge 1-\dt$. 
\end{lemma}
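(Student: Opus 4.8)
The plan is to establish the two sub-events separately, treating $\cE_{\mathsf{diversity}}$ as the enabling condition for $\cE_{\mathsf{mean}}$. The high-level logic is that the round-robin burn-in phase deposits roughly $\tau/(S+1)$ uniformly-sampled arms into each of the $S+1$ buckets, and the diversity assumption $\lammin(\EE[\fr1K\sum_a x_{t,a}x_{t,a}^\T]) \ge \sigma_0^2$ forces each bucket's Fisher information $H^{(s)}_\tau(\theta^*)$ to have a well-controlled minimum eigenvalue once $\tau$ is large enough. This minimum-eigenvalue bound is exactly the burn-in hypothesis $\xi^2\le1/\gamma(d)$ needed to invoke Theorem~\ref{thm:concentration-supp} on every bucket at every post-burn-in step, which then delivers both the mean concentration and the variance equivalence constituting $\cE_{\mathsf{mean}}$.

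First I would prove $\cE_{\mathsf{diversity}}$. Fix a bucket $s\in[S+1]$. During burn-in $a_u$ is uniform, so $\EE[X_u X_u^\T \mid \cX_u] = \fr1K\sum_a x_{u,a}x_{u,a}^\T \succeq \sigma_0^2 I$; since $\|X_u\|\le1$ the summands are bounded PSD matrices, and a matrix Chernoff bound gives $\lammin(\sum_{u\in\Psi_s(\tau)}X_u X_u^\T) \ge \fr12\sigma_0^2 |\Psi_s(\tau)|$ with probability at least $1-\dt/(2(S+1))$, provided $|\Psi_s(\tau)| \gtrsim \sigma_0^{-2}\ln((S+1)d/\dt)$. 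Because $\dmu(X_u^\T\theta^*)\ge\kappa$ for $\|X_u\|\le1,\,\|\theta^*\|\le S_*$, this yields $\lammin(H^{(s)}_\tau(\theta^*)) \ge \fr{\kappa\sigma_0^2}{2}|\Psi_s(\tau)|$ with $|\Psi_s(\tau)| = \Theta(\tau/S) = \Theta(\sqrt{dT}/\log_2 T)$. Requiring the right-hand side to exceed $d + \ln(6(2+\tau)\cd\tfrac{2STK}{\dt})$ (the value of $\gamma(d)$ under the union-bound-adjusted failure probability) gives an implicit inequality in $T$; solving it with an inversion argument of the flavour of Lemma~\ref{lem:invert}, which absorbs the $\log T$ and iterated-log terms, produces the threshold $T_0 = \Theta(Z\ln^4 Z)$ for the stated $Z$, and a union bound over the $S+1$ buckets costs at most $\dt/2$.

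Given $\cE_{\mathsf{diversity}}$, I would establish $\cE_{\mathsf{mean}}$. Monotonicity $H^{(s)}_{t-1}(\theta^*)\succeq H^{(s)}_\tau(\theta^*)$ together with $\|x_{t,a}\|\le1$ gives $\max_u\|X_u\|^2_{H^{(s)}_{t-1}(\theta^*)^{-1}} \le 1/\lammin(H^{(s)}_\tau(\theta^*)) \le 1/\gamma(d)$, so the burn-in hypothesis of Theorem~\ref{thm:concentration-supp} holds for every bucket at every $t>\tau$. Applying the theorem to bucket $\Phi$ with each pulled arm $X_u$ as the test vector shows $\max_u |X_u^\T(\theta_\Phi-\theta^*)|$ is at most an absolute constant $D_0<0.6$ (here I would choose $T_0$ large enough that $\fr12\kappa\sigma_0^2|\Psi_{S+1}(\tau)|$ exceeds $\gamma(d)$ by a sufficiently large constant factor, which simultaneously controls this closeness); then Lemma~\ref{lem:self-concordance}, equivalently Lemma~\ref{lem:eqvar}, converts this into the $\sqrt{2.2}$ two-sided equivalence between $H^{(s)}_{t-1}(\theta^*)$ and $H^{(s)}_{t-1}(\theta_\Phi)$ in $\cE_{\mathsf{mean}}$. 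Applying the theorem to each bucket $s$ at step $t-1$ with test vector $x_{t,a}$ yields $|x_{t,a}^\T\theta^{(s)}_{t-1} - x_{t,a}^\T\theta^*| \le \alpha\|x_{t,a}\|_{H^{(s)}_{t-1}(\theta^*)^{-1}}$ with the chosen $\alpha$, once each per-application failure probability is set to $\dt/(2STK)$ and we union bound over $t\in\{\tau+1,\dots,T\}$, $a\in[K]$, $s\in[S]$; this second union bound costs the remaining $\dt/2$.

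The main obstacle is the legitimacy of invoking Theorem~\ref{thm:concentration-supp}, a \emph{fixed-design} result, on buckets whose membership is chosen adaptively. Here I would exploit precisely the bucketing construction: a time step enters $\Psi_s$ only through step \textbf{(a)}, whose width $w^{(s)}_{t,a}$ depends on the arms already in $\Psi_s$ and on $\theta_\Phi$ but \emph{not} on the rewards $\{y_u : u\in\Psi_s\}$, while those rewards enter only through $\theta^{(s)}_{t-1}$, which is used solely in the step-\textbf{(c)} filter governing buckets of index $>s$. Consequently, conditioned on the realized design of $\Psi_s$, its rewards are mutually conditionally independent with Bernoulli means $\mu(X_u^\T\theta^*)$, so the fixed-design hypothesis holds bucket-by-bucket; this is exactly the dependency that the single-bucket scheme of \citet{li2017provably} fails to maintain (Section~\ref{sec:li17}). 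A secondary technical point, which I expect to handle with the inversion argument of Lemma~\ref{lem:invert}, is producing the clean closed form $T_0 = \Theta(Z\ln^4 Z)$ from the implicit inequality.
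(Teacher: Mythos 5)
Your proposal follows essentially the same route as the paper's proof: lower-bound each bucket's burn-in Gram matrix via matrix concentration under the $\sigma_0^2$ context-diversity assumption (the paper cites Proposition 1 of Li et al.\ where you use matrix Chernoff), pass to the Fisher matrix through $\dot\mu \ge \kappa$, invert the resulting implicit inequality in $T$ (since $\tau=\sqrt{dT}$) to obtain $T_0 = \Theta(Z\ln^4 Z)$, and then invoke Theorem~\ref{thm:concentration-supp} bucket-by-bucket with a union bound over $(s,t,a)$ and failure budget $\dt/(2STK)$ to obtain $\cE_{\mathsf{mean}}$. Your write-up is in fact more explicit than the paper's at the two points it compresses into ``it is easy to see'': the cross-bucket equivalence between $H^{(s)}_{t-1}(\th_\Phi)$ and $H^{(s)}_{t-1}(\th^*)$, which (as you correctly note) does not come directly from the theorem's $\cE_{\mathsf{var}}$ statement but needs control of $\max_{u}|X_u^\T(\th_\Phi-\th^*)|$ over bucket $s$'s design points plus Lemma~\ref{lem:self-concordance}, with a constant-factor strengthening of the diversity threshold to drive that quantity below $0.6$, and the conditional-independence argument legitimizing the fixed-design theorem on adaptively populated buckets.
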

\begin{proof}
    To avoid clutter, let us fix $s$ and drop the superscript from $H^{(s)}_\tau(\th^*)$ and use $H_\tau(\th^*)$.
    Note that each bucket has at least $\lfl\fr{\tau}{S+1}\rfl$ samples.
    Since $\lammin(H_\tau(\th^*)) \ge \kappa \lammin(V_\tau)$ where ${V_\tau} = \sum_{u=1}^\tau X_u X_u^\T$, to ensure $\cE_{\mathsf{diversity}}$, it suffices to show that $\lammin(V_\tau) \ge \kappa^{-1}\del{d + \ln\del{6(2+\tau)\cd \fr{2STK}{\dt}}} =: \blue{F}$. 
    Recall our stochastic assumption on the context vectors ${x_{t,a}}$, the definition of $\Sigma$, and our assumption $\lammin(\Sigma)\ge\sig_0^2$.
    By \citet[Proposition 1]{li2017provably}, there exists $C_1,C_2>0$ such that if
    \begin{align*}
        \lt\lfl\fr{\tau}{S+1}\rt\rfl
        &\ge \del{\fr{C_1\sqrt{d} + C_2 \sqrt{\ln(2STK/\dt)}}{\sig_0^2}}^2 + \fr{2}{\sig_0^2} \cd F~,
    \end{align*}
    then $\PP(\lammin(V) \ge F) \ge 1-\fr{\dt}{2STK}$.
    Since $\tau = \sqrt{dT}$, we have $T$ in both LHS and RHS.
    It remains to find the smallest $T$ that satisfies the inequality above.
    Omitting the dependence on $\sig_0^2$, one can show that it suffices to find a sufficient condition for $T$ such that 
    \begin{align}\label{eq:suffcond-div}
        T \ge \underbrace{\del{C_3 \cd \fr{d}{\kappa^2} + C_4\cd\fr{1}{d\kappa^2}\cd \ln^2(K/\dt)}}_{=:\blue{Z}}\ln^4(T)   
    \end{align}
    for some absolute constants $C_3$ and $C_4$.
    One can show that $T < Z \ln^4(T)$ implies $T < \Theta( Z\ln^4(Z))$ whose contraposition implies that there exists $T_0 = \Theta(Z\ln^4(Z))$ such that if $ T\ge T_0$, then \eqref{eq:suffcond-div} is true, which in turn implies that $\PP(\cE_{\mathsf{diversity}}) \ge 1- \fr{\dt}{2STK} \cd (S+1)$ via union bound over $s\in[S+1]$.
    
    When $\cE_{\mathsf{diversity}}$ is true, it is easy to see that the condition on $\nu_t$ in Theorem~\ref{thm:concentration-supp} is satisfied if we substitute $\dt \larrow \dt/(2STK)$.
    Thus, by the union bound, 
    \begin{align*}
        \PP(\cE_{\mathsf{mean}} \mid \cE_{\msf{diversity}}) \ge 1- \fr{\dt}{2STK}\cd STK~.
    \end{align*}
    Note that $\PP(A\cup B) = \PP((A\cap \bar B) \cup B) \le \PP(A\cap \bar B) + \PP(B) \le  \PP(A\mid \bar B) + \PP(B)$. 
    Setting $A = \bar\cE_{\msf{mean}}$ and $B = \bar\cE_{\msf{diversity}}$, we have
    \begin{align*}
        \PP(\bar\cE_{\msf{mean}} \cup \bar\cE_{\msf{diversity}} ) \le \fr{\dt}{2STK}\cd STK + \fr{\dt}{2STK}(S+1) \le \dt
    \end{align*}
    where the last inequality is by $K\ge2$.
\end{proof}

\begin{lemma}\label{lem:regret-inst}
  Take $\tau$ and $\alpha$ from Lemma~\ref{lem:event}.
  Recall that $M=1/4$.
  Suppose $\cE_{\msf{mean}}$.
  Consider the time step $t\ge \tau+1$. 
  Let $s_t$ be the while loop counter $s$ at which the arm $a_t$ is chosen.
  Let $a_t^* = \arg \max_{a\in[K]} \mu(x_{t,a}^\T \th^*)$ be the best arm at time $t$.
  Then, the best arm $a_t^*$ survives through $s_t$, i.e., $a_t^* \in A_s$ for all $s \le s_t$.
  Furthermore, we have
  \begin{align*}
      &\mu(x_{t,*}^\T \th^*) - \mu(x_{t,a_t}^\T \th^*) \\
      &\le 
      \begin{cases}
          \dot\mu(x_{t,a_t}^\T\th^*)8\cd2^{-s_t} + M\cd 64\cd2^{-2s_t} & \text{if $a_t$ is selected in step \textbf{(a)}}
          \\ \dot\mu(x_{t,a_t}^\T\th^*)2T^{-1/2} + M\cd 4\cd T^{-1} & \text{if $a_t$ is selected in step \textbf{(b)}}
      \end{cases}~.
  \end{align*}
\end{lemma}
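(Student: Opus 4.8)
The plan is to extract from the event $\cE_{\msf{mean}}$ a single clean ``master'' confidence statement, then carry out the standard SupLinRel-type elimination bookkeeping to show survival of the best arm and to bound the linear score gap $x_{t,a_t^*}^\T\th^* - x_{t,a_t}^\T\th^*$, and finally convert that into a bound on the reward gap via a second-order expansion of $\mu$. The first step is to combine the two clauses of $\cE_{\msf{mean}}$: the first controls the mean estimate in the $\|\cdot\|_{(H^{(s)}_{t-1}(\th^*))^{-1}}$ norm, and the second equates this norm with $\|\cdot\|_{(H^{(s)}_{t-1}(\th_\Phi))^{-1}}$ up to a factor $\sqrt{2.2}$. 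Chaining them and recalling $w^{(s)}_{t,a} = \alpha\sqrt{2.2}\,\|x_{t,a}\|_{H^{(s)}_{t-1}(\th_\Phi)^{-1}}$ yields, on $\cE_{\msf{mean}}$, the key inequality $|m^{(s)}_{t,a} - x_{t,a}^\T\th^*| \le w^{(s)}_{t,a}$ for all relevant $t,a,s$. This is the only place the dedicated width bucket $\Phi$ enters, and it is precisely what makes the fixed-design confidence bound usable with the \emph{computable} width $w^{(s)}_{t,a}$.

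Next I would prove $a_t^*\in A_s$ for all $s\le s_t$ by induction, with base case $A_1=[K]$. Whenever $s<s_t$ the while loop must have executed step \textbf{(c)}, so $w^{(s)}_{t,a}\le 2^{-s}$ for every $a\in A_s$ and the filter retains $\{a: m^{(s)}_{t,a}\ge \max_j m^{(s)}_{t,j} - 2\cdot2^{-s}\}$. Assuming $a_t^*\in A_s$ and writing $\hat a=\arg\max_j m^{(s)}_{t,j}$, the master bound together with optimality of $a_t^*$ for $x_{t,\cdot}^\T\th^*$ gives $m^{(s)}_{t,a_t^*}\ge x_{t,a_t^*}^\T\th^* - w^{(s)}_{t,a_t^*} \ge x_{t,\hat a}^\T\th^* - w^{(s)}_{t,a_t^*} \ge m^{(s)}_{t,\hat a} - w^{(s)}_{t,\hat a}-w^{(s)}_{t,a_t^*} \ge \max_j m^{(s)}_{t,j} - 2\cdot 2^{-s}$, so $a_t^*$ is kept in $A_{s+1}$.

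For the gap I would split on how $a_t$ is selected. In step \textbf{(b)} all widths in $A_{s_t}$ are at most $T^{-1/2}$ and $a_t$ maximizes $m^{(s_t)}_{t,\cdot}$, so two applications of the master bound give $x_{t,a_t^*}^\T\th^* - x_{t,a_t}^\T\th^* \le w^{(s_t)}_{t,a_t^*} + w^{(s_t)}_{t,a_t} \le 2T^{-1/2}$. In step \textbf{(a)} with $s_t\ge 2$, both $a_t$ and $a_t^*$ lie in $A_{s_t}$, produced by the filter at level $s_t-1$ where all widths were $\le 2^{-(s_t-1)}$; combining the filter inequality $m^{(s_t-1)}_{t,a_t}\ge m^{(s_t-1)}_{t,a_t^*}-2\cdot2^{-(s_t-1)}$ with the master bound yields $x_{t,a_t^*}^\T\th^* - x_{t,a_t}^\T\th^* \le 4\cdot2^{-(s_t-1)} = 8\cdot2^{-s_t}$. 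Writing $a=x_{t,a_t}^\T\th^*$ and $b=x_{t,*}^\T\th^*$ with $0\le b-a\le\Delta$ (where $\Delta = 8\cdot2^{-s_t}$ or $2T^{-1/2}$), the expansion $\mu(b)-\mu(a)=\dot\mu(a)(b-a)+\frac12\ddot\mu(\xi)(b-a)^2$ together with $|\ddot\mu|\le\dot\mu\le M=\frac14$ gives $\mu(b)-\mu(a)\le \dot\mu(x_{t,a_t}^\T\th^*)\Delta + M\Delta^2$, matching the claimed per-case bound.

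I expect the main delicacy to be the edge case $s_t=1$ in step \textbf{(a)}, where there is no level-$(s_t-1)$ filter to invoke; there one simply notes that $8\cdot2^{-s_t}=4$ and $M\cdot 64\cdot2^{-2s_t}=4$ force the right-hand side above $1\ge \mu(b)-\mu(a)$, so the statement holds trivially by boundedness of $\mu$. More conceptually, the genuinely new ingredient relative to the linear-reward analysis is the quadratic correction $M\Delta^2$ produced by the second-order term, which captures the nonlinearity of $\mu$ and is exactly what will need to be summed into a lower-order contribution when the instantaneous bounds are aggregated in Lemma~\ref{lem:regret-perbucket} and Theorem~\ref{thm:suplogistic}.
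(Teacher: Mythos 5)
Your proposal is correct and follows essentially the same route as the paper's proof: the same chaining of the two clauses of $\cE_{\msf{mean}}$ into the computable width $w^{(s)}_{t,a}$, the same induction showing $a_t^*$ survives the step \textbf{(c)} filters, the same $4\cdot 2^{-(s_t-1)}$ (resp.\ $2T^{-1/2}$) bound on the linear gap, and the same trivial handling of $s_t=1$. The only cosmetic difference is that you convert the linear gap to a reward gap via an explicit second-order Taylor expansion with $|\ddot\mu|\le M$, whereas the paper bounds the mean-value slope $\alpha(x_{a^*}^\T\th^*,x_{a_t}^\T\th^*)$ by $\dot\mu(x_{a_t}^\T\th^*)+M\cdot(x_{a^*}-x_{a_t})^\T\th^*$; these are equivalent.
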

\begin{proof}
    This proof is adapted from \citet[Lemma 6]{li2017provably} while keeping the dependence on the variance $\dot\mu(x_{t,a_t}^\T \th^*)$ to avoid introducing $\kappa^{-1}$ explicitly.
    Fix $t$.
    To avoid clutter, let us omit the subscript $t$ from $\cbr{x_{t,a},a_t^*}$ and use $\cbr{x_a, a^*}$ instead, respectively. 
    We also drop the subscript $t-1$ from $H^{(s)}_{t-1}(\th)$.
    Let us refer to the iteration index of the while loop as \textit{level}.
    We use the notation $m^{(s)}_a$ to denote $m_{t,a}$ at level $s$.
    
    We prove the first part of the lemma by induction. 
    For the base case, we trivially have $a^* \in A_1$.
    Suppose that $a^*$ has survived through the beginning of the $s$-th level (i.e., $a^* \in A_s$) where $s \le s_t-1$.
    We want to prove $a^* \in A_{s+1}$. 
    Since the algorithm proceeds to level $s+1$, we know from step \textbf{(a)} at $s$-th level that, $\forall a\in A_s$,
    \begin{align}\label{eq:induction}
        |m_{a}^{(s)} - x_{a}^\T \th^*| 
        \le \alpha \|x_a\|_{(H^{(s)}(\th^*))^{-1}} 
        \le \alpha \sqrt{2.2} \|x_a\|_{(H^{(s)}(\th_\Phi))^{-1}} \le 2^{-s}
    \end{align}
    where both inequalities are due to $\cE_{\mathsf{mean}}$. 
    Specifically, it holds for $a=a^*$ because $a^* \in A_{s}$ by our induction step. 
    Then, the optimality of $a^*$ implies that, $\forall a\in A_s$,
    \begin{align*}
        m_{a^*}^{(s)} 
        \sr{\eqref{eq:induction}}{\ge} x_{a^*}^\T\th^* - 2^{-s} 
        \ge x_{a}^\T\th^* - 2^{-s} 
        \sr{\eqref{eq:induction}}{\ge} m_{a}^{(s)} - 2\cdot2^{-s} ~.
    \end{align*}
    Thus we have $a^* \in A_{s+1}$ according to step \textbf{(c)}. 
    
    For the second part of the lemma, suppose $a_t$ is selected at level $s_t$ in step \textbf{(a)}. 
    If $s_t=1$, obviously the lemma holds because $\mu(z) \in (0,1), \forall z$. 
    If $s_t>1$, since we have proved $a^* \in A_{s_t}$, 
    Eq.~\eqref{eq:induction} implies that for $a \in \cbr{a_t, a^*}$,
    \begin{align*}
        |m_{a}^{(s_t-1)} - x_{a}^\T\th^*| \le 2^{-s_t+1}~.
    \end{align*}
    Step \textbf{(c)} at level $s_t-1$ implies
    \begin{align*}
        m_{a^*}^{(s_t-1)} -  m_{a_t}^{(s_t-1)} \le 2\cdot 2^{-s_t+1} ~.
    \end{align*}
    Combining the two inequalities above, we get 
    \begin{align*}
        x_{a_t}^\T\th^* \ge m_{a_t}^{(s_t-1)} - 2^{-s_t+1} 
        \ge m_{a^*}^{(s_t-1)} - 3\cdot 2^{-s_t+1} \ge x_{a^*}^\T\th^*- 4\cdot 2^{-s_t+1} ~.
    \end{align*}
    Recall that $M=1/4$ is an upper bound on $\ddot{\mu}(z)$.
    The inequality above implies that, using Taylor's theorem,
    \begin{align}\label{eq:reasoning}
      \begin{aligned}
        \mu(x_{a^*}^\T\th^*) - \mu(x_{a_t},\th^*) 
          &= \alpha(x_{a^*}^\T\th^*,x_{a_t}^\T\th^*) \cd (x_{a^*}-x_{a_t})^\T\th^*
          \\&\le \alpha(x_{a^*}^\T\th^*,x_{a_t}^\T\th^*) \cd 4 \cd 2^{-s_t + 1}
          \\&\le \del{\dot\mu(x_{a_t}^\T\th^*) + M\cd (x_{a^*}-x_{a_t})^\T\th^*} \cd 4 \cd 2^{-s_t + 1}
          \\&= \dot\mu(x_{a_t}^\T\th^*)4 \cd 2^{-s_t + 1} + M\cd (4 \cd 2^{-s_t + 1})^2
          \\&\le \dot\mu(x_{a_t}^\T\th^*)8\cd2^{-s_t} + M\cd 64\cd2^{-2s_t}~.
      \end{aligned}        
    \end{align}
    When $a_t$ is selected in step \textbf{(b)}, since $m_{a_t}^{(s_t)} \ge m_{a^*}^{(s_t)}$, we have
    \begin{align*}
        x_{a_t}^\T\th^* \ge m_{a_t}^{(s_t)} - 1/\sqrt{T} 
        \ge m_{a^*}^{(s_t)} - 1/\sqrt{T} \ge x_{a^*}^\T\th^*- 2/\sqrt{T} ~.
    \end{align*}
    We now apply a similar reasoning as~\eqref{eq:reasoning}, we have
    \begin{align*}
        \mu(x_{a^*}^\T\th^*) - \mu(x_{a_t}^\T\th^*) 
        \le \dot\mu(x_{a^*}^\T\th^*)2T^{-1/2} + M\cd 4\cd T^{-1}~.
    \end{align*}
\end{proof}

\begin{lemma}[Regret per bucket]\label{lem:regret-perbucket}
    Assume $\cE_{\mathsf{mean}}$ and take $\alpha$ from Lemma~\ref{lem:event}. Recall that $L=1/4$.
    Then, $\forall s\in[S]$, 
    \begin{align*}
        \sum_{t\in \Psi_s(T)\sm[\tau]} \mu(x_{t,*}^\T \th^*) - \mu(X_t^\T\th^*) 
        \le 18\sqrt{L} \cd \alpha \sqrt{|\Psi_s(T)| d \ln(LT/d)} + \fr{320 M\alpha^2}{\kappa}d \ln(LT/d)~.
    \end{align*}
\end{lemma}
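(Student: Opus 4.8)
The plan is to convert the per-step regret of Lemma~\ref{lem:regret-inst} into a sum of variance-weighted confidence widths and then invoke the elliptical potential lemma (Lemma~\ref{lem:epl}). The first observation is that every $t\in\Psi_s(T)\sm[\tau]$ arises from an arm pulled in step~\textbf{(a)} at level $s_t=s$, since arms chosen in step~\textbf{(b)} are routed to $\Psi_0$. Hence Lemma~\ref{lem:regret-inst} applies in its step~\textbf{(a)} form, giving $\mu(x_{t,*}^\T\th^*)-\mu(X_t^\T\th^*)\le \dmu(X_t^\T\th^*)\,8\cd2^{-s} + M\cd64\cd2^{-2s}$. Moreover the trigger for step~\textbf{(a)} is $2^{-s}< w^{(s)}_{t,a_t}=\alpha\sqrt{2.2}\,\|X_t\|_{(H^{(s)}_{t-1}(\th_\Phi))^{-1}}$, which under $\cE_{\msf{mean}}$ (the variance-equivalence bound) becomes $2^{-s}< 2.2\,\alpha\,\|X_t\|_{(H^{(s)}_{t-1}(\th^*))^{-1}}$. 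I would treat the two terms of the instantaneous regret separately.

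For the linear term, define the variance-rescaled vectors $\tilde X_u:=\sqrt{\dmu(X_u^\T\th^*)}\,X_u$, so that $H^{(s)}_{t-1}(\th^*)=\sum_{u\in\Psi_s(t-1)}\tilde X_u\tilde X_u^\T$ and $\dmu(X_t^\T\th^*)\,\|X_t\|_{(H^{(s)}_{t-1}(\th^*))^{-1}}=\sqrt{\dmu(X_t^\T\th^*)}\,\|\tilde X_t\|_{(H^{(s)}_{t-1}(\th^*))^{-1}}$. Plugging in the trigger and using $\dmu\le L=1/4$ gives $\dmu(X_t^\T\th^*)\,8\cd2^{-s}\le 18\,\alpha\,\sqrt L\,\|\tilde X_t\|_{(H^{(s)}_{t-1}(\th^*))^{-1}}$. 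Summing over $\Psi_s(T)\sm[\tau]$, Cauchy--Schwarz followed by the elliptical potential lemma applied to the vectors $\tilde X_t$ (which obey $\|\tilde X_t\|\le1/2$ and are appended to the well-conditioned burn-in block $H^{(s)}_\tau(\th^*)$ guaranteed by $\cE_{\msf{diversity}}$) delivers the leading term $18\sqrt L\,\alpha\sqrt{|\Psi_s(T)|\,d\ln(LT/d)}$.

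For the quadratic term, I would square the step~\textbf{(a)} trigger to obtain $2^{-2s}<(2.2)^2\alpha^2\|X_t\|^2_{(H^{(s)}_{t-1}(\th^*))^{-1}}$ and then rewrite $\|X_t\|^2_{(H^{(s)}_{t-1}(\th^*))^{-1}}=\dmu(X_t^\T\th^*)^{-1}\|\tilde X_t\|^2_{(H^{(s)}_{t-1}(\th^*))^{-1}}$. The factor $\dmu(X_t^\T\th^*)^{-1}\le\kappa^{-1}$ is exactly where $\kappa^{-1}$ enters, and it enters \emph{only} in this lower-order term. A second application of the elliptical potential lemma, $\sum_{t\in\Psi_s(T)\sm[\tau]}\|\tilde X_t\|^2_{(H^{(s)}_{t-1}(\th^*))^{-1}}\le 2d\ln(LT/d)$, then yields the additive $\fr{320 M\alpha^2}{\kappa}d\ln(LT/d)$ term, and summing the two contributions proves the lemma.

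The main obstacle is the careful bookkeeping of the variance weighting. One must pass cleanly between the $(H^{(s)}(\th_\Phi))^{-1}$ norm that the algorithm actually computes and the $(H^{(s)}(\th^*))^{-1}$ norm in which the potential lemma is stated, absorbing the $\sqrt{2.2}$ factors supplied by $\cE_{\msf{mean}}$. More importantly, one must decide per term whether to \emph{spend} a factor $\sqrt{\dmu}\le\sqrt L$ or \emph{borrow} a factor $\dmu^{-1}\le\kappa^{-1}$: retaining $\sqrt L$ in the linear term while paying $\kappa^{-1}$ only in the quadratic term is precisely what confines the exponential $\kappa^{-1}$ dependence to the non-leading regret. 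Verifying that the elliptical potential lemma applies to the growing, burn-in-seeded matrix $H^{(s)}_{t-1}(\th^*)$, with the logarithm $\ln(LT/d)$ controlled via the minimum-eigenvalue lower bound from $\cE_{\msf{diversity}}$, is the crux of the argument.
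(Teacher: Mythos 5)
Your proposal is correct and follows essentially the same route as the paper's proof: restrict to step \textbf{(a)} pulls, invoke the instantaneous-regret bound of Lemma~\ref{lem:regret-inst}, use the trigger $2^{-s}<w^{(s)}_{t,a_t}$ together with the norm equivalence from $\cE_{\msf{mean}}$, spend $\sqrt{\dmu}\le\sqrt{L}$ on the linear term (Cauchy--Schwarz plus elliptical potential) while paying $\dmu^{-1}\le\kappa^{-1}$ only on the squared-trigger term, exactly as in the paper. The only cosmetic difference is that the paper carries the truncation $1\wedge(\cdot)$ through via $1\wedge(a+b)\le 1\wedge a+1\wedge b$ so that Lemma~\ref{lem:epl} applies verbatim, whereas you drop the minimum and justify it through the $\cE_{\msf{diversity}}$ eigenvalue bound --- an equivalent bookkeeping choice.
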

\begin{proof}
    By Lemma~\ref{lem:regret-inst} and the fact that $\mu(z) \in (0,1), \forall z$, we have
    \begin{align*}
        \sum_{t\in \Psi_s(T)} \mu(x_{t,*}^\T \th^*) - \mu(X_t^\T\th^*)
        &\le \sum_{t\in \Psi_s(T)} 1\wedge \del{\dot\mu(X_t^\T\th^*) \cd 8 \cd 2^{-s} + 64 M\cd 2^{-2s}}
        \\&\le \del{\sum_{t\in \Psi_s(T)} 1\wedge  \dot\mu(X_t^\T\th^*) \cd 8 \cd 2^{-s}} + \del{ \sum_{t\in \Psi_s(T)} 1 \wedge  64 M\cd 2^{-2s}}
    \end{align*}
    where the last inequality is true by $1 \wedge (a+b) \le 1 \wedge a + 1 \wedge b$.
    For the first summation, we use $w_{t,a_t}^{(s)} > 2^{-s}$ due to step \textbf{(a)} of the algorithm:
    \begin{align*}
        \del{\sum_{t\in \Psi_s(T)} 1\wedge  \dot\mu(X_t^\T\th^*) \cd 8 \cd 2^{-s}}
        &\le \sum_{t\in \Psi_s(T)} 1\wedge \dot\mu(X_t^\T\th^*) \cd 8 \cd w_{t,a_t}^{(s)}
        \\&= \sum_{t\in \Psi_s(T)} 1\wedge \dot\mu(X_t^\T\th^*) \cd 8 \cd \alpha \sqrt{2.2} \|X_t\|_{(H^{(s)}_{t-1}(\th_\Phi) )^{-1}} \tag*{(Def'n of $w^{(s)}_{t,a_t}$)}
        \\&\le \sum_{t\in \Psi_s(T)} 1\wedge \dot\mu(X_t^\T\th^*) \cd 18 \cd \alpha \|X_t\|_{(H^{(s)}_{t-1}(\th^*))^{-1}} \tag{$\cE_{\mathsf{mean}}$}
        \\&\le \sum_{t\in \Psi_s(T)}  1\wedge 18\alpha\sqrt{L} \|\sqrt{\dot\mu(X_t^\T\th^*)} X_t\|_{(H^{(s)}_{t-1}(\th^*))^{-1}}
                \tag*{($\because \sqrt{\dmu(X_t^\T\th^*)} \le \sqrt{L}$)}
        \\&\sr{(a)}{\le} \sqrt{ |\Psi_s(T)| \sum_{t\in \Psi_s(T)} 1\wedge (18\alpha\sqrt{L})^2 \|\sqrt{\dot\mu(X_t^\T\th^*)} X_t\|^2_{(H^{(s)}_{t-1}(\th^*))^{-1}}}
        \\&\sr{(b)}{\le} 18\alpha\sqrt{L} \sqrt{|\Psi_s(T)| \cd d\ln\del{LT/d}}
    \end{align*}
    where $(a)$ by the Cauchy-Schwarz inequality and $(b)$ by Lemma~\ref{lem:epl}, with the fact that $(18\alpha\sqrt{L})^2 \ge \fr12$, and  $\lammin(H_{\tau}(\th^*)) \ge 1$ ($\because \cE_{\mathsf{diversity}}$).
    
    
    The second summation follows a similar derivation:
    \begin{align*}
        \sum_{t\in \Psi_s(T)} 1 \wedge 64M\cd 2^{-2s}
        &\le \sum_{t\in \Psi_s(T)} 1 \wedge 64M\cd \alpha^2\cd 2.2\|X_t \|^2_{(H_{t-1}(\th_\Phi))^{-1}}
        \\&\le \sum_{t\in \Psi_s(T)} 1 \wedge 64M\cd \alpha^2 \fr{\dot\mu(X_t^\T\th^*)}{\dot\mu(X_t^\T\th^*)} 5\|X_t \|^2_{(H_{t-1}(\th^*))^{-1}}
        \\&\le \sum_{t\in \Psi_s(T)} 1\wedge \fr{64M \alpha^2}{\kappa}  5\cd \lt\|\sqrt{\dot\mu(X_t^\T\th^*)} X_t \rt\|^2_{(H_{t-1}(\th^*))^{-1}}
        \\&\le \fr{320M\alpha^2}{\kappa}d\ln(LT/d)
    \end{align*}
    where $(a)$ is by Lemma~\ref{lem:epl} and $\fr{320M\alpha^2}{\kappa} \ge 320M^2\alpha^2/L \ge 1/2$, and  $\lammin(H_{\tau}(\th^*)) \ge 1$.
\end{proof}

Finally, we prove the regret bound of SupLogistic below.
Note that the statement here is slightly different from that of the main paper because we state the regret bound for large enough $T$ only.
This is only an aesthetic difference.
Indeed, assume that we have a regret bound $\mathsf{Reg}_T \le A\sqrt{T} + B$ for $T \ge C$.
Then, using $\Reg_T \le T$, we have $\Reg_T \le C$ for $T < C$.
This implies that, for all $T$, we have $\Reg_T \le A\sqrt{T} + B + C$.
\begin{theorem}[Regret of SupLogistic]\label{thm:regret}
    Consider {\normalfont SupLogistic} with $\tau$, $\alpha$, and $T_0$ from Lemma~\ref{lem:event}. 
    Then, if $T \ge T_0$, then
    \begin{align*}
        \mathsf{Reg}_T 
        &\le 10 \alpha\sqrt{dT\ln(T/d)\log_2(T)} + O\del{\fr{\alpha^2}{\kappa} d\cd 
            (\ln(T/d))\cd\ln T } ~.
    \end{align*}
\end{theorem}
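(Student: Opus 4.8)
The plan is to condition on the good event $\cE_{\msf{mean}} \cap \cE_{\msf{diversity}}$ supplied by Lemma~\ref{lem:event}, which holds with probability at least $1-\dt$ whenever $T \ge T_0$, and then bound the total regret by partitioning the pulls according to which bucket absorbs each one. Every step $t > \tau$ terminates the while loop by adding $t$ to exactly one bucket: either the exploitation bucket $\Psi_0$ (step \textbf{(b)}) or some exploration bucket $\Psi_s$, $s\in[S]$ (step \textbf{(a)}); the first $\tau$ steps constitute the burn-in. Since Lemma~\ref{lem:regret-perbucket} accounts for the post-burn-in portion $\Psi_s(T)\sm[\tau]$ of each exploration bucket, I would write
\[
  \mathsf{Reg}_T
  = \sum_{t\in[\tau]}\del{\mu(x_{t,*}^\T\th^*) - \mu(X_t^\T\th^*)}
  + \sum_{t\in\Psi_0}\del{\mu(x_{t,*}^\T\th^*) - \mu(X_t^\T\th^*)}
  + \sum_{s=1}^S \sum_{t\in \Psi_s(T)\sm[\tau]}\del{\mu(x_{t,*}^\T\th^*) - \mu(X_t^\T\th^*)}
\]
and control the three groups separately.

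The burn-in contributes at most $\tau = \sqrt{dT}$ because $\mu(\cdot)\in(0,1)$. For $\Psi_0$, the step \textbf{(b)} case of Lemma~\ref{lem:regret-inst} bounds each instantaneous regret by $\dot\mu(\cdot)\,2T^{-1/2} + 4M\,T^{-1} \le 2T^{-1/2} + 4M\,T^{-1}$, so summing over at most $T$ such steps gives a total of at most $2\sqrt{T} + 4M = O(\sqrt{T})$. Both of these groups are lower order relative to the leading term. The substantive work is the exploration sum: for each $s\in[S]$ I would invoke Lemma~\ref{lem:regret-perbucket}, giving
\[
  \sum_{t\in \Psi_s(T)\sm[\tau]}\del{\mu(x_{t,*}^\T\th^*) - \mu(X_t^\T\th^*)}
  \le 18\sqrt{L}\,\alpha\sqrt{|\Psi_s(T)|\,d\ln(LT/d)} + \fr{320M\alpha^2}{\kappa}d\ln(LT/d),
\]
and then sum over $s$.

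For the first term, the buckets are disjoint subsets of $\{1,\ldots,T\}$, so $\sum_{s=1}^S|\Psi_s(T)| \le T$, and Cauchy--Schwarz yields $\sum_{s=1}^S\sqrt{|\Psi_s(T)|} \le \sqrt{S\sum_{s=1}^S|\Psi_s(T)|} \le \sqrt{ST}$. With $S = \lfloor\log_2 T\rfloor$, $L=1/4$, and $\ln(LT/d)\le\ln(T/d)$, this produces a leading term of at most $9\,\alpha\sqrt{dT\log_2(T)\ln(T/d)}$. For the second term, summing the per-bucket $1/\kappa$ contribution over the $S = O(\ln T)$ buckets gives $O\del{\fr{\alpha^2}{\kappa}d\ln(T/d)\ln T}$, matching the stated non-leading term. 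Finally I would combine the three groups: since $\alpha \ge 1$ and the logarithmic factors exceed one, the lower-order $\sqrt{dT}$ and $2\sqrt{T}+4M$ terms are dominated by the leading term and can be absorbed by bumping its constant from $9$ to $10$, yielding $\mathsf{Reg}_T \le 10\,\alpha\sqrt{dT\ln(T/d)\log_2(T)} + O\del{\fr{\alpha^2}{\kappa}d\ln(T/d)\ln T}$ on the good event, hence with probability at least $1-\dt$.

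The main obstacle here is not the aggregation itself — which is Cauchy--Schwarz plus elementary bookkeeping — but rather verifying that the bucket structure supports it: one must confirm that every post-burn-in step lands in exactly one bucket (so the disjointness underlying $\sum_s|\Psi_s(T)|\le T$ is legitimate), and that the single event of Lemma~\ref{lem:event} simultaneously licenses both the instantaneous bound (Lemma~\ref{lem:regret-inst}) and the per-bucket bound (Lemma~\ref{lem:regret-perbucket}) for all $s$ at once. The genuinely delicate analysis — the variance-dependent elliptical potential argument and the isolation of the width bucket $\Phi$ from the mean buckets so that the fixed-design hypothesis of Theorem~\ref{thm:concentration-supp} is preserved — has already been discharged inside those lemmas, so at this level the proof is essentially a clean assembly step.
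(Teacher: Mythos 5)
Your proposal is correct and follows essentially the same route as the paper's proof: the identical three-way decomposition into burn-in, $\Psi_0$, and exploration buckets $\Psi_s(T)\sm[\tau]$, the step \textbf{(b)} case of Lemma~\ref{lem:regret-inst} for $\Psi_0$, Lemma~\ref{lem:regret-perbucket} plus Cauchy--Schwarz with $\sum_{s}|\Psi_s(T)|\le T$ and $S=\lfloor\log_2 T\rfloor$ for the exploration buckets, and absorption of the lower-order $\sqrt{dT}$ terms into the leading constant. The only cosmetic differences are that you bound $\dot\mu\le 1$ rather than $\dot\mu\le L=1/4$ on the $\Psi_0$ term and invoke $\alpha\ge 1$ where the paper uses $\alpha\ge 7$, neither of which affects the validity of the final bound.
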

\begin{proof}
    Recall that $\Psi_0$ contains the time step indices at which the choice $a_t$ happened in 
    step 
    \textbf{(b)}. 
    Recall that we set $\tau=\sqrt{dT}$.
    Let $\blue{\Delta_t} := \mu(x_{t,*}^\T \th^*) - \mu(x_{t,a_t}^\T \th^*)$.
    Then,
    \begin{align*}
        R_T 
        &= \sum_{t=1}^{\tau} \Delta_t
        + \sum_{t=\tau+1}^{T} \Delta_t \\
        &\le  \sqrt{dT} + \sum_{t \in \Psi_0(T)}  \Delta_t
        + \sum_{s=1}^{S}  \sum_{t \in \Psi_s(T)\sm[\tau]} \Delta_t~.
    \end{align*}
    For the first term, using Lemma~\ref{lem:regret-inst},
    \begin{align*}
        \sum_{t \in \Psi_0(T)}  \Delta_t 
        \le T \cd \del{\dmu(X_t^\T\th^*) \cd \fr{2}{\sqrt{T}} + \fr{4M}{T}}
        \le 2L\sqrt{T} + 4M~.
    \end{align*}
    For the second term, using Lemma~\ref{lem:regret-perbucket}, using the Cauchy-Schwarz inequality,
    \begin{align*}
        \sum_{s=1}^{S}  \sum_{t \in \Psi_s(T)\sm[\tau]} \Delta_t
        &\le \sum_{s=1}^S \del{18\alpha \sqrt{L|\Psi_s(T)| d \ln(LT/d)} + \fr{320 
                M\alpha^2}{\kappa}d \ln(LT/d)}
        \\&\le 18\alpha \sqrt{L d \ln(LT/d)} \sqrt{S\sum_{s=1}^S |\Psi_s(T)|} + S\cd\fr{320 
            M\alpha^2}{\kappa}d \ln(LT/d)
        \\&\le 18\alpha \sqrt{L d \ln(LT/d) }\cd \sqrt{T\log_2(T)} + \log_2(T)\cd\fr{320 
            M\alpha^2}{\kappa}d \ln(LT/d)~.
    \end{align*}
    Using $L = 1/4$ and $\alpha \ge 7$, the terms involving $\sqrt{T}$ is: $\sqrt{dT} + 2L\sqrt{T} + 18\alpha \sqrt{L d 
        T\ln(LT/d)\log_2(T)} \le 10 \alpha\sqrt{dT\ln(T/d)\log_2(T)}$.
    This concludes the proof.
\end{proof}

\subsection{Auxiliary Results}

\begin{lemma}[Elliptical potential]\label{lem:epl}
    Let $s \in [S]$ and $F>0$. Then,
    \begin{align*}
        \sum_{t \in \Psi_s(T)} \min\cbr{1, F \| \sqrt{\dot\mu(X_t^\T\th^*)} X_t\|^2_{(H^*(\Psi_s(t-1)))^{-1}} }
        \le (2F \vee 1) \cd d\ln\del{\fr{L |\Psi_s(T)|}{d \lammin(H^*(\Psi_s(\tau)))}}~.
    \end{align*}
\end{lemma}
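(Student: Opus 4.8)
The plan is to read Lemma~\ref{lem:epl} as the standard elliptical-potential bound applied to the variance-weighted features $v_u := \sqrt{\dmu(X_u^\T\th^*)}\,X_u$, for which $H^*(\Psi_s(t)) = \sum_{u\in\Psi_s(t)} v_u v_u^\T$. First I would record the uniform bound $\|v_u\|^2 = \dmu(X_u^\T\th^*)\,\|X_u\|^2 \le L$, using $\dmu \le L = \tfrac14$ and $\|X_u\|\le 1$. Abbreviate $V(t) := H^*(\Psi_s(t))$ and take the burn-in matrix $V_0 := H^*(\Psi_s(\tau))$ as the base; under $\cE_{\msf{diversity}}$ (Lemma~\ref{lem:event}) this base is invertible with $\lammin(V_0)\ge 1$, so each quadratic form $\|v_t\|^2_{V(t-1)^{-1}}$ is well defined along the post-burn-in indices. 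Ordering the indices of $\Psi_s(T)$ added after $\tau$ as $t_1<\cdots<t_m$ and noting that no further bucket-$s$ sample is inserted strictly between consecutive $t_j$, we get the clean rank-one recursion $V(t_j) = V(t_{j-1}) + v_{t_j}v_{t_j}^\T$ with $V(t_0)=V_0$.

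The argument then rests on two elementary facts. The first is the pointwise inequality $\min\{1,\,Fx\} \le (2F\vee 1)\,\ln(1+x)$ for all $x\ge 0$, which I would verify by splitting into $F\le \tfrac12$ (where $2F\vee1=1$ and one checks $\min\{1,x/2\}\le\ln(1+x)$ via $\ln(1+x)\ge x/2$ on $[0,2.5]$) and $F>\tfrac12$ (where $2F\vee1=2F$ and one checks the cases $x\le 1/F$ and $x> 1/F$ separately using the same logarithmic inequality). The second is the matrix-determinant identity $\det(V(t_{j-1})+v v^\T) = \det V(t_{j-1})\del{1+\|v\|^2_{V(t_{j-1})^{-1}}}$, which turns each term into a telescoping log-determinant increment, yielding
\begin{align*}
  \sum_{t \in \Psi_s(T)} \ln\del{1+\|v_t\|^2_{V(t-1)^{-1}}} = \ln\det V(T) - \ln\det V_0~.
\end{align*}

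To finish, I would bound the right-hand side by AM-GM: $\det V(T) \le \del{\tr V(T)/d}^d$ with $\tr V(T) = \sum_{u\in\Psi_s(T)}\|v_u\|^2 \le L\,|\Psi_s(T)|$, and $\det V_0 \ge \lammin(V_0)^d$, so $\ln\det V(T) - \ln\det V_0 \le d\ln\del{\tfrac{L|\Psi_s(T)|}{d\,\lammin(H^*(\Psi_s(\tau)))}}$ (the log-argument is $\ge 1$ since $\tr V(T)\ge d\lammin(V(T))\ge d\lammin(V_0)$). Combining this with the first elementary inequality, applied with $x = \|v_t\|^2_{V(t-1)^{-1}}$, gives exactly the claimed bound with leading constant $(2F\vee 1)$.

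The main obstacle is not the potential argument itself, which is routine, but the careful bookkeeping at the burn-in boundary: one must justify that the base of the telescope is $H^*(\Psi_s(\tau))$ (so that the $\lammin(H^*(\Psi_s(\tau)))$ factor appears and every inverse is well defined), and that restricting the recursion to post-$\tau$ additions is consistent with the sum as it is actually invoked inside Lemma~\ref{lem:regret-perbucket}. The only other point needing care is pinning down the sharp constant in $\min\{1,Fx\}\le(2F\vee1)\ln(1+x)$, since a naive use of $\min\{1,x\}\le 2\ln(1+x)$ would give $2(F\vee1)$ rather than the tighter $(2F\vee1)$ needed here.
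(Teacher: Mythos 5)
Your proof is correct and takes essentially the same route as the paper's: the pointwise inequality $\min\{1,Fx\}\le(2F\vee 1)\ln(1+x)$ (which the paper imports from Lemma 3 of Jun et al.\ (2017) rather than verifying by hand as you do), the telescoping log-determinant identity, and the final bound of the determinant ratio via AM-GM on the trace together with $\det H^*(\Psi_s(\tau))\ge \lammin(H^*(\Psi_s(\tau)))^d$. Your extra care with the $t-1$ indexing in the quadratic form and with anchoring the telescope at the burn-in matrix $H^*(\Psi_s(\tau))$ is a point the paper's own write-up glosses over (its proof body writes $H^*(\Psi_s(t))$ where the statement and the telescoping argument call for $H^*(\Psi_s(t-1))$), but this does not change the argument.
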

\begin{proof}
    Using Lemma 3 of \citet{jun2017scalable}, we have that $\forall q,x>0, \min\{q,x\} \le \max\cbr{2,q}\ln(1+x)$.
    Thus, 
    \begin{align*}
        \min\cbr{1,  F\| \sqrt{\dot\mu(X_t^\T\th^*)} X_t\|^2_{(H^*(\Psi_s(t)))^{-1}} }
        &=F\min\cbr{\fr1{F},~ \| \sqrt{\dot\mu(X_t^\T\th^*)} X_t\|^2_{(H^*(\Psi_s(t)))^{-1}} }
        \\&\le F \cd \max\cbr{2,\fr{1}{F}} \ln\del{1 + \| \sqrt{\dot\mu(X_t^\T\th^*)} X_t\|^2_{(H^*(\Psi_s(t)))^{-1}}  }
    \end{align*}
    where the last inequality is by $\max\{2, 1/F\} = 2$.
    Then,
    \begin{align*}
        \sum_{t \in \Psi_s(T)} \min\cbr{1, F \| \sqrt{\dot\mu(X_t^\T\th^*)} X_t\|^2_{(H^*(\Psi_s(t)))^{-1}} }
        &\le (2F\vee1) \sum_{t \in \Psi_s(T)} \ln\del{1 + \| \sqrt{\dot\mu(X_t^\T\th^*)} X_t\|^2_{(H^*(\Psi_s(t)))^{-1}}  }
        \\&= (2F\vee1) \ln\del{\fr{|H^*(\Psi_s(T))|}{|H^*(\Psi_s(\tau))|}}
        \\&\le (2F\vee1) \cd d\ln\del{\fr{L|\Psi_s(T)|}{d \cd \lammin(H^*(\Psi_s(\tau)))}}
    \end{align*}
    where the last inequality is by the arithmetic-geometric mean inequality.
\end{proof}

\section{Some Additional Empirical Insights}
In this section we dive a bit more into what RAGE-GLM is doing in its process of sampling. We utilized the Zappos pairwise comparison dataset \cite{finegrained}, focusing on the most ``pointy'' setting. As in the main text, we used the collected pairwise comparisons to learn $\theta^*$. We then sampled 10,000 shoes randomly from the set of 50,000 to be our $\mc{Z}$ set, and an additional randomly chosen 3,000 pairs of pairs of shoes from $\mc{Z}$ to be our $\mc{X}$ set (after being PCA-ed down to 25 dimensions). Given a query from the $x\in \cX$ set we then hallucinate the response using $\P(y = 1) = \mu(x^{\top}\theta^{\ast})$ according to the logistic model.

RAGE-GLM required roughly $5\times 10^6$ samples to complete. In the following figure, the top row shows the allocation of RAGE-GLM over $\cX$ (where we have sorted the elements of $\cX$ by the number of times each element was pulled). As we can see only a few hundred items received any pulls. 

\begin{figure}[h]
    \centering
    \includegraphics[scale=.5]{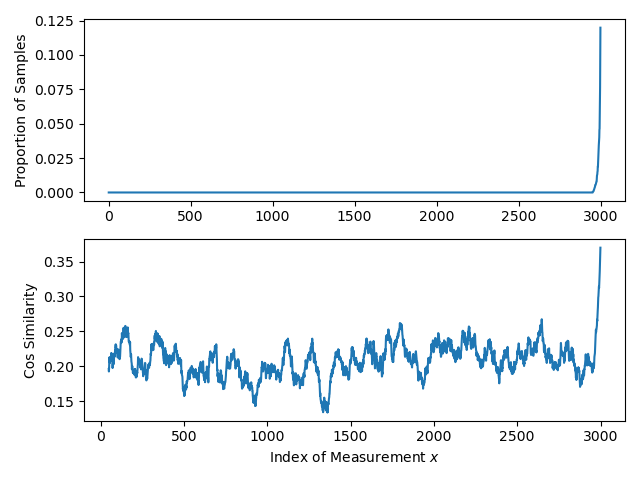}
    \caption{The top row is the allocation of samples over the pairs in $\cX$, the bottom row is the cosine similarity of these items with the difference.}
    \label{fig:allocations}
\end{figure}
 
Within a few tens of thousands of samples RAGE-GLM was able to narrow down to the top two different pair of shoes and then spent it's sampling budget differentiating between them. The second row of Figure~\ref{fig:allocations} shows the absolute value of the cosine similarity between the elements of $\cX$ and the difference between the top two pairs of shoes (we show a windowed average over 50 items to mitigate some spikes). As we can see, the most sampled items have higher average cosine similarity with the difference between the last two items. This has an interpretation in terms of \emph{analogies} - the algorithm focuses on two pairs of shoes in $\cX$ whose difference is most aligned with that of the best two pairs of shoes $\mc{Z}$. 

\begin{figure}[h!]
    \centering
    \includegraphics[scale=.6]{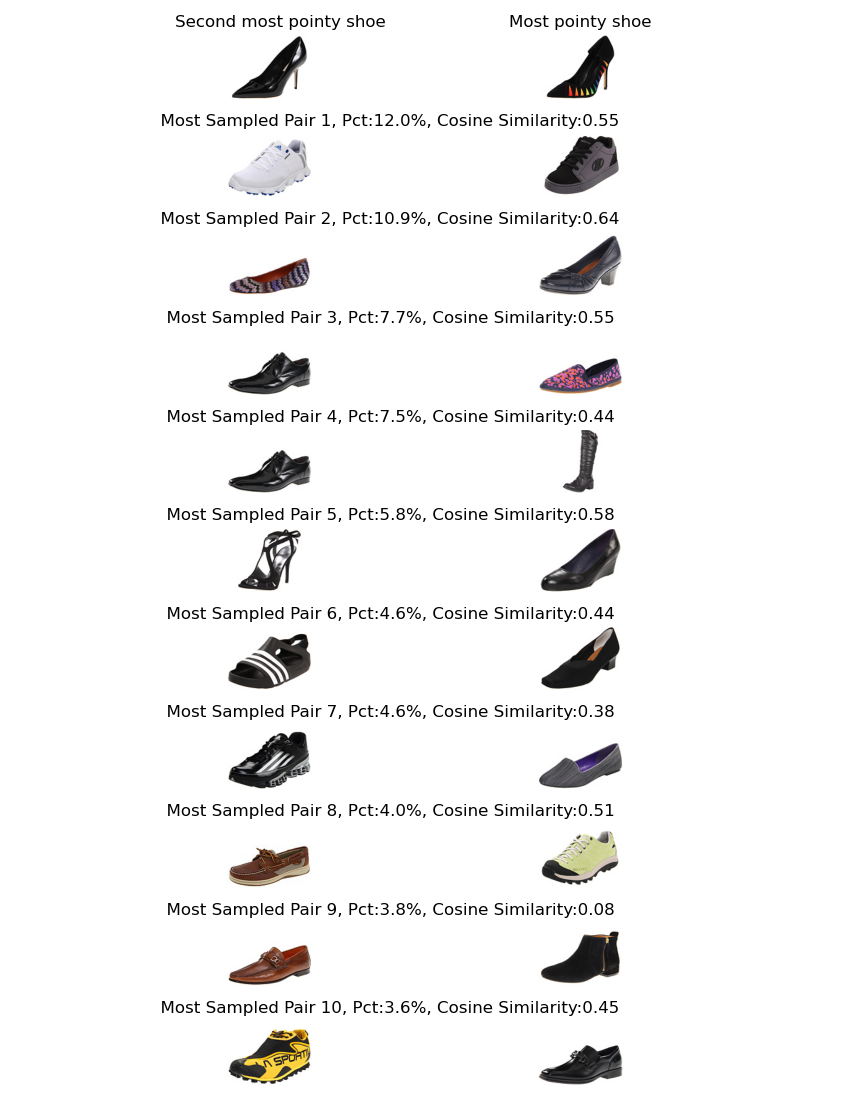}
    \caption{Most queried pairs of pairs of shoes.}
    \label{fig:allocations}
\end{figure}

In Figure~\ref{fig:allocations} we show the top two pairs of shoes along with the 10 pairs that were most queried (corresponding to the allocations above). Even though the pairwise comparisons shown may not be similar to those of the pointed shoes, asking them still gives us information about the difference between the best two, illuminating the advantage of the transductive setting.

%



\end{document}